\newcommand{\bfc}{\mathbf{c}}
\newcommand{\bfe}{\mathbf{e}}
\newcommand{\bfm}{\mathbf{m}}
\newcommand{\bfn}{\mathbf{n}}
\newcommand{\bfp}{\mathbf{p}}
\newcommand{\bfq}{\mathbf{q}}
\newcommand{\bfr}{\mathbf{r}}
\newcommand{\bfs}{\mathbf{s}}
\newcommand{\bft}{\mathbf{t}}
\newcommand{\bfv}{\mathbf{v}}
\newcommand{\bfw}{\mathbf{w}}
\newcommand{\bfx}{\mathbf{x}}
\newcommand{\bfy}{\mathbf{y}}
\newcommand{\bfz}{\mathbf{z}}
\newcommand{\bftheta}{\boldsymbol{\theta}}
\newcommand{\bfrho}{\boldsymbol{\rho}}
\newcommand{\bfxi}{\boldsymbol{\xi}}
\newcommand{\bfB}{\mathbf{B}}
\newcommand{\bfI}{\mathbf{I}}
\newcommand{\bfL}{\mathbf{L}}
\newcommand{\bfQ}{\mathbf{Q}}
\newcommand{\bfR}{\mathbf{R}}
\newcommand{\bfT}{\mathbf{T}}
\newcommand{\bfU}{\mathbf{U}}
\newcommand{\bfW}{\mathbf{W}}
\newcommand{\bfX}{\mathbf{X}}
\newcommand{\bfLambda}{\boldsymbol{\Lambda}}
\newcommand{\bbR}{\mathbb{R}}
\newcommand{\bbS}{\mathbb{S}}
\newcommand{\calD}{\mathcal{D}}
\newcommand{\calE}{\mathcal{E}}
\newcommand{\calG}{\mathcal{G}}
\newcommand{\calJ}{\mathcal{J}}
\newcommand{\calK}{\mathcal{K}}
\newcommand{\calL}{\mathcal{L}}
\newcommand{\calN}{\mathcal{N}}
\newcommand{\calO}{\mathcal{O}}
\newcommand{\calP}{\mathcal{P}}
\newcommand{\calR}{\mathcal{R}}
\newcommand{\calU}{\mathcal{U}}
\newcommand{\calV}{\mathcal{V}}
\newcommand{\calX}{\mathcal{X}}
\newcommand{\calZ}{\mathcal{Z}}
\newcommand{\diag}[1]{\operatorname{diag}\left({#1}\right)}
\newtheorem{proposition}{Proposition}
\theoremstyle{definition}
\newtheorem{definition}{Definition}
\newtheorem*{assumption*}{Assumption}
\newtheorem*{problem*}{Problem}
\theoremstyle{remark}
\def\papertitle{Learning Scene-Level Signed Directional Distance Function with Ellipsoidal Priors and Neural Residuals}
\def\sdfbaseline{SDF-Instant-NGP}
\begin{document}

\title{\huge\papertitle}

\markboth{
IEEE TRANSACTIONS ON PATTERN ANALYSIS AND MACHINE INTELLIGENCE, VOL. XXX, NO. XXX, Month 20XX}%
{\papertitle}

\author{Zhirui Dai, Hojoon Shin, Yulun Tian, Ki Myung Brian Lee, Nikolay Atanasov%
\thanks{We gratefully acknowledge support from NSF FRR CAREER 2045945 and ARL DCIST CRA W911NF-17-2-0181.}%
\thanks{Zhirui Dai, Ki Myung Brian Lee, and Nikolay Atanasov are with the Department of Electrical and Computer Engineering, University of California San Diego, La Jolla, CA 92093, US (e-mails: {\tt\small \{zhdai,kmblee,natanasov\}@ucsd.edu}).}%
\thanks{Hojoon Shin is with Brain Corporation, San Diego, CA 92121, US (email: hojoon.shin@braincorp.com).}%
\thanks{Yulun Tian is with the Robotics Department, University of Michigan, Ann Arbor, MI 48109, US (email: yulunt@umich.edu).}%
}
\IEEEpubid{0000--0000/00\$00.00~\copyright~2021 IEEE} %

\maketitle

\begin{abstract}
Dense reconstruction and differentiable rendering are fundamental tightly connected operations in 3D vision and computer graphics.
Recent neural implicit representations demonstrate compelling advantages in reconstruction fidelity and differentiability over conventional discrete representations such as meshes, point clouds, and voxels.
However, many neural implicit models, such as neural radiance fields (NeRF) and signed distance function (SDF) networks, are inefficient in rendering due to the need to perform multiple queries along each camera ray. Moreover, NeRF and Gaussian Splatting methods offer impressive photometric reconstruction but often require careful supervision to achieve accurate geometric reconstruction. To address these challenges, we propose a novel representation called signed directional distance function (SDDF).
Unlike SDF and similar to NeRF, SDDF has a position and viewing direction as input. Like SDF and unlike NeRF, SDDF directly provides distance to the observed surface rather than integrating along the view ray. As a result, SDDF achieves accurate geometric reconstruction and efficient differentiable directional distance prediction.
To learn and predict scene-level SDDF efficiently, we develop a differentiable hybrid representation that combines explicit ellipsoid priors and implicit neural residuals. This allows the model to handle distance discontinuities around obstacle boundaries effectively while preserving the ability for dense high-fidelity distance prediction.
Through extensive evaluation against state-of-the-art representations, we show that SDDF achieves (i) competitive SDDF prediction accuracy, (ii) faster prediction speed than SDF and NeRF, and (iii) superior geometric consistency compared to NeRF and Gaussian Splatting.
\end{abstract}

\begin{IEEEkeywords}
    Signed directional distance function, differentiable rendering, implicit neural field, view optimization
\end{IEEEkeywords}

\begin{figure*}[t]
    \begin{center}
        \begin{subfigure}[t]{0.32\linewidth}
            \centering
            \includegraphics[width=\linewidth,trim={0pt 0pt 0pt 40pt},clip]{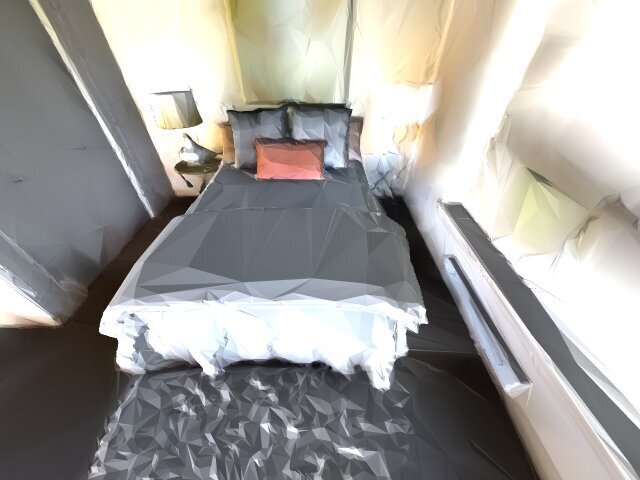}
            \caption{Gibson Allensville}
        \end{subfigure}
        \begin{subfigure}[t]{0.32\linewidth}
            \centering
            \includegraphics[width=\linewidth,trim={0pt 0pt 0pt 40pt},clip]{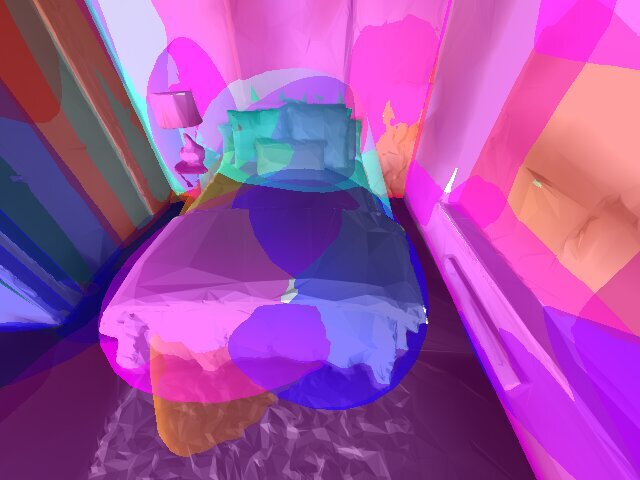}
            \caption{Ellipsoid Approximation}\label{fig:teaser_ellipsoids}
        \end{subfigure}
        \begin{subfigure}[t]{0.32\linewidth}
            \centering
            \includegraphics[width=\linewidth,trim={0pt 0pt 0pt 40pt},clip]{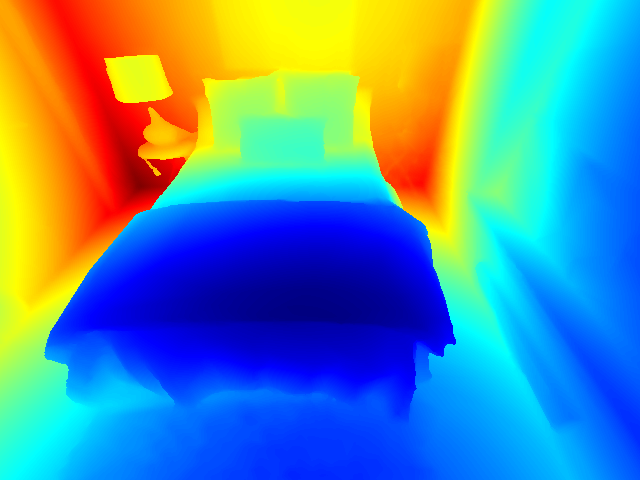}
            \caption{SDDF Ground Truth}
        \end{subfigure}
        \begin{subfigure}[t]{0.24\linewidth}
            \centering
            \includegraphics[width=\linewidth,trim={0pt 0pt 0pt 40pt},clip]{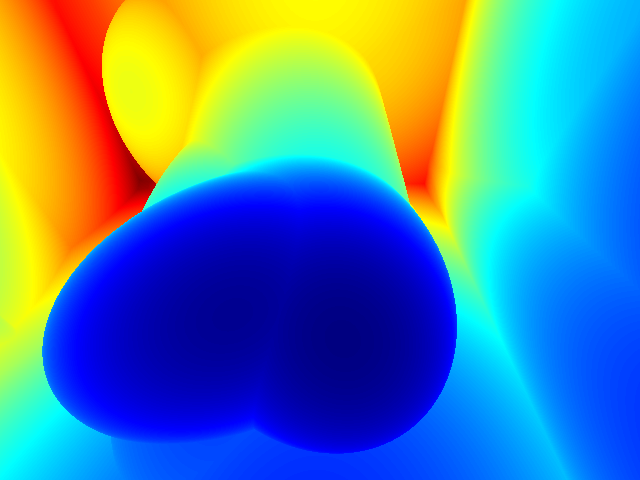}
            \caption{Ellipsoid-based SDDF Prior}
        \end{subfigure}
        \begin{subfigure}[t]{0.24\linewidth}
            \centering
            \includegraphics[width=\linewidth,trim={0pt 0pt 0pt 40pt},clip]{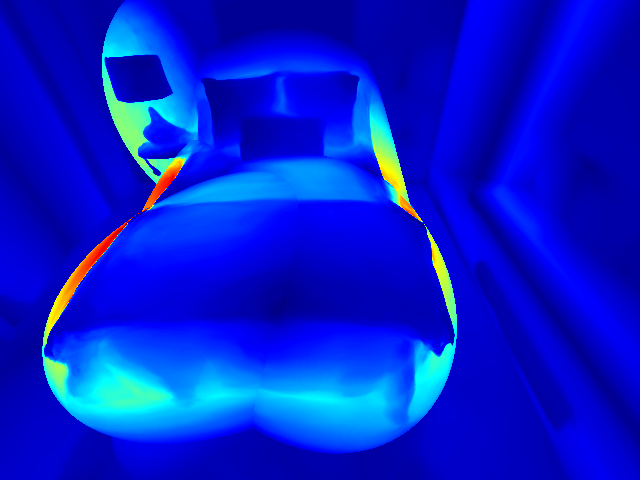}
            \caption{SDDF Residual Prediction}
        \end{subfigure}
        \begin{subfigure}[t]{0.24\linewidth}
            \centering
            \includegraphics[width=\linewidth,trim={0pt 0pt 0pt 40pt},clip]{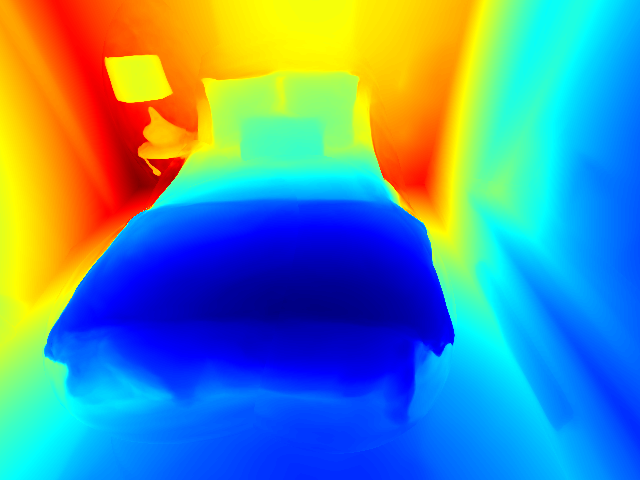}
            \caption{SDDF Prediction}
        \end{subfigure}
        \begin{subfigure}[t]{0.24\linewidth}
            \centering
            \includegraphics[width=\linewidth,trim={0pt 0pt 0pt 40pt},clip]{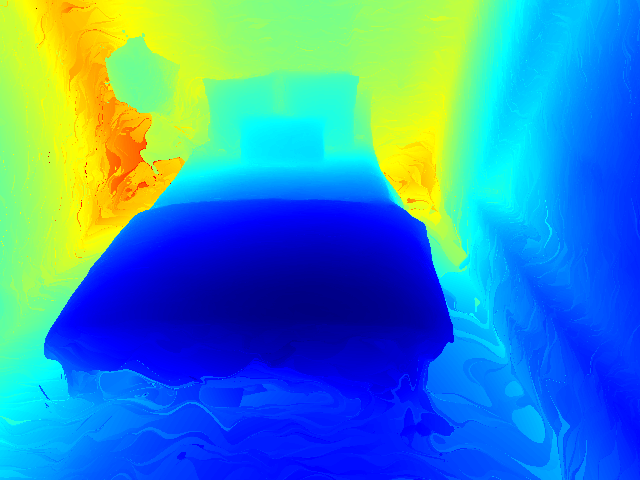}
            \caption{Prediction By RaDe-GS~\cite{radegs2024}}
        \end{subfigure}
    \end{center}
    \caption{
        \vspace{-0.5ex}
        (a), (c): We present a method to learn scene-level signed directional distance function (SDDF). (a), (b), (d): Our method uses ellipsoids as an initial coarse approximation of the shapes of objects in the environment.
        (e), (f): The ellipsoid prior is refined by a latent feature network and a shared decoder to predict the surface reconstruction residual. (f), (g): Our SDDF learning method offers single-query differentiable novel distance image synthesis without RGB supervision as an alternative to Gaussian Splat distance rendering (e.g., RaDe-GS~\cite{radegs2024}) or signed distance function sphere tracing (e.g., InstantNGP~\cite{instantngp2022}).
        \vspace{-1.5ex}
    }
    \label{fig:teaser}
\end{figure*}

\section{Introduction}
\label{sec:introduction}

\IEEEPARstart{F}inding the best 3D scene representation is a challenging problem in computer vision. The appropriate representation varies between different applications depending on the required operations, such as novel view synthesis, surface reconstruction, occupancy estimation, and occlusion checking. While explicit scene representations, e.g., based on meshes~\cite{voxblox2017,wang2018pixel2mesh,kimera2020}, point clouds~\cite{surfelslam2018,sparse2dense2023,dynamic2024}, and voxels~\cite{octomap2013}, are widely used, they are not continuous and do not support differentiation. The former hurts reconstruction accuracy and novel view synthesis, while the latter hinders their uses within downstream tasks requiring end-to-end differentiable rendering.

Recent work has focused on implicit scene representations that support differentiable geometry reconstruction and novel view synthesis.
For example, occupancy networks~\cite{occupancynet2019} and DeepSDF~\cite{deepsdf2019} have shown impressive results by representing surfaces as the zero level set of an occupancy probability function or a signed distance function (SDF).
Neural radiance field (NeRF)~\cite{nerf2020} and Gaussian Splatting (GS)~\cite{gs2023} models learn geometry implicitly through 2D image rendering supervision.
Although these implicit differentiable methods offer superior fidelity, they require multiple network forward passes, complicated calculations per pixel/ray, and high memory use.

A promising recent approach that overcomes these limitations is the signed directional distance function (SDDF). SDDF is a directional formulation of SDF that takes a position and a viewing direction as input (like NeRF and unlike SDF) and provides the distance to the observed surface (like SDF and unlike NeRF).
The benefits of SDDF as a scene representation are three-fold.
First, SDDF models can provide fast, single forward-pass directional distance queries in a differentiable way, supporting operations such as novel view synthesis and differentiable view optimization.
Second, SDDF models can be trained from different kinds of sensor data, including depth images and LiDAR scans, as long as they can be converted to ray distances. This is in contrast with SDF models, which require processing of sensor measurements to obtain SDF supervision data, and with NeRF and GS models that commonly expect camera images.
Third, an SDDF model learns a geometric representation in the space of positions and directions, which allows arbitrary view synthesis and efficient occlusion queries. This is in contrast with an SDF model which requires an iterative sphere tracing algorithm~\cite{spheretracing1996} to compute distance in a desired direction.

\IEEEpubidadjcol  

However, learning scene-level geometry in the space of positions and directions is challenging. Compared to SDF, the introduction of direction as an additional input requires additional training data with diverse viewing directions to provide sufficient supervision. Another difficulty is that SDDF is sensitive to the ray positions and directions because occlusions introduce discontinuities in the observed distance. For these reasons, previous methods for learning directional distance models~\cite{zobeidi2021,pddf2022} are only applicable to single-object shape modeling. In contrast, we consider learning \emph{scene-level} SDDF, which is necessary for many applications ranging from mixed reality to robotics, where rendering and directional distance measurement are required at the scene level.

To address these challenges, we propose a method that combines the advantages of explicit and implicit representations to learn SDDF at the scene level.
As shown in Fig.~\ref{fig:teaser}, our method first constructs an explicit ellipsoid-based prior to capture the coarse structure and occlusions of the environment. Then, an implicit residual neural network model corrects the coarse ellipsoid predictions with precise details that capture the fine structure of the environment. We guarantee that both the ellipsoid prior and the residual network are differentiable, and hence, our SDDF model supports single-query novel view synthesis and differentiable view optimization. While NeRF and GS focus on photometric rendering, our focus in this paper is on geometric reconstruction with efficient depth rendering, which can benefit various applications relying on directional distances. In summary, this paper makes the following contributions.
\begin{itemize}
    \item We introduce a new definition of SDDF suitable for scene-level representation, which extends the original SDDF definition for single objects~\cite{zobeidi2021} to handle occlusions in complex large-scale environments.

    \item We design a hybrid explicit-implicit model to approximate SDDF, consisting of an ellipsoid-based prior and an implicit neural residual, and show that the full model satisfies an Eikonal-like constraint by construction.

    \item We develop an algorithm to initialize the ellipsoid prior and derive the gradients with respect to its parameters in closed-form to accelerate training. Code is available at \url{https://github.com/existentialrobotics/neural_sddf}.

\end{itemize}
Our experiments show that our SDDF reconstruction method achieves competitive results against state-of-the-art, highly optimized implementations of SDF, GS, and NeRF in Replica~\cite{replica19arxiv} and Gibson~\cite{gibson2018} scenes, in terms of reconstruction accuracy, rendering speed, and GPU memory usage.

\section{Related Work}
\label{sec:related_work}
Three bodies of work are related to ours: NeRFs, implicit geometric representations, and directional distance functions.

\subsection{Neural Radiance Fields}

The formulation of radiance fields is a recent direction in scene representation that has led to significant improvements in the synthesis of novel photometric views. The original NeRF model~\cite{nerf2020,nerfstudio2023} employs volumetric rendering with a neural network trained to predict the color and opacity along a view direction at a given position, which are then integrated for rendering.
Such volumetric rendering, however, is computationally expensive because many samples are required for along-ray integration. To mitigate this, subsequent approaches~\cite{nerfpp2020,mipnerf2021,fastnerf2021,bakingnerf2021} introduce explicit data structures for faster and more accurate rendering with a smaller memory footprint. For example, PlenOctree~\cite{plenoctree2021} and Plenoxel~\cite{plenoxels2022} use voxelized data structures such as an octree~\cite{plenoctree2021} or a regular grid~\cite{plenoxels2022} to store spherical harmonics coefficients for better efficiency. Similarly, neural point methods~\cite{npbg2020,npbgpp2022,pointnerf2022,pointRF2022} use point clouds to store SH coefficients or neural features.
Gaussian Splatting (GS)~\cite{gs2023} and its variants such as 2D-GS~\cite{2dgs2024} and rasterizing depth GS (RaDe-GS)~\cite{radegs2024} explore this direction further by using a large number of Gaussians with SH coefficients that can be explicitly rasterized to an image through a projective transform.
These methods show that a hybrid model combining an explicit representation (e.g., point cloud, octree, or Gaussians) with an implicit representation (e.g., SH coefficients, or neural features) can offer a better trade-off between rendering quality and speed.
Drawing from these methods, we also design a hybrid explicit-implicit scene representation.
However, we focus on geometric accuracy instead of photometric accuracy.
One of our key contributions in this paper is a new definition of SDDF suitable for scene-level representation. We leave the application of our SDDF representation to photometric rendering as future work.

\subsection{Implicit Geometric Representations}
Many previous scene representations consider geometric reconstruction. Distance representations such as SDF~\cite{deepsdf2019,neuraltps2025}, unsigned distance function (UDF)~\cite{neuraludf2023,neudf2024}, and truncated SDF (TSDF)~\cite{tsdf2013}, model the scene geometry explicitly and offer fast proximity queries, which are valuable in applications, such as mapping and trajectory optimization in robotics~\cite{safe_bubble_cover,loggpismop,chomp}.
To scale to larger environments, an effective strategy is to use voxel hashing and incrementally estimate and store SDF in a hash table~\cite{voxelhashing2013,voxblox2017}.
Gaussian process (GP) methods~\cite{gpis2019,loggpis2021} use octrees to store incrementally estimated oriented surface points and the corresponding GP models to regress SDF.
Although these methods achieve real-time incremental SDF estimation, they fail to capture small-scale details accurately.

Recent approaches such as DeepSDF~\cite{deepsdf2019} and implicit geometric regularization (IGR)~\cite{igr2020} demonstrate that neural networks can learn SDF accurately, with supervision from oriented surface points and Eikonal regularization~\cite{igr2020}.
To learn a geometric representation from images instead of surface points, geometric representations can be incorporated in a radiance field and used as a means to improve photometric accuracy.
For example, NeuS~\cite{neus2021} replaces the opacity with an SDF so that the geometry (i.e., SDF) and the radiance field are learned together.
RaDe-GS~\cite{radegs2024} proposes a geometric regularization term that improves both geometric and photometric accuracy. These results show that an accurate geometric representation that we pursue here is fundamentally important for both geometric and photometric accuracy.
As part of our contributions, the combination of explicit geometric prior and implicit neural features achieves good balance of accuracy and efficiency in SDDF prediction, which is an important conclusion for future research on implicit geometric representations.

For geometric supervision, SDFDiff~\cite{sdfdiff2020} and DIST~\cite{dist2020} use depth, surface normals, or silhouette for supervision by differentiating through SDF queries in sphere tracing iterations, although the sphere tracing process~\cite{spheretracing1996} itself is not differentiable.
Sphere tracing on SDFs can also be used to compute directional distance as we do but its iterative and cumulative nature leads not only to slower training and rendering but also to error accumulation.

\subsection{Directional Distance Functions}
Directional distance functions (DDFs) provide directional distance in a single query, and are hence more efficient and free from error accumulation compared to SDFs. Directional TSDF~\cite{dtsdf2019} extends TSDF in 3D to six voxel grids, each along or against the $X$, $Y$, and $Z$ axes.
Since the method uses only six discrete directions, its reconstruction accuracy may be limited, but the importance of directional information when modeling thin objects is evident.
A signed ray distance function (SRDF)~\cite{srdf2023} is defined with respect to a given camera pose, as the distance of a 3D point to the scene surface along the viewing lines of the camera.
This definition is useful for multiview stereo because SRDF is zero only when the query point is on the surface.
Volrecon~\cite{volrecon2023} uses the same term, SRDF, to instead denote simply the distance between a point and the scene surface along a ray, which is closer to our definition.
This SRDF is used in volumetric rendering for multiview stereo and high-quality rendering, which shows the utility of the directional distance in capturing geometric details.
However, Volrecon uses multiple transformers, and many samples along each ray, which is prohibitively slow for scaling to larger scenes.

Other methods focus on learning DDFs at the object level.
NeuralODF~\cite{neuralodf2022} and RayDF~\cite{raydf2024} learn ray-surface distances for an object bounded by a sphere that is used to parameterize the ray.
Sphere-based ray parameterization cannot apply to scene-level DDF reconstruction because scenes may be unbounded.
FIRe~\cite{fire2024} combines SDF and DDF to reconstruct object shapes, where DDF renders object shapes efficiently.
However, FIRe is difficult to scale to scene-level representation because it requires silhouettes for good performance, which cannot be directly captured by sensors.
Pointersect~\cite{pointersect2023} uses a transformer to predict the ray travel distance, the surface normal vector, and RGB color given a dense point cloud generated from multiple posed RGB-D images.
Due to the dense point cloud and the transformer, Pointersect exhibits slow rendering and lower output quality when multiple surfaces exist along the ray, which limits its scalability.
The network architecture of~\cite{zobeidi2021} is specialized in learning the SDDF at the object level. It satisfies the directional Eikonal constraint by construction. However, their definition and design are suitable only for learning a single object.
The probabilistic DDF (PDDF)~\cite{pddf2022} introduces an additional ``probability'' network to learn the discontinuities caused by occlusions. The method is designed to learn PDDF at the object level.
Although multiple objects' PDDFs can be fused into a scene PDDF, its scalability is limited, since many independent networks are needed for each object in larger scenes.
Moreover, the omission of sign information obfuscates whether a query position is inside or outside the objects.
Instead, as shown in Fig.~\ref{fig:sddf_examples}, our definition of SDDF has fewer discontinuities than DDF, and can encode complicated geometric structures ranging from small objects to large scenes.
Moreover, we design a hybrid explicit-implicit model to approximate SDDF. The explicit ellipsoidal SDDF prior not only learns a coarse SDDF approximation but also helps to efficiently handle discontinuities. The implicit neural residual further refines the SDDF prediction to capture fine geometric details.

\begin{figure*}[t]
    \centering
    \begin{subfigure}[t]{0.2\linewidth}
        \centering
        \includegraphics[width=\linewidth,trim={0 0pt 0 0pt},clip]{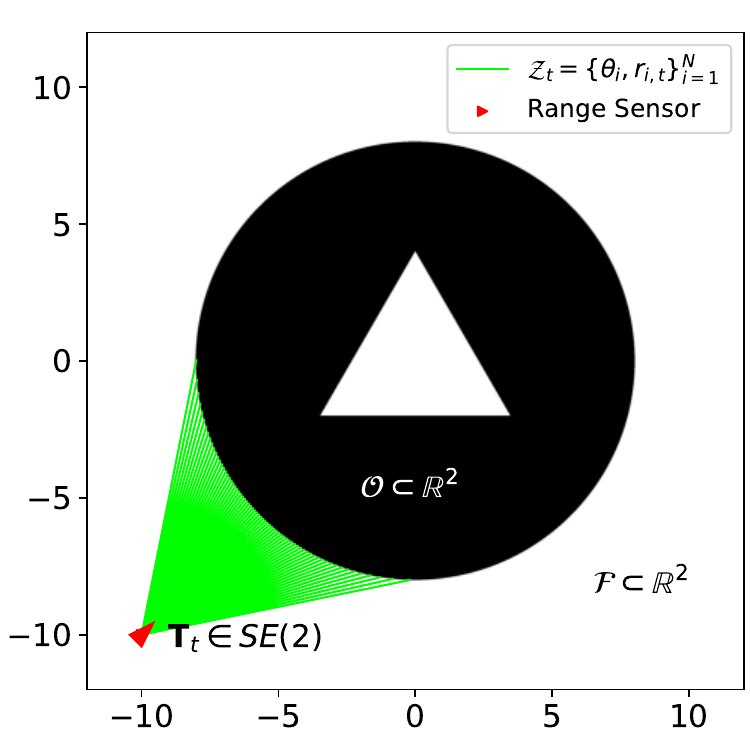}
        \caption{2D Scene} \label{fig:sddf_examples_a}
    \end{subfigure}
    \begin{subfigure}[t]{0.26\linewidth}
        \centering
        \includegraphics[width=\linewidth,trim={0 40pt 0 20pt},clip]{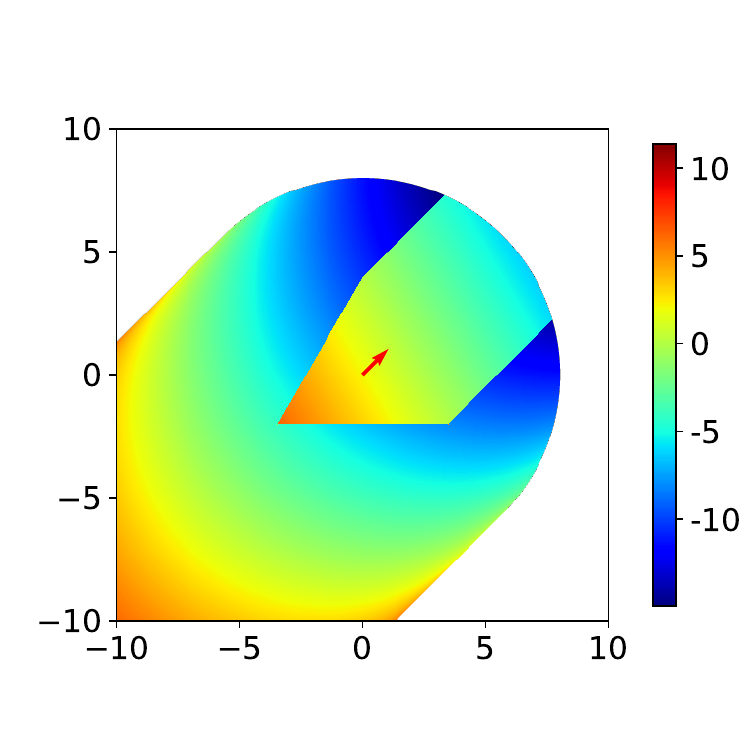}
        \caption{SDDF ($45^\circ$)} \label{fig:sddf_examples_b}
    \end{subfigure}
    \begin{subfigure}[t]{0.26\linewidth}
        \centering
        \includegraphics[width=\linewidth,trim={0 40pt 0 20pt},clip]{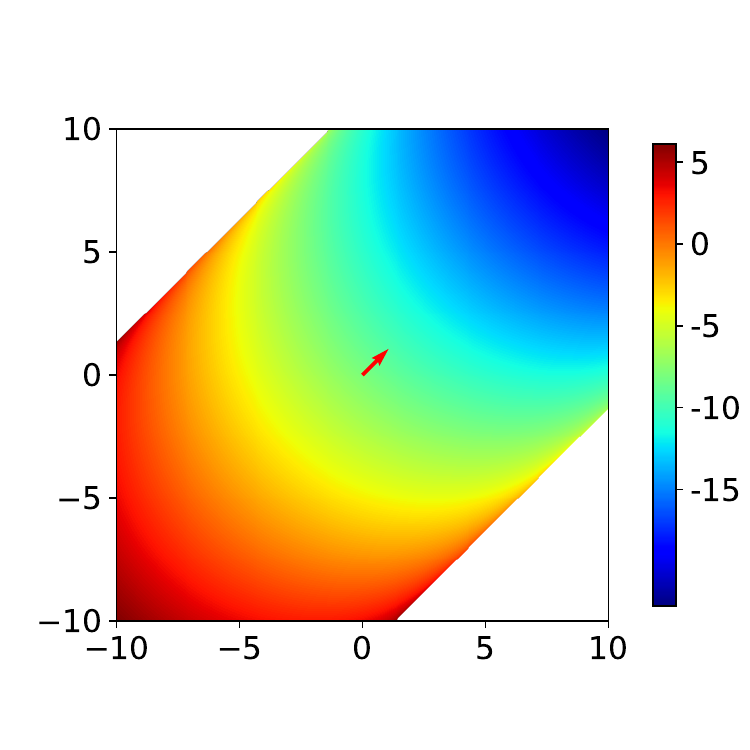}
        \caption{Object SDDF ($45^\circ$)~\cite{zobeidi2021}} \label{fig:sddf_examples_c}
    \end{subfigure}
    \begin{subfigure}[t]{0.26\linewidth}
        \centering
        \includegraphics[width=\linewidth,trim={0 40pt 0 20pt},clip]{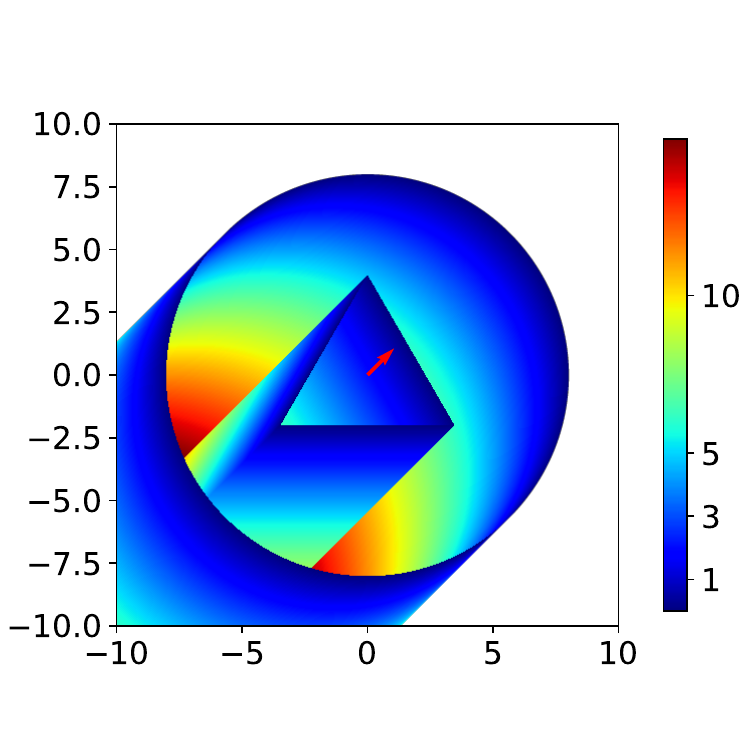}
        \caption{DDF ($45^\circ$)~\cite{pddf2022}} \label{fig:sddf_examples_d}
    \end{subfigure}
    \caption{Example of our scene-level SDDF, the object-level SDDF of~\cite{zobeidi2021}, and the DDF of~\cite{pddf2022} in a 2D synthetic environment. (a): a range sensor (\textcolor{red}{red triangle}) with pose $\bfT_t \in SE(2)$ is measuring the distance to a doughnut-like obstacle $\calO$ (\textcolor{black}{black}) with a triangular hole in the middle (left plot). At time $t$, the sensor measurement $\calZ_t=\{\theta_i,r_{i,t}\}_{i=1}^N$ consists of $N$ range measurements $r_i$ obtained along rays (\textcolor{green}{green lines}) cast at angles $\theta_i$. Measurements from multiple time steps $\{\calZ_t\}_{t=0}^{T-1}$ are collected. (b) to (d): the red arrow in the three plots on the right labels the viewing direction. The white region in the right three plots indicates invalid/infinite field values. Unlike DDF, our SDDF definition is continuous when transitioning from free to occupied space along the viewing direction. Compared with object SDDF~\cite{zobeidi2021}, our SDDF definition reflects the geometry well, allowing scene-level reconstruction.}
    \label{fig:sddf_examples}
\end{figure*}

\section{Method Overview} \label{sec:method_overview}

We aim to design a method for range sensors, such as LiDARs or depth cameras, to learn an environment model capable of efficient and differentiable synthesis of arbitrary distance views. Let the occupied space in the environment be represented by a set $\calO \subset \bbR^n$, where $n$ is the dimension ($n \in \{2,3\}$ in practice). Consider a set of measurements $\{\bfT_t, \calZ_t\}_{t=1}^T$, where $\bfT_t \in SE(n)$ is the sensor pose at time $t$ and $\calZ_t = \{\bfv_{i}, r_{t,i}\}_{i=1}^N$ are the measurements at time $t$, consisting of $N$ viewing directions $\bfv_{i} \in \bbS^{n-1}$ (e.g., unit vector in the direction of each depth camera pixel or unit vector in the direction of each LiDAR scan ray) and the corresponding range measurements $r_{t,i} \in \bbR_{>0}$. Our objective is to learn a representation of the occupied space $\calO$ in the form of a signed directional distance function.

\begin{definition} \label{def:sddf}
    The \emph{signed directional distance function} (SDDF) of a set $\calO \subset \bbR^n$ is a function $f: \bbR^n \times \bbS^{n-1} \to \bbR \cup \{\pm \infty\}$ that measures the signed distance from a point $\bfp \in \bbR^n$ to the set boundary $\partial \calO$ along a direction $\bfv \in \bbS^{n-1}$, defined as:
    \begin{equation}\label{eq:SDDF_def}
        f(\bfp, \bfv ; \calO) :=\!\begin{cases}
             \min\{ d > 0 \mid \bfp + d\bfv \in \partial \calO\}, & \bfp  \not \in \calO, \\
             \max\{ d \leq 0 \mid \bfp + d\bfv \in \partial \calO\}, & \bfp \in \calO.
        \end{cases}
    \end{equation}
\end{definition}

The SDDF definition is illustrated in Fig.~\ref{fig:sddf_examples}. For comparison, the signed distance function (SDF) of a set $\calO$ is defined as the shortest distance from $\bfp \in \bbR^n$ to the boundary $\partial\calO$:
\begin{equation}
    f_\text{SDF}(\bfp; \calO) := \!\begin{cases}
        \phantom{+}\min_{\bfy \in \partial\calO} \left\|\bfp - \bfy\right\|_2, & \bfp \not \in \calO, \\
        -\min_{\bfy \in \partial\calO} \left\|\bfp - \bfy\right\|_2, & \bfp \in \calO.
    \end{cases} \label{eq:sddf_definition}
\end{equation}
The SDF and SDDF of $\calO$ are related as follows:
\begin{equation}
    f_\text{SDF}(\bfp; \calO) = \min_{\bfv \in \bbS^{n-1}} f(\bfp,\bfv; \calO). \label{eq:sdf_definition}
\end{equation}

It is well known~\cite{igr2020,loggpis2021} that SDFs satisfy an Eikonal equation $\left\|\nabla_\bfp f_\text{SDF}(\bfp; \calO)\right\|_2=1$, which is useful for regularizing or designing the structure of models for estimating SDF. The next proposition shows that SDDF satisfies a similar property.

\begin{proposition} \label{prop:sddf_eikonal_equation}
Suppose an SDDF $f(\bfp,\bfv;\calO)$ is differentiable at $\bfp \in \bbR^n$. Then, it satisfies a directional Eikonal equation:
\begin{equation} \label{eq:sddf_eikonal_equation}
\bfv^\top \nabla_\bfp f(\bfp,\bfv;\calO) = -1.
\end{equation}
\end{proposition}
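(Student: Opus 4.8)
The plan is to exploit a translation identity for SDDF along the viewing ray and then differentiate it. The key observation is that, for a fixed direction $\bfv$, the SDDF restricted to the line $t \mapsto \bfp + t\bfv$ is an affine function of $t$ with slope $-1$: moving the query point forward along $\bfv$ by $t$ simply reduces the remaining distance to the first intersection with $\partial\calO$ by $t$. Concretely, I would first establish that if $f(\bfp,\bfv;\calO)$ is finite (which it must be if $f$ is differentiable at $\bfp$), then there is an $\epsilon > 0$ such that
\begin{equation} \label{eq:sddf_translation}
f(\bfp + t\bfv, \bfv; \calO) = f(\bfp,\bfv;\calO) - t \qquad \text{for all } |t| < \epsilon .
\end{equation}

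To prove \eqref{eq:sddf_translation}, I would argue by cases following Definition~\ref{def:sddf}, using mild topological assumptions on $\calO$ (that $\calO$ is closed, so that its complement and its interior are open and $\partial\calO \subseteq \calO$). Suppose first $\bfp \notin \calO$ and write $d^\star = f(\bfp,\bfv;\calO) = \min\{d > 0 : \bfp + d\bfv \in \partial\calO\} > 0$. Let $H = \{s \in \bbR : \bfp + s\bfv \in \partial\calO\}$ be the set of ray parameters that hit the boundary. One checks that $H \cap (-\epsilon, d^\star) = \emptyset$ for $\epsilon$ small: on $(0,d^\star)$ this is the definition of $d^\star$; at $s=0$ it holds because $\bfp \notin \calO \supseteq \partial\calO$; and on $(-\epsilon,0)$ it holds because $\bfp + s\bfv$ stays in the (open) complement of $\calO$ for $|s|$ small. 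Since $H$ is translated-invariantly the same for the ray from $\bfp + t\bfv$, and $d^\star \in H$, the smallest element of $H$ in $(t,\infty)$ is $d^\star$, so the first forward hit occurs at parameter $d^\star - t > 0$, i.e. $f(\bfp + t\bfv,\bfv;\calO) = d^\star - t$. The case $\bfp \in \mathrm{int}\,\calO$ is symmetric, using the ``$\max\{d \le 0\}$'' branch and openness of $\mathrm{int}\,\calO$; the case $\bfp \in \partial\calO$ need not be treated, since there $f(\bfp,\bfv;\calO) = 0$ and $f$ is generically non-differentiable, so the statement is vacuous under the hypothesis.

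With \eqref{eq:sddf_translation} in hand, define $g(t) := f(\bfp + t\bfv, \bfv; \calO)$ on $(-\epsilon,\epsilon)$. The right-hand side of \eqref{eq:sddf_translation} is affine in $t$ with derivative $-1$, so $g'(0) = -1$. On the other hand, since $f(\cdot,\bfv;\calO)$ is differentiable at $\bfp$ (with $\bfv$ held fixed) and $t \mapsto \bfp + t\bfv$ is smooth, the chain rule gives $g'(0) = \nabla_\bfp f(\bfp,\bfv;\calO)^\top \bfv$. Equating the two expressions yields $\bfv^\top \nabla_\bfp f(\bfp,\bfv;\calO) = -1$, which is \eqref{eq:sddf_eikonal_equation}.

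I expect the only delicate point to be the rigorous justification of the translation identity \eqref{eq:sddf_translation} near $t = 0$ — in particular the backward direction $t < 0$, where one must rule out new boundary intersections appearing between $\bfp + t\bfv$ and $\bfp$, and the bookkeeping of which assumptions on $\calO$ guarantee that a small translation of $\bfp$ does not change its inside/outside status. Everything after that is a one-line chain-rule computation. (As a sanity check, combining \eqref{eq:sddf_eikonal_equation} with \eqref{eq:sdf_definition} recovers $\|\nabla_\bfp f_\text{SDF}\|_2 \ge 1$ at minimizing directions, consistent with the usual SDF Eikonal equation.)
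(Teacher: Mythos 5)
Your proposal is correct and is essentially the same argument as the paper's: the paper computes the difference quotient $\lim_{\delta\to 0}\bigl(f(\bfp+\delta\bfv,\bfv)-f(\bfp,\bfv)\bigr)/\delta = \lim_{\delta\to 0}\bigl((d-\delta)-d\bigr)/\delta = -1$ under the (stated but unproved) assumption that the perturbed ray hits the same surface point, which is exactly your translation identity. Your write-up simply makes rigorous the step the paper leaves informal, namely ruling out new boundary intersections for small perturbations along $\bfv$.
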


\begin{proof}\renewcommand{\qedsymbol}{}
    When the ray hits the same surface point $\bfq=\bfp + d\bfv$ as $\bfp$ moves along $\bfv$, we have:
    \begin{align*}
        & \bfv^\top \nabla_\bfp f(\bfp, \bfv;\calO) = \lim_{\delta \rightarrow 0} \frac{f(\bfp+\delta\bfv,\bfv) - f(\bfp,\bfv)}{\delta} \\
        & =\lim_{\delta \rightarrow 0} \frac{(d - \delta) - d}{\delta} = -1. \tag*{\IEEEQEDclosed}
    \end{align*}
    \vspace{-2ex}
\end{proof}

This directional Eikonal equation indicates that as the position $\bfp$ moves towards (away from) the intersected surface along the direction $\bfv$, the SDDF value decreases (increases) at unit rate. This property is useful for designing the structure or regularizing a neural network representation of SDDF.

\begin{figure*}[t]
    \centering
    \includegraphics[width=\linewidth]{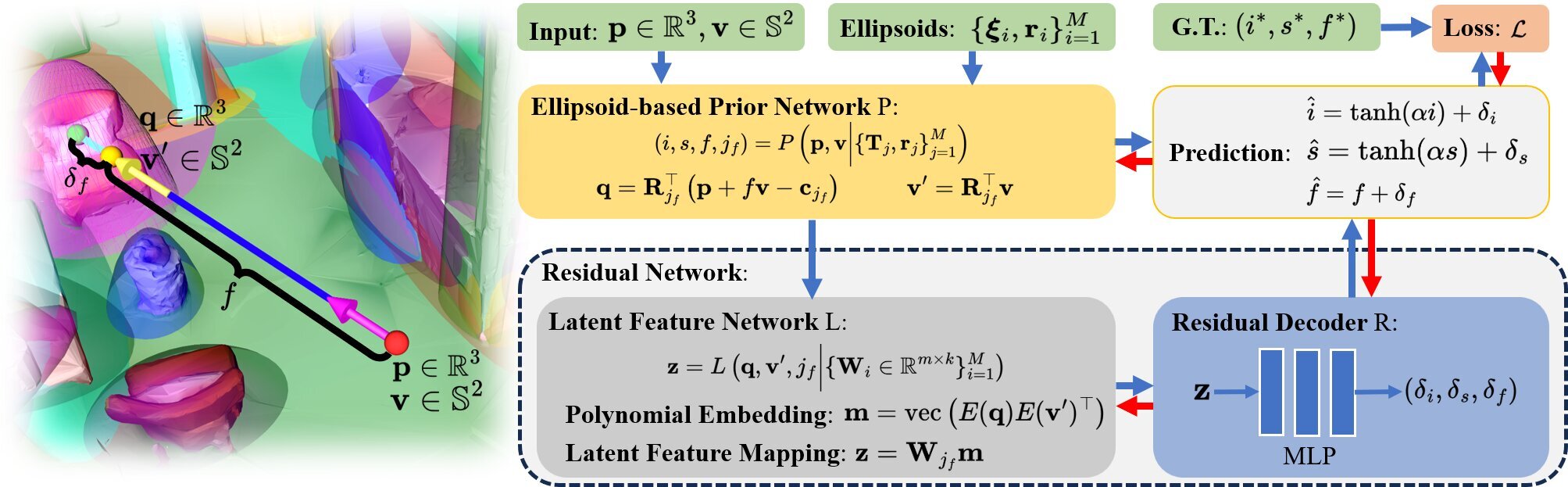}
    \caption{\textbf{Method overview.} Given a query ray from position $\bfp\in\bbR^3$ in direction $\bfv\in\bbS^2$, an ellipsoid-based \textbf{P}rior network $P$ uses $M$ ellipsoids $\{\bfxi_i,\bfr_i\}_{i=1}^M$ to learn the rough shape of the environment such that it can determine the closest ellipsoid intersected by the ray and predict an SDDF prior. Then, with the intersection point $\bfq\in\bbR^3$ and ray direction $\bfv'\in\bbS^2$ in the ellipsoid's local frame, a \textbf{L}atent network $L$ generates a latent feature $\bfz\in\bbR^m$, which is decoded by the \textbf{R}esidual decoder $R$ into residual predictions $\left(\delta_i,\delta_s,\delta_f\right)$, i.e., the difference between the ground truth and the prior. Finally, we compose the SDDF prediction as $\hat{f}=f+\delta_f$. Blue arrows show the data flow in the forward pass, while red arrows represent the backward pass.}
    \label{fig:overview}
\end{figure*}

Previous work~\cite{zobeidi2021} proposed an object SDDF that is defined as $\min\{d\in\bbR \mid \bfp + d\bfv \in \partial\calO\}$.
However, this definition is only suitable for learning the shape of a single object because when there is another object behind the viewing position $\bfp$, the object SDDF will ignore the object that the viewing direction $\bfv$ is pointing at. Fig.~\ref{fig:sddf_examples_c} shows an example of this issue.

PDDF~\cite{pddf2022} addresses this problem by learning directional distance without sign, defined as $\min\{d \ge 0 \mid \bfp + d\bfv \in \partial\calO\}$. However, as shown in Fig.~\ref{fig:sddf_examples_d}, this definition introduces several discontinuities, which is not favorable for learning a directional distance model from data.

Our definition of SDDF, shown in Fig.~\ref{fig:sddf_examples_b}, guarantees that the distance continuously changes from positive to negative values when a viewing ray $\bfv$ enters the occupied space $\calO$ from \emph{outside in}. However, when the ray leaves $\calO$ from \emph{inside out}, there is a discontinuity in the distance. Fortunately, such a discontinuity does not occur in practice because sensors remain within free space, only observing $\calO$ from \emph{outside in}. Thus, our definition can accurately model occlusions while retaining the sign to distinguish whether the query position is in free space, unlike previous formulations of DDF~\cite{pddf2022}.

Yet, learning a solely continuous, implicit representation of our new SDDF definition remains challenging.
Unlike an SDF, SDDF is sensitive to both position and direction, as small input perturbations may lead to hitting a different obstacle surface, resulting in a significant change in SDDF value.
The inherent discontinuity and the added input dimensions for the ray direction demand more training data over different ray directions.
Explicit representations have advantages in handling such discontinuities with less data,
but purely explicit representations struggle to achieve high-fidelity reconstruction and differentiable view synthesis, as they are discrete.
In contrast, implicit representations are good at learning geometric details and allow differentiation.

Therefore, as shown in Fig.~\ref{fig:overview}, our model combines both explicit and implicit representations.
First, in Sec.~\ref{sec:geometry_prior_net}, an explicit ellipsoid-based \textbf{P}rior network $P(\bfp, \bfv)$ is introduced to predict a coarse SDDF prior $f(\bfp,\bfv)$.
Then, in Sec.~\ref{sec:residual_net}, we present a residual network consisting of a \textbf{L}atent feature network $L$ and a \textbf{R}esidual decoder $R$ that predicts an SDDF correction $\delta_f(\bfp,\bfv)$, so that the combination accurately models the true SDDF $f^*(\bfp,\bfv)$ as $\hat{f}(\bfp,\bfv) = f(\bfp,\bfv) + \delta_f(\bfp,\bfv)$.

\section{Ellipsoid-based Prior Network}\label{sec:geometry_prior_net}

To take advantage of an explicit representation for occlusion modeling, we design an ellipsoid-based prior network $P$. The prior uses a set of ellipsoids to approximate the structure of the environment based on the range measurements and leaves the task of learning fine details to the residual network $R$.

\subsection{SDDF of a Single Ellipsoid} \label{sec:ellipsoid}

First, for simplicity, consider a single ellipsoid given by:
\begin{equation}
    \calE = \{ \bfy \in \bbR^3 \mid (\bfy-\bfc)^\top \bfR \bfQ_0^{-2} \bfR^\top (\bfy - \bfc) \le 1 \},
\end{equation}
where $\bfc\in\bbR^3$ and $\bfR\in SO(3)$ are the position and orientation, $\bfQ_0=\diag{\bfr}$, and $\bfr \in \bbR_+^3$ are the radii of the ellipsoid.

We parameterize the ellipsoid pose as:
\begin{equation}\label{eq:pose_parametrization}
    \bfT = \begin{bmatrix}
        \bfR & \bfc \\ \mathbf{0}^\top & 1
    \end{bmatrix} = \bfT_0 \exp\left(\bfxi^\wedge\right), \quad \bfxi^\wedge = \begin{bmatrix}
        \bftheta^\wedge & \bfrho \\
        \mathbf{0}^\top & 0
    \end{bmatrix},
\end{equation}
where $\bfT_0 \in SE(3)$ is initialized and fixed, and $\bfxi = (\bfrho, \bftheta) \in \bbR^6$ is learnable. In~\eqref{eq:pose_parametrization}, the function $\bftheta^\wedge$ maps a vector $\bftheta \in \bbR^3$ to a corresponding skew-symmetric matrix, and $\exp$ is the matrix exponential function.

To ensure that the radii $\bfr\in\bbR_+^3$, we parameterize it as $\bfr=\bfr_0\exp\left(\bfs\right)$, where $\bfr_0\in\bbR_+^3$ is initialized and fixed, $\bfs \in \bbR^3$ is learnable, and $\exp$ is applied element-wise.

Then, we derive the SDDF of an ellipsoid $\calE$ in closed form.
\begin{proposition} \label{prop:ellipsoid_sddf}
    Consider a ray from position $\bfp \in \bbR^3$ in direction $\bfv \in \bbS^2$ and ellipsoid $\calE \subset \bbR^3$. If the ray does not intersect $\calE$, the ellipsoid SDDF is $f(\bfp,\bfv;\calE) = \infty$. Otherwise:
    \begin{equation} \label{eq:intersection_sol}
        f(\bfp,\bfv; \calE) = -\frac{\det{\bfQ_0} \sqrt{i(\bfp,\bfv)} + {\bfp'}^\top \bfQ_1^2 \bfv'}{{\bfv'}^\top \bfQ^2_1 \bfv'},
    \end{equation}
    where
    \begin{equation}\label{eq:ellipsoid_intersection}
        i(\bfp,\bfv) = \bfv'^\top \bfQ_1^2 \bfv'-\bfw'^\top \bfQ_0^2 \bfw'
    \end{equation}
    is an intersection indicator,
    $\bfQ_1=\det(\bfQ_0) \bfQ_0^{-1}$, $\bfp' =\bfR^\top\left(\bfp-\bfc\right)$, $\bfv' = \bfR^\top\bfv$, and $\bfw' = \bfp' \times\bfv'$.
\end{proposition}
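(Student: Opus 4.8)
The plan is to pass to the body frame of $\calE$, reduce the ray--ellipsoid intersection to a scalar quadratic in the travel distance $d$, select the branch dictated by Definition~\ref{def:sddf}, and then rewrite the closed-form root in the stated form via a Lagrange-type identity for quadratic forms. Since $\bfR\in SO(3)$ acts as an isometry, $f(\bfp,\bfv;\calE)=f(\bfp',\bfv';\calE')$ with $\bfp'=\bfR^\top(\bfp-\bfc)$, $\bfv'=\bfR^\top\bfv$, and $\calE'=\{\bfy\in\bbR^3\mid\bfy^\top\bfQ_0^{-2}\bfy\le1\}$. Substituting the ray point $\bfp'+d\bfv'$ into $\bfy^\top\bfQ_0^{-2}\bfy=1$ gives $ad^2+bd+c=0$ with $a=\bfv'^\top\bfQ_0^{-2}\bfv'>0$, $b=2\,\bfp'^\top\bfQ_0^{-2}\bfv'$, and $c=\bfp'^\top\bfQ_0^{-2}\bfp'-1$. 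The line through $\bfp'$ along $\bfv'$ meets $\calE'$ iff the discriminant $\Delta:=b^2-4ac\ge0$; when $\Delta<0$ the ray misses $\calE$ and $f=\infty$.

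Next I would identify the SDDF branch. Note $c>0$ exactly when $\bfp\notin\calE$ and $c<0$ when $\bfp$ is interior. If $\bfp\notin\calE$ and the line meets $\calE'$, the two roots share the sign of $-b$, so the first surface hit is the smaller root $d_-=(-b-\sqrt{\Delta})/(2a)$, with a genuine forward hit corresponding to $d_->0$; if $\bfp$ is interior, the roots straddle $0$ and, since $\Delta>b^2$ forces $-b-\sqrt{\Delta}\le0$, the unique nonpositive root is again $d_-$. Hence in every case of Definition~\ref{def:sddf}, $f(\bfp,\bfv;\calE)=d_-=(-b-\sqrt{\Delta})/(2a)$.

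The remaining work is algebraic. The key step is the identity $(\bfa^\top M\bfa)(\bfb^\top M\bfb)-(\bfa^\top M\bfb)^2=(\bfa\times\bfb)^\top\operatorname{adj}(M)(\bfa\times\bfb)$ for symmetric $M\in\bbR^{3\times3}$ --- for $M=\bfI$ this is $\|\bfa\|^2\|\bfb\|^2-(\bfa\cdot\bfb)^2=\|\bfa\times\bfb\|^2$, and the general case follows by diagonalizing $M$ and using that rotations commute with the cross product. Applying it with $M=\bfQ_0^{-2}$, $\bfa=\bfp'$, $\bfb=\bfv'$, together with $\operatorname{adj}(\bfQ_0^{-2})=\det(\bfQ_0)^{-2}\bfQ_0^2$ and $\bfQ_1^2=\det(\bfQ_0)^2\bfQ_0^{-2}$, gives $\Delta=4\det(\bfQ_0)^{-2}\big(\bfv'^\top\bfQ_1^2\bfv'-\bfw'^\top\bfQ_0^2\bfw'\big)=4\det(\bfQ_0)^{-2}\,i(\bfp,\bfv)$ with $\bfw'=\bfp'\times\bfv'$, which simultaneously confirms that $i(\bfp,\bfv)$ in \eqref{eq:ellipsoid_intersection} is the intersection indicator. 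Substituting $a=\det(\bfQ_0)^{-2}\bfv'^\top\bfQ_1^2\bfv'$, $b=2\det(\bfQ_0)^{-2}\bfp'^\top\bfQ_1^2\bfv'$, and $\sqrt{\Delta}=2\det(\bfQ_0)^{-1}\sqrt{i(\bfp,\bfv)}$ into $d_-$ and canceling the common factor $\det(\bfQ_0)^{-2}$ produces exactly \eqref{eq:intersection_sol}.

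I expect the algebraic paragraph to be the main obstacle: spotting that the Lagrange/Cauchy--Binet identity is what converts $(\bfp'^\top\bfQ_0^{-2}\bfv')^2-(\bfv'^\top\bfQ_0^{-2}\bfv')(\bfp'^\top\bfQ_0^{-2}\bfp')$ into the $\bfw'$-term, and then bookkeeping the powers of $\det(\bfQ_0)$ linking $\bfQ_0$, $\bfQ_0^{-2}$, $\operatorname{adj}(\bfQ_0^{-2})$, and $\bfQ_1$; the body-frame reduction and the root selection are routine. Two minor points deserve a sentence each: $i(\bfp,\bfv)\ge0$ tests intersection of the whole line, so for $\bfp\notin\calE$ one should separately note the forward-hit condition $d_->0$; and the degenerate configurations (ray tangent to $\calE$, or $\bfp\in\partial\calE$) are measure-zero and handled by continuity.
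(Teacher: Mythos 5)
Your proposal is correct and follows essentially the same route as the paper's proof: reduce to a scalar quadratic in the travel distance in the ellipsoid body frame, select the smaller root per Definition~\ref{def:sddf}, and convert the discriminant into the cross-product form of $i(\bfp,\bfv)$ via the Lagrange/adjugate identity (the paper performs the same conversion through an explicit chain of manipulations with $\bfw'^{\wedge}$, working with $\bfQ_1^2$ throughout rather than converting from $\bfQ_0^{-2}$ at the end). Your discussion of branch selection and of the forward-hit caveat when $\bfp\notin\calE$ is in fact more careful than the paper's one-line justification.
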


The proof of Proposition~\ref{prop:ellipsoid_sddf} is provided in Supplemental~\ref{sec:appdx_proof_prop_ellipsoid_sddf}. The expression in~\eqref{eq:intersection_sol} is one of the two solutions of the quadratic equation formed by combining the ray equation $\bfq = \bfR^\top (\bfp + f(\bfp,\bfv;\calE)\bfv - \bfc)$ and the ellipsoid equation $\bfq^\top \bfQ_0^{-2} \bfq = 1$. It corresponds to the first intersection point along the ray direction, which aligns with the SDDF definition in Definition~\ref{def:sddf}.
The intersection indicator $i(\bfp,\bfv)$ in~\eqref{eq:ellipsoid_intersection} is positive when the line through $\bfp$ in direction $\bfv$ intersects $\calE$ and negative otherwise. When the line is tangent to the ellipsoid, we have $i(\bfp,\bfv)=0$. We also introduce a sign indicator:
\begin{equation} \label{eq:ellipsoid_sign}
    s(\bfp,\bfv) = {\bfp'}^\top \bfQ_1^2 \bfp' - \det{\bfQ^2_0},
\end{equation}
which is negative when $\bfp \in \calE$ and positive otherwise.
We use the above equation instead of $\bfp'^\top \bfQ_0^{-2}\bfp' - 1$ as the sign indicator for better numerical stability.

In order to specify a reasonable SDDF prior when the line does not intersect the ellipsoid, i.e., when $i(\bfp,\bfv)<0$, we change the SDDF from $\infty$ to:
\begin{equation} \label{eq:prior_no_intersection}
    f(\bfp,\bfv;\calE) = -\frac{{\bfp'}^\top \bfQ_1^2 \bfv'}{{\bfv'}^\top \bfQ^2_1 \bfv'}.
\end{equation}
The expression in~\eqref{eq:prior_no_intersection} gives the distance from $\bfp'$ along $\bfv'$ to a virtual plane at the ellipsoid origin with normal vector $\bfQ_1^2\bfv'/\left\| \bfQ_1^2\bfv' \right\|_2$.
This guarantees that $f(\bfp,\bfv;\calE)$ changes smoothly when $i(\bfp,\bfv)$ changes sign.

The expression in~\eqref{eq:intersection_sol} is valid only when the ray from $\bfp$ in the direction of $\bfv$ intersects $\calE$. If the ellipsoid $\calE$ is behind the ray, i.e., $\bfp \not \in \calE$ and $s(\bfp,\bfv) > 0$, then~\eqref{eq:intersection_sol} is negative but $f(\bfp,\bfv;\calE) = \infty$. We introduce a validity function $v(\bfp,\bfv)$, which is negative when $\calE$ is behind the view ray. This allows us to combine~\eqref{eq:intersection_sol} and~\eqref{eq:prior_no_intersection} with consideration of validity to obtain a modified SDDF prior for a single ellipsoid:
\begin{equation} \label{eq:single_ellipsoid_sddf_prior}
    f(\bfp,\bfv; \calE) = \!\begin{cases}
        -\frac{\det{\bfQ_0} \sqrt{\beta} + {\bfp'}^\top \bfQ_1^2 \bfv'}{{\bfv'}^\top \bfQ^2_1 \bfv'}, & v(\bfp,\bfv) \ge 0, \\
        \infty, & v(\bfp,\bfv) < 0,
    \end{cases}
\end{equation}
where $\beta=\max(i(\bfp,\bfv),0) + \epsilon$, $\epsilon > 0$ is a small value introduced for numerical stability of backward propagation, and the validity indicator function is defined as:
\begin{equation} \label{eq:prior_validity}
    v(\bfp,\bfv) = -\frac{\det{\bfQ_0} \sqrt{\beta} + {\bfp'}^\top \bfQ_1^2 \bfv'}{{\bfv'}^\top \bfQ^2_1 \bfv'} s(\bfp,\bfv),
\end{equation}
which requires that the sign of a valid SDDF and the sign of the indicator in~\eqref{eq:ellipsoid_sign} should agree.
A 2D example in Fig.~\ref{fig:prior_sddf_example} shows that the ellipsoid SDDF prior $f(\bfp,\bfv; \calE)$ in~\eqref{eq:single_ellipsoid_sddf_prior} reflects Definition~\ref{def:sddf} correctly.

\begin{figure*}
    \centering
    \begin{subfigure}[t]{0.24\linewidth}
        \centering
        \includegraphics[width=\linewidth,trim={0 15pt 20 10pt},clip]{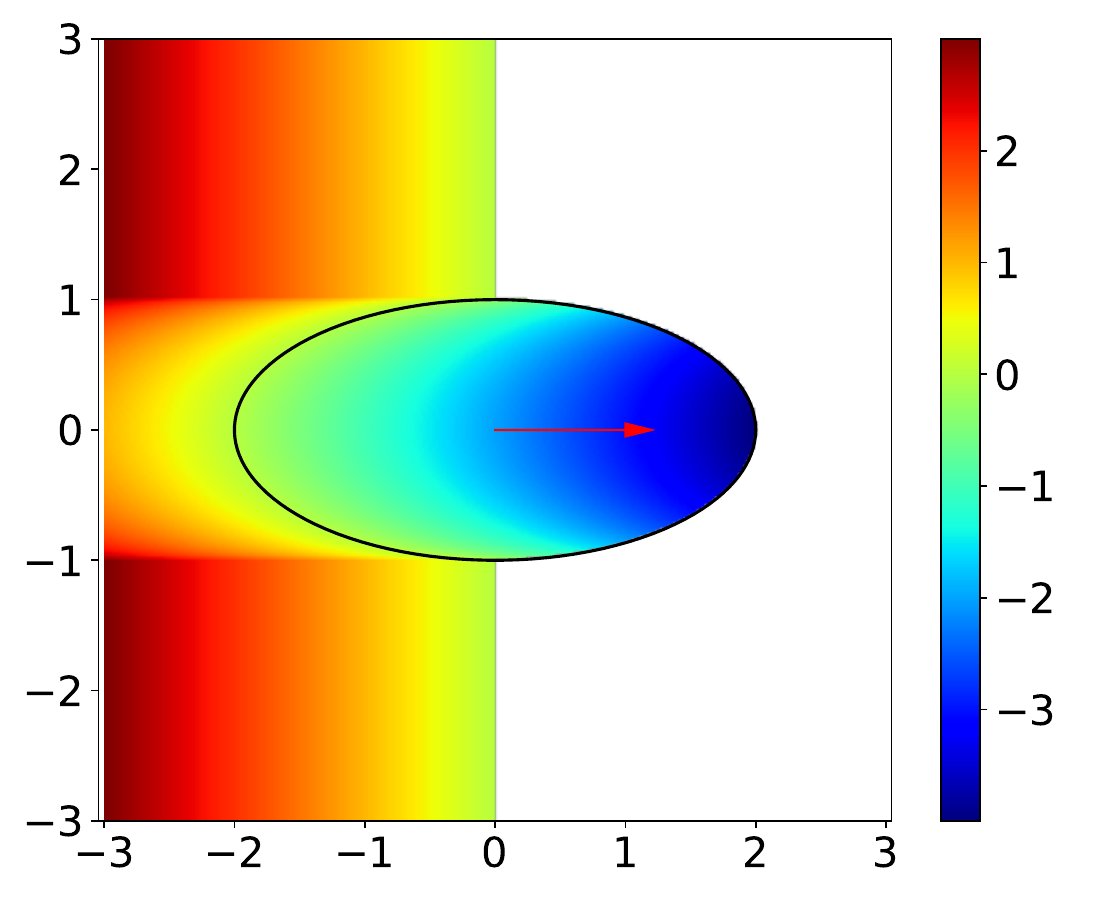}
        \caption{$0^\circ$}
    \end{subfigure}
    \begin{subfigure}[t]{0.24\linewidth}
        \centering
        \includegraphics[width=\linewidth,trim={0 15pt 20 10pt},clip]{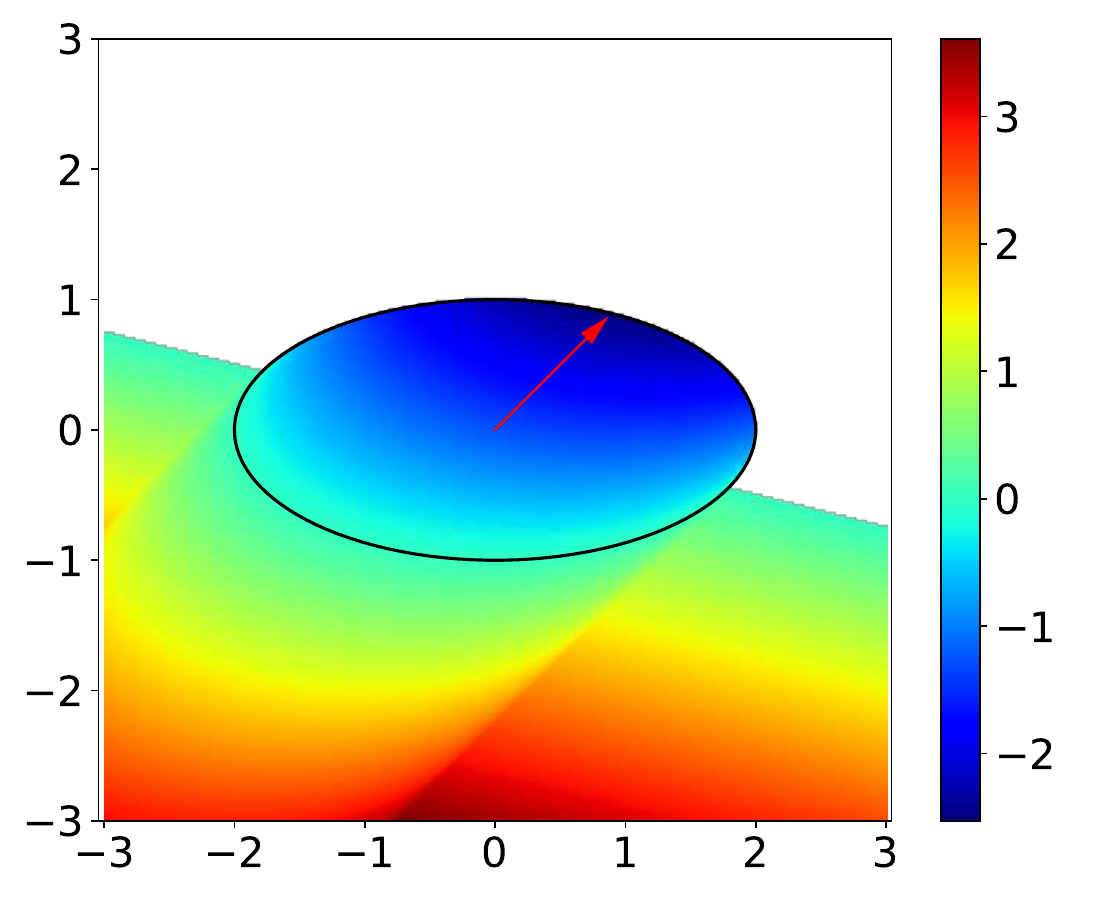}
        \caption{$45^\circ$}
    \end{subfigure}
    \begin{subfigure}[t]{0.24\linewidth}
        \centering
        \includegraphics[width=\linewidth,trim={0 15pt 20 10pt},clip]{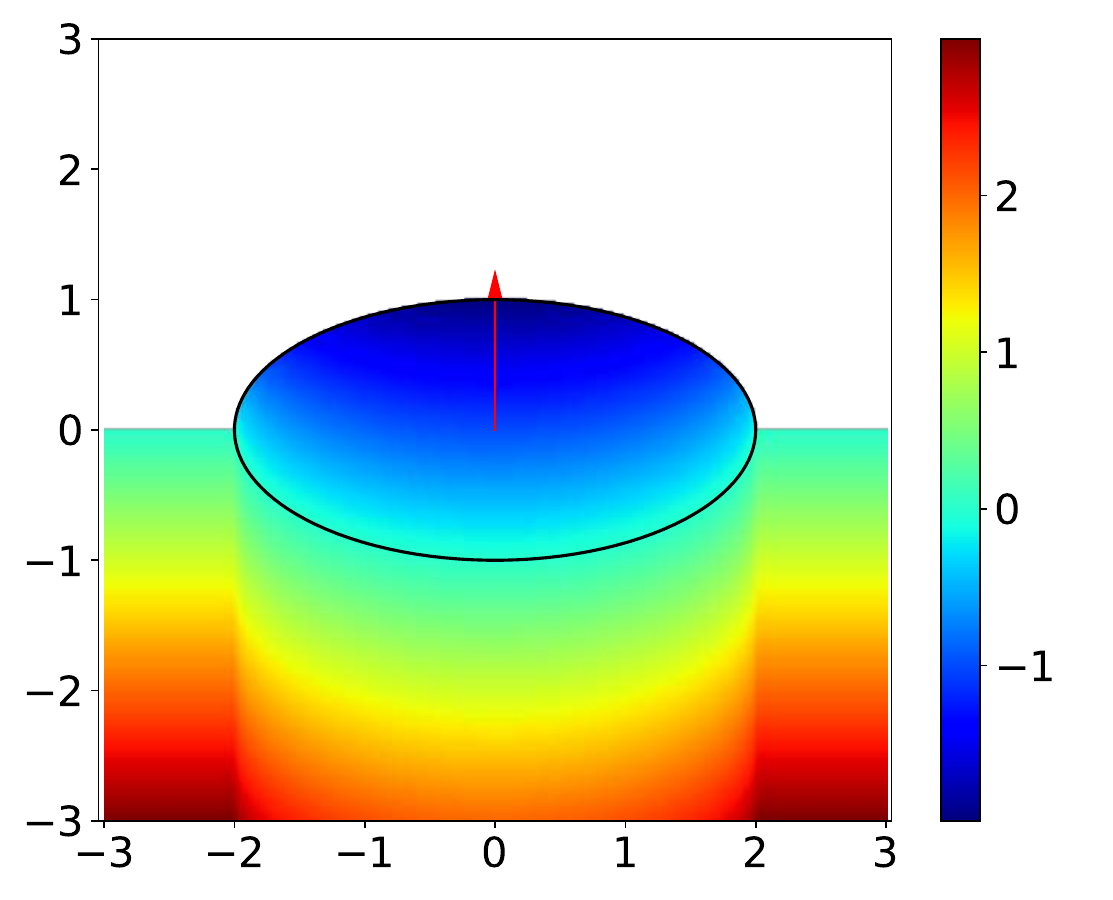}
        \caption{$90^\circ$}
    \end{subfigure}
    \begin{subfigure}[t]{0.24\linewidth}
        \centering
        \includegraphics[width=\linewidth,trim={0 15pt 20 10pt},clip]{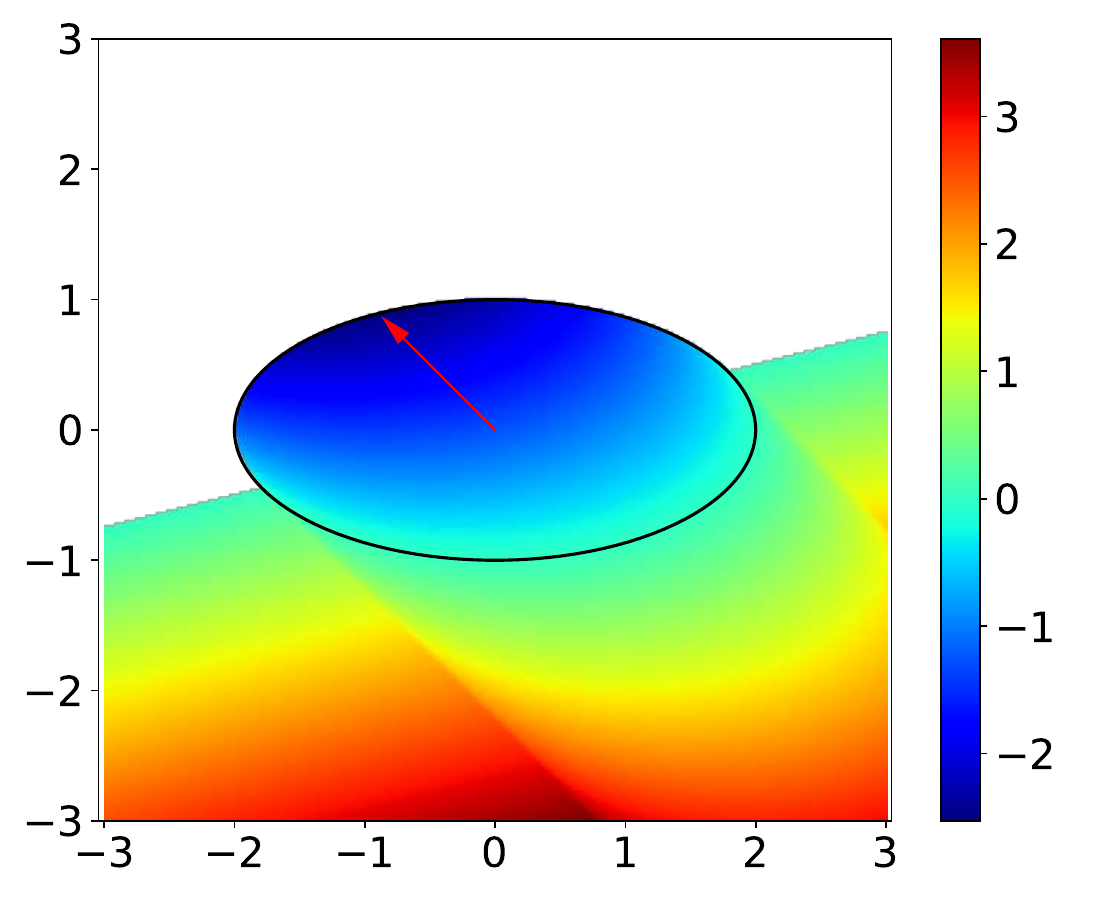}
        \caption{$135^\circ$}
    \end{subfigure}
    \vspace{-0.5ex}
    \caption{2D visualization of the single ellipsoid SDDF $f(\bfp,\bfv; \calE)$ in~\eqref{eq:single_ellipsoid_sddf_prior} for fixed $\bfv$ and varying $\bfp$. According to~\eqref{eq:single_ellipsoid_sddf_prior}, the SDDF prior for a single ellipsoid is finite when $\bfp$ is inside or when the ray intersects the ellipsoid or the virtual plane when $\bfp$ is outside the ellipsoid. Otherwise, the SDDF prior is infinite, indicating that the ellipsoid is behind the ray. The virtual plane is placed at the ellipsoid center with normal vector $\bfQ_1^2\bfv/\left\| \bfQ_1^2\bfv \right\|_2$, which varies with the viewing direction $\bfv$.}
    \vspace{-1.5ex}
    \label{fig:prior_sddf_example}
\end{figure*}

The ellipsoid prior is also compatible with unsigned DDF, which is advantageous when dealing with non-watertight or thin objects where sign is ill-defined. In this case, one of the axis radii may be set to 0 so that the 3D ellipsoid becomes a 2D disk. Additional details are presented in Supplemental~\ref{sec:appdx_ddf_extension}.

\subsection{Fusing Multiple Ellipsoid SDDFs}
\label{sec:fuse_ellipsoid}

To model scenes with multiple objects at different locations, we consider a set of $M$ ellipsoids $\calE_j$ for $1\le j \le M$. Sec.~\ref{sec:ellipsoid} showed how to determine the SDDF $f_j(\bfp,\bfv)$, intersection indicator $i_j(\bfp,\bfv)$, and sign indicator $s_j(\bfp,\bfv)$ for a single ellipsoid in~\eqref{eq:single_ellipsoid_sddf_prior},~\eqref{eq:ellipsoid_intersection} and~\eqref{eq:ellipsoid_sign}, respectively. If $\bfp \in \calE_j$, then $s_j(\bfp,\bfv) \leq 0$. Hence, to determine whether $\bfp$ is contained in any ellipsoid, we can find the minimum $s_j(\bfp,\bfv)$. When a ray $(\bfp, \bfv)$ intersects an ellipsoid, we have $i_j(\bfp,\bfv) \geq 0$. Hence, to determine whether any ellipsoid is intersected, we can find the maximum $i_j(\bfp,\bfv)$. Finally, the SDDF of a union of ellipsoids is equal to the minimum of the individual ellipsoid SDDFs but with the intersected ellipsoids prioritized. In summary, the intersection indicator, the sign indicator, and the SDDF value of a union of $M$ ellipsoids are:
\begin{align}
    i(\bfp,\bfv; \cup_j \calE_j) &= \max_j i_j(\bfp,\bfv), \label{eq:fuse_prior_i} \\
    s(\bfp,\bfv; \cup_j \calE_j) &= \min_j s_j(\bfp,\bfv), \label{eq:fuse_prior_s} \\
    f(\bfp,\bfv; \cup_j \calE_j) &= \!\begin{cases}
        \displaystyle{\min_{j : i_j(\bfp,\bfv) \geq 0}} f_{j}(\bfp,\bfv), & \exists i_j(\bfp,\bfv) \ge 0, \\
        \min_j f_{j}(\bfp,\bfv), & \text{otherwise.}
    \end{cases} \label{eq:fuse_prior_f}
\end{align}

Thus, given $M$ ellipsoids with pose and radii $\bfT_j$, $\bfr_j$ for $1\le j \le M$, we define our ellipsoid-based prior network as:
\begin{equation}\label{eq:prior_net}
    \left(i, s, f, j_f\right) = P\left(\bfp,\bfv \;\big\vert\; \left\lbrace \bfT_j,\bfr_j \right\rbrace_{j=1}^M\right),
\end{equation}
where $j_f$ is the index of the ellipsoid selected by~\eqref{eq:fuse_prior_f}. This ellipsoid index is used to condition the latent feature computation, which will be described in Sec.~\ref{sec:residual_net}. We refer to $P$ in~\eqref{eq:prior_net} as a network because its outputs are differentiable with respect to $\bfp$, $\bfv$, $\bfT_j$, and $\bfr_j$. We provide the analytical gradients in Supplemental~\ref{sec:appdx_bp_ellipsoid_sddf}.

Because our prior network uses ellipsoids to approximate SDDF, it satisfies the Eikonal equation in~\eqref{eq:sddf_eikonal_equation} by construction.

\begin{proposition} \label{prop:prior_eikonal_equation}
    The SDDF $f(\bfp,\bfv;\cup_j \calE_j)$ in~\eqref{eq:fuse_prior_f} computed by the ellipsoid-based prior network satisfies the SDDF directional Eikonal equation in~\eqref{eq:sddf_eikonal_equation}.
\end{proposition}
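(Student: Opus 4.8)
The plan is to reduce Proposition~\ref{prop:prior_eikonal_equation} to the single-ellipsoid case and then establish, for a single ellipsoid, an \emph{exact} shift-invariance of the prior along the view ray, after which the directional Eikonal identity falls out by the same one-line differentiation used in the proof of Proposition~\ref{prop:sddf_eikonal_equation}. At any $\bfp$ where $f(\cdot,\bfv;\cup_j\calE_j)$ is differentiable, \eqref{eq:fuse_prior_f} writes $f$ as a (possibly index-restricted) pointwise minimum of the single-ellipsoid priors $f_j(\cdot,\bfv;\calE_j)$; invoking the standard fact that a differentiable pointwise minimum of differentiable functions has gradient equal to the gradient of each active term, it suffices to prove $\bfv^\top\nabla_\bfp f_j(\bfp,\bfv;\calE_j) = -1$ on the finite branch of \eqref{eq:single_ellipsoid_sddf_prior}.

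First I would record the elementary observation that, for fixed $\bfv$, moving $\bfp$ to $\bfp+t\bfv$ sends $\bfp' = \bfR^\top(\bfp-\bfc)$ to $\bfp'+t\bfv'$ with $\bfv'=\bfR^\top\bfv$; this leaves $\bfw' = \bfp'\times\bfv'$ unchanged (since $\bfv'\times\bfv'=\mathbf{0}$) and changes $\bfp'^\top\bfQ_1^2\bfv'$ by exactly $t\,\bfv'^\top\bfQ_1^2\bfv'$. Since the intersection indicator $i(\bfp,\bfv)$ of \eqref{eq:ellipsoid_intersection} depends on $\bfp$ only through $\bfw'$, it — and hence $\beta=\max(i,0)+\epsilon$ and $\sqrt{\beta}$ — is untouched, while $\bfv'^\top\bfQ_1^2\bfv'$ is precisely the denominator appearing in \eqref{eq:intersection_sol}; therefore the finite branch of \eqref{eq:single_ellipsoid_sddf_prior} obeys the exact identity
\[
f_j(\bfp+t\bfv,\bfv;\calE_j) = f_j(\bfp,\bfv;\calE_j) - t
\]
for all $t$ that keep that branch active. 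Differentiating at $t=0$ and applying the chain rule (legitimate wherever $f_j$ is differentiable) yields $\bfv^\top\nabla_\bfp f_j(\bfp,\bfv;\calE_j) = -1$.

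To pass to the union I would note that, by the same computation, each $i_j(\bfp,\bfv)$ is constant along the line through $\bfp$ in direction $\bfv$. Consequently the active index set $\{j:i_j(\bfp,\bfv)\ge 0\}$ and the choice of case in \eqref{eq:fuse_prior_f} are invariant when $\bfp$ is replaced by $\bfp+t\bfv$, while every $f_j$ entering the relevant minimum shifts by the common amount $-t$; hence the minimizer is preserved and $f(\bfp+t\bfv,\bfv;\cup_j\calE_j)=f(\bfp,\bfv;\cup_j\calE_j)-t$ for small $|t|$, and differentiating at $t=0$ gives \eqref{eq:sddf_eikonal_equation}. Conceptually this identity is just the defining property of a directional distance: advancing the query point by $t$ toward the selected surface decreases the remaining distance by $t$, which is also exactly what the proof of Proposition~\ref{prop:sddf_eikonal_equation} exploits.

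The step I expect to require the most care — the main obstacle to writing cleanly — is the bookkeeping for the degenerate configurations of \eqref{eq:fuse_prior_f} and \eqref{eq:single_ellipsoid_sddf_prior}. Ties in the minimum are benign, because differentiability of $f$ already forces all tied active branches to share one $\bfp$-gradient (when the active set is locally stable, which is the generic case, this is immediate; otherwise a slightly more careful local argument is needed). The gradient kink of $\beta$ at $\{i_j=0\}$ and the $\epsilon$-regularization are irrelevant to the $\bfv$-directional derivative, precisely because $i_j$ (hence $\beta$) is exactly ray-invariant, so the shift identity holds verbatim through such loci. The genuine subtlety is the validity switch between the finite branch and $f_j=\infty$ in \eqref{eq:single_ellipsoid_sddf_prior}: it can occur only where $f_j=0$ or $\bfp\in\partial\calE_j$, and I would argue that this is contained in the set where $f$ is not differentiable, so the hypothesis of the proposition already excludes it; on the complement, the local shift identity holds exactly, which is all that the conclusion requires.
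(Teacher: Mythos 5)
Your proof is correct, and it reaches the conclusion by a different (and in some respects more self-contained) route than the paper. The paper's proof (Appendix~\ref{sec:appdx_eikonal_equation}) works infinitesimally: it reads off from the gradient tables of the backward pass (Tables~\ref{tab:ellipsoid_gradients} and \ref{tab:ellipsoid_partial_derivatices}) that ${\bfv'}^\top\nabla_{\bfp'}t_0={\bfv'}^\top\nabla_{\bfp'}t_1=0$ and ${\bfv'}^\top\nabla_{\bfp'}t_2=t_0$, then applies the chain rule to $f=-(\det\bfQ_0\sqrt{\beta}+t_2)/t_0$; it treats the multi-ellipsoid case by declaring "without loss of generality, work in the frame of the selected ellipsoid." You instead integrate those same three facts into an exact translation identity $f_j(\bfp+t\bfv,\bfv;\calE_j)=f_j(\bfp,\bfv;\calE_j)-t$, resting on the observation that $\bfw'=\bfp'\times\bfv'$ (hence $i$, $\beta$, and the denominator in \eqref{eq:intersection_sol}) is invariant along the ray while the numerator term ${\bfp'}^\top\bfQ_1^2\bfv'$ shifts by exactly $t\,{\bfv'}^\top\bfQ_1^2\bfv'$. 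The mathematical content is identical --- your shift identity is precisely the antiderivative of the paper's three directional-derivative facts --- but your packaging buys two things the paper leaves implicit: (i) the fusion step in \eqref{eq:fuse_prior_f} is handled honestly, since ray-invariance of every $i_j$ freezes the case selection and active index set so that the whole minimum inherits the shift by $-t$, rather than being dismissed by fiat; and (ii) the kink of $\beta=\max(i,0)+\epsilon$ and the validity switch in \eqref{eq:single_ellipsoid_sddf_prior} are explicitly confined to the non-differentiable locus, which is the right way to square the claim with the differentiability hypothesis of Proposition~\ref{prop:sddf_eikonal_equation}. The paper's version is shorter and reuses machinery it needs anyway for backpropagation; yours is more elementary (no gradient tables needed) and closer in spirit to the one-line proof of Proposition~\ref{prop:sddf_eikonal_equation}.
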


We provide the proof of Proposition~\ref{prop:prior_eikonal_equation} in Supplemental~\ref{sec:appdx_proof_prop_eikonal_equation}.
Proposition~\ref{prop:prior_eikonal_equation} shows that the SDDF of a single ellipsoid satisfies the Eikonal equation, and the fusion operations in~\eqref{eq:fuse_prior_i}--\eqref{eq:fuse_prior_f} preserve this property.
The network $P$ provides a coarse geometric prior but does not yield accurate  predictions. We address this next.

\section{Residual Network} \label{sec:residual_net}

In this section, we design a residual network to predict a correction term $\delta_f(\bfp,\bfv)$ so that the SDDF prediction of our combined prior and residual model, $\hat{f}(\bfp,\bfv) = f(\bfp,\bfv) + \delta_f(\bfp,\bfv)$, is accurate even for sets with complicated shapes. Intuitively, we expect the prior network $P$ to learn the rough shape of the environment, encoded in $f(\bfp,\bfv)$, while the residual network $R$ captures the details, encoded in $\delta_f(\bfp,\bfv)$.

\subsection{Latent Feature Network}\label{sec:latent_net}

The residual network needs to estimate the SDDF correction term $\delta_f(\bfp,\bfv)$ based on the local shape of the surface that the ray $(\bfp,\bfv)$ interacts with. We convert $\bfp$ and $\bfv$ to the local coordinate frame of the first intersected ellipsoid and train a latent feature network whose output $\bfz$ is decoded into a SDDF correction by a residual decoder network $\delta_f = R(\bfz)$.

Given a ray $(\bfp,\bfv)$, the prior network in~\eqref{eq:prior_net} provides $(i, s, f)$ and the index $j_f$ of the selected ellipsoid. Using the pose of the $j_f$-th ellipsoid $\bfR_{j_f}\in SO(3)$, $\bfc_{j_f} \in \bbR^3$, we obtain the intersection point $\bfq$ in the ellipsoid frame:
\begin{equation}
    \bfq = \bfR_{j_f}^\top(\bfp + f\bfv - \bfc_{j_f}) = \bfp' + f\bfv'.
\end{equation}
We train a latent feature network $\bfz = L(\bfq,\bfv',j_f)$ with the intersection point $\bfq$, the local viewing direction $\bfv'$, and the ellipsoid index $j_f$ as inputs and a latent feature $\bfz$ as output.

We observe that the ellipsoid SDDF in~\eqref{eq:intersection_sol} can be interpreted as a nonlinear function of multiple multivariate polynomials involving $\bfq$ and $\bfv'$, where each variable's maximum degree is $2$. While we use the ellipsoid prior network to exactly compute the SDDF value for ellipsoids, we hypothesize that the residual SDDF correction for complex shapes can also be approximated as a function of similar polynomial terms of $\bfq$ and $\bfv'$, which learns certain implicit features of second-order surfaces suitable for residual correction.
Inspired by this observation, we design the network $L$ to transform a polynomial embedding vector into a latent feature vector:
\begin{align}
    \bfz &= L\left(\bfq,\bfv',j_f \;\big\vert\; \{\bfW_i\}_{i=1}^M \right) = \bfW_{j_f} \bfm \in \bbR^m, \label{eq:polynomial_embedding}\\
    \bfm &= \operatorname{vec}\left(E(\bfq) E(\bfv')^\top\right) \in \bbR^{100}, \label{eq:pv_monomial_gen}\\
    E(\bfp) &= \begin{bmatrix}
        p_x^2, p_x p_y, p_x p_z, p_y^2, p_y p_z, p_z^2, p_x, p_y, p_z, 1
    \end{bmatrix}^\top, \label{eq:monomial_gen}
\end{align}
where $E: \bbR^3 \rightarrow \bbR^{10}$ is a degree-$2$ monomial embedding, $\operatorname{vec}(\cdot)$ concatenates the columns of the input matrix, $\bfm$ is a vector of degree-$2$ monomials, and $\bfW_i \in \bbR^{m \times 100}$.
The expression in~\eqref{eq:monomial_gen} constructs a $10$-dimensional vector of all degree-$2$ monomials of the input $\bfq \in \bbR^3$ and $\bfv' \in SO(3)$ correspondingly. Then,~\eqref{eq:pv_monomial_gen} uses the outer product of the two monomial vectors to generate a $100$-dimensional vector of degree-$2$ monomials involving both $\bfq$ and $\bfv'$. Finally,~\eqref{eq:polynomial_embedding} uses a learnable weight matrix $\bfW_{j_f}$ specific to the ellipsoid index $j_f$ to transform the vector of monomials $\bfm$ into a latent feature vector $\bfz \in \bbR^m$.

\subsection{Residual Decoder} \label{sec:residual_decoder}

The residual decoder is a multi-layer perceptron $R: \bbR^m \rightarrow \bbR^3$ that decodes the latent feature vector $\bfz \in \bbR^m$ into three residual predictions $(\delta_i, \delta_s, \delta_f)$. Then, the final predictions of the intersection indicator, sign indicator, and SDDF value are
\begin{align}
    \hat{i}(\bfp,\bfv) &= \tanh\left(\alpha i(\bfp,\bfv) \right) + \delta_i(\bfp,\bfv), \label{eq:final_intersection_indicator} \\
    \hat{s}(\bfp,\bfv) &= \tanh\left(\alpha s(\bfp,\bfv) \right) + \delta_s(\bfp,\bfv), \label{eq:final_sign_indicator} \\
    \hat{f}(\bfp,\bfv) &= f(\bfp,\bfv) + \delta_f(\bfp,\bfv),  \label{eq:final_sddf}
\end{align}
where $\alpha > 0$ is a hyperparameter. Since the prior network produces $i(\bfp,\bfv) \in \bbR$ and $s(\bfp,\bfv) \in \bbR$ but, as described later in Sec.~\ref{sec:dataset}, we provide supervision $i^*(\bfp,\bfv) \in \{-1,1\}$ and $s^*(\bfp,\bfv)\in \{-1,1\}$ for~\eqref{eq:final_intersection_indicator} and~\eqref{eq:final_sign_indicator}, and use $\tanh$ to squash the output of the prior intersection indicator and sign indicator. The residuals $\delta_i$ and $\delta_s$ are applied after the squashing to prevent clipping their gradients during training.

Complementing Proposition~\ref{prop:prior_eikonal_equation}, we show that the joint prior-residual SDDF prediction $\hat{f}$ in~\eqref{eq:final_sddf} still satisfies the SDDF directional Eikonal equation by construction.

\begin{proposition} \label{prop:eikonal_equation_with_residual}
    The SDDF $\hat{f}(\bfp,\bfv)$ in~\eqref{eq:final_sddf} computed by the combination of the prior and the residual networks satisfies the SDDF directional Eikonal equation in~\eqref{eq:sddf_eikonal_equation}.
\end{proposition}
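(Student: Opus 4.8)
The plan is to reduce the claim to a single derivative computation. By \eqref{eq:final_sddf} we have $\nabla_\bfp \hat{f}(\bfp,\bfv) = \nabla_\bfp f(\bfp,\bfv) + \nabla_\bfp \delta_f(\bfp,\bfv)$, and Proposition~\ref{prop:prior_eikonal_equation} already gives $\bfv^\top \nabla_\bfp f(\bfp,\bfv) = -1$. Hence it suffices to prove that the residual correction is directionally flat, i.e. $\bfv^\top \nabla_\bfp \delta_f(\bfp,\bfv) = 0$, wherever $\hat{f}$ is differentiable.

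First I would unpack the $\bfp$-dependence of $\delta_f$. By construction $\delta_f = (R \circ L)(\bfq, \bfv', j_f)$, and the only argument that depends on $\bfp$ is the intersection point $\bfq = \bfp' + f\bfv'$, where $\bfp' = \bfR_{j_f}^\top(\bfp - \bfc_{j_f})$ and $\bfv' = \bfR_{j_f}^\top \bfv$; the local direction $\bfv'$ depends only on $\bfv$, and the discrete index $j_f$ is locally constant in $\bfp$ on the region where the selected ellipsoid does not change (which is exactly where $\hat f$ is differentiable). Writing $\bfq(\bfp,\bfv) = \bfR_{j_f}^\top(\bfp - \bfc_{j_f}) + f(\bfp,\bfv)\,\bfR_{j_f}^\top\bfv$ and applying the chain rule gives $\nabla_\bfp \delta_f = J^\top \nabla_\bfq (R \circ L)$ with $J := \partial \bfq / \partial \bfp$, so $\bfv^\top \nabla_\bfp \delta_f = (J\bfv)^\top \nabla_\bfq(R\circ L)$. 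Thus the proposition follows once I show $J\bfv = \mathbf{0}$.

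The key step is then the computation $J = \bfR_{j_f}^\top + \bfR_{j_f}^\top \bfv\,(\nabla_\bfp f)^\top$, whence $J\bfv = \bfR_{j_f}^\top\bfv\,\bigl(1 + \bfv^\top \nabla_\bfp f\bigr) = \bfR_{j_f}^\top\bfv\,(1 - 1) = \mathbf{0}$, using Proposition~\ref{prop:prior_eikonal_equation} in the last equality. Intuitively: displacing $\bfp$ along $\bfv$ keeps the ray hitting the same surface point, so $\bfq$ — and therefore the latent feature $\bfz$ and the residual $\delta_f$ — is unchanged, making its directional derivative along $\bfv$ vanish. Combining, $\bfv^\top\nabla_\bfp \hat f = -1 + 0 = -1$.

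The main obstacle here is bookkeeping, not depth: one must be careful that $\bfv'$ and the index $j_f$ are held fixed in the chain rule (legitimate precisely because we differentiate only at points where $\hat f$ is differentiable, i.e. away from ellipsoid-selection boundaries), and note that the squashed indicator residuals $\delta_i, \delta_s$ in \eqref{eq:final_intersection_indicator}–\eqref{eq:final_sign_indicator} are irrelevant since the Eikonal identity only involves $\hat f$. I would also remark that the argument uses no structural property of $L$ or $R$ beyond differentiability — in particular nothing about the monomial embedding $E$ or the matrices $\bfW_{j_f}$ — so the result is robust to the exact residual architecture.
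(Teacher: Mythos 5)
Your proof is correct and follows essentially the same route as the paper's: both reduce the claim to showing $\bfv^\top \nabla_\bfp \delta_f = 0$ via the chain rule through the intersection point $\bfq = \bfp' + f\bfv'$, using $\nabla_{\bfp'}\bfq = \bfI + \nabla_{\bfp'}f\,{\bfv'}^\top$ and the prior Eikonal identity of Proposition~\ref{prop:prior_eikonal_equation} to annihilate the directional derivative. The only difference is presentational — you carry the frame rotation $\bfR_{j_f}$ explicitly and spell out that $\bfv'$ and $j_f$ are locally constant in $\bfp$, whereas the paper works directly in the ellipsoid's local frame.
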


The proof of Proposition~\ref{prop:eikonal_equation_with_residual} is presented in Supplemental~\ref{sec:appdx_proof_prop_eikonal_equation_with_residual}. The intuition behind the proof is that the input position $\bfq$, which is the intersection point between the ray and the ellipsoid, does not change when $\bfp$ moves along the ray direction $\bfv$ as long as the same ellipsoid is selected by the prior network. Therefore, the residual SDDF correction $\delta_f(\bfp,\bfv)$ remains unchanged when $\bfp$ moves along $\bfv$, leading to a zero directional derivative along $\bfv$.
Because our SDDF model satisfies the Eikonal equation by construction, we do not need an extra loss term to regularize the network and can use fewer parameters in the model, making it more efficient to train and evaluate.

\section{Training} \label{sec:training}
\subsection{Dataset Generation and Augmentation}\label{sec:dataset}

We convert the sensor poses and range measurements $\{\bfT_t, \calZ_t\}_{t=1}^T$, $\calZ_t = \{\bfv_{i}, r_{t,i}\}_{i=1}^N$, described in Sec.~\ref{sec:method_overview}, into a dataset $\calD=\{\bfp_j,\bfv_j,f_j^*,i_j^*,s_j^*\}_j$ suitable for training our SDDF model. Here, $\bfp_j\in\bbR^3$ is the origin of the ray, $\bfv_j\in\bbS^2$ is the direction of the ray provided in $\calZ_t$, $f_j^*$ is an SDDF measurement, $i_j^* \in \{-1,1\}$ is an intersection indicator measurement, and $s_j^* \in \{-1,1\}$ is a sign indicator measurement. From each ray measurement $\bfv_i, r_{t,i}$, obtained from sensor position $\bfp_t$ and orientation $\bfR_t$, we generate a corresponding data sample as $\bfp_j=\bfp_t$, $\bfv_j=\bfR_t\bfv_i$, $f_j^*=r_{t,i}$, $i_j^*=1$, $s_j^*=1$.
These data are unbalanced in the sign indicator and the SDDF sign because all the samples have $s_j^*=1$ and $f_j^* \ge 0$.
Therefore, we need to augment the data with negative samples.
For each sample $(\bfp_j,\bfv_j,f_j^*,i_j^*,s_j^*)$, we generate a corresponding negative sample $(\bfp',\bfv_j,-\epsilon,1,-1)$, where we extend the ray to a point $\bfp' = \bfp_j + (f_j^* + \epsilon)\bfv_j$ slightly behind the observed surface with a small offset $\epsilon > 0$.
Although the intersection indicator $i_j^*$ is also unbalanced, it is inefficient to generate non-intersecting rays, and our experiments show that augmenting the intersection indicator does not improve our reconstruction results.

\subsection{Ellipsoid Initialization} \label{sec:ellipsoid_initialization}

We present an initialization strategy for the $M$ ellipsoids, needed by the ellipsoid-based prior network $P$ in Sec.~\ref{sec:geometry_prior_net}. Using the augmented dataset $\calD$ constructed above, we obtain a point cloud:
\begin{equation*}
    \calX = \{\bfx_k\}_{k=1}^K = \{\bfp_j+f_j^*\bfv_j \mid f_j^* \geq 0\}_j \cup \{\bfp_j \mid f_j^* < 0\}_j,
\end{equation*}
which is the collection of surface points and points inside obstacles.
Leaving the number of ellipsoids $M$ as a hyperparameter, we use K-means++~\cite{kmeanspp2007} to divide $\calX$ into $M$ clusters $\{\calX_m\}_{m=1}^M$. For each cluster, we initialize an ellipsoid using principal component analysis of the points in the cluster as shown in Alg.~\ref{alg:single_ellipsoid_init}. In practice, we set a minimum allowed radius $r_{\min}=0.005$ to avoid numerical problems and scale the ellipsoids by $\alpha=3$ to ensure that the points $\calX_m$ are mostly covered by the $m$-th ellipsoid.

\begin{algorithm}[t]
\caption{Single Ellipsoid Initialization}\label{alg:single_ellipsoid_init}
\small
\begin{algorithmic}[1]
\Procedure{SingleEllipsoidInit}{$\calX = \{\bfx_k\}_{k=1}^K$}
    \State $\bfc \leftarrow \frac{1}{K}\sum_{k=1}^K \bfx_i$ \Comment{center}
    \State $\bfX \leftarrow \begin{bmatrix}\bfx_1 \cdots \bfx_K\end{bmatrix}^\top - \mathbf{1}\bfc^\top$
    \State Eigen decomposition on $\frac{1}{K}\bfX^\top \bfX = \bfQ \bfLambda \bfQ^{-1}$
    \State $\bfR \leftarrow \bfQ \diag{\begin{bmatrix}1 \cdots \det{\bfQ}\end{bmatrix}}$ \Comment{rotation}
    \State $\begin{bmatrix}\lambda_1 \cdots \lambda_n\end{bmatrix} \leftarrow \diag{\bfLambda}$ \Comment{$n$ is the space dimension}
    \State $r_i \leftarrow \max\left(r_{\min}, \alpha \sqrt{|\lambda_i|}\right)$\Comment{radii}
    \State \Return $\bfR, \bfc, \bfr$
\EndProcedure
\end{algorithmic}
\end{algorithm}

\begin{figure}[t]
    \centering
    \begin{subfigure}[t]{0.48\linewidth}
        \centering
        \includegraphics[width=\linewidth,trim={60pt 50pt 60pt 50pt},clip]{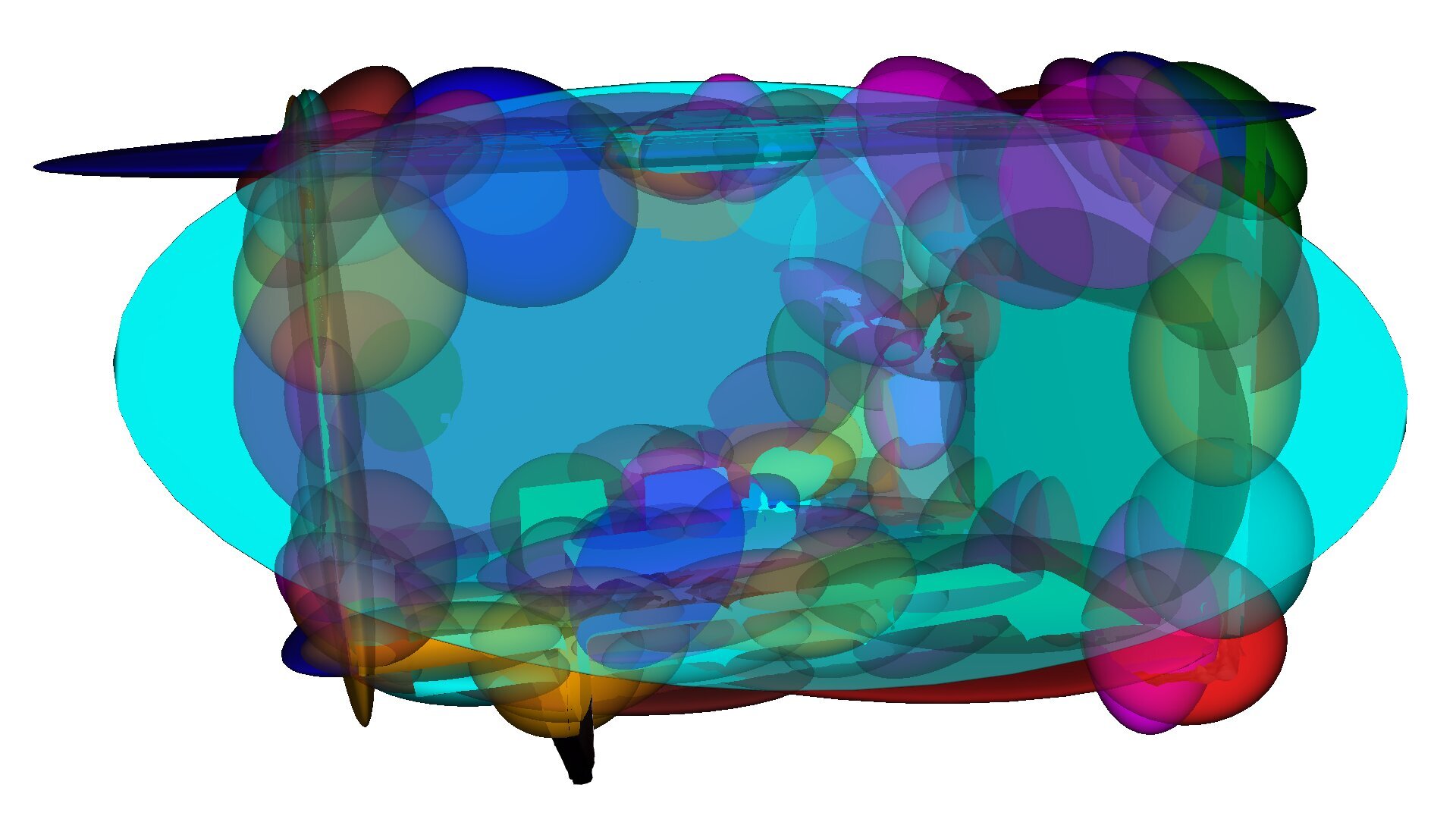}
        \caption{By Alg.~\ref{alg:multi_ellipsoid_init}} \label{fig:ellipsoid_init_a}
    \end{subfigure}
    \begin{subfigure}[t]{0.48\linewidth}
        \centering
        \includegraphics[width=\linewidth,trim={60pt 50pt 60pt 50pt},clip]{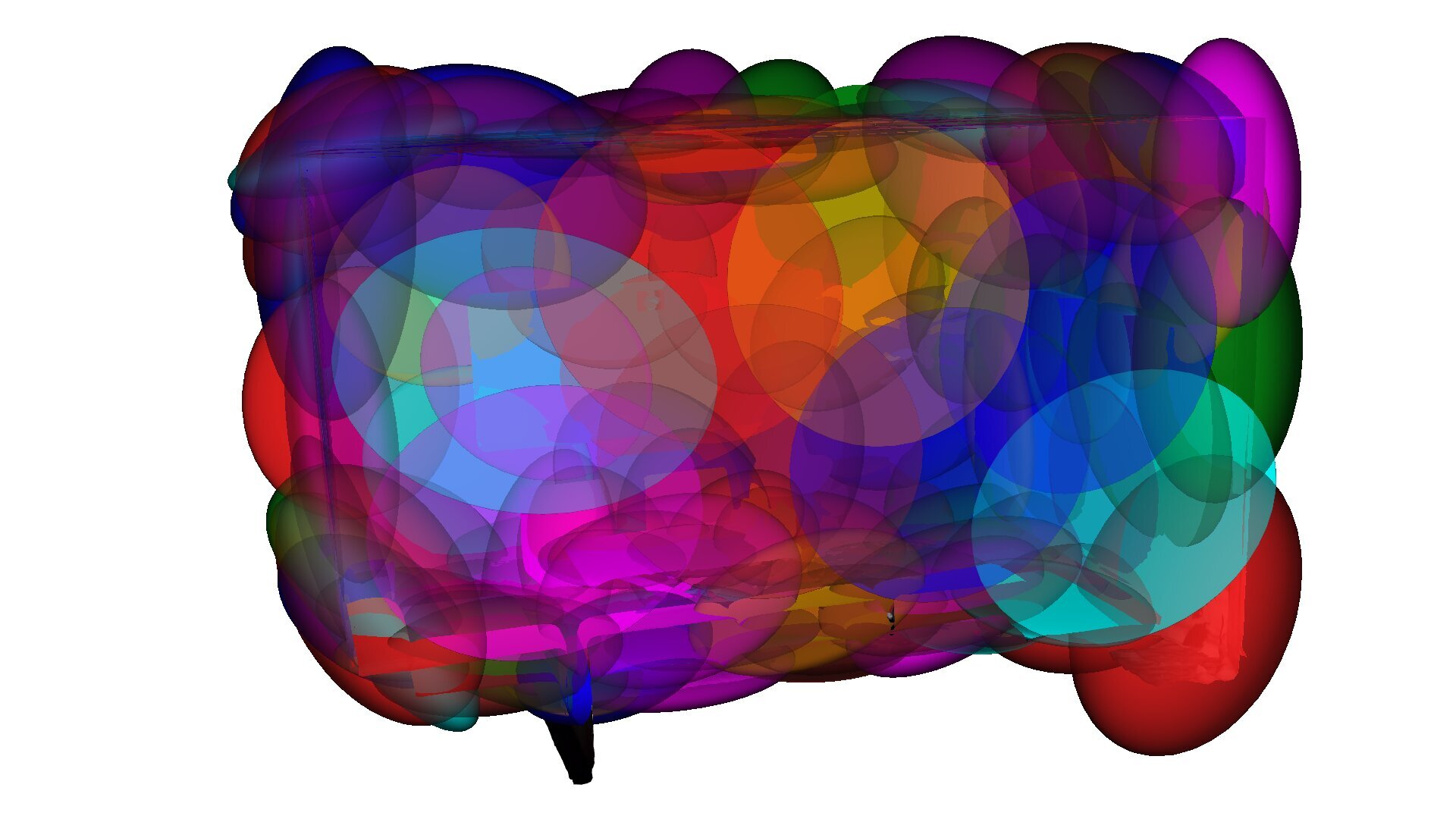}
        \caption{By K-means++~\cite{kmeanspp2007} only} \label{fig:ellipsoid_init_b}
    \end{subfigure}
    \begin{subfigure}[t]{0.48\linewidth}
        \centering
        \includegraphics[width=\linewidth]{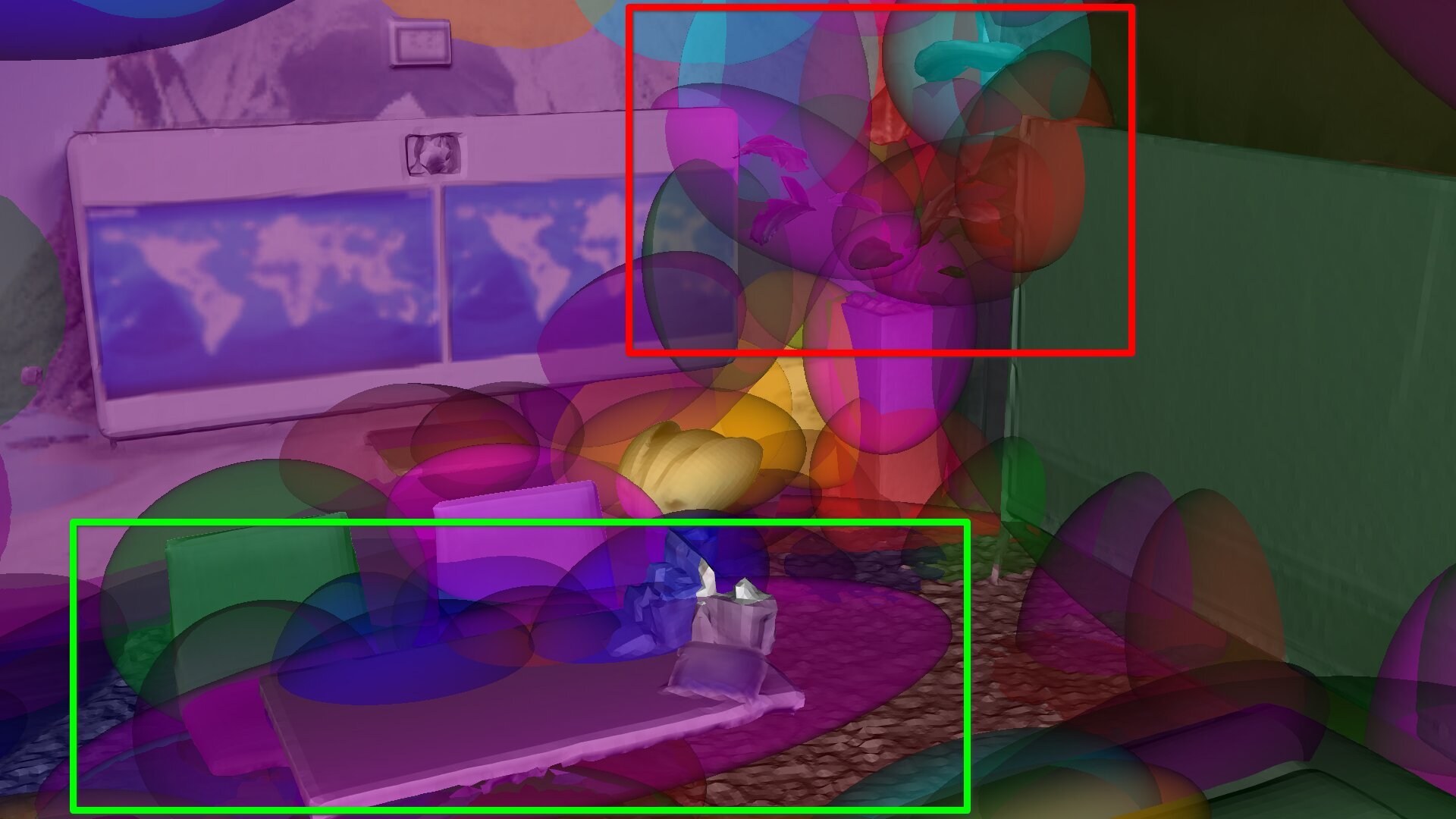}
        \caption{Closer view of (a)} \label{fig:ellipsoid_init_c}
    \end{subfigure}
    \begin{subfigure}[t]{0.48\linewidth}
        \centering
        \includegraphics[width=\linewidth]{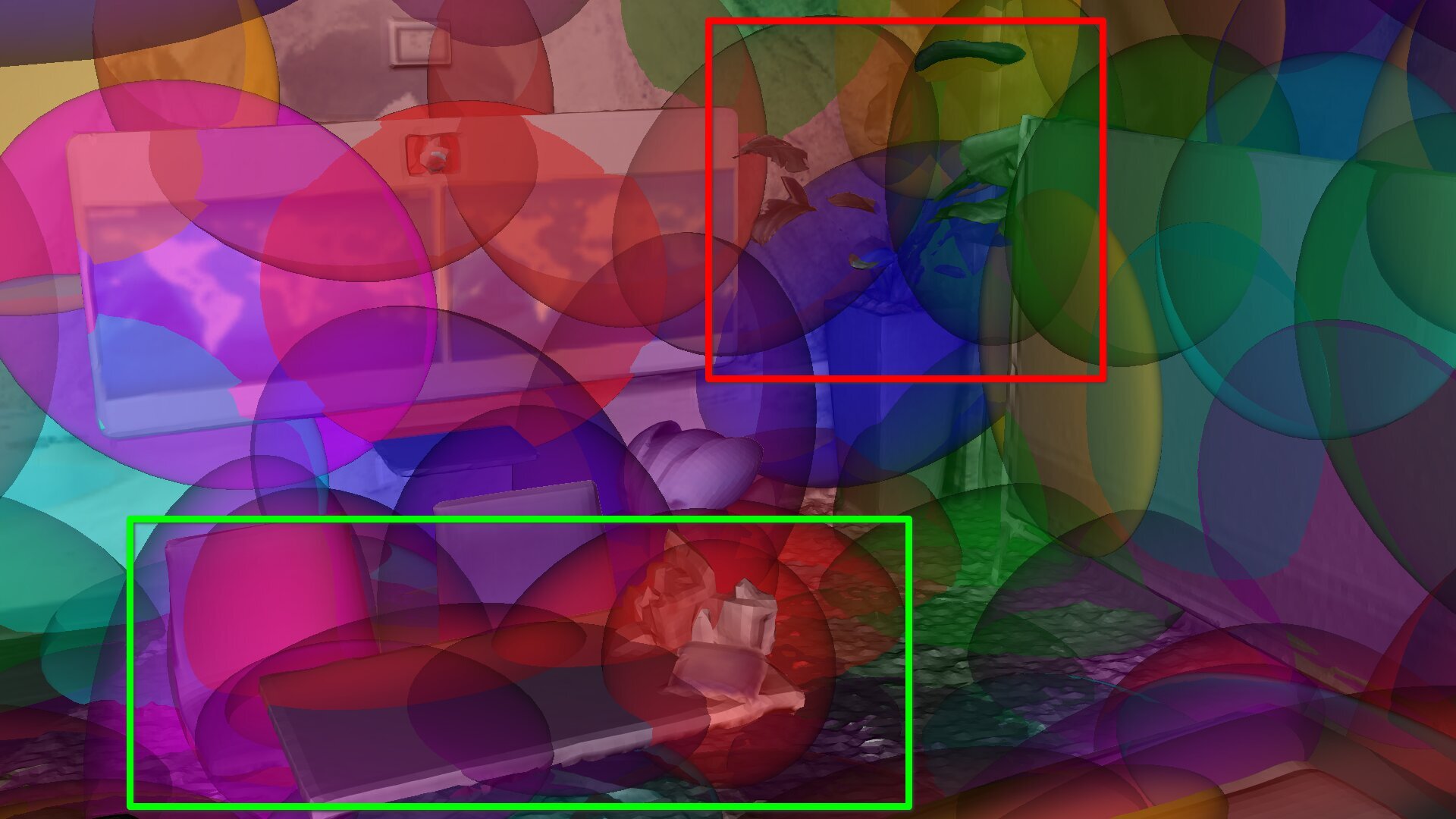}
        \caption{Closer view of (b)}
    \end{subfigure}
    \caption{Comparison of ellipsoid initialization algorithms. The left column is generated by Alg.~\ref{alg:multi_ellipsoid_init} and the right column is by the K-means++~\cite{kmeanspp2007} algorithm only (i.e., L3 to L5 of Alg.~\ref{alg:multi_ellipsoid_init}). (a) Using Alg.~\ref{alg:multi_ellipsoid_init}, a few ellipsoids are used to approximate planar surfaces like ceiling, wall, and ground. (b) Using K-means++~\cite{kmeanspp2007} only, too many ellipsoids are used to approximate planar surfaces. (c) and (d) show close-ups of the indoor objects. The table (in the {\color{green}green} box) and the plant (in the {\color{red}red} box) are better approximated by ellipsoids from Alg.~\ref{alg:multi_ellipsoid_init}.}
    \label{fig:ellipsoid_init}
\end{figure}

However, in typical indoor environments, as shown in Fig.~\ref{fig:ellipsoid_init},  this approach allocates too many ellipsoids to planar surfaces, which could be fit by a single flat ellipsoid. Hence, we propose Alg.~\ref{alg:multi_ellipsoid_init} to create a better multi-ellipsoid initialization. After using K-means++~\cite{kmeanspp2007} to divide $\calX$ into $M$ clusters and create an ellipsoid $\left(\bfR_m,\bfc_m,\bfr_m\right)$ for each cluster $m$ by Alg.~\ref{alg:single_ellipsoid_init}, we build an undirected graph $\calG=(\calV,\calE)$ where each node is a flat ellipsoid $i\in\calV$ (i.e., the mean projection $\beta$ of $\bfx_k \in \calX_m$ to the shortest axis is smaller than threshold $\beta_{\max}$), and two nodes $(i, j)$ are connected only when they are coplanar neighbors.
Then, we find the number of ellipsoids, $C$, for these planar surfaces by finding the connected components $\{\calV_i\}_{i=1}^C$ of $\calG$. We merge the ellipsoids in each component $\calV_i$ running Alg.~\ref{alg:single_ellipsoid_init} on $\bigcup_{m\in\calV_i}\calX_m$. Finally, the remaining points $\bigcup_{m\not\in\calV}\calX_m$ are divided into $M-C$ clusters by K-means++~\cite{kmeanspp2007} and the ellipsoid of each cluster is initialized by Alg.~\ref{alg:single_ellipsoid_init}.
As shown in Fig.~\ref{fig:ellipsoid_init}, the planar surfaces are correctly detected and approximated by only a few ellipsoids, leaving more ellipsoids for other more complex shapes.

\begin{algorithm}[t]
\caption{Multi-Ellipsoid Initialization}\label{alg:multi_ellipsoid_init}
\small
\begin{algorithmic}[1]
\Procedure{MultiEllipsoidInit}{$\calX = \{\bfx_k\}_{k=1}^K, M, S$}
    \State \Comment{$S$ is the number of ellipsoid neighbors}
    \State $\{\calX_m\}_{m=1}^M \leftarrow \Call{K-means++}{\calX, M}$
    \For{$m=1\cdots M$}
        \State $\bfR_m, \bfc_m, \bfr_m \leftarrow$\Call{SingleEllipsoidInit}{$\calX_m$}
        \State $j \leftarrow \arg\min_{1\le j\le n} r_{m,j}$
        \State $\bfn_m \leftarrow \bfR_m \bfe_j$
        \State $\beta_m \leftarrow \frac{1}{|\calX_m|}\sum_{\bfp \in \calX_m} \bfn_m^\top \left(\bfp - \bfc_m\right)$
    \EndFor
    \State $\calV \leftarrow \emptyset, \calE \leftarrow \emptyset$ \Comment{undirected graph}
    \For{$i=1\cdots M, \beta_i < \beta_{\max}$}\Comment{flat ellipsoid}
        \State $\calK_i \leftarrow$ \Call{K-NearestNeighbor}{$\{\bfc_m\}_{m=1}^M, i, S$}
        \For{$j \in \calK_i, \beta_i < \beta_{\max}$}
            \State $\eta \leftarrow \frac{1}{2}\left(|(\bfc_i-\bfc_j)^\top\bfn_i| + |(\bfc_i-\bfc_j)^\top\bfn_j|\right)$
            \If{$\eta < \eta_{\max}$} \Comment{$i$ and $j$ are coplanar}
                \State $\calV \leftarrow \calV \cup \{i, j\}$ and $\calE \leftarrow \calE \cup \{(i, j)\}$
            \EndIf
        \EndFor
    \EndFor
    \State $\{\calV_i\}_{i=1}^C \leftarrow$ \Call{FindConnectedComponents}{$\calV$, $\calE$}
    \State $\calR \leftarrow \emptyset, \calX' \leftarrow \emptyset$
    \For{$i=1\cdots C$} \Comment{merge ellipsoids}
        \State $\bfR, \bfc, \bfr \leftarrow$ \Call{SingleEllipsoidInit}{$\bigcup_{m\in\calV_i}\calX_m$}
        \State $\calR \leftarrow \calR \cup \{(\bfR,\bfc,\bfr)\}$
        \State $\calX' \leftarrow \calX' \cup \bigcup_{m\in\calV_i}\calX_m$
    \EndFor
    \State $\{\calX'_m\}_{m=1}^{M-C} \leftarrow$ \Call{K-means++}{$\calX-\calX'$, $M-C$}
    \For{$m=1\cdots M-C$} \Comment{initialize other ellipsoids}
        \State $\bfR, \bfc, \bfr \leftarrow$ \Call{SingleEllipsoidInit}{$\calX'_m$}
        \State $\calR \leftarrow \calR \cup \{(\bfR,\bfc,\bfr)\}$
    \EndFor
    \State \Return $\calR$ \Comment{$M$ ellipsoid poses and radii}
\EndProcedure
\end{algorithmic}
\end{algorithm}

\subsection{Loss Function for Ellipsoid-based Prior Network}\label{sec:loss_function_for_prior}

Given a prediction $\bfx=(i,s,f)$ from the ellipsoid-based prior network $P$ and the corresponding label $\bfx^*=(i^*,s^*,f^*)$ from the measurement data $\calD$, we compute a weighted sum of Huber losses:
\begin{equation} \label{eq:loss_function}
\begin{aligned}
    &l_0(x,y)=\left\{\begin{array}{ll}
        0.5(x-y)^2, & \text{if}\ |x-y|<1, \\
        |x-y|-0.5,  & \text{otherwise},
        \end{array}\right. \\
    &l(x, y) = \left(w^+\mathbbm{1}(y\ge0) + w^-\mathbbm{1}(y<0)\right) l_0(x, y),\\
    &\calL_P(\bfx,\bfx^*) = l(i,i^*) + l(s,s^*) + l(f,f^*).
\end{aligned}
\end{equation}

The role of $P$ is to make the ellipsoids cover the objects in the scene so that the occlusions between objects are captured. Thus, larger weights are used for negative $f^*$ and negative $s^*$: $w_f^-=1.65$ and $w_s^-=10$. The other weights are set to $1$.

\subsection{Loss Function for Residual Network}\label{sec:loss_function_for_residual}

The residual network is trained using the same loss function as the prior network in~\eqref{eq:loss_function} but with different weights.
Unlike the prior network, the residual network focuses on learning the geometric details.
Thus, higher weights are used for the SDDF prediction $\hat{f}$.
Given the prediction $\hat{\bfx}=(\hat{i},\hat{s},\hat{f})$ and the corresponding label $\bfx^*=(i^*,s^*,f^*)$ from the measurement data $\calD$, we have
\begin{equation}
\begin{aligned}
    &\calL_R(\hat{\bfx},\bfx^*) = l(\hat{i},i^*) + l(\hat{s},s^*) +l(\hat{f},f^*), \\
    &\calL(\hat{\bfx},\bfx,\bfx^*) = \calL_P(\bfx,\bfx^*) + \calL_R(\hat{\bfx},\bfx^*),
\end{aligned}
\end{equation}
where for $\calL_R$, $w_{\hat{f}}^+=1.0,w_{\hat{f}}^-=1.1$, and other weights $0.1$.

\section{Application to Viewpoint Optimization}
\label{sec:viewpoint_optimization}

Our SDDF model is differentiable and, hence, enables continuous viewpoint optimization. This property is useful for several applications, e.g., to guide navigation in virtual reality or to enable a robot to explore an unknown environment. We first consider determining the next-best view and then scale up to optimization of a trajectory of several views. Our SDDF model can predict a point cloud measurement from any desired sensor pose  $(\bfp_t,\bfR_t)$ as:
\begin{equation}
    \calP_t = \{\bfp_t + \hat{f}_i \bfR_t \bfv_i\}_{i=1}^N,
\end{equation}
where $\{\bfv_i\}_{i=1}^N$ are the ray directions in the sensor frame and $\hat{f}_i$ are the SDDF predictions for each ray. The utility of a point-cloud measurement for the purpose of exploration or environment coverage can be evaluated in terms of the visible region volume. We use the following loss to measure the (negative) size of the visible volume:
\begin{equation}
    \calL_v\left(\{\hat{f}_i\}_{i=1}^N\right) = -\frac{1}{2N} \sum_{i=1}^N \left(\max\{\hat{f}_i, 0\}\right)^2.
\end{equation}
Moreover, consecutive measurements $\calP_t$ and $\calP_{t+1}$ should have a small overlap in order to observe a large area. Therefore, we design the following overlap loss:
\begin{equation} \label{eq:overlap_loss}
    \calL_o \left(\calP_t,\calP_{t+1}\right) = -\frac{\sum_{\bfp \in \calP_t, \bfq \in \calP_{t+1}} \min\{\left\|\bfp - \bfq\right\|_2, d_{\max}\}}{|\calP_t||\calP_{t+1}|},
\end{equation}
where $d_{\max} > 0$ is a distance threshold of no overlap.

In practice, we must also ensure that the sensor is not in collision with any obstacles. To do so, we introduce a set of risk detection rays, which are uniformly sampled from the sphere that contains the robot, and obtain their SDDF predictions $\{\hat{f}^r_i\}_{i=1}^M$.
The risk loss is defined as:
\begin{equation}
    \calL_r\left( \{ \hat{f}_i^r \}_{i=1}^M \right) = \frac{1}{M}\sum_{i=1}^M \max{\{d_\text{safe} - \hat{f}^r_i, 0\}},
\end{equation}
where $d_\text{safe} > 0$ is a safe distance threshold, chosen such that $\hat{f}^r_i < d_\text{safe}$ implies a potential collision.

We optimize the camera pose $(\bfp_{t+1},\bfR_{t+1})$ at time $t+1$ by minimizing the weighted sum $\calL = w_o \calL_o + w_v \calL_v + w_r \calL_r$, where $w_o$, $w_v$, and $w_r$ are weights.

Considering multi-view trajectory optimization, it is inefficient to optimize every pose in a continuous trajectory because two views that are close to each other are likely to overlap significantly, i.e., with minimal change in the viewpoint. Therefore, we consider optimizing certain waypoints on the trajectory. We incrementally optimize $n$ poses $\{\bfp_i,\bfR_i\}_{i=0}^n$ generated by an off-the-shelf planning algorithm, such as RRT*~\cite{rrtstar2011}, such that for each pose $\{\bfp_i, \bfR_i\}_{i > 0}$, we optimize the loss:
\begin{equation}
    \calL' = w_o \calL_o\biggl(\calP_0 \bigcup_{j=1}^{j<i}\calP'_j, \calP_i\biggr) + w_v \calL_v + w_r \calL_r,
\end{equation}
where $\calP'_j$ is the predicted point cloud at the optimized waypoint $(\bfp'_j, \bfR'_j)$. During the optimization, we downsample $\calP_0 \bigcup_{j=1}^{j<i}\calP'_j$ with a stride of $i-j$, labeled as $\Tilde{\calP}_j$, such that $\Tilde{\calP}_j$ has a constant size.
This incremental optimization strategy provides two benefits.
First, it reduces the use of GPU memory and makes the along-trajectory multi-view optimization problem solvable. Second, it allows to parallelize the trajectory optimization and execution.

\section{Evaluation}
\label{sec:evaluation}

We evaluate our SDDF model in terms of accuracy, computational efficiency, and usefulness for viewpoint optimization.
First, we compare against three state-of-the-art baselines on synthetic and real datasets.
Then, an ablation study examines the effectiveness of each module in our model by varying the number of ellipsoids, data augmentation and initialization strategies, and the use of polynomial embedding.
Finally, we demonstrate the utility of SDDF in viewpoint optimization.

\subsection{Comparison with Baselines}
\textbf{Baselines:} We compare our method against three baselines: Nerfacto (with and without depth loss, check Supplemental~\ref{sec:appdx_training_of_nerfacto} for details)~\cite{nerfstudio2023}, RaDe-GS~\cite{radegs2024}, and \sdfbaseline~\cite{instantngp2022}.
These methods were not initially designed for SDDF prediction but can be used to predict SDDF as follows.
For SDF, we implement sphere tracing~\cite{spheretracing1996} to find the closest point on the surface along the query direction.
For Nerfacto and RaDe-GS, we render a depth image at the query viewpoint, project the depth image to a point cloud, and compute the distance.
We compare the mean absolute error (MAE) in SDDF prediction and the computational cost in terms of GPU memory, model size, and inference time.

\textbf{Datasets:}
A total of 15 datasets (14 synthesized and 1 real) are used for comparison.
We obtained data from six scenes from Replica (``Hotel'' and ``Office 0--5'')~\cite{replica19arxiv}, the Allensville scene from Gibson~\cite{gibson2018}, and scene 0000-00 from ScanNet~\cite{dai2017scannet}.

From the seven synthetic scenes, we synthesize two sensor types, LiDAR and RGB-D, for a total of 14 datasets.
For both sensor types, measurements are synthesized from a grid of free space positions in the scene.
LiDAR data is synthesized with 360 horizontal and 180 vertical angular increments, ranging between $[-\pi,\pi) \times [-\pi/2,\pi/2]$.
RGB-D data is synthesized from six uniform camera orientations, with an image resolution of $640 \times 480$, and horizontal and vertical fields of view of $94^\circ \times 77^\circ$.
For each scene, we sample 20 random test viewpoints (40 for Allensville), which were visually confirmed to be representative of possible camera poses in real applications.

ScanNet~\cite{dai2017scannet} provides real RGB-D images captured along a trajectory.
During training, the RGB images are linearly interpolated to align with the depth images of resolution $640\times 480$.
We found that the quality of the reference mesh from ScanNet is insufficient for generating SDDF ground truths; thus, we used this dataset for qualitative comparison only.
The views for comparison are generated by randomly selecting and perturbing waypoints of the trajectory ($\calU_{[-0.2, 0.2]}$ for each dimension of the translation and $\calU_{[-\pi,\pi]}$ for the yaw angle) so that the rendered views are representative.

\textbf{Experiment Setup:} The LiDAR and RGB-D datasets are used as follows.
For our method, we consider both the LiDAR and RGB-D cases.
When using the RGB-D datasets with our method, we only use the depth images and apply $20\times$ downsampling ($5\times$ for Replica Hotel) to achieve a similar ray density as the LiDAR dataset.
Similarly, for \sdfbaseline, we consider both LiDAR and RGB-D datasets and use all range measurements in either the LiDAR or depth images, along with additional data augmentations of near-surface samples and free-space samples necessary for stable training~\cite{instantngp2022}.
For Nerfacto and RaDe-GS, we only consider the RGB-D case, as they do not support rendering LiDAR data.
RaDe-GS is trained with and without RGB (i.e., either depth-only or RGB-D), whereas Nerfacto is only trained with RGB-D, as RGB is necessary for training Nerfacto.
RaDe-GS is trained with $2\times$ downsampling due to GPU memory limitations, while Nerfacto is trained using the full dataset.

Our method uses 128 ellipsoids (256 for Gibson Allensville) to fit an SDDF prior of the scene.
In contrast, RaDe-GS uses $5\%$ of the range measurements ($3\%$ for Allensville due to GPU memory limits) to initialize one Gaussian for each measurement, which yields many more Gaussians than our ellipsoids. Further training details are provided in Supplemental~\ref{sec:appdx_training_details}.

\begin{figure}[t]
    \centering
    \begin{subfigure}[t]{0.24\linewidth}
        \centering
        \includegraphics[width=\linewidth]{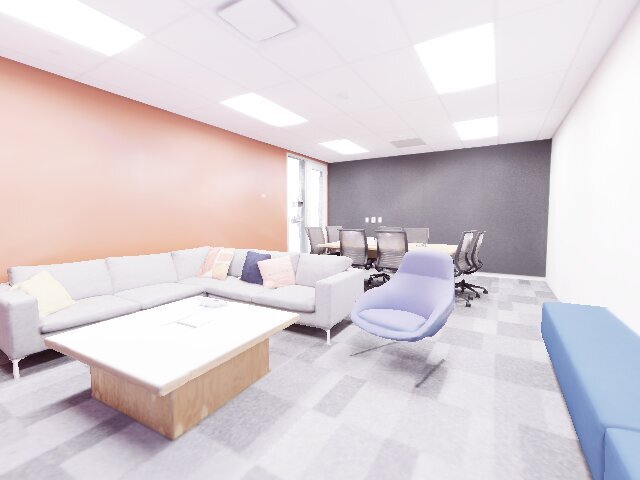}
    \end{subfigure}
    \begin{subfigure}[t]{0.24\linewidth}
        \centering
        \includegraphics[width=\linewidth]{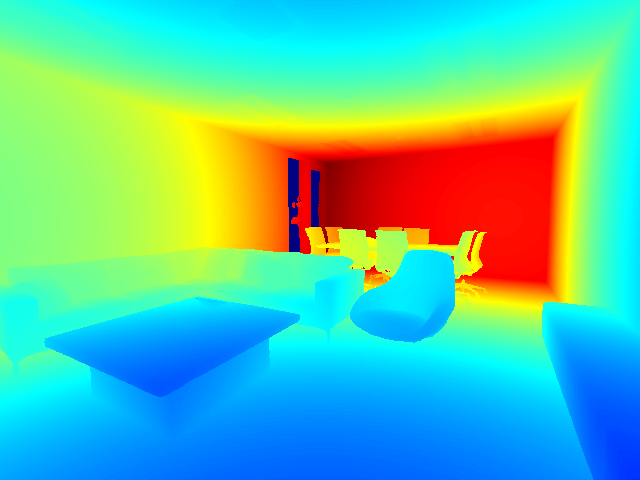}
    \end{subfigure}
    \begin{subfigure}[t]{0.24\linewidth}
        \centering
        \includegraphics[width=\linewidth]{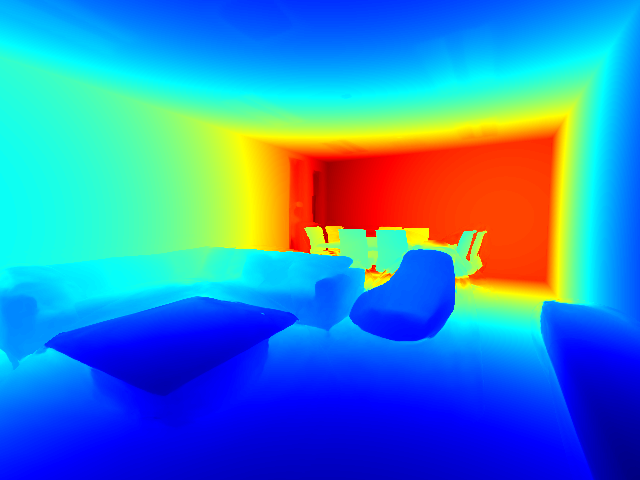}
    \end{subfigure}
    \begin{subfigure}[t]{0.24\linewidth}
        \centering
        \includegraphics[width=\linewidth]{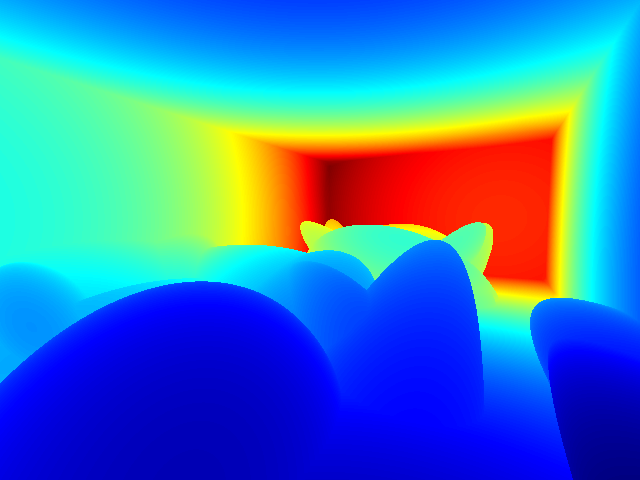}
    \end{subfigure}
    \begin{subfigure}[t]{0.24\linewidth}
        \centering
        \includegraphics[width=\linewidth]{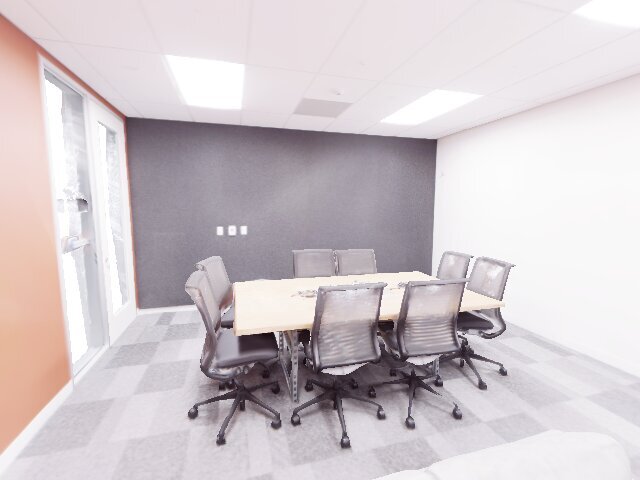}
    \end{subfigure}
    \begin{subfigure}[t]{0.24\linewidth}
        \centering
        \includegraphics[width=\linewidth]{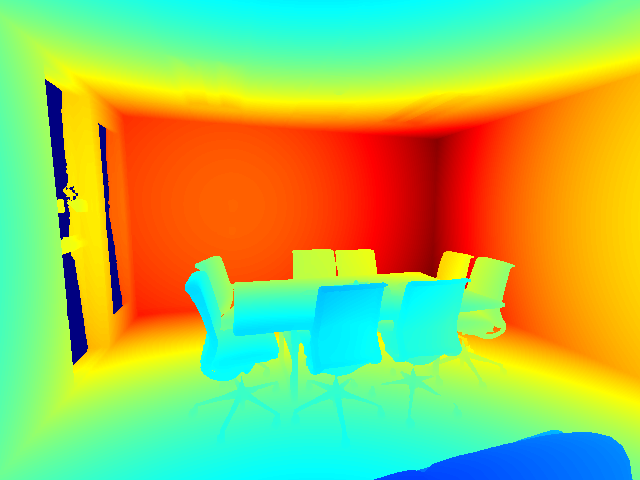}
    \end{subfigure}
    \begin{subfigure}[t]{0.24\linewidth}
        \centering
        \includegraphics[width=\linewidth]{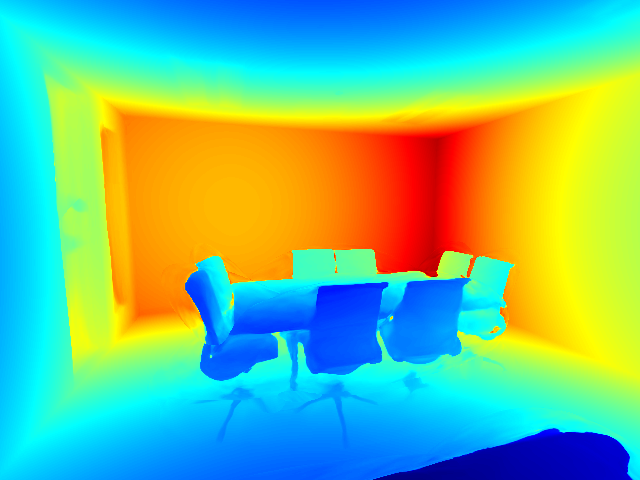}
    \end{subfigure}
    \begin{subfigure}[t]{0.24\linewidth}
        \centering
        \includegraphics[width=\linewidth]{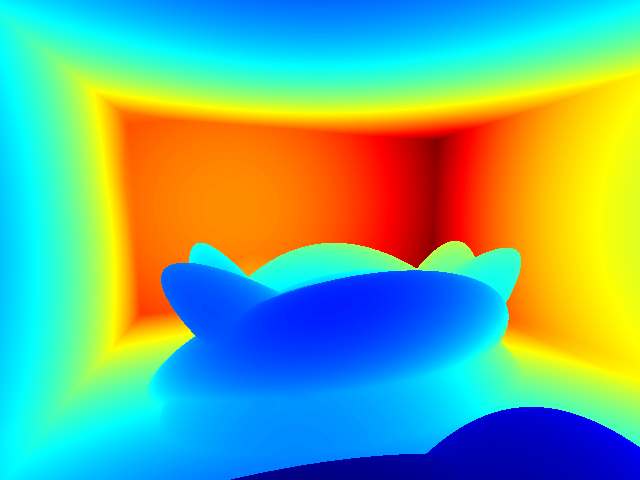}
    \end{subfigure}
    \begin{subfigure}[t]{0.24\linewidth}
        \centering
        \includegraphics[width=\linewidth]{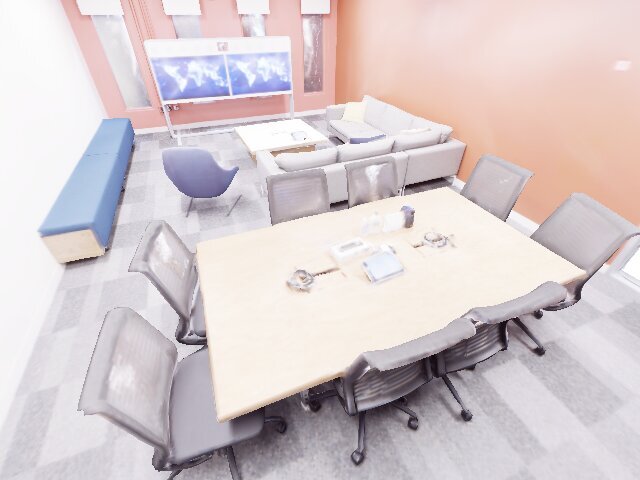}
        \caption{Scene}
    \end{subfigure}
    \begin{subfigure}[t]{0.24\linewidth}
        \centering
        \includegraphics[width=\linewidth]{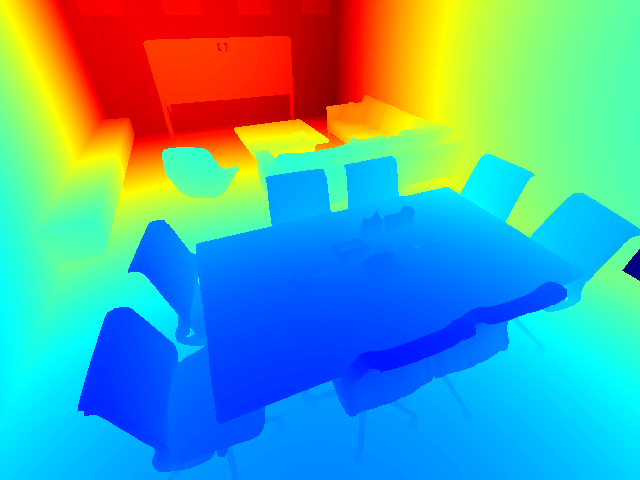}
        \caption{SDDF G.T.}
    \end{subfigure}
    \begin{subfigure}[t]{0.24\linewidth}
        \centering
        \includegraphics[width=\linewidth]{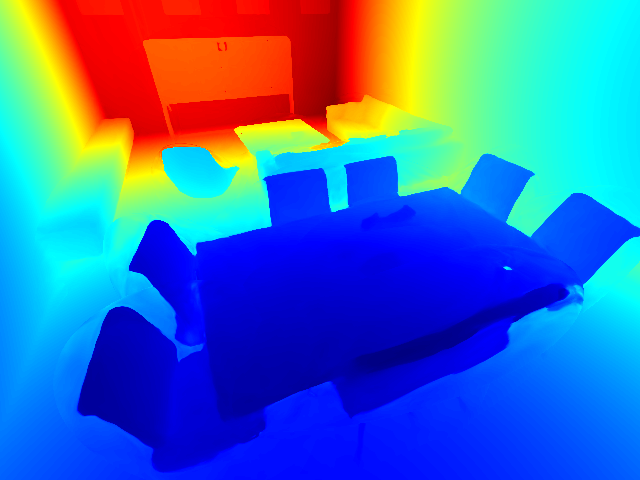}
        \caption{Prediction} \label{fig:sddf_prediction}
    \end{subfigure}
    \begin{subfigure}[t]{0.24\linewidth}
        \centering
        \includegraphics[width=\linewidth]{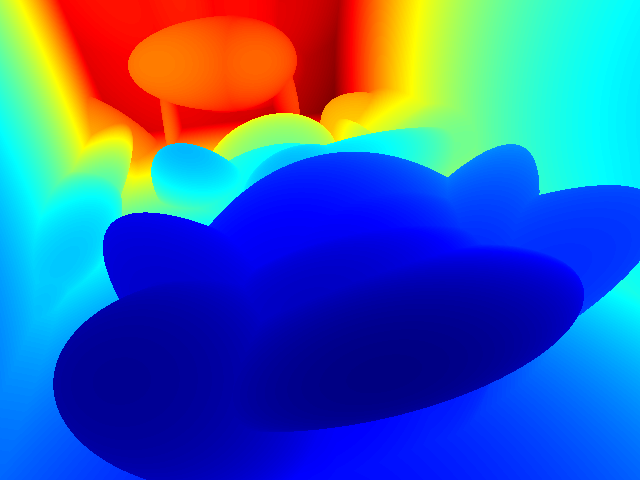}
        \caption{SDDF Prior} \label{fig:sddf_prior_prediction}
    \end{subfigure}
    \caption{SDDF prediction by our method. Columns a) and b) show RGB images and ground-truth SDDF. As shown in c), our model remains accurate with varying view distance due to satisfying the directional Eikonal constraint. As shown in d), the ellipsoidal prior provides a coarse approximation of the shape of objects in the scene, such as walls, chairs, and tables to be used by the residual network.
    \vspace{-1ex}}
    \label{fig:qualitative_sddf_results}
\end{figure}

\begin{figure*}
    \centering
    \begin{subfigure}[t]{0.16\linewidth}
        \centering
        \includegraphics[width=\linewidth]{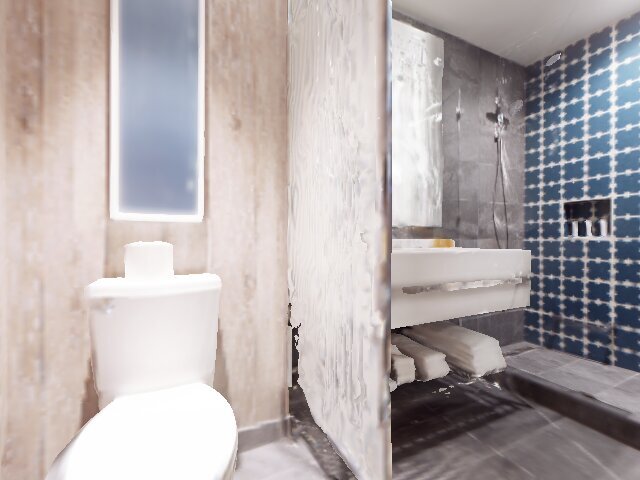}
    \end{subfigure}
    \begin{subfigure}[t]{0.16\linewidth}
        \centering
        \includegraphics[width=\linewidth]{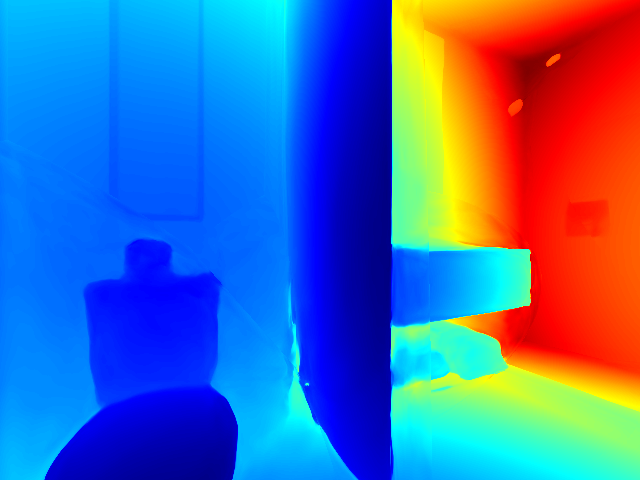}
    \end{subfigure}
    \begin{subfigure}[t]{0.16\linewidth}
        \centering
        \includegraphics[width=\linewidth]{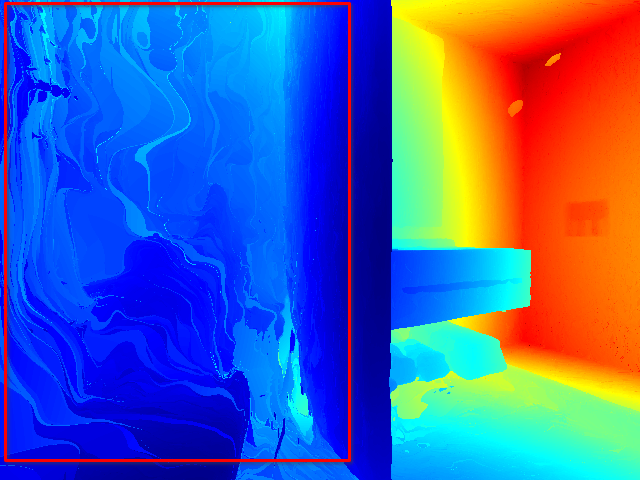}
    \end{subfigure}
    \begin{subfigure}[t]{0.16\linewidth}
        \centering
        \includegraphics[width=\linewidth]{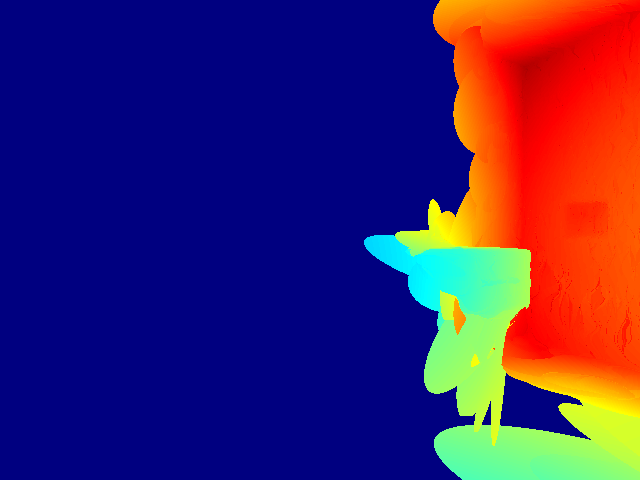}
    \end{subfigure}
    \begin{subfigure}[t]{0.16\linewidth}
        \centering
        \includegraphics[width=\linewidth]{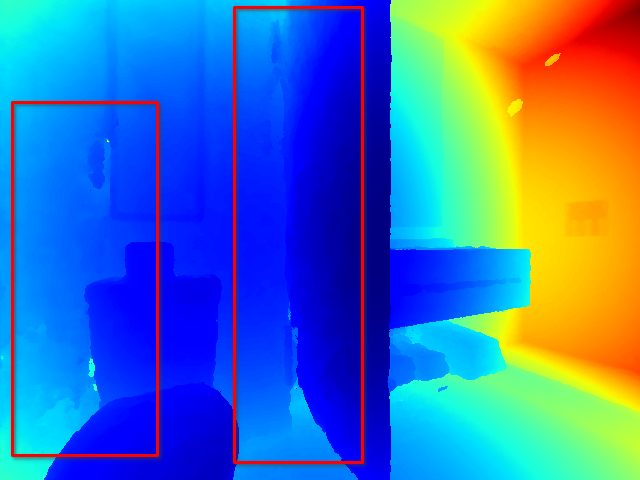}
    \end{subfigure}
    \begin{subfigure}[t]{0.16\linewidth}
        \centering
        \includegraphics[width=\linewidth]{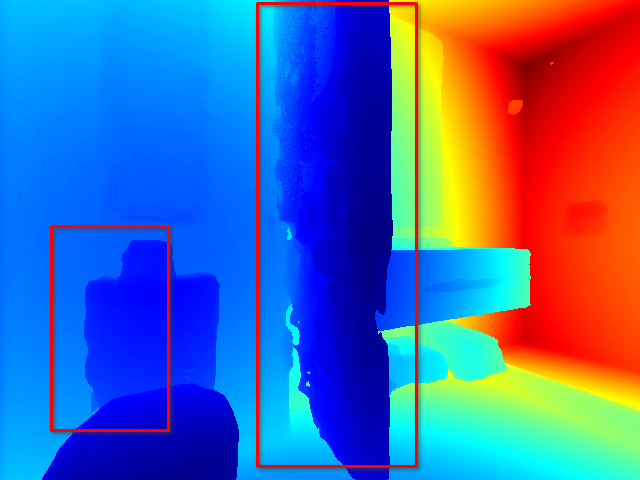}
    \end{subfigure}
    \begin{subfigure}[t]{0.16\linewidth}
        \centering
        \includegraphics[width=\linewidth]{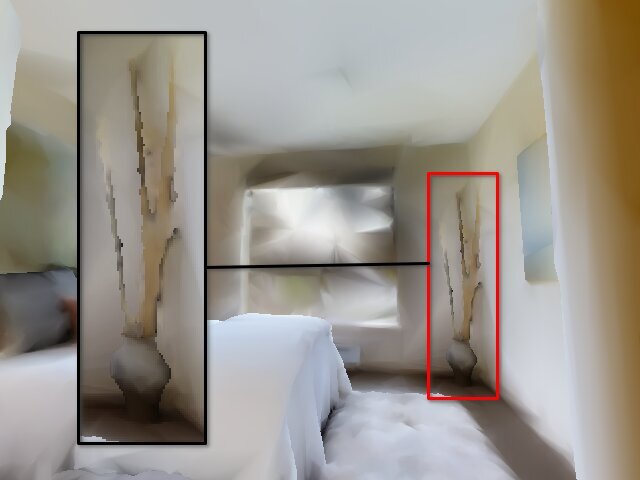}
    \end{subfigure}
    \begin{subfigure}[t]{0.16\linewidth}
        \centering
        \includegraphics[width=\linewidth]{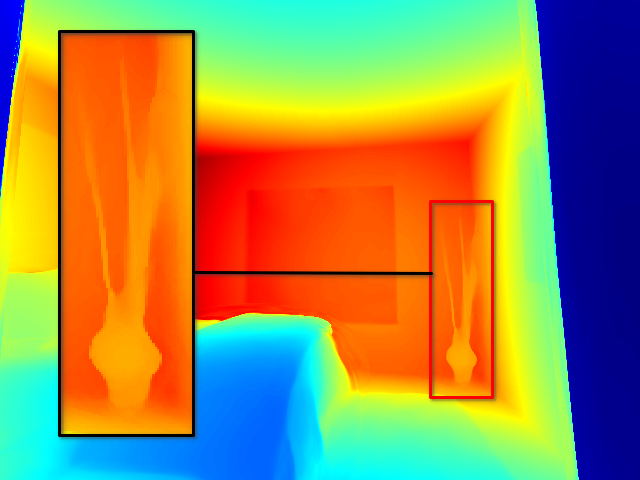}
    \end{subfigure}
    \begin{subfigure}[t]{0.16\linewidth}
        \centering
        \includegraphics[width=\linewidth]{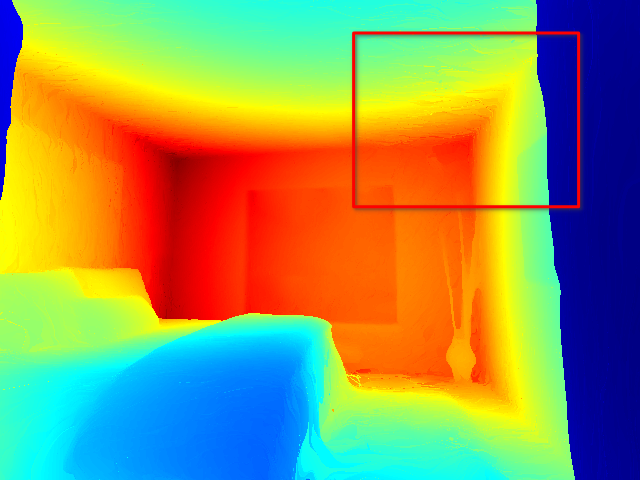}
    \end{subfigure}
    \begin{subfigure}[t]{0.16\linewidth}
        \centering
        \includegraphics[width=\linewidth]{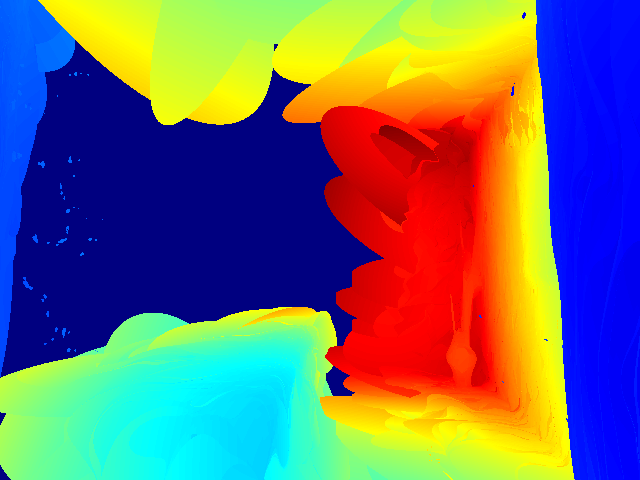}
    \end{subfigure}
    \begin{subfigure}[t]{0.16\linewidth}
        \centering
        \includegraphics[width=\linewidth]{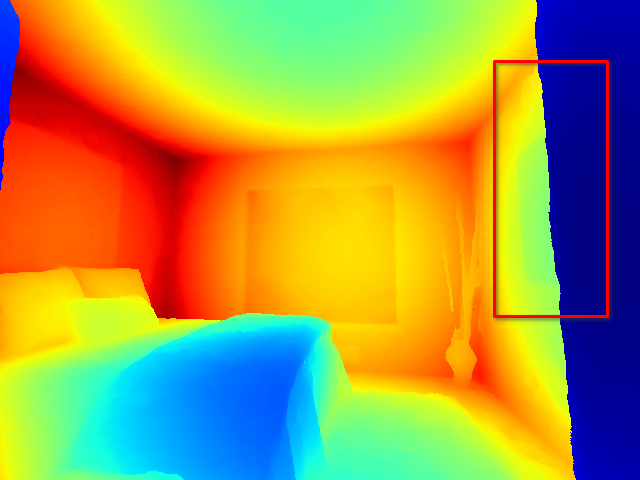}
    \end{subfigure}
    \begin{subfigure}[t]{0.16\linewidth}
        \centering
        \includegraphics[width=\linewidth]{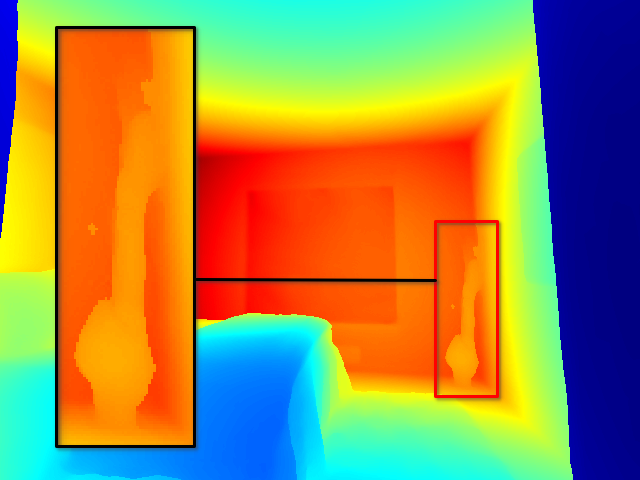}
    \end{subfigure}
    \begin{subfigure}[t]{0.16\linewidth}
        \centering
        \includegraphics[width=\linewidth]{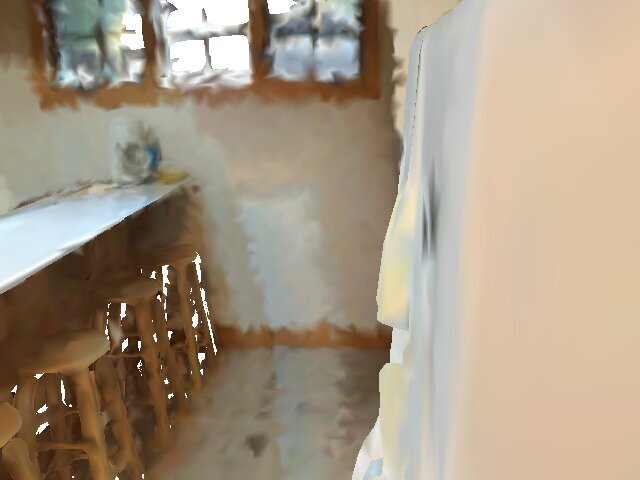}
        \caption{Scene}
    \end{subfigure}
    \begin{subfigure}[t]{0.16\linewidth}
        \centering
        \includegraphics[width=\linewidth]{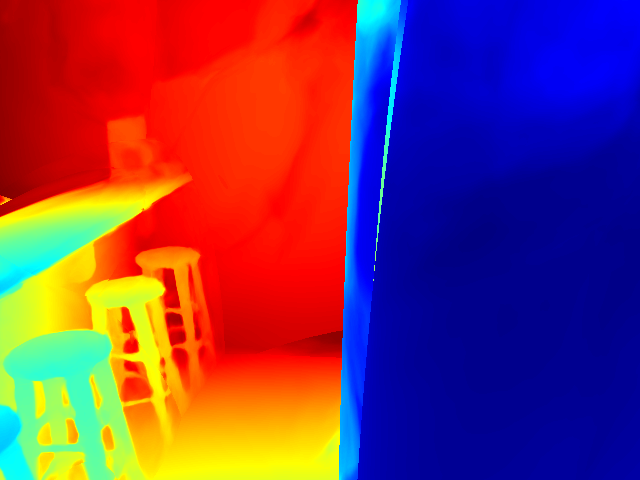}
        \caption{SDDF (ours)}
        \label{fig:comp_sddf}
    \end{subfigure}
    \begin{subfigure}[t]{0.16\linewidth}
        \centering
        \includegraphics[width=\linewidth]{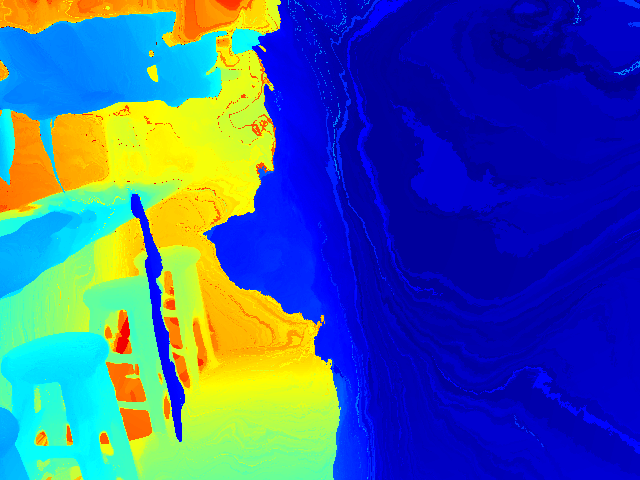}
        \caption{RaDe-GS~\cite{radegs2024}}
        \label{fig:comp_radegs}
    \end{subfigure}
    \begin{subfigure}[t]{0.16\linewidth}
        \centering
        \includegraphics[width=\linewidth]{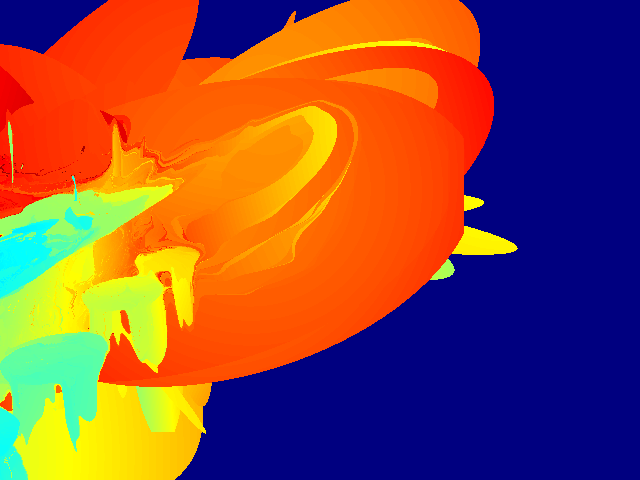}
        \caption{w/o RGB~\cite{radegs2024}}
        \label{fig:comp_radegs_depth_only}
    \end{subfigure}
    \begin{subfigure}[t]{0.16\linewidth}
        \centering
        \includegraphics[width=\linewidth]{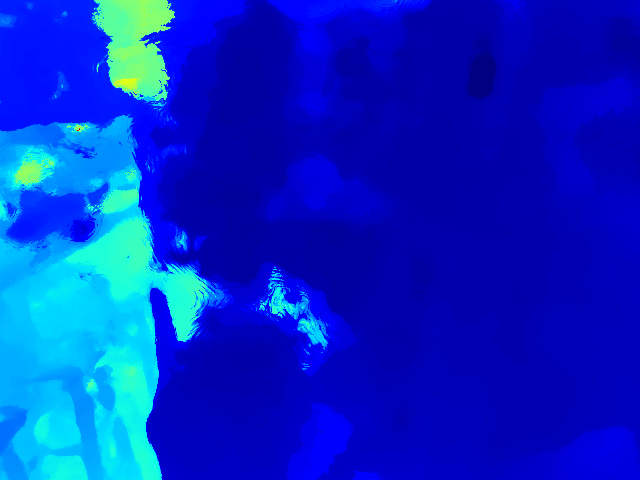}
        \caption{Nerfacto~\cite{nerfstudio2023}}
        \label{fig:comp_nerf}
    \end{subfigure}
    \begin{subfigure}[t]{0.16\linewidth}
        \centering
        \includegraphics[width=\linewidth]{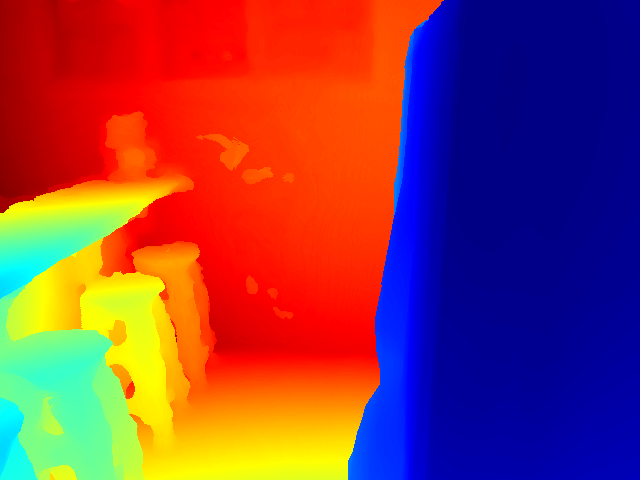}
        \caption{SDF~\cite{instantngp2022}}
        \label{fig:comp_sdf}
    \end{subfigure}
    \caption{Qualitative comparison of SDDF predictions. Row 1: Replica Hotel (synthesized). Row 2: Gibson Allensville (synthesized). Row 3: ScanNet scene 0000-00 (real). In areas with limited sensor measurements, RaDe-GS~\cite{radegs2024} fails to learn the geometry, with RGB (c) or without RGB (d), yielding artifacts. Nerfacto~\cite{nerfstudio2023} in e) shows significant artifacts and large distance prediction error. \sdfbaseline~\cite{instantngp2022} in f) tends to learn a smoother approximation, missing sharp boundaries.}
    \label{fig:sddf_vs_baselines}
\end{figure*}

\begin{table*}[t]
    \centering
    \caption{Mean absolute error (cm) of SDDF prediction}\label{tab:mae_sddf}
    \begin{tabular}{c|c|c|c|c|c|c|c|c}
        &              & Allensville & Hotel & Office 0 & Office 1 & Office 2 & Office 3 & Office 4 \\ \hline
\multirow{2}{*}{LiDAR} & \sdfbaseline~\cite{instantngp2022} & \textbf{1.137} & 1.224 & \textbf{0.825} & 0.724 & 1.342 & 1.608 & \textbf{1.037} \\
                       & \textbf{SDDF (ours)} & 1.350 & \textbf{0.997} & 1.092 & \textbf{0.694} & \textbf{1.236} & \textbf{1.588} & 1.132 \\ \hline
\multirow{2}{*}{RGB-D} & RaDe-GS~\cite{radegs2024} & \textbf{1.737} & \textbf{0.857} & \textbf{0.438} & \textbf{0.348} & \textbf{1.258} & \textbf{0.827} & \textbf{0.498} \\
                       & Nerfacto~\cite{nerfstudio2023} & 83.433 & 58.272 & 63.011 & 68.382 & 69.144 & 74.168 & 88.479 \\
                       & depth-Nerfacto~\cite{nerfstudio2023} & 85.655 & 61.706 & 65.074 & 69.767 & 74.422 & 79.227 & 76.535 \\\hline
\multirow{3}{*}{Depth Only} & RaDe-GS (w/o RGB)~\cite{radegs2024} & 25.002 & 45.948 & 1.208 & 6.654 & 3.582 & \textbf{0.955} & \textbf{0.500} \\
& \sdfbaseline~\cite{instantngp2022} & \textbf{1.106} & 1.297 & \textbf{0.737} & 0.711 & 1.120  & {1.395} & {0.953} \\
                       & \textbf{SDDF (ours)} & 1.490 & \textbf{1.247} & 1.120 & \textbf{0.696} & \textbf{1.081} & 1.568 & 1.206
    \end{tabular}
    \vspace{-1ex}
\end{table*}

\textbf{Accuracy of SDDF Predictions:} We first compare the SDDF predictions made using different methods.
Qualitative results are shown in Figs.~\ref{fig:sddf_prediction} and~\ref{fig:sddf_vs_baselines} (more results in Figs.~\ref{fig:more_sddf_qualitative_results}--\ref{fig:more_sddf_comp_scannet} in Supplemental~\ref{sec:appdx_more_vis_sddf_prediction}).
Our SDDF model accurately learns the geometry of the scene, as shown in the three qualitative examples in Fig.~\ref{fig:qualitative_sddf_results}.
The overall model remains robust to varying viewing distances because the latent feature for the residual network is constructed from the intersection point and the viewing direction in a way that satisfies the Eikonal constraint (Proposition~\ref{prop:eikonal_equation_with_residual}).
The ellipsoidal prior (Fig.~\ref{fig:sddf_prior_prediction}) is optimized to approximate the shapes of objects in the scene, based on which the residual network recovers fine details with high fidelity, as seen in Fig.~\ref{fig:sddf_prediction}.
This ellipsoid-based network design allows readily combining different sensor measurements of the same surface into a neural representation.

Fig.~\ref{fig:sddf_vs_baselines} shows qualitative comparisons against RaDe-GS~\cite{radegs2024} (RGB-D and depth-only), Nerfacto~\cite{nerfstudio2023}, and \sdfbaseline~\cite{instantngp2022} with sphere tracing.
In Fig.~\ref{fig:comp_radegs}, RaDe-GS exhibits erroneous artifacts around the toilet, at the corner, or near the fridge, where there are limited RGB-D observations.
Meanwhile, our method accurately reconstructs these areas.
Nerfacto also exhibits artifacts due to insufficient data as shown in Fig.~\ref{fig:comp_nerf}.
Since no explicit representations like our ellipsoids or the Gaussians in RaDe-GS are used, Nerfacto predicts SDDF based on the learned volume density, which is optimized for photometric rendering rather than scene geometry, leading to large distance prediction errors.

Meanwhile, sphere tracing on SDF-Instant-NGP~\cite{instantngp2022} does not exhibit significant artifacts.
However, this baseline tends to learn smoother shapes that lack sharper details, such as the plant shown in the second row or the chairs in the third row of Fig.~\ref{fig:comp_sdf}.
Moreover, the first row of Fig.~\ref{fig:comp_sdf} shows that the errors accumulated during sphere tracing become significant at boundaries when the SDF model does not predict accurate SDF consistently.
These qualitative observations are consistent in \emph{real} ScanNet data~\cite{dai2017scannet}, as shown in Fig.~\ref{fig:sddf_vs_baselines} and Fig.~\ref{fig:more_sddf_comp_scannet} in Supplemental~\ref{sec:appdx_more_vis_sddf_prediction}.

The quantitative results in Table~\ref{tab:mae_sddf} show that our method reaches the state of the art for both LiDAR and RGB-D datasets.
Our method has slightly higher errors on RGB-D datasets because our method only uses depth information from the RGB-D datasets, whereas RaDe-GS uses both RGB and depth data.
Moreover, our method is trained with 20$\times$ downsampling, whereas RaDe-GS uses 2$\times$ downsampling and hence approximately 10$\times$ more depth data.
Consequently, RaDe-GS uses a significantly higher number of parameters as seen in Table~\ref{tab:gpu_cost_model_size}.
Nerfacto and \sdfbaseline{} do not downsample the data, but generally perform worse than ours.
Training with depth loss~\cite{kangle2021dsnerf} does not improve the distance predictions by Nerfacto. We examine the results of Nerfacto and depth-Nerfacto further in Supplemental~\ref{sec:appdx_training_of_nerfacto}.
With RGB-D datasets, RaDe-GS~\cite{radegs2024} has the lowest MAE among models that use both RGB and depth.
However, as shown in Table~\ref{tab:mae_sddf} and in Fig.~\ref{fig:comp_radegs_depth_only}, when RaDe-GS is trained using depth data only, it shows significantly larger errors in larger and more complicated scenes.
We omit quantitative evaluation on ScanNet data (Fig.~\ref{fig:more_sddf_comp_scannet} in Supplemental~\ref{sec:appdx_more_vis_sddf_prediction}), due to the unavailability of a reliable ground-truth mesh.

\begin{table}
    \centering
    \caption{Max. GPU memory usage, model size, and inference time on Replica-Hotel dataset~\cite{replica19arxiv} (Intel 14900K CPU and NVIDIA RTX-3090 GPU). Training time is the time taken by the model to reach MAE $< 1.5$ cm.}\label{tab:gpu_cost_model_size}
    \begin{tabular}{c|c|c|c|c}
                           & SDDF & RaDe-GS & Nerfacto & SDF \\ \hline
         Training GRAM (G) & \textbf{3.3}  & 18.7    & 4.7      & 6.2 \\
          Testing GRAM (G) & \textbf{1.7}  & 6.7     & 3.6      & 1.8 \\
        No. Parameters (M) & 2.7  & 113.7   & 16.4     & \textbf{1.7} \\
       Training Time (min) & 100  & 30      & N/A      & \textbf{3} \\
       Inference Time (ms/frame) & 69   & \textbf{7}       & 254      & 103
    \end{tabular}
\vspace{-1ex}
\end{table}

\begin{table*}
    \centering
    \caption{Ablation experiments on LiDAR datasets. Mean absolute error (cm) of SDDF prediction is reported.} \label{tab:ablation_study}
    \begin{tabular}{c|c|c|c|c|c|c|c}
                             & Allensville & Hotel & Office 0 & Office 1 & Office 2 & Office 3 & Office 4 \\ \hline
       twice number of ellipsoids & 1.414 & 1.013 & \textbf{1.034} & 0.716 & 1.291 & 1.649 & 1.250 \\
         half number of ellipsoids & 1.477 & 1.013 & 1.072 & 0.706 & \textbf{1.218} & 1.727 & 1.192 \\
                 without negative samples & 5.379 & 2.639 & 1.954 & 0.867 & 2.734 & 8.598 & 2.406 \\
        K-means++~\cite{kmeanspp2007} only ellipsoid initialization & 1.405 & 1.033 & 1.060 & 0.703 & 1.279 & 1.636 & 1.248 \\
           Fourier embedding & 1.842 & 1.303 & 1.544 & 0.930 & 1.771 & 2.513 & 1.631 \\
                     default & \textbf{1.350} & \textbf{0.997} & 1.092 & \textbf{0.694} & 1.236 & \textbf{1.588} & \textbf{1.132}
    \end{tabular}
\end{table*}

\textbf{Computational Efficiency:} Table~\ref{tab:gpu_cost_model_size} shows the computational cost of all methods, in terms of training time, inference time, number of parameters and maximum GPU memory usage during training and testing.
Our method exhibits the lowest memory usage for training and testing, and the second lowest number of parameters after \sdfbaseline~\cite{instantngp2022}.
This is because we use much fewer ellipsoids than Gaussians in RaDe-GS~\cite{radegs2024}, which has the highest GPU memory usage.
The lower GPU memory usage of our method is beneficial for scaling up to larger scenes, or deployment on mobile robots with limited hardware resources. Although Nerfacto has relatively low GPU memory usage, it has the longest inference time, and its prediction error is much higher than other methods.

In terms of inference time, our method is the second best, with RaDe-GS significantly outperforming our method.
We attribute this to the highly optimized, full-CUDA implementation of RaDe-GS, and expect that it is possible to achieve similar performance with improvements in the implementation.
Our method is 1.5$\times$ faster to infer the directional distance than the \sdfbaseline-based sphere-tracing method, and 4$\times$ faster than Nerfacto.
This is because our method only requires a single forward pass, whereas the other methods require multiple passes.
It is also worth noting that our method is faster than \sdfbaseline{} despite their CUDA optimized implementation.
However, because of the extra direction input, our model needs more time to train, which is a future direction for improvement.

\subsection{Ablation Study}
We examine the performance of our model in four variations.
We first consider using twice or half the number of ellipsoids for each scene.
To investigate the contribution of negative sample augmentation, we compare against an SDDF model trained without negative samples.
In addition, we train SDDF models with ellipsoids initialized by K-means++~\cite{kmeanspp2007} only.
To demonstrate the benefit of our polynomial embedding, we compare it against Fourier embedding~\cite{fourier2020}:
\begin{equation}\label{eq:fourier_embedding}
    \bfm \!=\!\left[\sin(2\pi \bfB_{\bfp} \bfp), \cos(2\pi \bfB_{\bfp}\bfp), \sin(2\pi \bfB_{\bfv}\bfv), \cos(2\pi \bfB_{\bfv}\bfv)\right],
\end{equation}
where $\bfB_{\bfp}, \bfB_{\bfv} \in \bbR^{25\times 3}$ are random Fourier features sampled from $\calN(0, 1)$. We use~\eqref{eq:fourier_embedding} in place of~\eqref{eq:monomial_gen} and~\eqref{eq:pv_monomial_gen}, while~\eqref{eq:polynomial_embedding} is kept so that the model size remains unchanged.
The results are summarized in Table~\ref{tab:ablation_study}. Lastly, we illustrate the advantages of our SDDF definition compared to related formulations~\cite{zobeidi2021, pddf2022}.

Table~\ref{tab:ablation_study} shows that although our method is robust to the number of ellipsoids, using too few or too many ellipsoids causes problems.
With too few ellipsoids, the model cannot approximate the shape of objects in the scene well, whereas with too many ellipsoids, the model learns spurious occlusions between the ellipsoids.
During testing, we also found that too many ellipsoids cause the optimizer to move some redundant ellipsoids out of the scene.

Negative SDDF sample augmentation plays a key role in optimizing ellipsoids. Fig.~\ref{fig:ablation_study_no_neg} shows that with negative SDDF samples, the ellipsoid priors are optimized to cover the scene well.
In contrast, without negative samples, the resulting ellipsoids are generally smaller and regress into the interior of objects, which leads to missing ray intersections.
Missing ray intersections cause a wrong ellipsoid to be selected at a further location, leaving the burden to the residual decoder $R$ to learn larger residuals.

Appropriate ellipsoid initialization is also important.
As shown in Table~\ref{tab:ablation_study}, initializing the ellipsoids solely with K-means++~\cite{kmeanspp2007} causes larger errors, as K-means++ generally uses an excessive number of ellipsoids for planar surfaces.

We also consider using the Fourier embedding~\cite{fourier2020} in place of the polynomial embedding introduced in this work.
The Fourier embedding may seem useful because it can learn high-frequency features.
However, the results in Table~\ref{tab:ablation_study} show that polynomial embedding provides better generalization for different scenes with smaller errors. Our investigation indicates that this is because the Fourier embedding requires different spatial frequency hyperparameters for each scene to achieve comparable performance, whereas the polynomial embedding does not require such hyperparameter tuning.

Direct comparison against DDF~\cite{pddf2022} was not possible due to the unavailability of source code, while the SDDF method of~\cite{zobeidi2021} can handle only object-level reconstruction. We illustrate the advantage of our SDDF formulation in a simple 2D scene in Fig.~\ref{fig:compare_with_ddf}. The first row shows that our SDDF definition has fewer discontinuities than DDF~\cite{pddf2022}, while the SDDF of~\cite{zobeidi2021} fails to capture the two circles due to its object-level representation of the box. The discontinuities in DDF~\cite{pddf2022} translate into a larger learning error than our SDDF, as we demonstrate in the second row of Fig.~\ref{fig:compare_with_ddf} using an MLP model with hidden dimensions $[16, 8, 8, 1]$. Faster convergence to lower error in both training and validation with the same model indicates that our SDDF formulation is more amenable to learning than DDF.

\begin{figure}[t]
    \centering
    \begin{subfigure}[t]{0.48\linewidth}
        \centering
        \includegraphics[width=\linewidth]{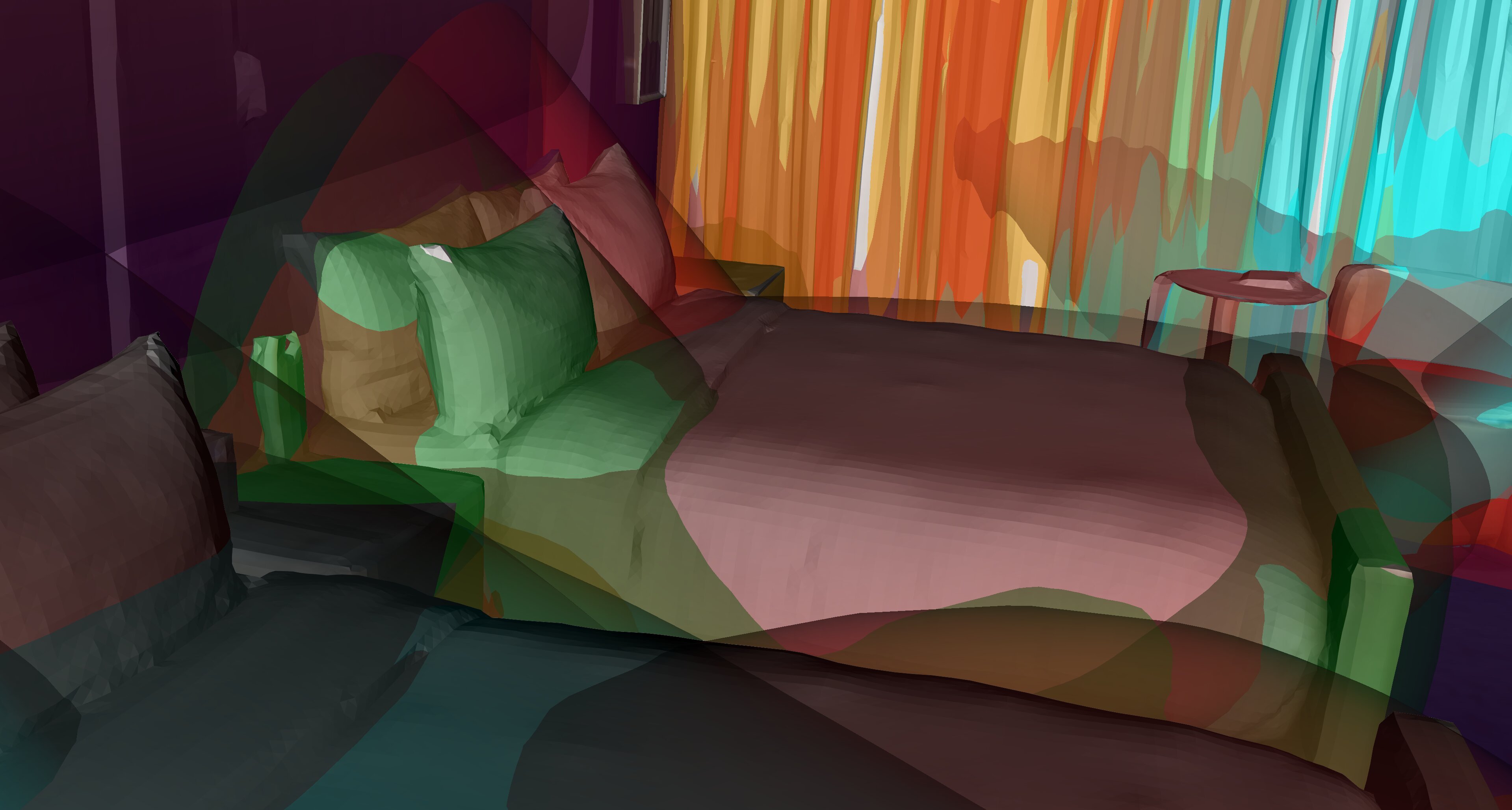}
        \caption{Default}
    \end{subfigure}
    \begin{subfigure}[t]{0.48\linewidth}
        \centering
        \includegraphics[width=\linewidth]{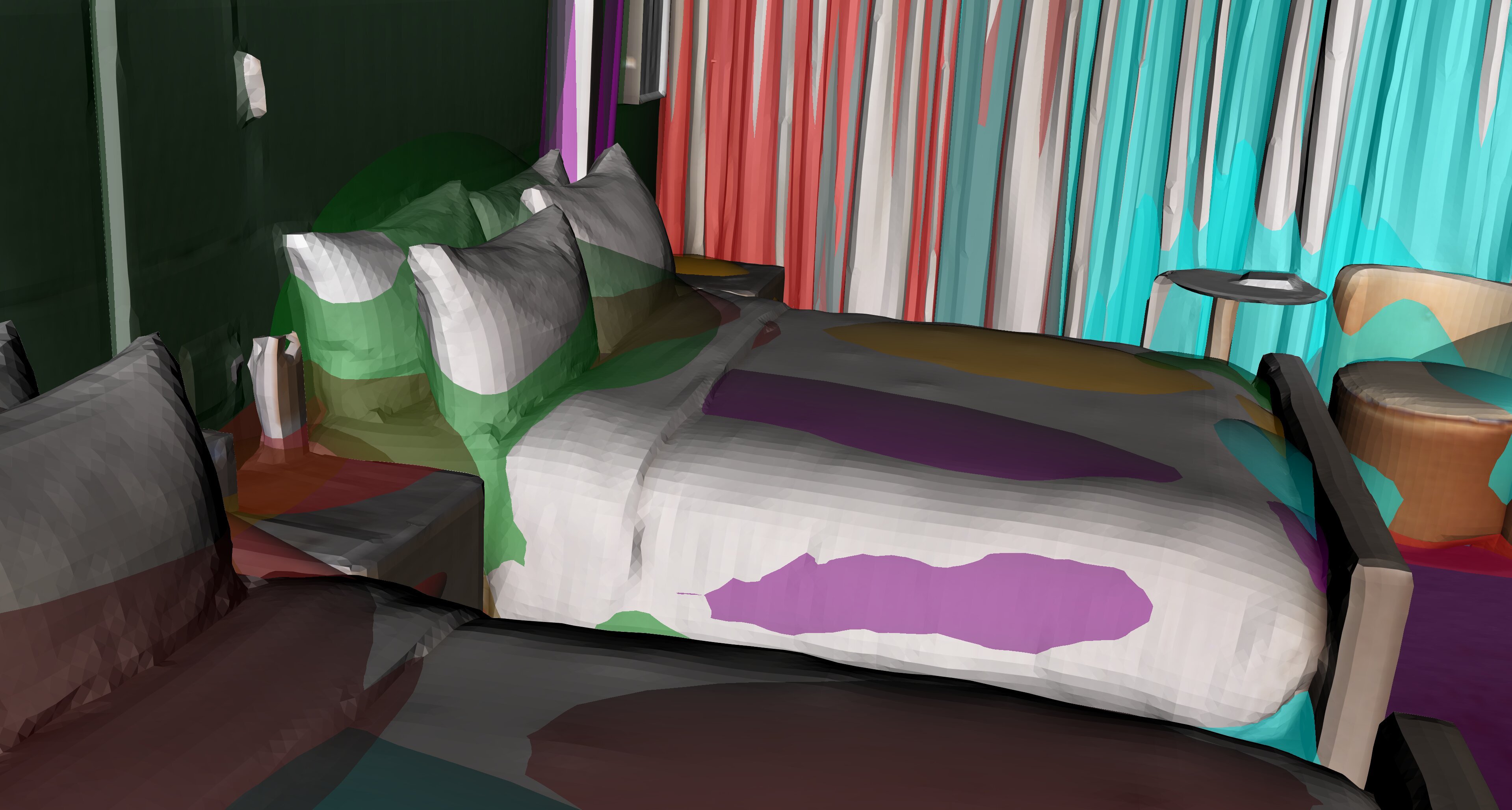}
        \caption{W/o negative samples}
    \end{subfigure}
    \vspace{-0.5ex}
    \caption{Ablation study of negative samples augmentation. With negative sample augmentation in a), the ellipsoids cover the objects in the scene. Without negative sample augmentation (b), the objects are not covered by ellipsoids in many places.}
    \label{fig:ablation_study_no_neg}
    \vspace{-1.5ex}
\end{figure}

\begin{figure}[t]
\centering
\begin{subfigure}[t]{0.33\linewidth}
    \centering
    \includegraphics[width=\linewidth]{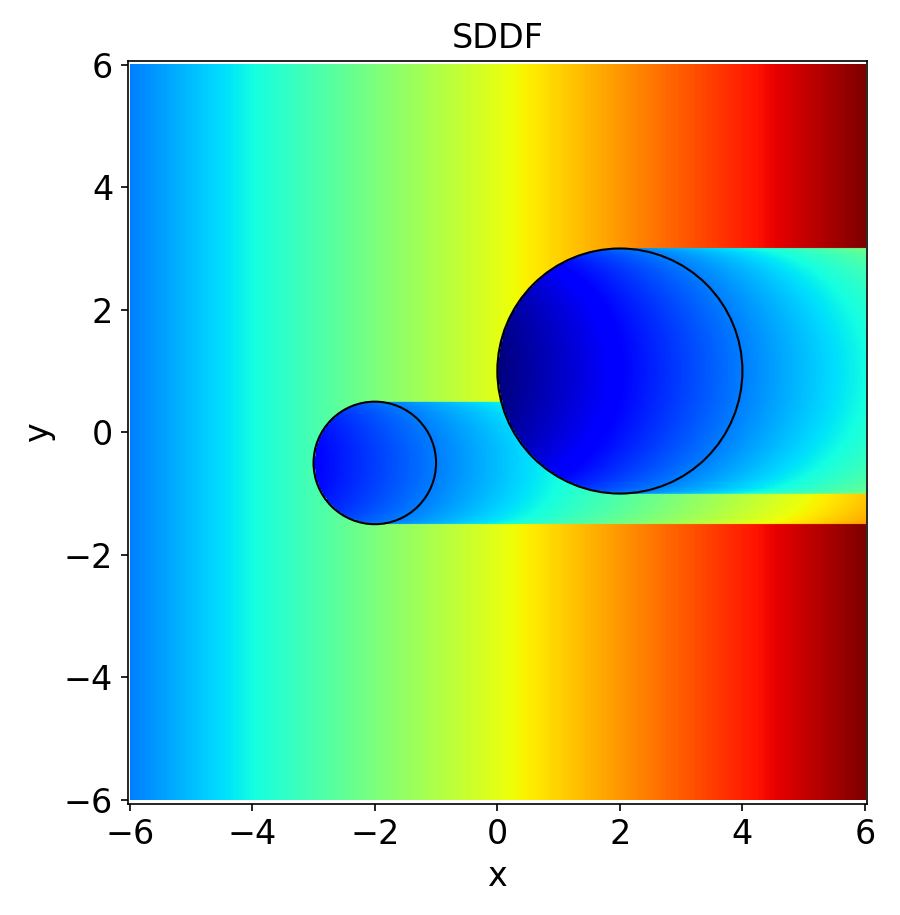}
\end{subfigure}%
\hfill%
\begin{subfigure}[t]{0.33\linewidth}
    \centering
    \includegraphics[width=\linewidth]{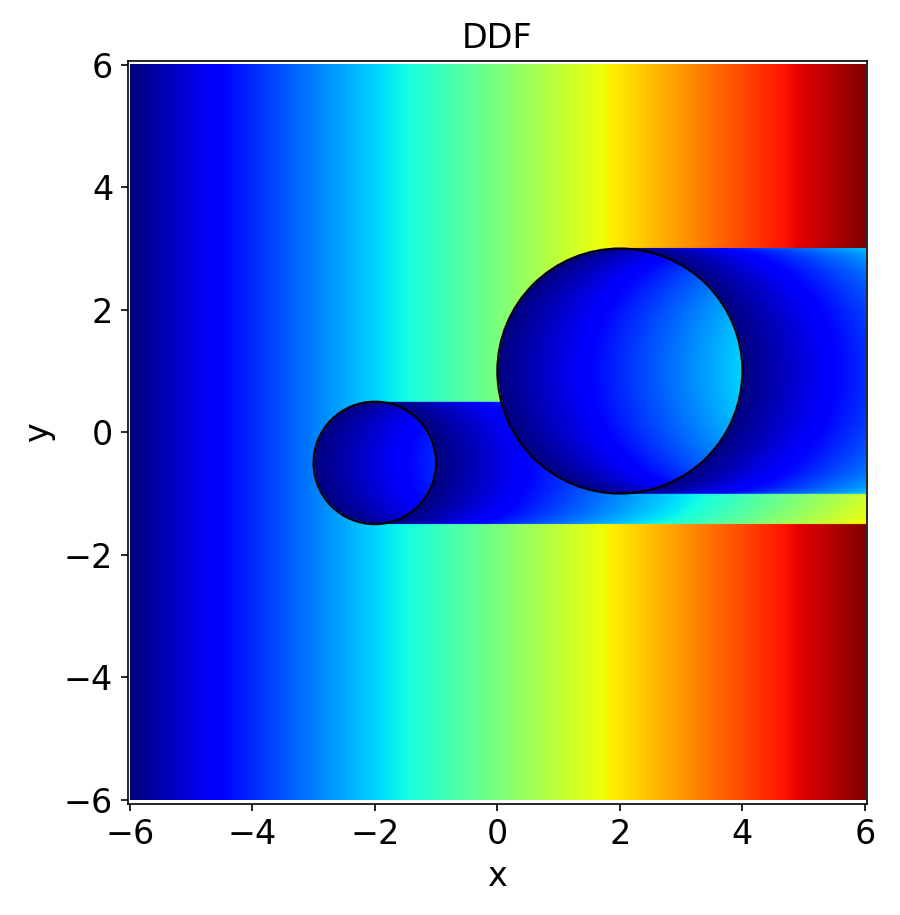}
\end{subfigure}%
\hfill%
\begin{subfigure}[t]{0.33\linewidth}
    \centering
    \includegraphics[width=\linewidth]{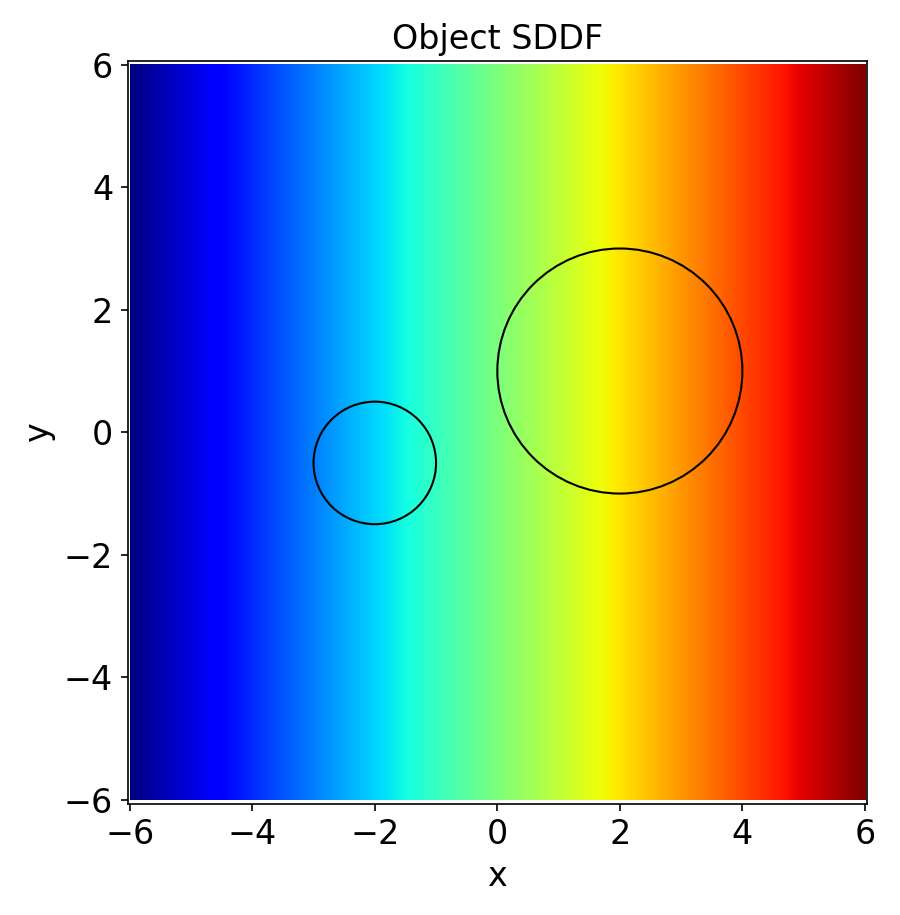}
\end{subfigure}\vspace*{-1ex}
\begin{subfigure}[t]{0.48\linewidth}
    \centering
    \includegraphics[width=\linewidth]{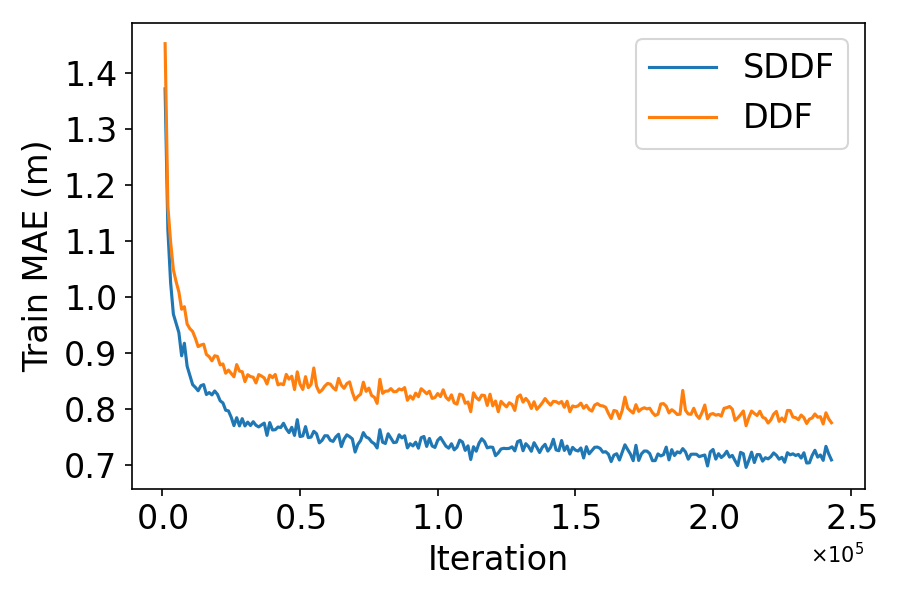}
\end{subfigure}%
\hfill%
\begin{subfigure}[t]{0.48\linewidth}
    \centering
    \includegraphics[width=\linewidth]{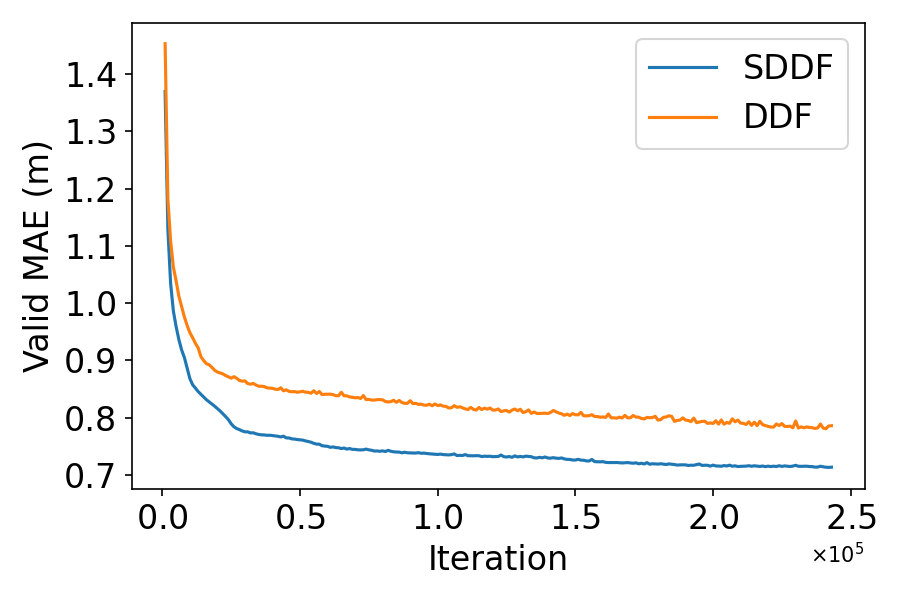}
\end{subfigure}
\vspace{-0.5ex}
\caption{Comparison of SDDF against DDF~\cite{pddf2022} and object SDDF~\cite{zobeidi2021} in a 2D scene with two circles in a square box. Top row: ground-truth targets with a fixed viewing direction $\bfv=(-1, 0)$. Bottom row: training and validation MAE (m) when learning using an MLP.} \label{fig:compare_with_ddf}
\vspace{-1.5ex}
\end{figure}

\subsection{Application to Viewpoint Optimization}

The first row of Fig.~\ref{fig:demo_view_opt} demonstrates gradient-based next-best view optimization using our SDDF model.
The second row shows the combined surface area observed by the two camera views and the overlapped area between the two views.
The visualization shows that our method can reduce the overlap between the two camera views and increase the observed area.
In this example, the area observed by the two camera views increases to $36.15 \mathrm{m}^2\ (+90.3\%)$ from $19.00 \mathrm{m}^2$ with the initial viewpoints.
In the third row of Fig.~\ref{fig:demo_view_opt}, we show an example of scaling up to a trajectory.
By optimizing the waypoints, the observed area increases from $124.55 \mathrm{m}^2$ to $177.50 \mathrm{m}^2\ (+37.7\%)$. We show more results of viewpoint optimization in different scenes in Supplemental~\ref{sec:appdx_more_vis_view_optimization}. These results illustrate a capability that SDDF enables for the first time as a proof of concept. While the same trajectory optimization can be done with SDF and differentiable sphere tracing, this will be much slower and less stable due to the multiple sphere tracing iterations.

\begin{figure}
    \centering
    \begin{subfigure}[t]{0.32\linewidth}
        \centering
        \includegraphics[width=\linewidth,trim={250pt 50pt 250pt 50pt},clip]{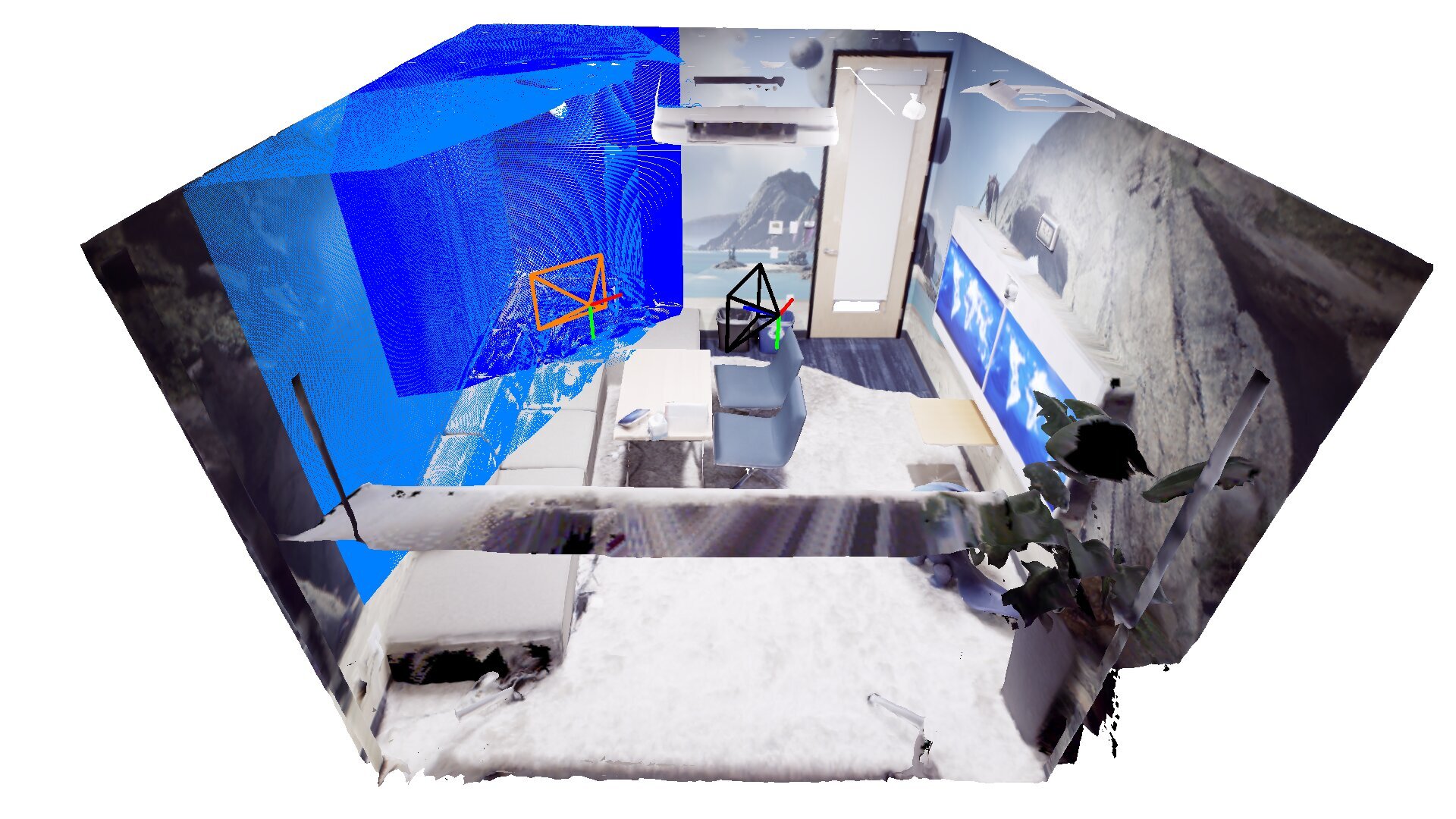}
        \caption{Initial}
        \label{fig:view_opt_initial}
    \end{subfigure}%
    \hfill%
    \begin{subfigure}[t]{0.32\linewidth}
        \centering
        \includegraphics[width=\linewidth,trim={250pt 50pt 250pt 50pt},clip]{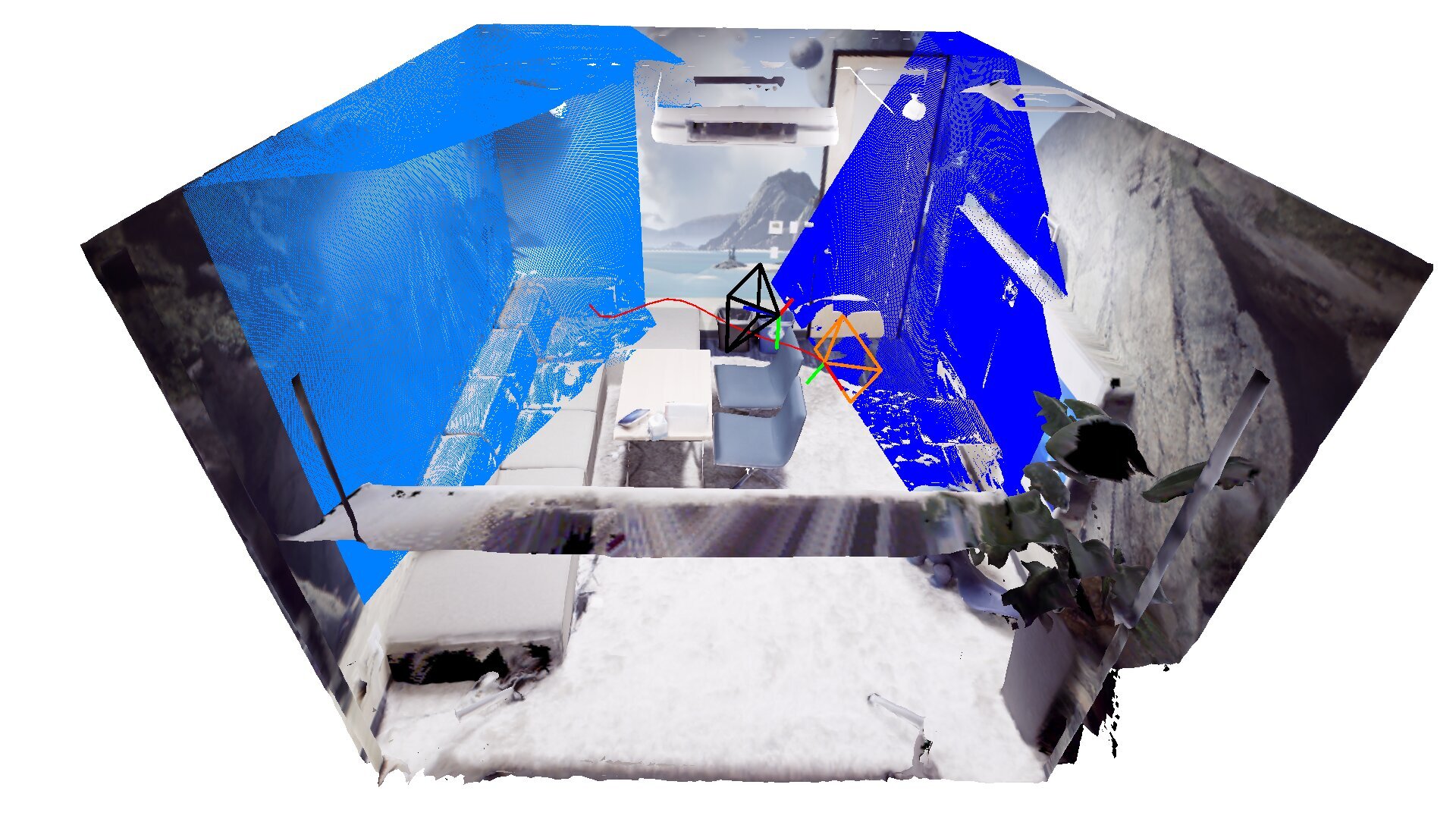}
        \caption{Reduce Overlap}
        \label{fig:view_opt_reduce_overlap}
    \end{subfigure}%
    \hfill%
    \begin{subfigure}[t]{0.32\linewidth}
        \centering
        \includegraphics[width=\linewidth,trim={250pt 50pt 250pt 50pt},clip]{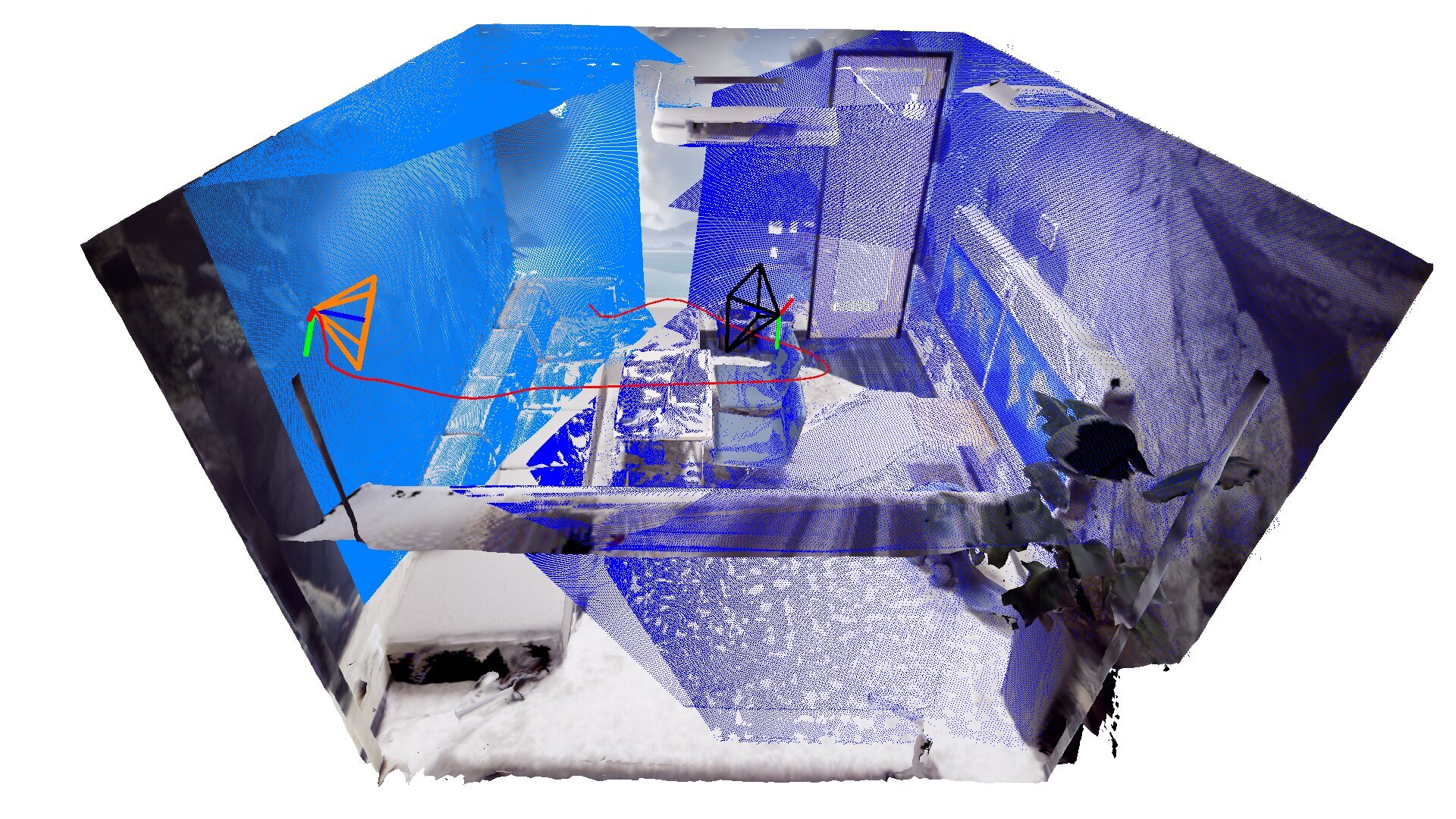}
        \caption{Increase Visibility}
        \label{fig:view_opt_increase_visibility}
    \end{subfigure}
    \begin{subfigure}[t]{0.48\linewidth}
        \centering
        \includegraphics[width=\linewidth,trim={0pt 0pt 0pt 5pt},clip]{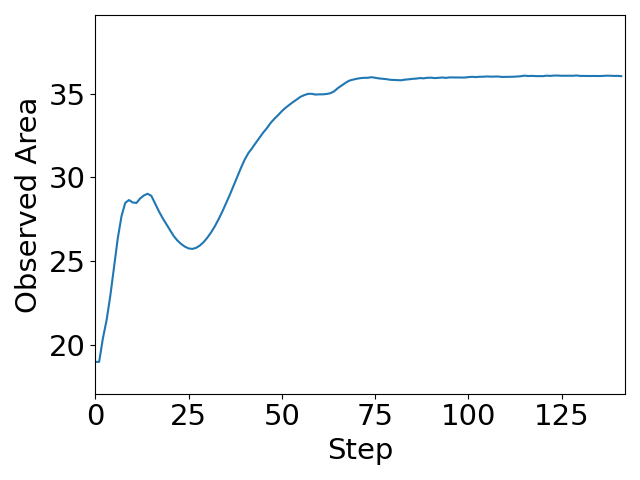}
        \caption{Step vs Observed Area}
        \label{fig:view_opt_curve_observed_area}
    \end{subfigure}
    \hfill
    \begin{subfigure}[t]{0.48\linewidth}
        \centering
        \includegraphics[width=\linewidth,trim={0pt 0pt 0pt 5pt},clip]{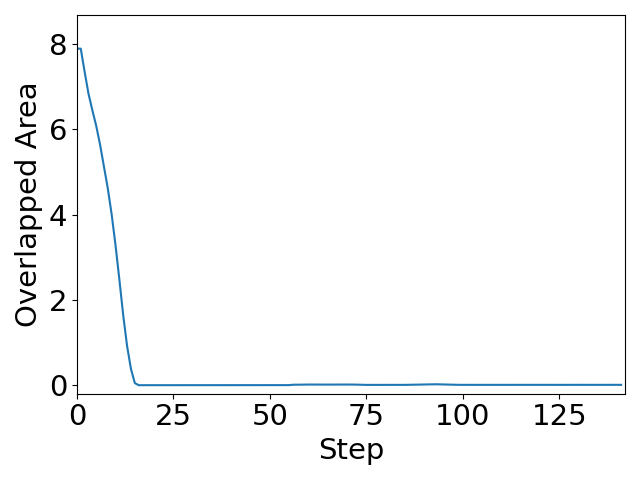}
        \caption{Step vs Overlapped Area}
        \label{fig:view_opt_curve_overlapped_area}
    \end{subfigure}
    \begin{subfigure}[t]{0.48\linewidth}
        \centering
        \includegraphics[width=\linewidth,trim={350pt 10pt 350pt 10pt},clip]{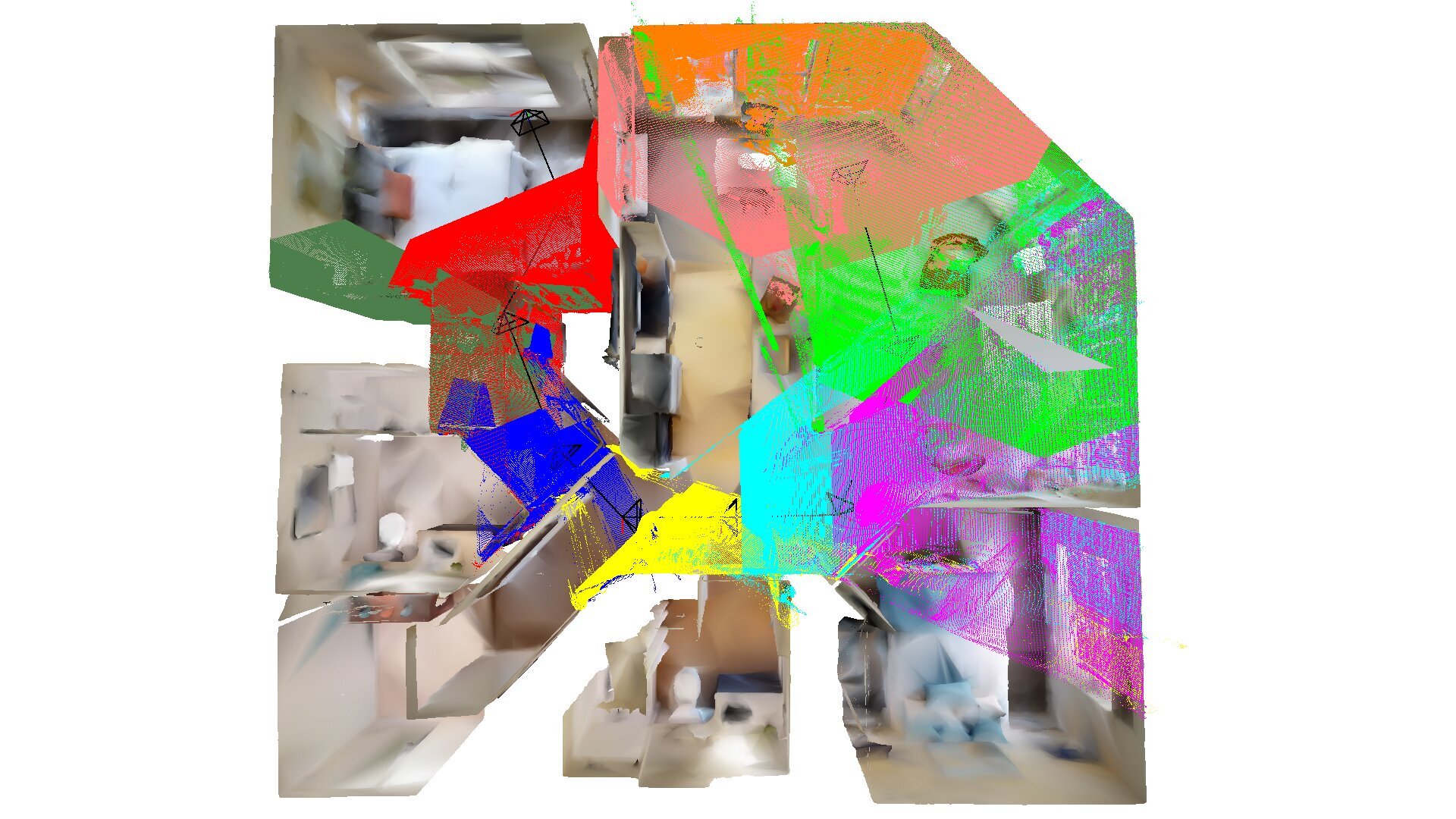}
        \caption{Initial}
        \label{fig:traj_opt_initial}
    \end{subfigure}%
    \hfill%
    \begin{subfigure}[t]{0.48\linewidth}
        \centering
        \includegraphics[width=\linewidth,trim={350pt 10pt 350pt 10pt},clip]{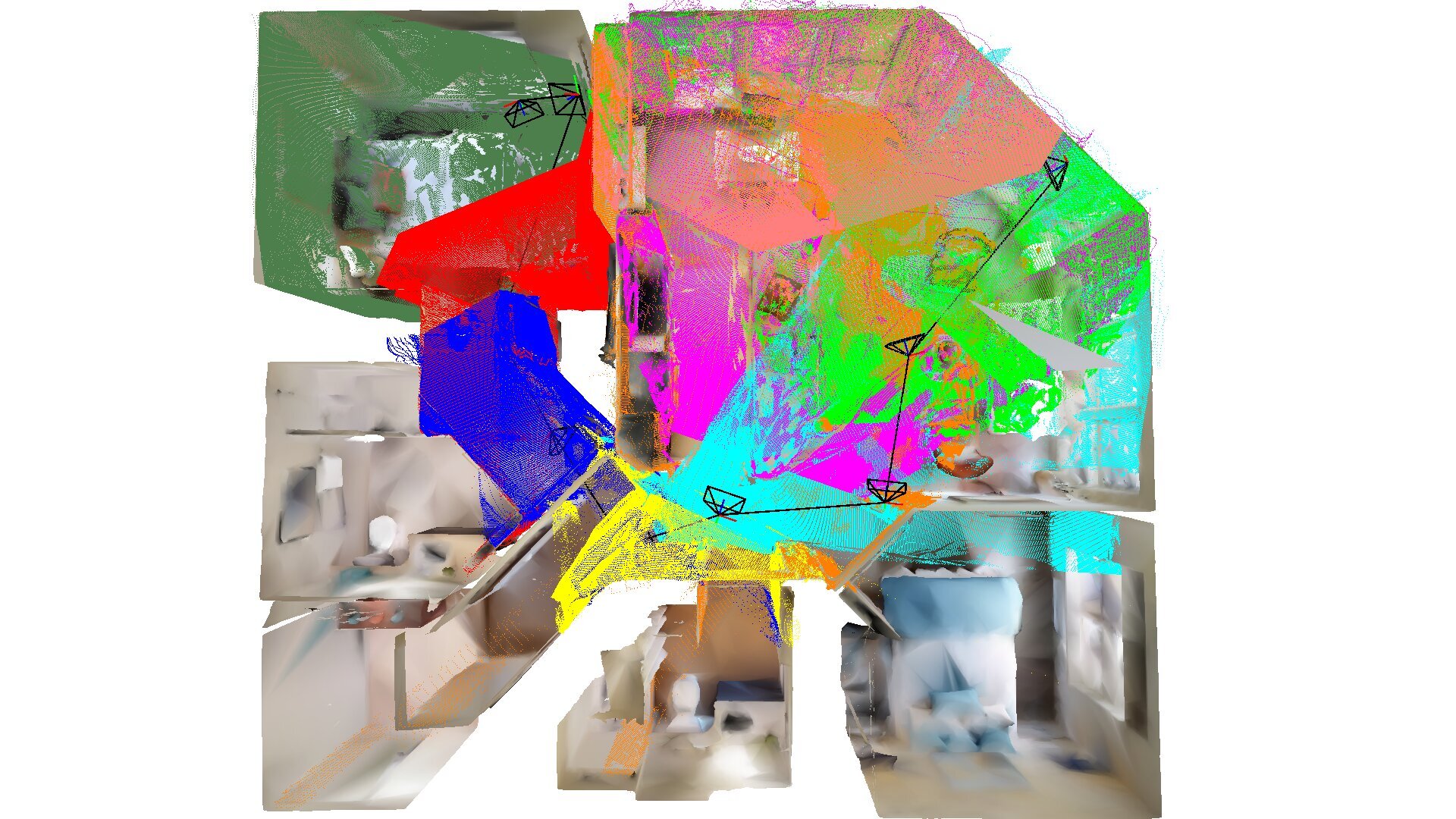}
        \caption{Optimized}
        \label{fig:traj_opt_optimized}
    \end{subfigure}
    \vspace{-0.5ex}
    \caption{Visualization of differentiable view optimization. In (a), (b) and (c), the black camera $C_t$ is at time $t$. The {\color{orange}orange} camera $C_{t+1}$ is at time $t+1$ with optimized pose. The {\color{cyan}cyan} point cloud is $\calP_t$ and the {\color{blue}blue} one is $\calP_{t+1}$. The trajectory of $C_{t+1}$ during the optimization process is colored in {\color{red}red}. Starting from the setup in (a), $C_{t+1}$ is optimized to reduce the overlap between $\calP_t$ and $\calP_{t+1}$ as shown in (b). In (c), the camera $C_{t+1}$ gets a bigger view not overlapped with $C_t$'s than (b). The risk loss ensures that $C_{t+1}$ stays away from the wall while trying to get a larger view. (d) and (e) show the curves of total observed area and overlapped area, respectively. (f) and (g) show an example of optimizing multiple trajectory waypoints. With our method, the observed area along the trajectory is significantly larger.}
    \vspace{-1.5ex}
    \label{fig:demo_view_opt}
\end{figure}

\subsection{Limitations}

Our method has several limitations.
The training of SDDF requires enough diversity in ray directions, which necessitates data augmentation techniques when insufficient sensor observations are available.
Diversity in viewing directions is especially important for our model to excel with sparser measurements, as evident in the downsampling in our experiments.
As shown in the results, our model performs worse with RGB-D measurements, although the RGB-D camera generates denser observations than the LiDAR in our experiments. This is due to the RGB-D camera having a smaller vertical field of view and, hence, less diversity in viewing directions.
This suggests a future research direction of efficiently enhancing the variety of viewing directions subject to a limited sensing budget.
One approach is to synthesize additional viewing directions based on the existing measurements using methods like hidden point removal (HPR)~\cite{hpr2007}. However, HPR requires computing the convex hull of the point cloud, which can be computationally expensive for large-scale scenes. It is important to consider more efficient ways to collect diverse viewing directions in future work. Another direction is to train the model with losses that encourage generalization to unseen viewing directions. For example, we can synthesize viewing rays hitting the same surface point but from different directions and enforce consistency in the predicted SDDF values for these rays.

It is also important to speed up the training of the SDDF model to make it scale to even larger scenes and be applicable in real-time reconstruction scenarios. One possible direction is to explore the combination of differentiable sphere tracing~\cite{dist2020} with scene-level SDF and SDDF learning, where the SDF and SDDF models improve each other to accelerate convergence.

Although our method is not very sensitive to the number of ellipsoids, the algorithmic addition and pruning of ellipsoids will obviate the need to choose the correct number and improve robustness to different scenes. The current implementation is not well optimized, although it is faster than SDF-based sphere tracing. Addressing these limitations is an interesting direction for future work.

\section{Conclusion}
\label{sec:conclusion}

In this work, we introduced a new definition of SDDF suitable for scene-level representation and differentiable rendering, and developed a hybrid explicit-implicit model to learn SDDF. Our method uses ellipsoids to obtain a coarse geometric prior and a residual network with latent features, obtained from the ellipsoids, to correct the differences between the prior and the ground truth. Our experiments demonstrate that SDDF is a promising scene representation for fast novel view rendering and gradient-based viewpoint optimization.

{
    \small
    \bibliographystyle{IEEEtran}
    \bibliography{bib/main.bib}

@String(CVPR= {IEEE Conf. Comput. Vis. Pattern Recog.})

@String(ICCV= {Int. Conf. Comput. Vis.})

@String(ECCV= {Eur. Conf. Comput. Vis.})

@String(TOG= {ACM Trans. Graph.})

@String(CVPR  = {CVPR})

@String(ICCV  = {ICCV})

@String(ECCV  = {ECCV})

@String(TOG   = {ACM TOG})

@article{replica19arxiv,
  title =   {{The Replica Dataset: A Digital Replica of Indoor Spaces}},
  author =  {Julian Straub and Thomas Whelan and Lingni Ma and Yufan Chen and Erik Wijmans and Simon Green and Jakob J. Engel and Raul Mur-Artal and Carl Ren and Shobhit Verma and Anton Clarkson and Mingfei Yan and Brian Budge and Yajie Yan and Xiaqing Pan and June Yon and Yuyang Zou and Kimberly Leon and Nigel Carter and Jesus Briales and  Tyler Gillingham and  Elias Mueggler and Luis Pesqueira and Manolis Savva and Dhruv Batra and Hauke M. Strasdat and Renzo De Nardi and Michael Goesele and Steven Lovegrove and Richard Newcombe },
  journal = {arXiv preprint arXiv:1906.05797},
  year =    {2019}
}

@article{voxelhashing2013,
author = {Nie\ss{}ner, Matthias and Zollh\"{o}fer, Michael and Izadi, Shahram and Stamminger, Marc},
title = {{Real-time 3D Reconstruction at Scale using Voxel Hashing}},
year = {2013},
publisher = {Association for Computing Machinery},
journal = {ACM Transactions on Graphics (TOG)}
}

@ARTICLE{dynamic2024,
  author={Zeng, Yiming and Hou, Junhui and Zhang, Qijian and Ren, Siyu and Wang, Wenping},
  journal={IEEE Transactions on Pattern Analysis and Machine Intelligence},
  title={{Dynamic 3D Point Cloud Sequences as 2D Videos}},
  year={2024}
}

@ARTICLE{sparse2dense2023,
  author={Lu, Fan and Chen, Guang and Liu, Yinlong and Zhan, Yibing and Li, Zhijun and Tao, Dacheng and Jiang, Changjun},
  journal={IEEE Transactions on Pattern Analysis and Machine Intelligence},
  title={{Sparse-to-Dense Matching Network for Large-Scale LiDAR Point Cloud Registration}},
  year={2023}
  }

@inProceedings{wang2018pixel2mesh,
  title={{Pixel2Mesh: Generating 3D Mesh Models from Single RGB Images}},
  author={Nanyang Wang and Yinda Zhang and Zhuwen Li and Yanwei Fu and Wei Liu and Yu-Gang Jiang},
  booktitle={ECCV},
  year={2018}
}

@ARTICLE{neudf2024,
  author={Liu, Yu-Tao and Wang, Li and Yang, Jie and Chen, Weikai and Meng, Xiaoxu and Yang, Bo and Gao, Lin},
  journal={IEEE Transactions on Pattern Analysis and Machine Intelligence},
  title={{NeUDF: Learning Neural Unsigned Distance Fields With Volume Rendering}},
  year={2024}
  }

@ARTICLE{neuraltps2025,
  author={Chen, Chao and Liu, Yu-Shen and Han, Zhizhong},
  journal={IEEE Transactions on Pattern Analysis and Machine Intelligence},
  title={{NeuralTPS: Learning Signed Distance Functions Without Priors From Single Sparse Point Clouds}},
  year={2025}
}

@article{pineda2022theseus,
  title   = {{Theseus: A Library for Differentiable Nonlinear Optimization}},
  author  = {Luis Pineda and Taosha Fan and Maurizio Monge and Shobha Venkataraman and Paloma Sodhi and Ricky TQ Chen and Joseph Ortiz and Daniel DeTone and Austin Wang and Stuart Anderson and Jing Dong and Brandon Amos and Mustafa Mukadam},
  journal = {Advances in Neural Information Processing Systems},
  year    = {2022}
}

@INPROCEEDINGS{dai2017scannet,
  author={Dai, Angela and Chang, Angel X. and Savva, Manolis and Halber, Maciej and Funkhouser, Thomas and Nießner, Matthias},
  booktitle={IEEE/CVF Conference on Computer Vision and Pattern Recognition (CVPR)},
  title={{S}can{N}et: {R}ichly-annotated {3D} {R}econstructions of {I}ndoor {S}cenes},
  year={2017}
}

@inproceedings{safe_bubble_cover,
      title={{S}afe {B}ubble {C}over for {M}otion {P}lanning on {D}istance {F}ields},
      author={Ki Myung Brian Lee and Zhirui Dai and Cedric Le Gentil and Lan Wu and Nikolay Atanasov and Teresa Vidal-Calleja},
      year={2024},
      eprint={2408.13377},
      archivePrefix={arXiv},
      primaryClass={cs.RO},
      booktitle={arXiv preprint: arXiv 2408.13377},
}

@article{chomp,
    author = {Matt Zucker and Nathan Ratliff and Anca D. Dragan and Mihail Pivtoraiko and Matthew Klingensmith and Christopher M. Dellin and J. Andrew Bagnell and Siddhartha S. Srinivasa},
    title ={{CHOMP}: {C}ovariant {H}amiltonian {O}ptimization for {M}otion {P}lanning},
    journal = {The International Journal of Robotics Research},
    volume = {32},
    number = {9-10},
    pages = {1164-1193},
    year = {2013}
}

@ARTICLE{loggpismop,
  author={Wu, Lan and Lee, Ki Myung Brian and Le Gentil, Cedric and Vidal-Calleja, Teresa},
  journal={IEEE Transactions on Robotics},
  title={{L}og-{GPIS}-{MOP}: {A} {U}nified {R}epresentation for {M}apping, {O}dometry, and {P}lanning},
  year={2023},
  volume={39},
  number={5},
  pages={4078-4094},
  keywords={Planning;Location awareness;Surface reconstruction;Odometry;Robots;Task analysis;Surface treatment;Euclidean distance field (EDF);Gaussian process implicit surfaces (GPIS);localization;mapping;planning;simultaneous localization and mapping (SLAM)},
}

@INPROCEEDINGS{kimera2020,
  author={Rosinol, Antoni and Abate, Marcus and Chang, Yun and Carlone, Luca},
  booktitle={IEEE International Conference on Robotics and Automation (ICRA)},
  title={{K}imera: an {O}pen-{S}ource {L}ibrary for {R}eal-{T}ime {M}etric-{S}emantic {L}ocalization and {M}apping},
  year={2020},
  volume={},
  number={},
  pages={1689-1696},
  keywords={Three-dimensional displays;Simultaneous localization and mapping;Robustness;Semantics;Libraries;Visualization;Real-time systems},
}

@article{surfelslam2018,
  title={{E}fficient {S}urfel-{B}ased {SLAM} using {3D} {L}aser {R}ange {D}ata in {U}rban {E}nvironments},
  author={Jens Behley and C. Stachniss},
  journal={Robotics: Science and Systems XIV},
  year={2018},
}

@inproceedings{rrtstar2011,
      title={{S}ampling-based {A}lgorithms for {O}ptimal {M}otion {P}lanning},
      author={Sertac Karaman and Emilio Frazzoli},
      year={2011},
      eprint={1105.1186},
      archivePrefix={arXiv},
      primaryClass={cs.RO},
      booktitle={arXiv preprint: arXiv 1105.1186},
}

@INPROCEEDINGS{neuraludf2023,
  author={Long, Xiaoxiao and Lin, Cheng and Liu, Lingjie and Liu, Yuan and Wang, Peng and Theobalt, Christian and Komura, Taku and Wang, Wenping},
  booktitle={IEEE/CVF Conference on Computer Vision and Pattern Recognition (CVPR)},
  title={Neural{UDF}: {L}earning {U}nsigned {D}istance {F}ields for {M}ulti-{V}iew {R}econstruction of {S}urfaces with {A}rbitrary {T}opologies},
  year={2023},
}

@ARTICLE{octomap2013,
  author = {Armin Hornung and Kai M. Wurm and Maren Bennewitz and Cyrill
  Stachniss and Wolfram Burgard},
  title = {{OctoMap}: {A}n {E}fficient {P}robabilistic {3D} {M}apping {F}ramework {B}ased
  on {O}ctrees},
  journal = {Autonomous Robots},
  year = {2013}
}

@article{tsdf2013,
    author = {Nie\ss{}ner, Matthias and Zollh\"{o}fer, Michael and Izadi, Shahram and Stamminger, Marc},
    title = {{R}eal-time {3D} {R}econstruction at {S}cale {U}sing {V}oxel {H}ashing},
    year = {2013},
    journal = {ACM Trans. Graph.},
}

@article{spheretracing1996,
  title={{S}phere {T}racing: a {G}eometric {M}ethod for the {A}ntialiased {R}ay {T}racing of {I}mplicit {S}urfaces},
  author={John C. Hart},
  journal={The Visual Computer},
  year={1996},
}

@inproceedings{npbg2020,
    author = {Aliev, Kara-Ali and Sevastopolsky, Artem and Kolos, Maria and Ulyanov, Dmitry and Lempitsky, Victor},
    title = {{N}eural {P}oint-{B}ased {G}raphics},
    year = {2020},
    booktitle = {European Conference on Computer Vision (ECCV)},
}

@INPROCEEDINGS{pointnerf2022,
  author={Xu, Qiangeng and Xu, Zexiang and Philip, Julien and Bi, Sai and Shu, Zhixin and Sunkavalli, Kalyan and Neumann, Ulrich},
  booktitle={IEEE/CVF Conference on Computer Vision and Pattern Recognition (CVPR)},
  title={{P}oint-{NeRF}: {P}oint-based {N}eural {R}adiance {F}ields},
  year={2022},
}

@INPROCEEDINGS{npbgpp2022,
  author={Rakhimov, Ruslan and Ardelean, Andrei-Timotei and Lempitsky, Victor and Burnaev, Evgeny},
  booktitle={IEEE/CVF Conference on Computer Vision and Pattern Recognition (CVPR)},
  title={{NPBG++}: {A}ccelerating {N}eural {P}oint-{B}ased {G}raphics},
  year={2022},
}

@article{pointRF2022,
  title={{D}ifferentiable {P}oint-{B}ased {R}adiance {F}ields for {E}fficient {V}iew {S}ynthesis},
  author={Qian Zhang and Seung-Hwan Baek and Szymon Rusinkiewicz and Felix Heide},
  journal={SIGGRAPH Asia},
  year={2022},
}

@INPROCEEDINGS{dtsdf2019,
  author={Splietker, Malte and Behnke, Sven},
  booktitle={IEEE/RSJ International Conference on Intelligent Robots and Systems (IROS)},
  title={{D}irectional {TSDF}: {M}odeling {S}urface {O}rientation for {C}oherent {M}eshes},
  year={2019},
}

@INPROCEEDINGS{pointersect2023,
  author={Chang, Jen-Hao Rick and Chen, Wei-Yu and Ranjan, Anurag and Yi, Kwang Moo and Tuzel, Oncel},
  booktitle={IEEE/CVF Conference on Computer Vision and Pattern Recognition (CVPR)},
  title={{P}ointersect: {N}eural {R}endering with {C}loud-{R}ay {I}ntersection},
  year={2023},
}

@inproceedings{raydf2024,
    author = {Liu, Zhuoman and Yang, Bo and Luximon, Yan and Kumar, Ajay and Li, Jinxi},
    title = {Ray{DF}: {N}eural {R}ay-{S}urface {D}istance {F}ields with {M}ulti-{V}iew {C}onsistency},
    year = {2023},
    booktitle = {Neural Information Processing Systems (NeurIPS)},
}

@inproceedings{neuralodf2022,
      title={Neural{ODF}: {L}earning {O}mnidirectional {D}istance {F}ields for {3D} {S}hape {R}epresentation},
      author={Trevor Houchens and Cheng-You Lu and Shivam Duggal and Rao Fu and Srinath Sridhar},
      year={2022},
      eprint={2206.05837},
      archivePrefix={arXiv},
      primaryClass={cs.CV},
      booktitle={arXiv preprint: arXiv 2206.05837},
}

@INPROCEEDINGS{fire2024,
  author={Yenamandra, Tarun and Tewari, Ayush and Yang, Nan and Bernard, Florian and Theobalt, Christian and Cremers, Daniel},
  booktitle={IEEE/CVF Winter Conference on Applications of Computer Vision (WACV)},
  title={{FIRe}: {F}ast {I}nverse {R}endering using {D}irectional and {S}igned {D}istance {F}unctions},
  year={2024},
}

@INPROCEEDINGS{volrecon2023,
  author={Ren, Yufan and Wang, Fangjinhua and Zhang, Tong and Pollefeys, Marc and Süsstrunk, Sabine},
  booktitle={IEEE/CVF Conference on Computer Vision and Pattern Recognition (CVPR)},
  title={{VolRecon}: {V}olume {R}endering of {S}igned {R}ay {D}istance {F}unctions for {G}eneralizable {M}ulti-{V}iew {R}econstruction},
  year={2023},
}

@INPROCEEDINGS{srdf2023,
  author={Zins, Pierre and Xu, Yuanlu and Boyer, Edmond and Wuhrer, Stefanie and Tung, Tony},
  booktitle={IEEE/CVF Conference on Computer Vision and Pattern Recognition (CVPR)},
  title={{M}ulti-{V}iew {R}econstruction {U}sing {S}igned {R}ay {D}istance {F}unctions ({SRDF})},
  year={2023},
}

@inproceedings{zobeidi2021,
    author = {Zobeidi, Ehsan and Atanasov, Nikolay},
    title = {{A} {D}eep {S}igned {D}irectional {D}istance {F}unction for {S}hape {R}epresentation},
    booktitle = {IEEE/RSJ International Conference on Intelligent Robots and Systems (IROS)},
    year = {2024},
 }

@INPROCEEDINGS{pddf2022,
    author = {Aumentado-Armstrong, Tristan and Tsogkas, Stavros and Dickinson, Sven and Jepson, Allan},
    booktitle = {IEEE/CVF Conference on Computer Vision and Pattern Recognition (CVPR)},
    title = {{R}epresenting {3D} {S}hapes with {P}robabilistic {D}irected {D}istance {F}ields},
    year = {2022},
}

@INPROCEEDINGS{voxblox2017,
  author={Oleynikova, Helen and Taylor, Zachary and Fehr, Marius and Siegwart, Roland and Nieto, Juan},
  booktitle={IEEE/RSJ International Conference on Intelligent Robots and Systems (IROS)},
  title={{V}oxblox: {I}ncremental {3D} {E}uclidean {S}igned {D}istance {F}ields for {O}n-board {MAV} {P}lanning},
  year={2017},
}

@INPROCEEDINGS{gpis2019,
  author={Lee, Bhoram and Zhang, Clark and Huang, Zonghao and Lee, Daniel D.},
  booktitle={IEEE International Conference on Robotics and Automation (ICRA)},
  title={{O}nline {C}ontinuous {M}apping using {G}aussian {P}rocess {I}mplicit {S}urfaces},
  year={2019},
}

@ARTICLE{loggpis2021,
  author={Wu, Lan and Lee, Ki Myung Brian and Liu, Liyang and Vidal-Calleja, Teresa},
  journal={IEEE Robotics and Automation Letters (RA-L)},
  title={{F}aithful {E}uclidean {D}istance {F}ield {F}rom {L}og-{G}aussian {P}rocess {I}mplicit {S}urfaces},
  year={2021},
}

@INPROCEEDINGS{deepsdf2019,
  author={Park, Jeong Joon and Florence, Peter and Straub, Julian and Newcombe, Richard and Lovegrove, Steven},
  booktitle={IEEE/CVF Conference on Computer Vision and Pattern Recognition (CVPR)},
  title={{DeepSDF}: {L}earning {C}ontinuous {S}igned {D}istance {F}unctions for {S}hape {R}epresentation},
  year={2019},
}

@INPROCEEDINGS{occupancynet2019,
  author={Mescheder, Lars and Oechsle, Michael and Niemeyer, Michael and Nowozin, Sebastian and Geiger, Andreas},
  booktitle={IEEE/CVF Conference on Computer Vision and Pattern Recognition (CVPR)},
  title={{O}ccupancy {N}etworks: {L}earning {3D} {R}econstruction in {F}unction {S}pace},
  year={2019},
}

@inproceedings{igr2020,
    author = {Gropp, Amos and Yariv, Lior and Haim, Niv and Atzmon, Matan and Lipman, Yaron},
    title = {{I}mplicit {G}eometric {R}egularization for {L}earning {S}hapes},
    year = {2020},
    booktitle = {International Conference on Machine Learning (ICML)}
}

@INPROCEEDINGS{sdfdiff2020,
  author={Jiang, Yue and Ji, Dantong and Han, Zhizhong and Zwicker, Matthias},
  booktitle={IEEE/CVF Conference on Computer Vision and Pattern Recognition (CVPR)},
  title={{SDFD}iff: {D}ifferentiable {R}endering of {S}igned {D}istance {F}ields for {3D} {S}hape {O}ptimization},
  year={2020},
}

@INPROCEEDINGS{dist2020,
  author={Liu, Shaohui and Zhang, Yinda and Peng, Songyou and Shi, Boxin and Pollefeys, Marc and Cui, Zhaopeng},
  booktitle={IEEE/CVF Conference on Computer Vision and Pattern Recognition (CVPR)},
  title={{DIST}: {R}endering {D}eep {I}mplicit {S}igned {D}istance {F}unction {W}ith {D}ifferentiable {S}phere {T}racing},
  year={2020},
}

@inproceedings{neus2021,
    title = {{NeuS}: {L}earning {N}eural {I}mplicit {S}urfaces by {V}olume {R}endering for {M}ulti-{V}iew {R}econstruction},
    booktitle = {{International} {Conference} on {Neural} {Information} {Processing} {Systems} (NeurIPS)},
    author = {Wang, Peng and Liu, Lingjie and Liu, Yuan and Theobalt, Christian and Komura, Taku and Wang, Wenping},
    year = {2021},
}

@inproceedings{nerf2020,
  title={{NeRF}: {R}epresenting {S}cenes as {N}eural {R}adiance {F}ields for {V}iew {S}ynthesis},
  author={Ben Mildenhall and Pratul P. Srinivasan and Matthew Tancik and Jonathan T. Barron and Ravi Ramamoorthi and Ren Ng},
  year={2020},
  booktitle={European Conference on Computer Vision (ECCV)}
}

@inproceedings{nerfpp2020,
    title={{NeRF++}: {A}nalyzing and {I}mproving {N}eural {R}adiance {F}ields},
    author={Kai Zhang and Gernot Riegler and Noah Snavely and Vladlen Koltun},
    year={2020},
    eprint={2010.07492},
    archivePrefix={arXiv},
    primaryClass={cs.CV},
    booktitle={arXiv preprint: arXiv 2010.07492},
}

@INPROCEEDINGS{mipnerf2021,
  author={Barron, Jonathan T. and Mildenhall, Ben and Tancik, Matthew and Hedman, Peter and Martin-Brualla, Ricardo and Srinivasan, Pratul P.},
  booktitle={IEEE/CVF International Conference on Computer Vision (ICCV)},
  title={{Mip-NeRF}: {A} {M}ultiscale {R}epresentation for {A}nti-{A}liasing {N}eural {R}adiance {F}ields},
  year={2021},
}

@INPROCEEDINGS{fastnerf2021,
  author={Garbin, Stephan J. and Kowalski, Marek and Johnson, Matthew and Shotton, Jamie and Valentin, Julien},
  booktitle={IEEE/CVF International Conference on Computer Vision (ICCV)},
  title={{FastNeRF}: {H}igh-{F}idelity {N}eural {R}endering at 200{FPS}},
  year={2021},
}

@INPROCEEDINGS{bakingnerf2021,
  author={Hedman, Peter and Srinivasan, Pratul P. and Mildenhall, Ben and Barron, Jonathan T. and Debevec, Paul},
  booktitle={IEEE/CVF International Conference on Computer Vision (ICCV)},
  title={{B}aking {N}eural {R}adiance {F}ields for {R}eal-{T}ime {V}iew {S}ynthesis},
  year={2021},
}

@INPROCEEDINGS{plenoctree2021,
  author={Yu, Alex and Li, Ruilong and Tancik, Matthew and Li, Hao and Ng, Ren and Kanazawa, Angjoo},
  booktitle={IEEE/CVF International Conference on Computer Vision (ICCV)},
  title={{P}len{O}ctrees for {R}eal-time {R}endering of {N}eural {R}adiance {F}ields},
  year={2021},
}

@INPROCEEDINGS{plenoxels2022,
  author={Fridovich-Keil, Sara and Yu, Alex and Tancik, Matthew and Chen, Qinhong and Recht, Benjamin and Kanazawa, Angjoo},
  booktitle={IEEE/CVF Conference on Computer Vision and Pattern Recognition (CVPR)},
  title={{P}lenoxels: {R}adiance {F}ields without {N}eural {N}etworks},
  year={2022},
}

@inproceedings{nerfstudio2023,
    title        = {{N}erfstudio: {A} {M}odular {F}ramework for {N}eural {R}adiance {F}ield {D}evelopment},
    author       = {Tancik, Matthew and Weber, Ethan and Ng, Evonne and Li, Ruilong and Yi, Brent
    and Kerr, Justin and Wang, Terrance and Kristoffersen, Alexander and Austin,
    Jake and Salahi, Kamyar and Ahuja, Abhik and McAllister, David and Kanazawa,
    Angjoo},
    year         = {2023},
    booktitle    = {SIGGRAPH},
}

@inproceedings{fourier2020,
    author = {Tancik, Matthew and Srinivasan, Pratul P. and Mildenhall, Ben and Fridovich-Keil, Sara and Raghavan, Nithin and Singhal, Utkarsh and Ramamoorthi, Ravi and Barron, Jonathan T. and Ng, Ren},
    title = {{F}ourier {F}eatures {L}et {N}etworks {L}earn {H}igh {F}requency {F}unctions in {L}ow {D}imensional {D}omains},
    year = {2020},
    booktitle = {Neural Information Processing Systems (NeurIPS)}
}

@article{instantngp2022,
    author = {M\"{u}ller, Thomas and Evans, Alex and Schied, Christoph and Keller, Alexander},
    title = {{I}nstant {N}eural {G}raphics {P}rimitives with a {M}ultiresolution {H}ash {E}ncoding},
    year = {2022},
    journal = {ACM Trans. Graph.}
}

@Article{gs2023,
      author       = {Kerbl, Bernhard and Kopanas, Georgios and Leimk{\"u}hler, Thomas and Drettakis, George},
      title        = {{3D} {G}aussian {S}platting for {R}eal-{T}ime {R}adiance {F}ield {R}endering},
      journal      = {ACM Transactions on Graphics},
      year         = {2023},
}

@inproceedings{2dgs2024,
    title={{2D} {G}aussian {S}platting for {G}eometrically {A}ccurate {R}adiance {F}ields},
    author={Huang, Binbin and Yu, Zehao and Chen, Anpei and Geiger, Andreas and Gao, Shenghua},
    booktitle = {SIGGRAPH},
    year      = {2024},
}

@inproceedings{radegs2024,
      title={{RaDe-GS}: {R}asterizing {D}epth in {G}aussian {S}platting},
      author={Baowen Zhang and Chuan Fang and Rakesh Shrestha and Yixun Liang and Xiaoxiao Long and Ping Tan},
      year={2024},
      eprint={2406.01467},
      archivePrefix={arXiv},
      primaryClass={cs.GR},
      booktitle={arXiv preprint: arXiv 2406.01467},
}

@INPROCEEDINGS{gibson2018,
  author={Xia, Fei and Zamir, Amir R. and He, Zhiyang and Sax, Alexander and Malik, Jitendra and Savarese, Silvio},
  booktitle={IEEE/CVF Conference on Computer Vision and Pattern Recognition (CVPR)},
  title={{G}ibson {E}nv: {R}eal-{W}orld {P}erception for {E}mbodied {A}gents},
  year={2018},
}

@inproceedings{kmeanspp2007,
    author = {Arthur, David and Vassilvitskii, Sergei},
    title = {k-means++: {T}he {A}dvantages of {C}areful {S}eeding},
    year = {2007},
    booktitle = {ACM-SIAM Symposium on Discrete Algorithms},
}

@article{hpr2007,
author = {Katz, Sagi and Tal, Ayellet and Basri, Ronen},
title = {{Direct Visibility of Point Sets}},
year = {2007},
journal = {ACM Trans. Graph.},
}

@InProceedings{kangle2021dsnerf,
    author    = {Deng, Kangle and Liu, Andrew and Zhu, Jun-Yan and Ramanan, Deva},
    title     = {{Depth-supervised {NeRF}: Fewer Views and Faster Training for Free}},
    booktitle = {IEEE/CVF Conference on Computer Vision and Pattern Recognition (CVPR)},
    year      = {2022}
}
}

\newcommand{\biospacing}{-5.5ex}

\begin{IEEEbiography}[{\includegraphics[width=1in,height=1.25in,clip,keepaspectratio]{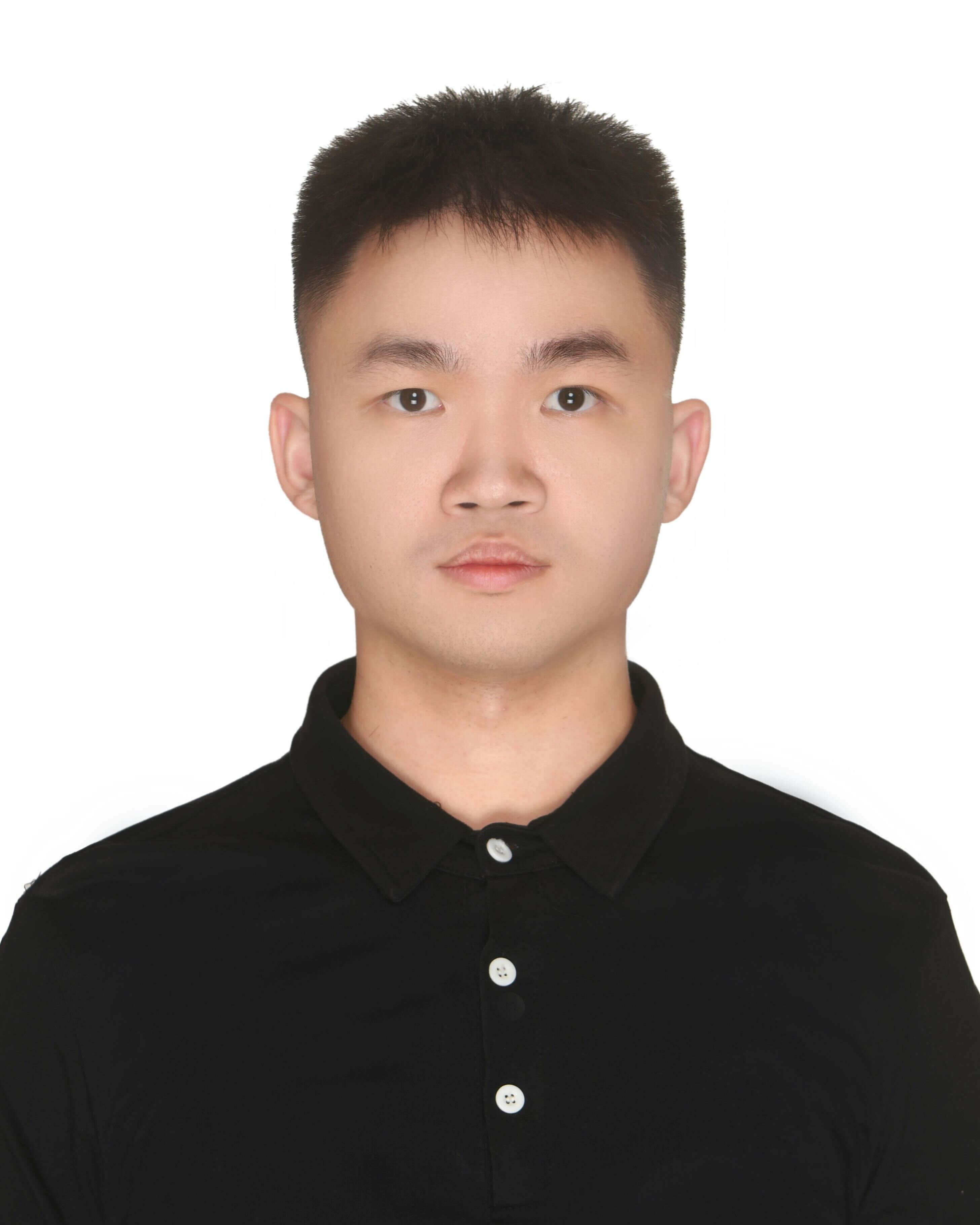}}]{Zhirui Dai} (Graduate Student Member, IEEE)
    is a Ph.D. student of Electrical and Computer Engineering at the University of California San Diego, La Jolla, CA, USA. He obtained a B.S. in Physics from Fudan University, Shanghai, China, in 2019 and an M.S. in Electrical and Computer Engineering from the University of California San Diego in 2021. His research focuses on mobile robot autonomy and particularly on mapping and task planning.
\end{IEEEbiography}

\vspace{\biospacing}

\begin{IEEEbiography}[{\includegraphics[width=1in,height=1.25in,clip,keepaspectratio]{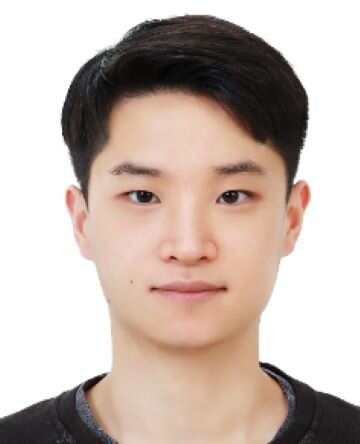}}]{Hojoon Shin}
    is an autonomy engineer at Brain Corporation. He obtained a B.S. in Mechanical and Aerospace Engineering from Seoul National University, Seoul, South Korea, in 2021 and an M.S. in Mechanical Engineering from the University of California San Diego in 2023. His current work focuses on autonomous robotics, with particular interest in navigation and motion planning.
\end{IEEEbiography}

\vspace{\biospacing}

\begin{IEEEbiography}[{\includegraphics[width=1in,height=1.25in,clip,keepaspectratio]{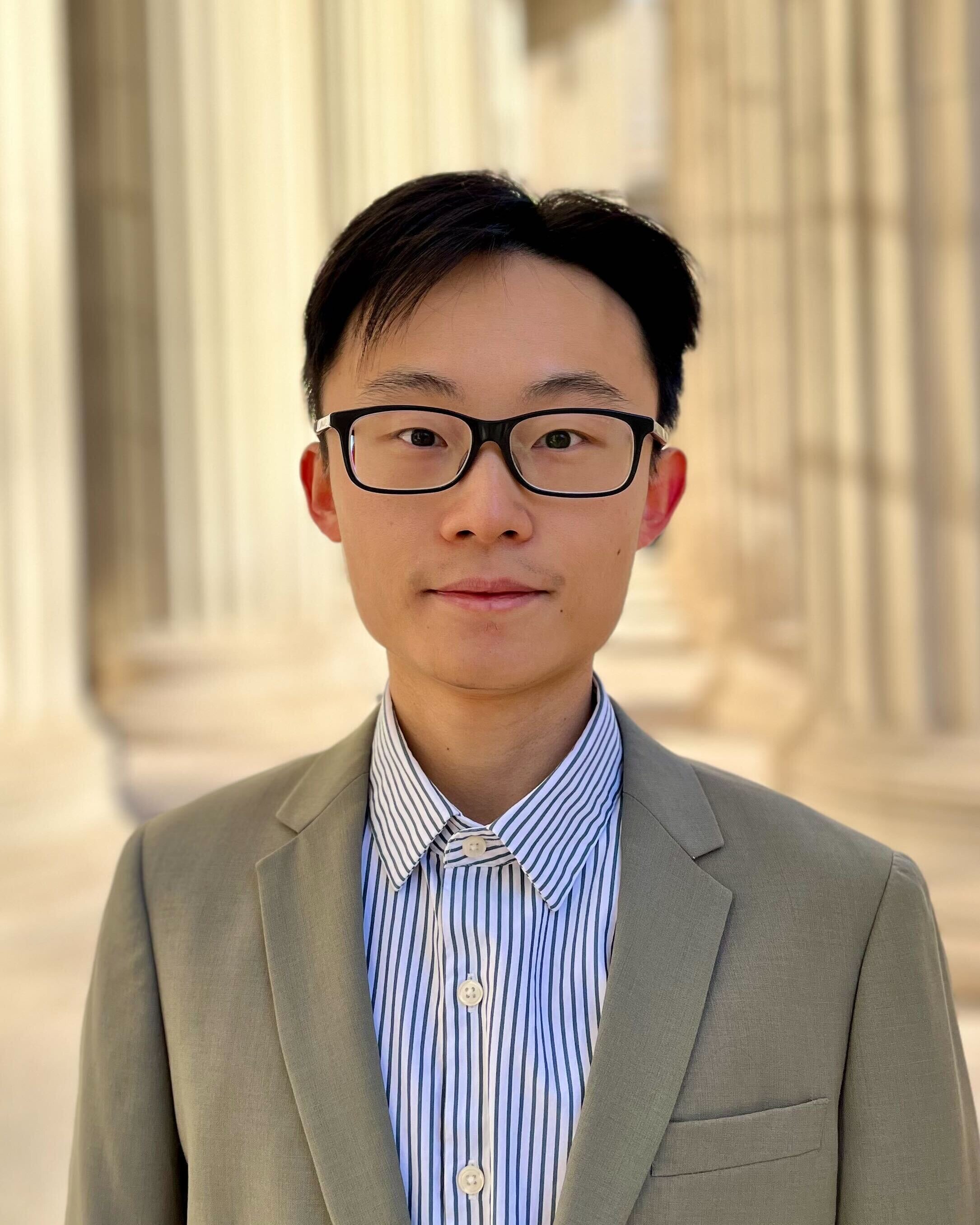}}]{Yulun Tian} (Member, IEEE) is an Assistant Professor of Robotics at the University of Michigan, Ann Arbor, MI, USA.
    Prior to this appointment, he was a Postdoctoral Scholar at the Contextual Robotics Institute, University of California San Diego, La Jolla, CA, USA.
    He received the B.A. degree in Computer Science from UC Berkeley, Berkeley, CA, USA, in 2017, and the S.M. and Ph.D. degrees
    in Aeronautics and Astronautics from Massachusetts Institute of Technology, Cambridge, MA, USA (2019 and 2023).
    His work received the 2024 Best Dissertation Award from the IEEE RAS Technical Committee for Multi-Robot Systems, the 2022 King-Sun Fu Memorial Best Paper Award from the IEEE Transactions on Robotics,
    a 2021 Honorable Mention from the IEEE Transactions on Robotics,
    and a 2020 Honorable Mention from the IEEE Robotics and Automation Letters.
    His current research interest includes spatial perception, trustworthy autonomy, and multi-agent systems.
\end{IEEEbiography}

\vspace{\biospacing}

\begin{IEEEbiography}[{\includegraphics[width=1in,height=1.25in,clip,keepaspectratio]{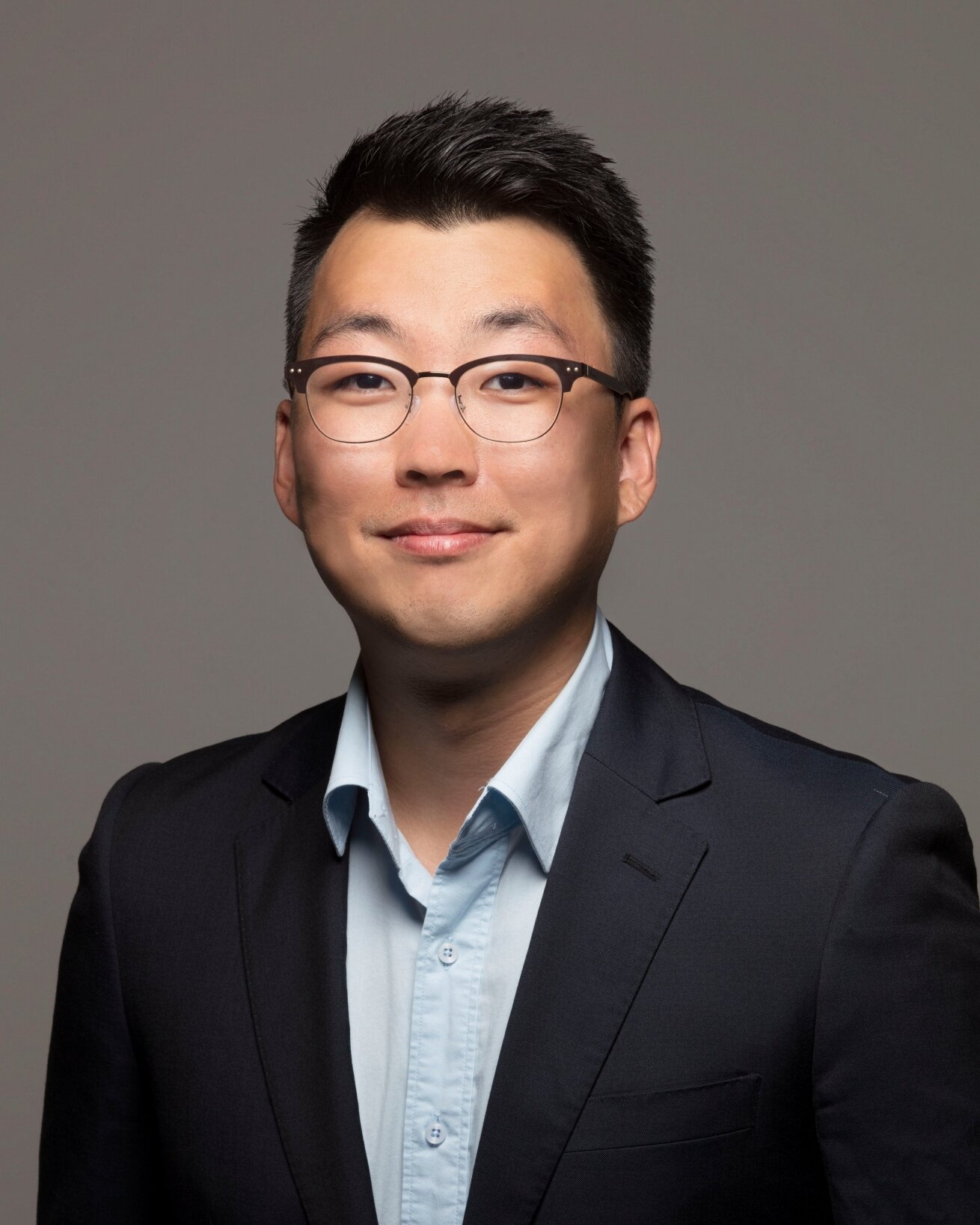}}]{Ki Myung Brian Lee}
    (Member, IEEE) is a Postdoctoral Scholar at the Contextual Robotics Institute, University of California San Diego, La Jolla, CA, USA.
    He received the B.Eng. (Hons I) degree in mechatronics (space) from the University of Sydney, Camperdown, NSW, Australia, and the Ph.D. degree in robotics from the University of Technology Sydney, Ultimo, NSW, Australia, in 2023.
    He was recognized as an RSS Pioneer of 2023, and was the recipient of the UTS Research Excellence Scholarship.
    His current research aims to develop novel representations of environments and tasks that accelerate planning and control. More broadly, he is interested in mobile robot autonomy in previously unseen environments.
\end{IEEEbiography}

\vspace{\biospacing}

\begin{IEEEbiography}[{\includegraphics[width=1in,height=1.25in,clip,keepaspectratio]{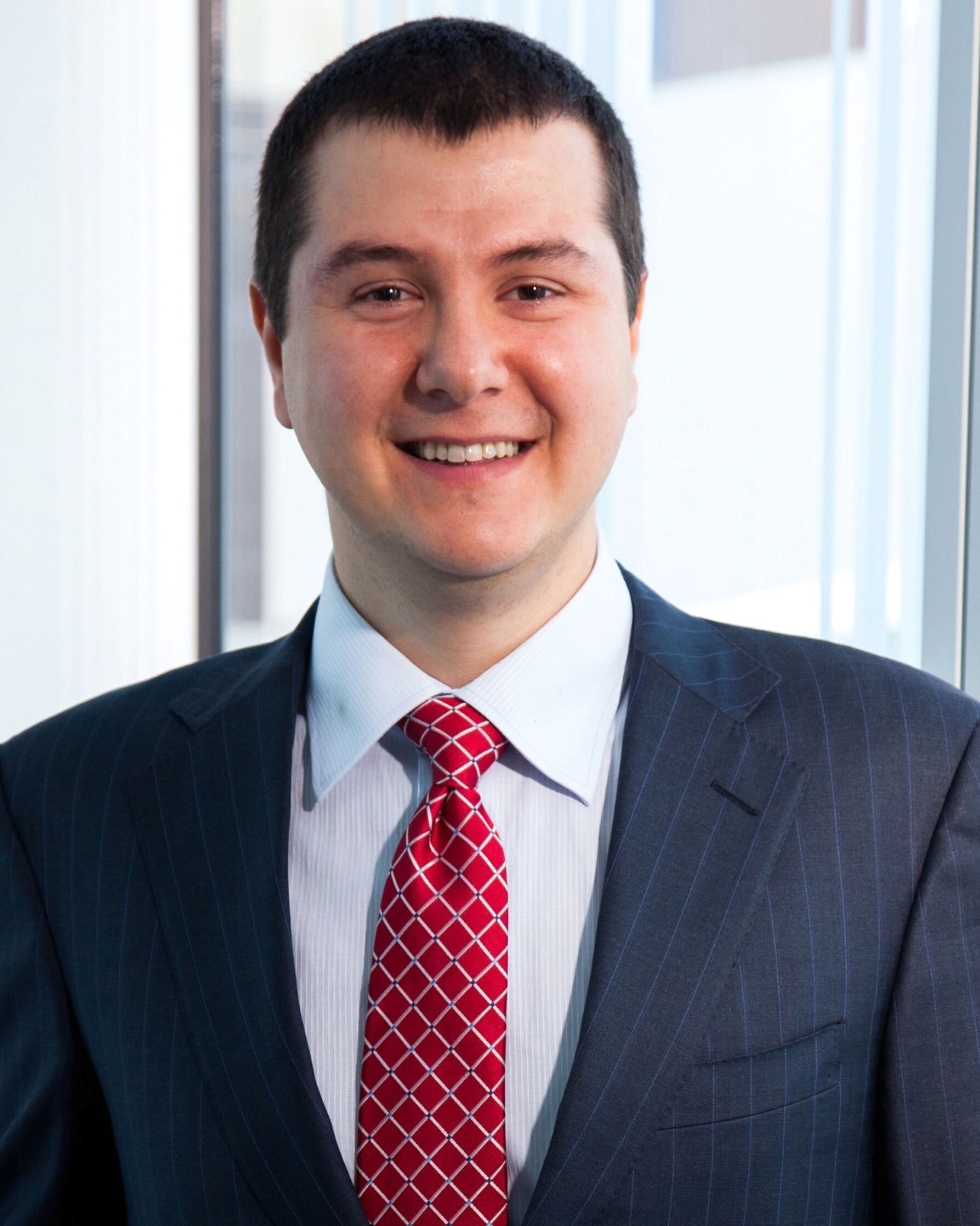}}]{Nikolay Atanasov}
    (S'07-M'16-SM'23) is an Associate Professor of Electrical and Computer Engineering at the University of California San Diego, La Jolla, CA, USA. He obtained a B.S. degree in Electrical Engineering from Trinity College, Hartford, CT, USA in 2008 and M.S. and Ph.D. degrees in Electrical and Systems Engineering from the University of Pennsylvania, Philadelphia, PA, USA in 2012 and 2015, respectively. Dr. Atanasov's research focuses on robotics, control theory, and machine learning with emphasis on active perception problems for autonomous mobile robots. He works on probabilistic models for simultaneous localization and mapping (SLAM) and on optimal control and reinforcement learning algorithms for minimizing probabilistic model uncertainty. Dr. Atanasov's work has been recognized by the Joseph and Rosaline Wolf award for the best Ph.D. dissertation in Electrical and Systems Engineering at the University of Pennsylvania in 2015, the Best Conference Paper Award at the IEEE International Conference on Robotics and Automation (ICRA) in 2017, the NSF CAREER Award in 2021, and the IEEE RAS Early Academic Career Award in Robotics and Automation in 2023.
\end{IEEEbiography}

\clearpage
\twocolumn[%
    \begin{@twocolumnfalse}
        \begin{center}
            {\huge Supplemental Material of \papertitle\par}
            \vspace{2em}
        \end{center}
    \end{@twocolumnfalse}%
]

\setcounter{section}{0}

\renewcommand{\theHsection}{S\arabic{section}}
\renewcommand{\theHsubsection}{S\arabic{section}.\arabic{subsection}}
\renewcommand{\theHsubsubsection}{S\arabic{section}.\arabic{subsection}.\arabic{subsubsection}}

\section{Extension to Unsigned Directional Distance Function}
\label{sec:appdx_ddf_extension}

For both SDF and SDDF, the sign divides the space into free space (outside objects) and occupied space (inside objects). Hence, both SDF and SDDF encounter an ill-posed definition of sign for non-watertight objects, such as a wall or a cup.

From an alternative point of view, if the obstacle set has measure zero both SDF and SDDF reduce to their corresponding unsigned versions. Our ellipsoidal prior can handle this case by setting one of the axis radii to zero, obtaining a 2D disk. The ellipsoid SDDF definition in Proposition~\ref{prop:ellipsoid_sddf} still holds for rays intersected with a 2D disk. Letting $r_z = 0$, we get the ellipsoid SDDF for a 2D disk, which is non-watertight in 3D:
\begin{equation}
    f(\bfp,\bfv;\calE) = -\frac{p'_z}{v'_z},
\end{equation}
where $\bfp'$ and $\bfv'$ are the ray origin and direction in the ellipsoid's local frame. The sign indicator in~\eqref{eq:ellipsoid_sign} becomes $s(\bfp,\bfv;\calE) = (r_1 r_2)^2 {p'}_z^2 \ge 0$, showing that the ellipsoid SDDF reduces to its unsigned version. When $v'_z = 0$, $f(\bfp,\bfv;\calE)=\infty$. There is no intersection. When $v'_z > 0$ and $p'_z \le 0$, or $v'_z < 0$ and $p'_z \ge 0$, the above equation gives a valid ellipsoid SDDF between the ray and the 2D disk when they intersect. Otherwise, the ray points in a direction away from the 2D disk and the above equation gives invalid negative values. To check the intersection, we can reuse the intersection indicator in~\eqref{eq:ellipsoid_intersection} with $r_z=0$:
\begin{equation}
    i(\bfp,\bfv;\calE) = (r_x r_y v'_z)^2 - r_x^2 {w'_x}^2 - r_y^2 {w'_y}^2,
\end{equation}
where $\bfw' = \bfp' \times \bfv'$. If $i(\bfp,\bfv;\calE) \ge 0$, the ray intersects with the 2D disk.

In practice, our method handles thin objects due to the unsigned DDF compatibility of the ellipsoid SDDF. As shown in Fig.~\ref{fig:vis-ellipsoids}, thin objects like walls, doors, or curtains are covered by flat ellipsoids.

It is an interesting future direction to unify the signed and unsigned versions of SDF/SDDF within a single framework. For areas around non-watertight objects, a proper transition between the signed and the unsigned versions would be an elegant way to keep the benefit of having sign for applications like collision detection.

\begin{figure}[t]
    \centering
    \includegraphics[width=\linewidth]{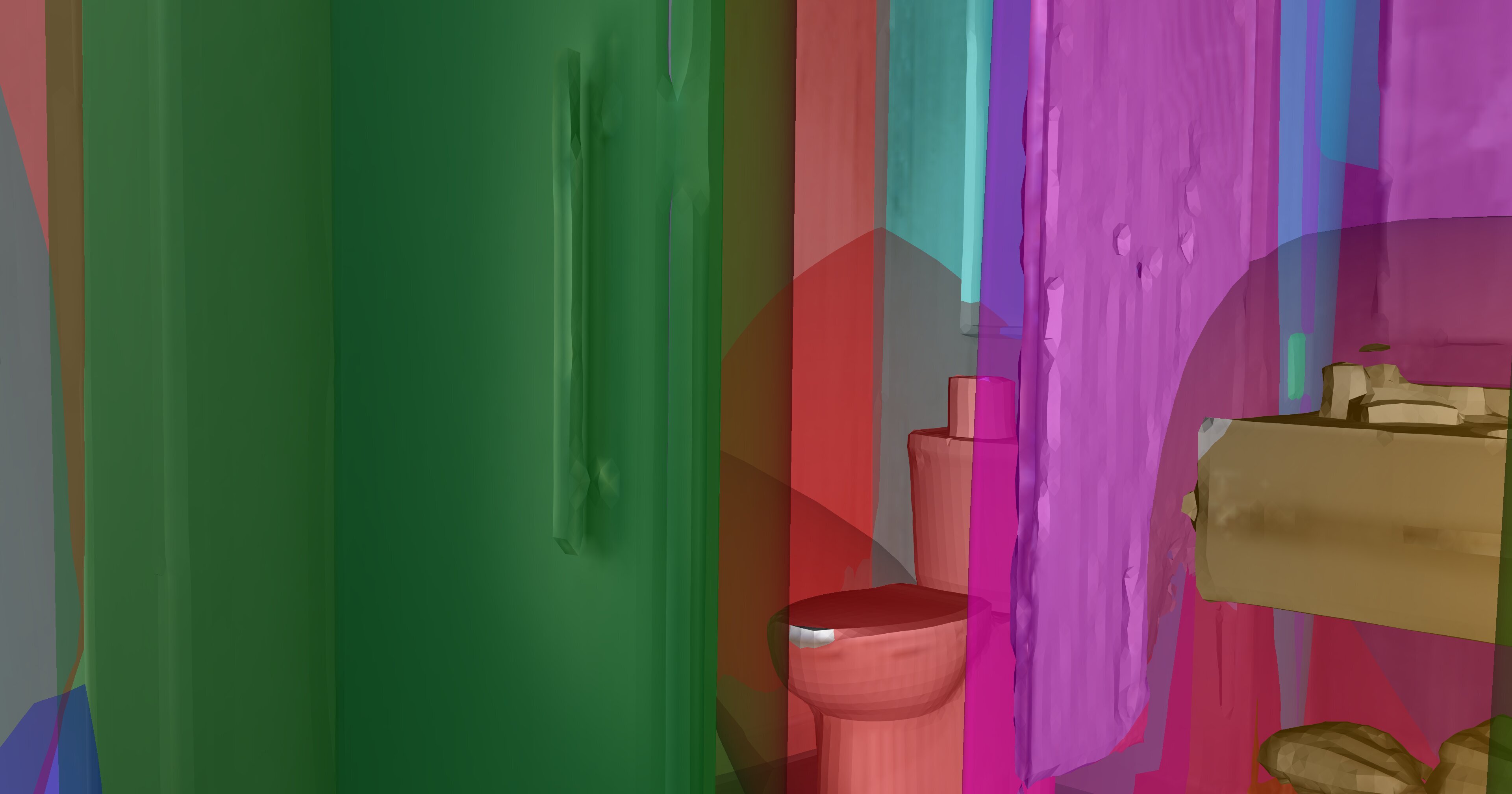}
    \caption{\small Ellipsoids covering non-watertight objects. The curtain is contained by a flat {\color{magenta}magenta} ellipsoid, while the door is covered by another flat {\color{ForestGreen} green} ellipsoid.}
    \label{fig:vis-ellipsoids}
\end{figure}

\section{Ellipsoid-based Prior Network}
\label{sec:appdx_geometry_prior_net}

\subsection{
\texorpdfstring
{Proof of Proposition~\ref{prop:ellipsoid_sddf}}  
{Proof of Proposition~\ref{prop:ellipsoid_sddf}}  
}
\label{sec:appdx_proof_prop_ellipsoid_sddf}

The intersection between the ray $(\bfp,\bfv)$ and the ellipsoid $\calE$ gives the following equations:
\begin{align}
    \bfq &= \bfR^\top \left(\bfp + f(\bfp,\bfv;\calE) \bfv - \bfc\right), \label{eq:intersection_point}\\
    0 &= \bfq^\top \bfQ_1^2 \bfq - \det\bfQ_0^2 \label{eq:ellipsoid},
\end{align}
where $\bfq$ is the intersection point expressed in coordinates in the local ellipsoid frame. Plugging~\eqref{eq:intersection_point} into~\eqref{eq:ellipsoid} leads to the following quadratic equation:
\begin{equation}
    \frac{\bfv'^\top \bfQ_1^2 \bfv'}{2}f^2 + \bfp'^\top \bfQ_1^2 \bfv' f + \frac{\bfp'^\top\bfQ_1^2\bfp'-\det\bfQ_0^2}{2} = 0.
\end{equation}
Solving the above equation for $f$, we get two roots:
\begin{equation} \label{eq:quadratic_eq_sols}
    f_{1,2} = -\frac{\sqrt{\alpha} \pm \bfp'^\top \bfQ_1^2 \bfv'}{\bfv'^\top \bfQ_1^2 \bfv'}
\end{equation}
where $\alpha$ is related to the intersection indicator $i(\bfp, \bfv)$ in~\eqref{eq:ellipsoid_intersection}:
\begin{equation*}
    \begin{aligned}
        \alpha &= \left(\bfp'^\top \bfQ_1^2 \bfv'\right)^2 - \bfv'^\top \bfQ_1^2 \bfv'\left(\bfp'^\top\bfQ_1^2\bfp'-\det\bfQ_0^2\right) \\
        &= \bfp'^\top\bfQ_1^2\left(\bfv'\bfp'^\top - \bfp'\bfv'^\top\right)\bfQ_1^2\bfv' + \bfv'^\top\bfQ_1^2\bfv'\det\bfQ_0^2 \\
        &= \bfp'^\top\bfQ_1^2 \bfw'^\wedge \bfQ_1^2 \bfv' + \bfv'^\top\bfQ_1^2\bfv'\det\bfQ_0^2 \\
        &= \left(\bfp'^\top \left(\bfQ_0^2\bfw'\right)^\wedge \bfv' + \bfv'^\top\bfQ_1^2\bfv'\right) \det\bfQ_0^2 \\
        &= \left(\bfv'^\top\bfQ_1^2\bfv' - \bfw'^\top\bfQ_0^2\bfw' \right) \det\bfQ_0^2 = i(\bfp,\bfv)\det\bfQ_0^2.
    \end{aligned}
\end{equation*}
According to the SDDF definition in~\eqref{eq:sddf_definition}, we keep $f_1$ as the solution, which is exactly~\eqref{eq:intersection_sol}.

\subsection{Backward Propagation} \label{sec:appdx_bp_ellipsoid_sddf}

To train the prior network efficiently, we derive and implement the backward propagation of computation in Sec.~\ref{sec:ellipsoid} and Sec.~\ref{sec:fuse_ellipsoid} explicitly in Python instead of relying on the computation graph of PyTorch.
Without loss of generality, we compute the backward propagation in the ellipsoid local frame.
To efficiently compute the gradients, let $t_0={\bfv'}^\top\bfQ_1^2\bfv'$, $t_1={\bfw'}^\top \bfQ_0^2 \bfw'$, $t_2=\bfp'^\top \bfQ_1^2 \bfv'$, then the intersection indicator becomes $i(\bfp,\bfv)=t_0-t_1$ and the SDDF prior is $f(\bfp,\bfv) = -({\det\bfQ_0\sqrt{\max(i,0)+\epsilon} + t_2})/{t_0}$.
Given the loss $\calL_P$ presented in Sec.~\ref{sec:loss_function_for_prior}, we derive the gradients $\nabla_{\bfp'}\calL_P$, $\nabla_{\bfv'}\calL_P$ and $\nabla_{\bfr}\calL_P$ via the chain rule of derivatives:
\begin{align}
    \nabla_{\bfp'}\calL_P &= \frac{\partial \calL_P}{\partial i}\nabla_{\bfp'}i + \frac{\partial \calL_P}{\partial f}\nabla_{\bfp'}f + \frac{\partial \calL_P}{\partial s}\nabla_{\bfp'}s, \\
    \nabla_{\bfv'}\calL_P &= \frac{\partial \calL_P}{\partial i}\nabla_{\bfv'}i + \frac{\partial \calL_P}{\partial f}\nabla_{\bfv'}f + \frac{\partial \calL_P}{\partial s}\nabla_{\bfv'}s, \\
    \nabla_{\bfr}\calL_P &= \frac{\partial \calL_P}{\partial i}\nabla_{\bfr}i + \frac{\partial \calL_P}{\partial f}\nabla_{\bfr}f + \frac{\partial \calL_P}{\partial s}\nabla_{\bfr}s, \\
    \nabla_{\bfr} f &= \frac{\partial f}{\partial i}\nabla_{\bfr}i + \frac{\partial f}{\partial t_0}\nabla_{\bfr}t_0
    + \frac{\partial f}{\partial t_2} \nabla_{\bfr}t_2 + \frac{\partial f}{\partial \bfr^\top}, \\
    \nabla_{\bfp'} f &= \frac{\partial f}{\partial i}\nabla_{\bfp'}i + \frac{\partial f}{\partial t_0}\nabla_{\bfp'}t_0
    + \frac{\partial f}{\partial t_2} \nabla_{\bfp'}t_2, \\
    \nabla_{\bfv'} f &= \frac{\partial f}{\partial i}\nabla_{\bfv'}i +
    \frac{\partial f}{\partial t_0}\nabla_{\bfv'}t_0 + \frac{\partial f}{\partial t_2} \nabla_{\bfv'}t_2.
\end{align}
And the related intermediate results are shown in Table~\ref{tab:ellipsoid_gradients} and Table~\ref{tab:ellipsoid_partial_derivatives}.

To compute the gradients $\nabla_{\bfp}\calL_P$, $\nabla_{\bfv}\calL_P$, $\nabla_{\bfR}\calL_P$, and $\nabla_{\bfc}\calL_P$, we can apply the chain rule of derivatives again.
Since
\begin{align}
    \bfp' &= \bfR^\top \left( \bfp - \bfc \right), \\
    \bfv' &= \bfR^\top \bfv,
\end{align}
we can get
\begin{align}
    \nabla_{\bfp}\calL_P &= \bfR \nabla_{\bfp'} \calL_P, \\
    \nabla_{\bfv}\calL_P &= \bfR \nabla_{\bfv'} \calL_P, \\
    \nabla_{\bfR}\calL_P &= \left(\bfp - \bfc\right)\nabla_{\bfp'}^\top \calL_P, \\
    \nabla_{\bfc}\calL_P &= -\bfR\nabla_{\bfp'}\calL_P,
\end{align}
which can be computed by the computation graph of PyTorch with little effort.
However, a custom implementation may save GPU memory and compute the results faster.

Then, to compute the gradients with respect to the pose parameters $\bfxi = (\bfrho, \bftheta) \in \bbR^6$, we refer to Theseus~\cite{pineda2022theseus}, which provides an efficient implementation of gradient computation for Lie algebra.
For convenience and readers' interest, we present the result of $\bfT=\bfT_0\exp\left(\bfxi^\wedge\right)$ (for simplicity, we assume $\bfT_0=\bfI$ here.) and gradient $\nabla_{\bfxi} \calL_P$ here:
\begin{equation}
\begin{aligned}
    \bfT &= \begin{bmatrix}
        \bfR & \bft \\
        \mathbf{0}^\top & 1
    \end{bmatrix} = \exp(\hat{\bfxi}) = \begin{bmatrix}
        \exp(\hat{\bftheta}) & J_R(\bftheta)^\top \bfrho \\
        \mathbf{0} & 1
    \end{bmatrix},
\end{aligned}
\end{equation}
\begin{align}
    \bfR &= \exp(\hat{\bftheta}) = \bfI + \frac{\sin\left\|\bftheta\right\|}{\left\|\bftheta\right\|}\hat{\bftheta}
    + \frac{1-\cos\left\|\bftheta\right\|}{\left\|\bftheta\right\|^2}\hat{\bftheta}^2, \\
    J_R(\bftheta) &= \bfI - \frac{1-\cos\left\|\bftheta\right\|}{\left\|\bftheta\right\|^2}\hat{\bftheta} +
    \frac{\left\|\bftheta\right\|-\sin\left\|\bftheta\right\|}{\left\|\bftheta\right\|^3}\hat{\bftheta}^2,
\end{align}
\begin{align}
    \nabla_{\bfxi}\calL_P &= \calJ_R \left( \bfxi \right)^\top \begin{bmatrix}
        \bfR^\top \nabla_{\bfc} \calL_P \\
        \bfW_{32} - \bfW_{23} \\
        \bfW_{13} - \bfW_{31} \\
        \bfW_{21} - \bfW_{12}
    \end{bmatrix},
\end{align}
where
\begin{align}
    \bfW &= \bfR^\top \nabla_{\bfR}\calL_P, \\
        \calJ_R(\bfxi) &= \begin{bmatrix}
        J_R(\bftheta) & Q_R(\bfxi) \\
        \mathbf{0}_{3\times3} & J_R(\bftheta)
    \end{bmatrix}, \\
    Q_R(\bfxi) &= \bfR^\top \bfL.
\end{align}

\begin{table}[t]
    \centering
    \caption{Gradients used in backward propagation of $G$, $\otimes$ denotes element-wise multiplication, and $\bfU = \bfr^\wedge \otimes \bfr^\wedge$.}
    \label{tab:ellipsoid_gradients}
    \setlength{\tabcolsep}{5pt}
    \begin{tabular}{c|c|c|c}
               & $\nabla_{\bfp'}$ & $\nabla_{\bfv'}$ & $\nabla_{\bfr}$ \\ \hline
         $t_0$ & $\mathbf{0}$ & $2\bfQ_1^2\bfv'$ & $2\bfQ_0\bfU(\bfv'\otimes\bfv')$ \\
         $t_1$ & $2 \bfv'\times \bfQ_0^2\bfw'$ & $-2\bfp'\times\bfQ_0^2\bfw'$ & $2 \left(\diag{\bfw'}\right)^2\bfr$ \\
         $t_2$ & $ \bfQ_1^2\bfv'$ & $\bfQ_1^2\bfp'$ & $2\bfQ_0\bfU (\bfp'\otimes\bfv')$ \\
         $i$ & $\nabla_{\bfp'}t_0-\nabla_{\bfp'}t_1$ & $\nabla_{\bfv'}t_0-\nabla_{\bfv'}t_1$ & $\nabla_{\bfr}t_0-\nabla_{\bfr}t_1$\\
         $s$ & $2\bfQ_1^2\bfp'$ & $\mathbf{0}$ & $2\bfQ_0\left(\bfU(\bfp'\otimes\bfp')-\bfQ_1^2\mathbf{1}\right)$
    \end{tabular}
    \setlength{\tabcolsep}{6pt}
\end{table}
\begin{table}[t]
    \centering
    \caption{Partial derivatives used in backward propagation of $G$. When $t_0\approx 0$, which is a degraded configuration, $\frac{\partial f}{\partial i}$, $\frac{\partial f}{\partial t_0}$ and $\frac{\partial f}{\partial t_2}$ are set to 0.}
    \label{tab:ellipsoid_partial_derivatives}
    \begin{tabular}{c|c|c|c|c}
          & ${\partial}/{\partial i}$ & ${\partial}/{\partial t_0}$ & ${\partial}/{\partial t_2}$ & ${\partial}/{\partial\bfr^\top}$ \\ \hline
        $f$ & $-\frac{\det{\bfQ_0}}{2 t_0 \sqrt{i + \epsilon}}$ & $-\frac{f}{t_0}$ & $-\frac{1}{t_0}$ & $-\frac{\sqrt{\max(i,0)+\epsilon}}{t_0}\bfQ_1 \mathbf{1}$
    \end{tabular}
\end{table}
And
\begin{equation}
\begin{aligned}
    &\bfL=\bfR Q_R(\bfxi) = \frac{\cos\left\|\bftheta\right\|-1}{\left\|\bftheta\right\|^2}\hat{\bfrho}\\
    &-\frac{\left\|\bftheta\right\|-\sin\left\|\bftheta\right\|}{\left\|\bftheta\right\|^3}
    \left(\left(\bftheta\times\bfrho\right)^\wedge + \bfrho\bftheta^\top - \bftheta^\top\bfrho\bfI\right)\\
    &+\left(\frac{\sin\left\|\bftheta\right\|}{\left\|\bftheta\right\|^3}+\frac{2(\cos\left\|\bftheta\right\|-1)}{\left\|\bftheta\right\|^4}\right) (\bftheta\times\bfrho)\bftheta^\top \\
    &-\left(\frac{\cos\left\|\bftheta\right\|-1}{\left\|\bftheta\right\|^4}+\frac{3(\left\|\bftheta\right\|-\sin\left\|\bftheta\right\|)}{\left\|\bftheta\right\|^5}\right) \left(\bftheta\times(\bftheta\times\bfrho)\bftheta^\top\right),
\end{aligned}
\end{equation}
where many intermediate results from computing $\bfR$ and $J_R(\bftheta)$ can be reused.
In our experiments, we find that this custom implementation of gradient computation is essential for using the GPU efficiently.
Otherwise, the PyTorch computation graph will take too much GPU memory and compute the gradients slowly.
Currently, this backward propagation is implemented in Python, which could be dramatically accelerated by a CUDA alternative.

\begin{table*}[t]
    \caption{\small Comparison of SDDF Prediction by Nerfacto trained w/ and w/o DS-NeRF loss. The units of min, max and mean are meters.}\label{tab:sddf_vis_comp_nerfacto}
    \newcommand{\rowlabel}[1]{\rotatebox[origin=l]{90}{#1}}
    \centering
    \begin{tabular}{c|ccccc}
        \hline
        & Allensville & Hotel & Office 0 & Office 1 & Office 2
        \\
        \hline
        \rowlabel{\quad \quad \quad w/o} &
        \includegraphics[width=0.16\linewidth]{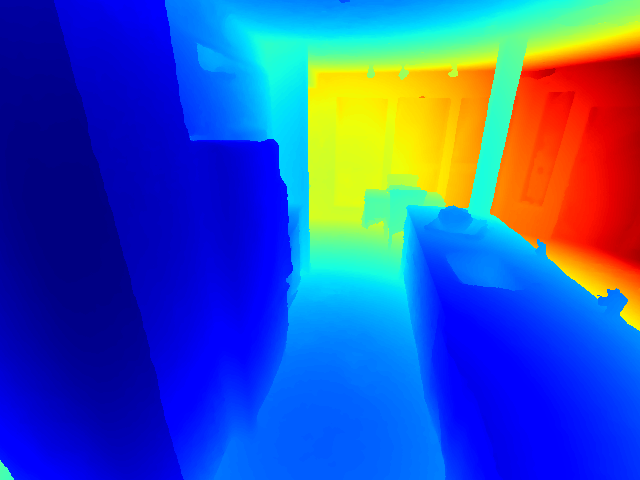} &
        \includegraphics[width=0.16\linewidth]{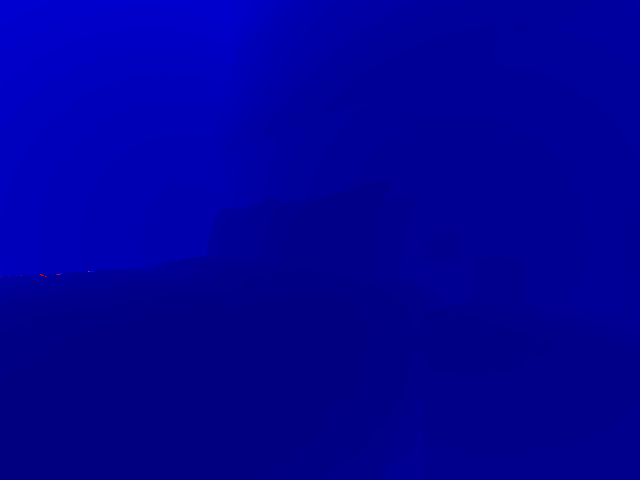}       &
        \includegraphics[width=0.16\linewidth]{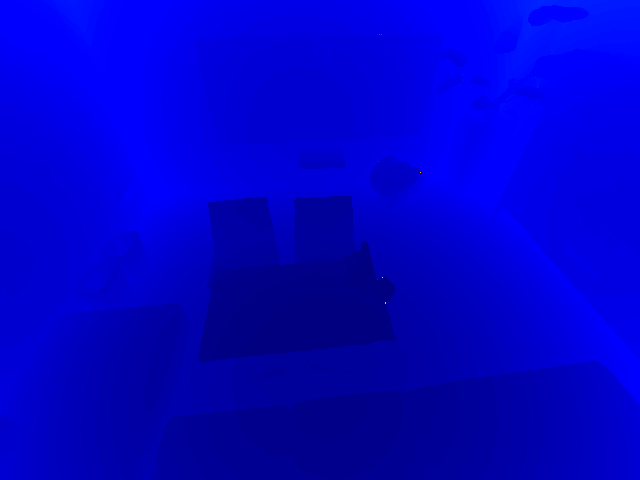}    &
        \includegraphics[width=0.16\linewidth]{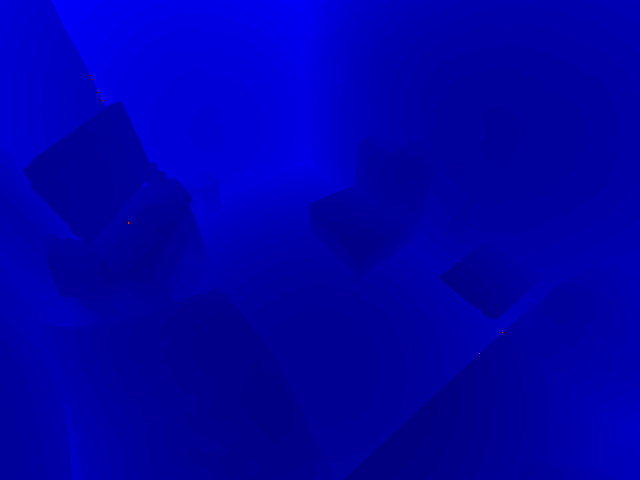}    &
        \includegraphics[width=0.16\linewidth]{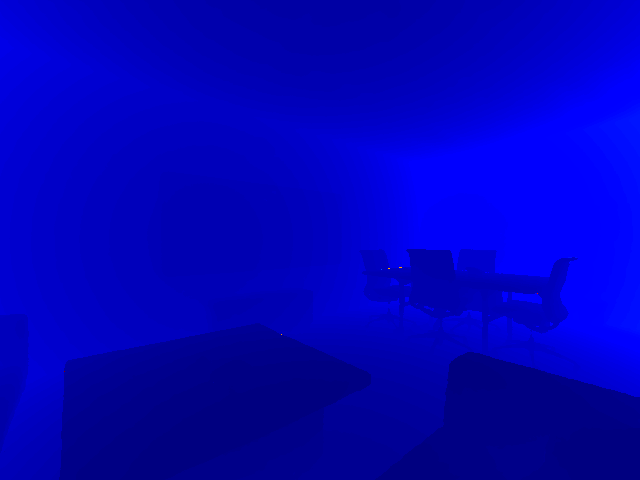}    \\
        {\small Min}  & 0.628 & 0.494  & 1.939  & 1.637  & 1.000  \\
        {\small Max}  & 6.170 & 37.747 & 21.298 & 19.184 & 36.283 \\
        {\small Mean} & 2.537 & 1.462  & 3.244  & 2.482  & 3.469  \\
        \hline\hline
        \rowlabel{\quad\quad \quad w/} &
        \includegraphics[width=0.16\linewidth]{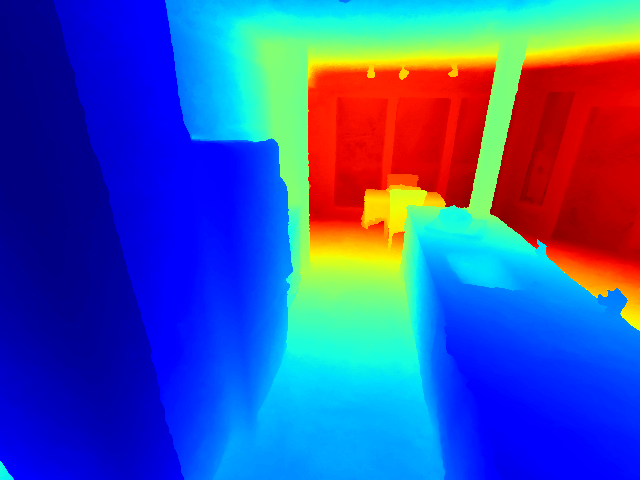} &
        \includegraphics[width=0.16\linewidth]{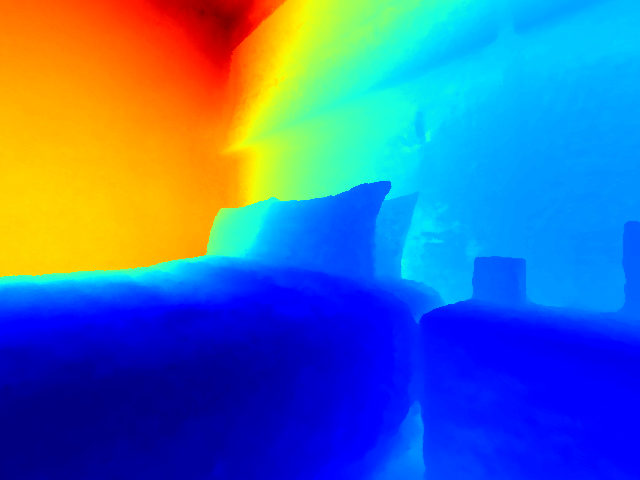}       &
        \includegraphics[width=0.16\linewidth]{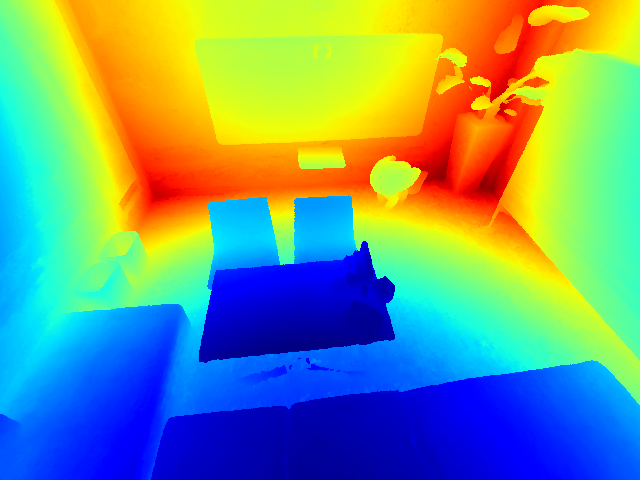}    &
        \includegraphics[width=0.16\linewidth]{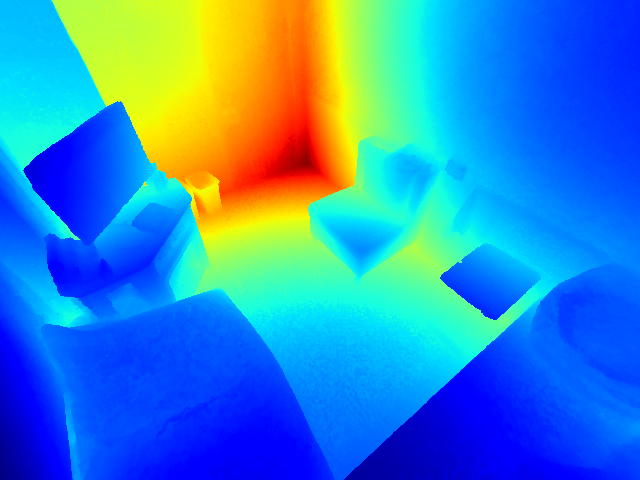}    &
        \includegraphics[width=0.16\linewidth]{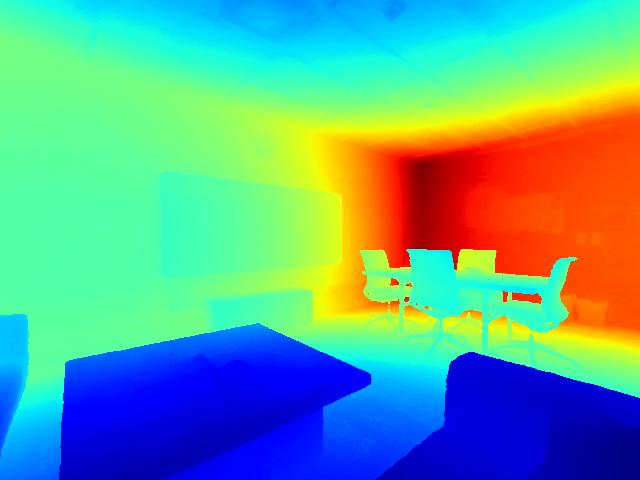}    \\
        {\small Min}  & 0.632 & 0.500 & 1.940 & 1.638 & 1.003 \\
        {\small Max}  & 6.177 & 3.416 & 4.900 & 4.258 & 6.739 \\
        {\small Mean} & 2.549 & 1.478 & 3.264 & 2.511 & 3.516 \\
        \hline
    \end{tabular}
\end{table*}

\subsection{
\texorpdfstring
{Proof of Proposition~\ref{prop:prior_eikonal_equation}}
{Proof of Proposition~\ref{prop:prior_eikonal_equation}}
}
\label{sec:appdx_proof_prop_eikonal_equation}

Without loss of generality, we can prove Proposition~\ref{prop:prior_eikonal_equation} in the local frame of the selected ellipsoid. According to Table~\ref{tab:ellipsoid_gradients} and Table~\ref{tab:ellipsoid_partial_derivatives}, we have
\begin{align}
    {\bfv'}^\top \nabla_{\bfp'} t_0 & = 0, \\
    {\bfv'}^\top \nabla_{\bfp'} t_1 & = 0, \\
    {\bfv'}^\top \nabla_{\bfp'} t_2 &= t_0,
\end{align}
where $t_0={\bfv'}^\top\bfQ_1^2\bfv'$, $t_1={\bfw'}^\top \bfQ_0^2 \bfw'$, $t_2=\bfp'^\top \bfQ_1^2 \bfv'$, then the intersection indicator becomes $i(\bfp,\bfv)=t_0-t_1$ and the SDDF prior is $f(\bfp,\bfv) = -({\det\bfQ_0\sqrt{\max(i,0)+\epsilon} + t_2})/{t_0}$.
So, we get:
\begin{align}
    {\bfv'}^\top \nabla_{\bfp'} f + 1 &= 0, \label{eq:eikonal_prior}
\end{align}
which is the directional Eikonal equation for SDDF.

\section{Residual Network}

\subsection{
\texorpdfstring
{Proof of Proposition~\ref{prop:eikonal_equation_with_residual}}
{Proof of Proposition~\ref{prop:eikonal_equation_with_residual}}
}
\label{sec:appdx_proof_prop_eikonal_equation_with_residual}

The SDDF $\hat{f}(\bfp,\bfv)$ in~\eqref{eq:final_sddf} computed by the combination of the prior and the residual networks satisfies the SDDF directional Eikonal equation in~\eqref{eq:sddf_eikonal_equation}.

Since the ellipsoid prior network $P$ satisfies~\eqref{eq:sddf_eikonal_equation} according to Proposition~\ref{prop:prior_eikonal_equation}, we have ${\bfv'}^\top\nabla_{\bfp'}f=-1$.
Then, because
\begin{equation}
\begin{aligned}
    {\bfv'}^\top \nabla_{\bfp'} \delta_f &= {\bfv'}^\top \nabla_{\bfp'}\bfq \nabla_{\bfq}\delta_f \\
    &= {\bfv'}^\top \left( \bfI + \nabla_{\bfp'}f {\bfv'}^\top \right) \nabla_{\bfq}\delta_f \\
    &= \mathbf{0}^\top \nabla_{\bfq}\delta_f = 0,
\end{aligned}
\end{equation}
we get ${\bfv'}^\top \nabla_{\bfp'}\hat{f} = {\bfv'}^\top\nabla_{\bfp'}f + {\bfv'}^\top\nabla_{\bfp'}\delta_f = -1$.

\section{Training Details} \label{sec:appdx_training_details}

\subsection{Training of Our SDDF Model}
\label{sec:appdx_training_of_sddf}

We use $256$ ellipsoids for the Allensville scene and $128$ for other scenes.
The dimension of the latent feature is $m=256$.
The decoder is a 7-layer MLP of dimensions $[256, 256, 512, 512, 256, 128, 64]$ with skip connections at the first and third layers.
LeakyReLU is used as the nonlinear layer.

Our model is optimized by Adam with a learning rate $0.001$ for the first $150$ epochs and $0.0001$ for another $150$ epochs.
Each batch for training has $512$k rays.
The prior network is pretrained for $19$k batch iterations and then trained together with the residual network for another $1$k batch iterations.
Then, the prior network is frozen, and we optimize the residual network only for the remaining epochs.

\subsection{Training of Nerfacto}
\label{sec:appdx_training_of_nerfacto}

We train Nerfacto~\cite{nerfstudio2023} with the default parameter settings from Nerfstudio. Nerfacto is trained without the DS-NeRF depth loss~\cite{kangle2021dsnerf}, while depth-Nerfacto is trained with this depth loss enabled. The weight of the depth loss is $0.001$ by default. The other hyperparameters are set to default values. Both settings were trained for $50$k iterations without camera pose optimization because ground-truth camera poses were provided.

As shown in Table~\ref{tab:sddf_vis_comp_nerfacto}, adding the DS-NeRF loss helps to prevent unreasonably larger SDDF predictions. However, the min and mean of the predictions are similar with and without the DS-NeRF loss. We think that this is due to the volume density approach designed for photometric rendering in Nerfacto, which leads to poor geometric consistency and inaccurate distance predictions at different viewpoints.

\subsection{Training of RaDe-GS}
\label{sec:appdx_training_of_gs}

We train RaDe-GS~\cite{radegs2024} with the default parameter settings from the official implementation. To initialize the Gaussians, 5\% of the points in the training set are sampled (3\% for the Allensville scene). For each scene, RaDe-GS is trained for 30k iterations with RGB-D images downsampled to 320$\times$240 resolution due to GPU memory limits.

\subsection{Training of \sdfbaseline}
\label{sec:appdx_training_of_sdf}

For SDDF, we augment the dataset with negative samples to improve the performance. Similarly, we generate two kinds of samples for training \sdfbaseline: SDF samples near the surface $\bfx_i \in \calX_\text{near}$ with distance labels $y_i \in \bbR$ and free space positions $\calX_\text{free}$ that provides only sign supervision. The SDF baseline is trained with SDF loss $l_\text{SDF}$, sign loss $l_\text{sign}$ and Eikonal loss $l_\text{eik}$:
\begin{align}
    l_\text{SDF} &= \frac{1}{\|\calX_\text{near}\|} \sum_{\bfx_i \in \calX_\text{near}} |f(\bfx_i)-y_i|, \\
    l_\text{sign} &= \frac{1}{\|\calX_\text{free}\|} \sum_{\bfx_j \in \calX_\text{free}} \exp\left(-\lambda f(\bfx_j)\right),
\end{align}
\begin{align}
    l_\text{eik} &= \frac{1}{\|\calX_\text{near}\| + \|\calX_\text{free}\|} \sum_{\bfx_k \in \calX_\text{near} \cup \calX_\text{free}} \left| \|\nabla f(\bfx_i)\| - 1 \right|.
\end{align}
The near-surface samples and the SDF loss help the model learn the surface geometry. Meanwhile, $l_\text{sign}$ encourages the model to predict large positive SDF at positions in the free space, while $l_\text{eik}$ guarantees that the learned SDF satisfies the Eikonal property.

The SDF model is trained with a batch size of $60$, Adam optimizer, and a learning rate of $0.0001$ for $300$ epochs.

\section{More Visualization} \label{sec:appdx_more_visualization}

\subsection{SDDF Prediction}
\label{sec:appdx_more_vis_sddf_prediction}

We present additional visualizations of SDDF predictions by our method for different scenes in Fig.~\ref{fig:more_sddf_qualitative_results}.
In Figs.~\ref{fig:more_sddf_comp} and~\ref{fig:more_sddf_comp_scannet}, more examples are shown to compare our method with the baselines.

\subsection{View Optimization Results}
\label{sec:appdx_more_vis_view_optimization}

We provide more qualitative results of view optimization in Figs.~\ref{fig:traj_opt_gibson_allensville},~\ref{fig:traj_opt_replica_hotel0}, and~\ref{fig:traj_opt_replica_office3}.

\begin{figure*}
    \centering
    \begin{subfigure}[t]{0.24\linewidth}
        \centering
        \includegraphics[width=\linewidth,trim={0pt 6pt 0pt 6pt},clip]{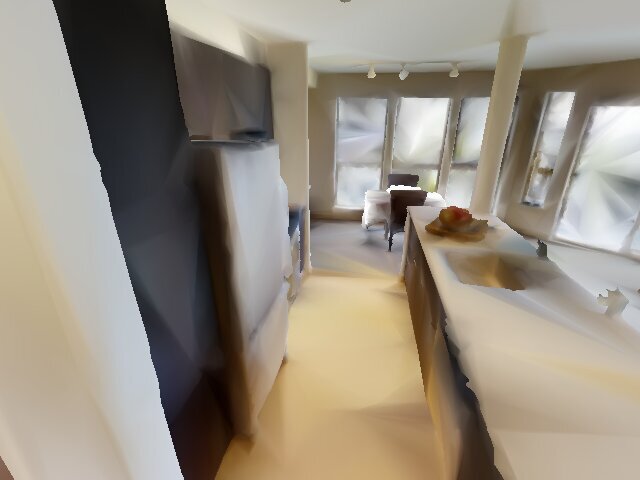}
    \end{subfigure}
    \begin{subfigure}[t]{0.24\linewidth}
        \centering
        \includegraphics[width=\linewidth,trim={0pt 6pt 0pt 6pt},clip]{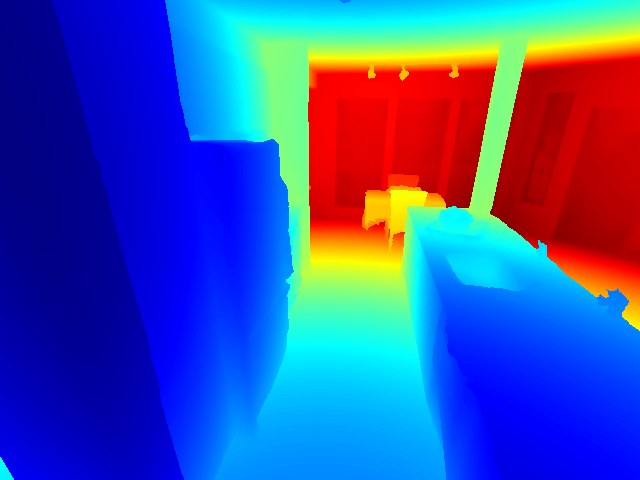}
    \end{subfigure}
    \begin{subfigure}[t]{0.24\linewidth}
        \centering
        \includegraphics[width=\linewidth,trim={0pt 6pt 0pt 6pt},clip]{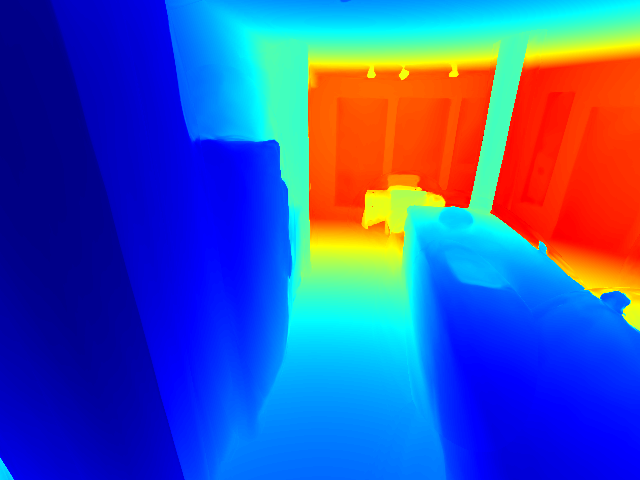}
    \end{subfigure}
    \begin{subfigure}[t]{0.24\linewidth}
        \centering
        \includegraphics[width=\linewidth,trim={0pt 6pt 0pt 6pt},clip]{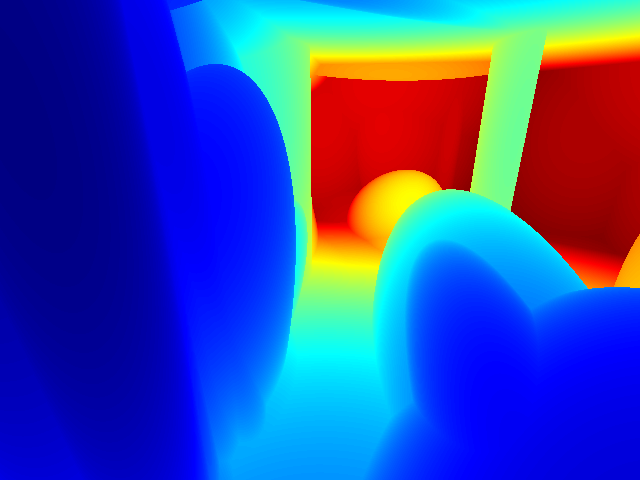}
    \end{subfigure}
    \begin{subfigure}[t]{0.24\linewidth}
        \centering
        \includegraphics[width=\linewidth,trim={0pt 6pt 0pt 6pt},clip]{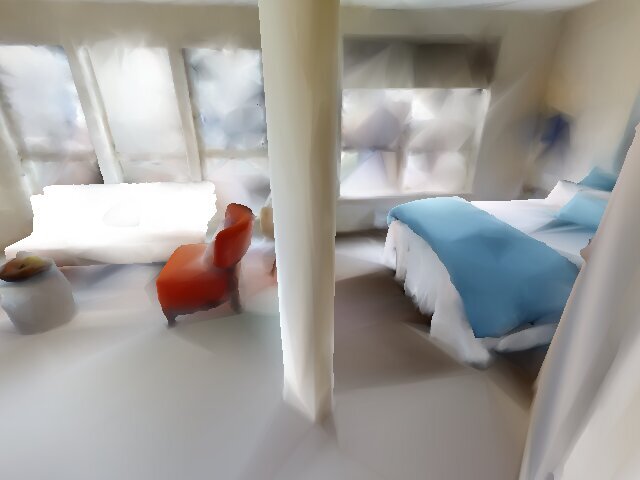}
    \end{subfigure}
    \begin{subfigure}[t]{0.24\linewidth}
        \centering
        \includegraphics[width=\linewidth,trim={0pt 6pt 0pt 6pt},clip]{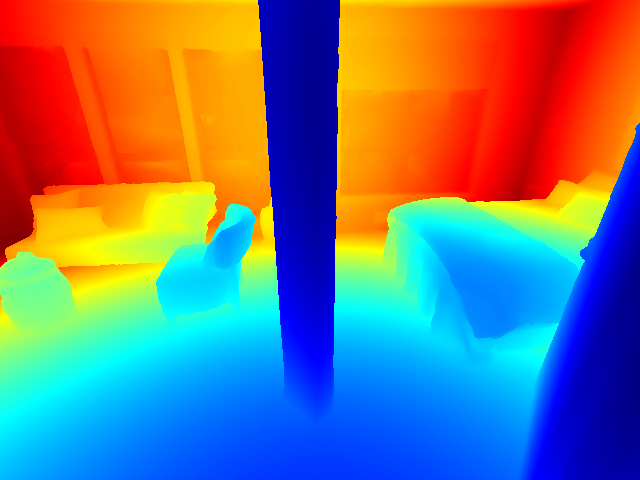}
    \end{subfigure}
    \begin{subfigure}[t]{0.24\linewidth}
        \centering
        \includegraphics[width=\linewidth,trim={0pt 6pt 0pt 6pt},clip]{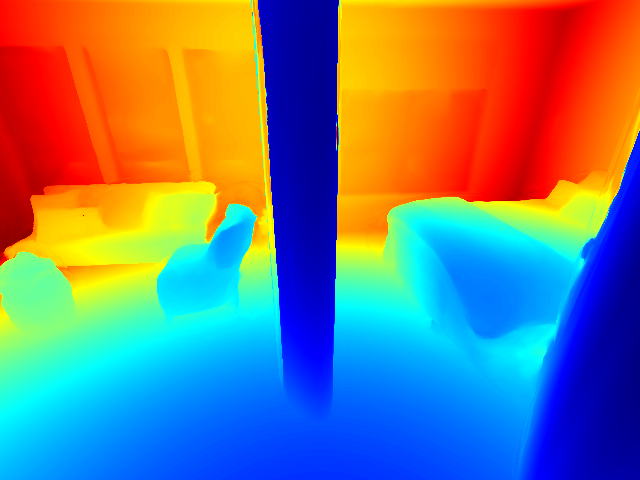}
    \end{subfigure}
    \begin{subfigure}[t]{0.24\linewidth}
        \centering
        \includegraphics[width=\linewidth,trim={0pt 6pt 0pt 6pt},clip]{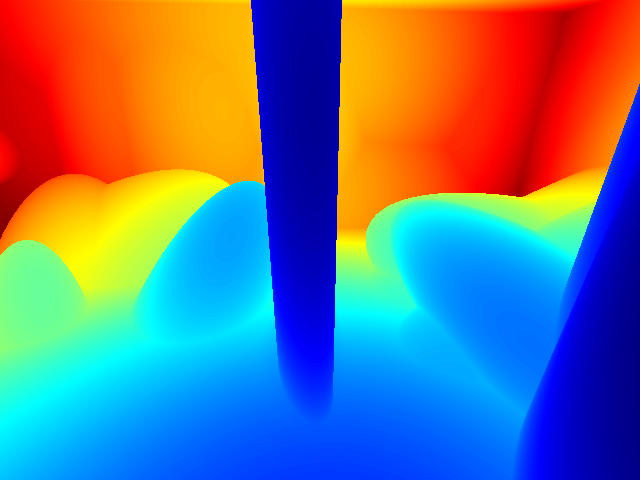}
    \end{subfigure}
    \begin{subfigure}[t]{0.24\linewidth}
        \centering
        \includegraphics[width=\linewidth,trim={0pt 6pt 0pt 6pt},clip]{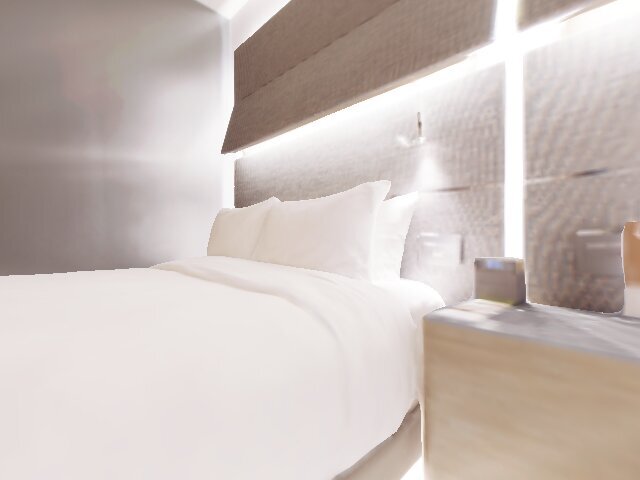}
    \end{subfigure}
    \begin{subfigure}[t]{0.24\linewidth}
        \centering
        \includegraphics[width=\linewidth,trim={0pt 6pt 0pt 6pt},clip]{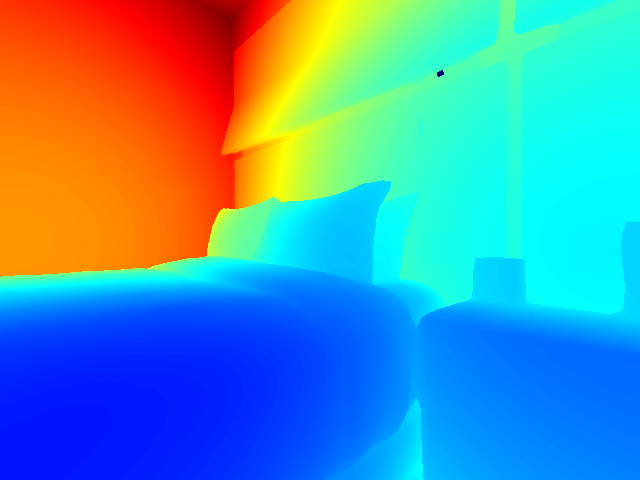}
    \end{subfigure}
    \begin{subfigure}[t]{0.24\linewidth}
        \centering
        \includegraphics[width=\linewidth,trim={0pt 6pt 0pt 6pt},clip]{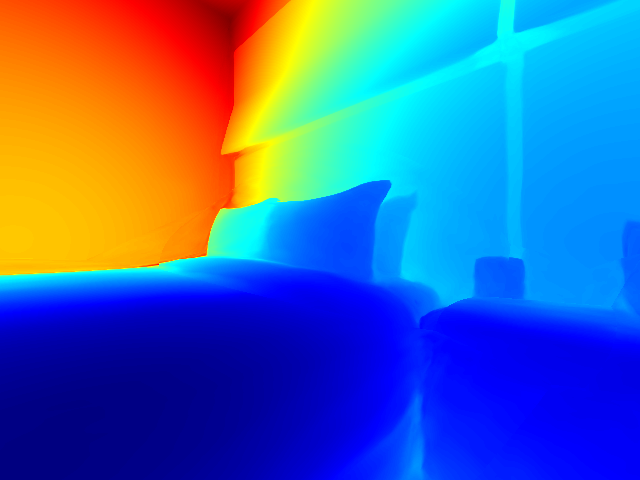}
    \end{subfigure}
    \begin{subfigure}[t]{0.24\linewidth}
        \centering
        \includegraphics[width=\linewidth,trim={0pt 6pt 0pt 6pt},clip]{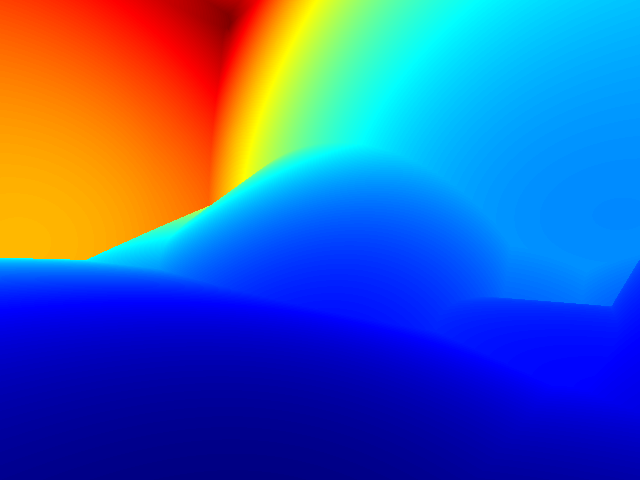}
    \end{subfigure}
    \begin{subfigure}[t]{0.24\linewidth}
        \centering
        \includegraphics[width=\linewidth,trim={0pt 6pt 0pt 6pt},clip]{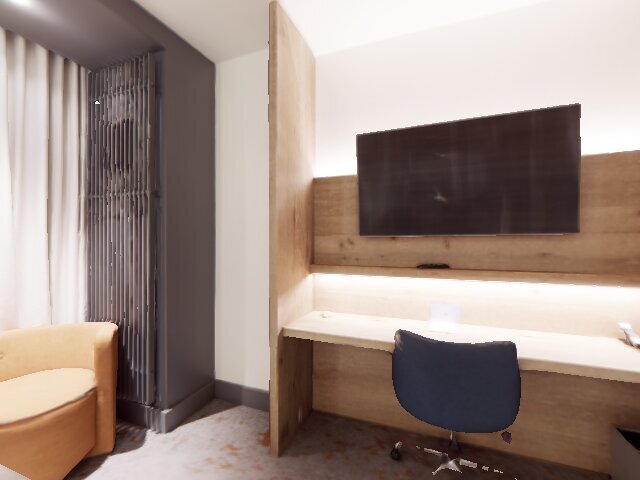}
    \end{subfigure}
    \begin{subfigure}[t]{0.24\linewidth}
        \centering
        \includegraphics[width=\linewidth,trim={0pt 6pt 0pt 6pt},clip]{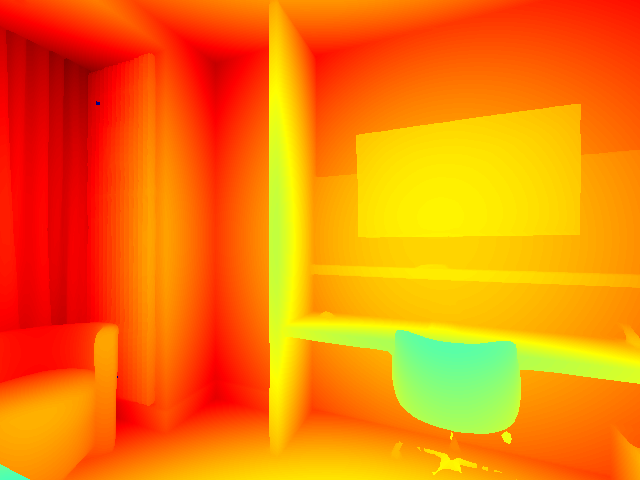}
    \end{subfigure}
    \begin{subfigure}[t]{0.24\linewidth}
        \centering
        \includegraphics[width=\linewidth,trim={0pt 6pt 0pt 6pt},clip]{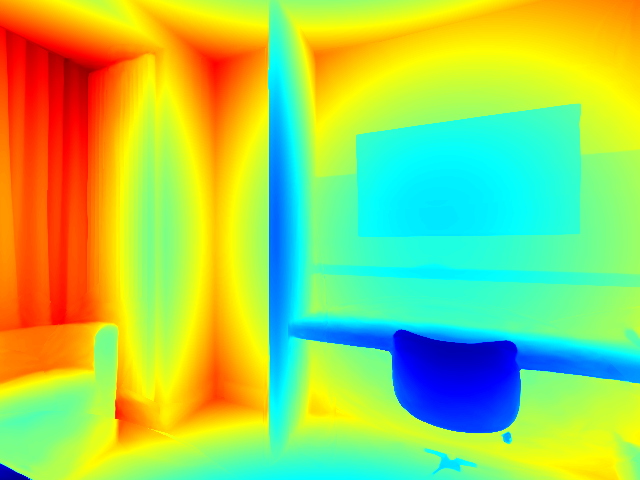}
    \end{subfigure}
    \begin{subfigure}[t]{0.24\linewidth}
        \centering
        \includegraphics[width=\linewidth,trim={0pt 6pt 0pt 6pt},clip]{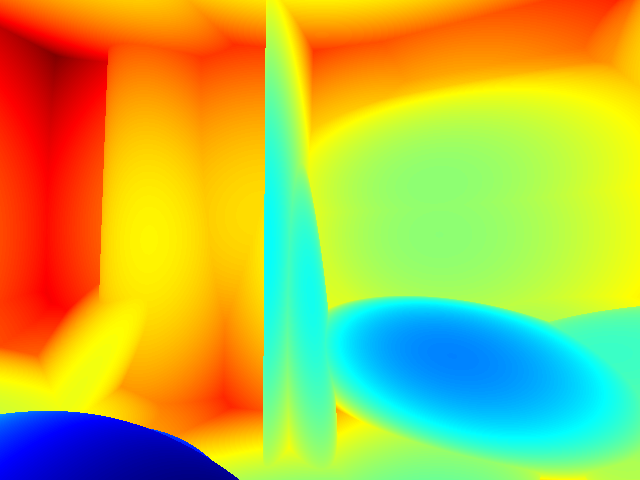}
    \end{subfigure}
    \begin{subfigure}[t]{0.24\linewidth}
        \centering
        \includegraphics[width=\linewidth,trim={0pt 6pt 0pt 6pt},clip]{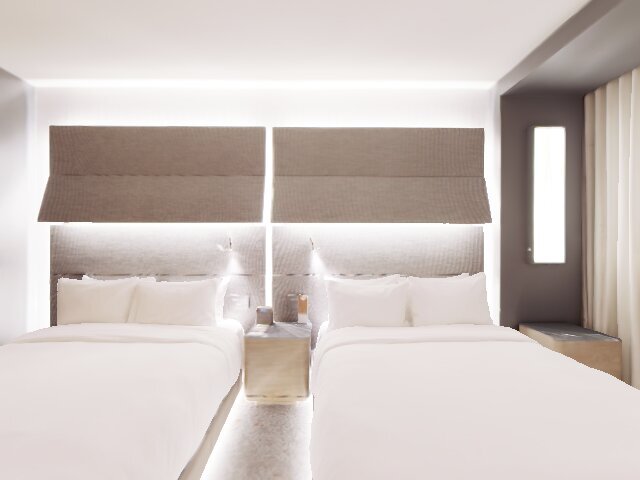}
    \end{subfigure}
    \begin{subfigure}[t]{0.24\linewidth}
        \centering
        \includegraphics[width=\linewidth,trim={0pt 6pt 0pt 6pt},clip]{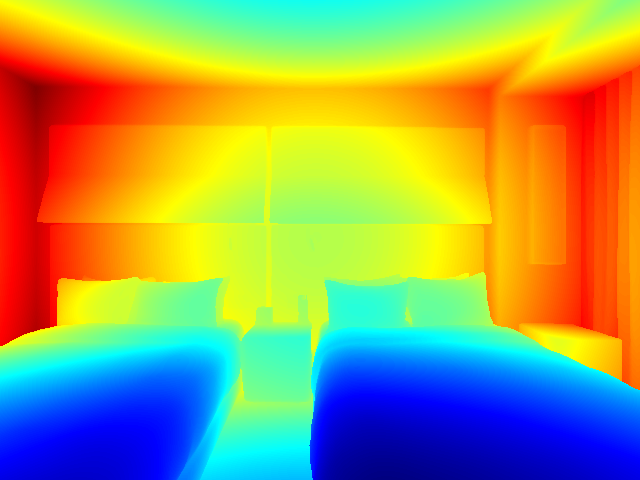}
    \end{subfigure}
    \begin{subfigure}[t]{0.24\linewidth}
        \centering
        \includegraphics[width=\linewidth,trim={0pt 6pt 0pt 6pt},clip]{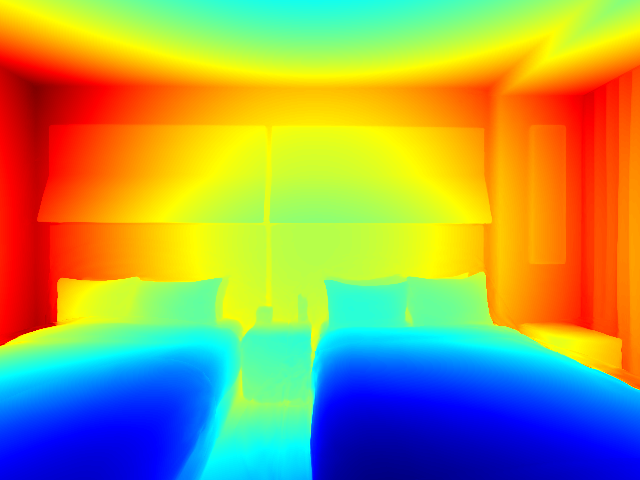}
    \end{subfigure}
    \begin{subfigure}[t]{0.24\linewidth}
        \centering
        \includegraphics[width=\linewidth,trim={0pt 6pt 0pt 6pt},clip]{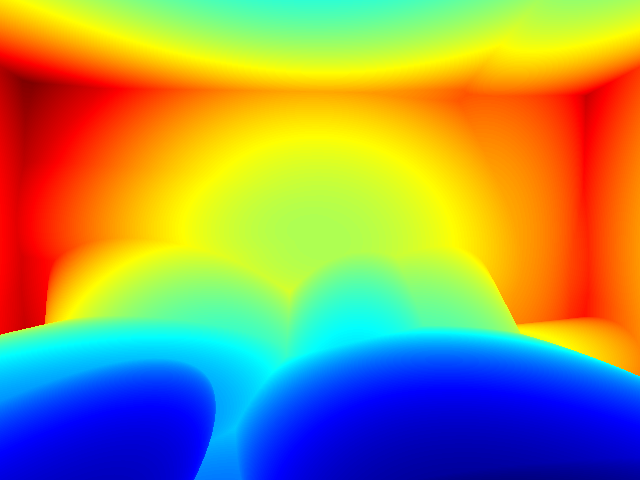}
    \end{subfigure}
    \begin{subfigure}[t]{0.24\linewidth}
        \centering
        \includegraphics[width=\linewidth,trim={0pt 6pt 0pt 6pt},clip]{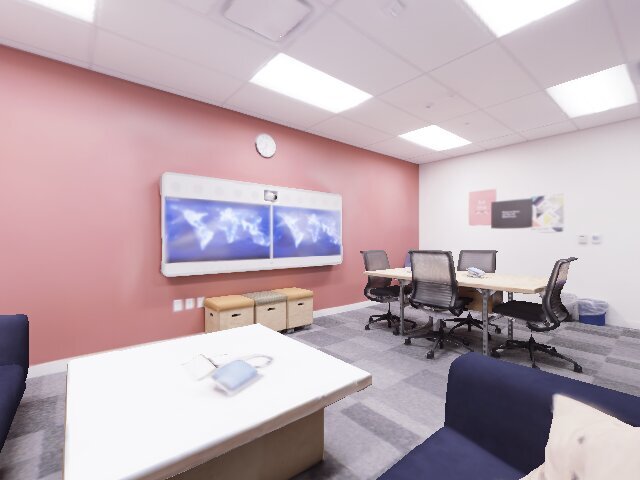}
    \end{subfigure}
    \begin{subfigure}[t]{0.24\linewidth}
        \centering
        \includegraphics[width=\linewidth,trim={0pt 6pt 0pt 6pt},clip]{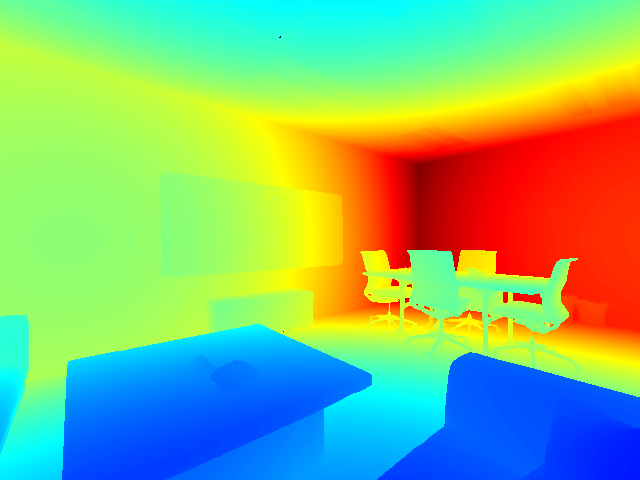}
    \end{subfigure}
    \begin{subfigure}[t]{0.24\linewidth}
        \centering
        \includegraphics[width=\linewidth,trim={0pt 6pt 0pt 6pt},clip]{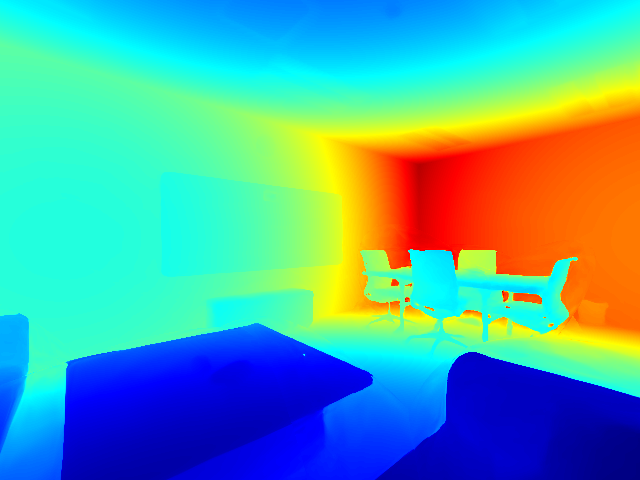}
    \end{subfigure}
    \begin{subfigure}[t]{0.24\linewidth}
        \centering
        \includegraphics[width=\linewidth,trim={0pt 6pt 0pt 6pt},clip]{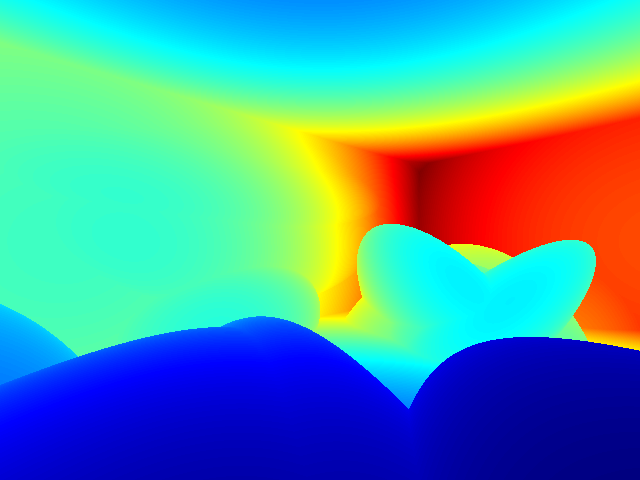}
    \end{subfigure}
    \begin{subfigure}[t]{0.24\linewidth}
        \centering
        \includegraphics[width=\linewidth,trim={0pt 6pt 0pt 6pt},clip]{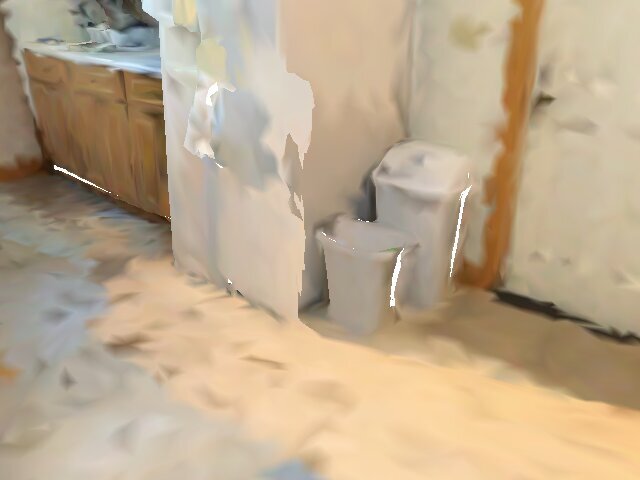}
        \caption{Scene}
    \end{subfigure}
    \begin{subfigure}[t]{0.24\linewidth}
        \centering
        \includegraphics[width=\linewidth,trim={0pt 6pt 0pt 6pt},clip]{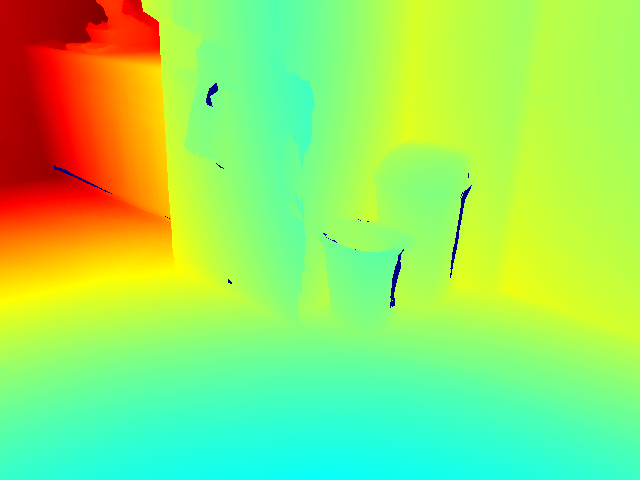}
        \caption{SDDF Ground Truth}
    \end{subfigure}
    \begin{subfigure}[t]{0.24\linewidth}
        \centering
        \includegraphics[width=\linewidth,trim={0pt 6pt 0pt 6pt},clip]{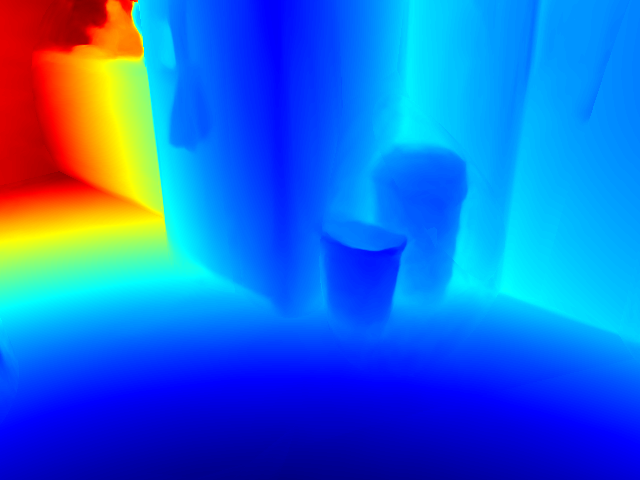}
        \caption{SDDF Prediction}
    \end{subfigure}
    \begin{subfigure}[t]{0.24\linewidth}
        \centering
        \includegraphics[width=\linewidth,trim={0pt 6pt 0pt 6pt},clip]{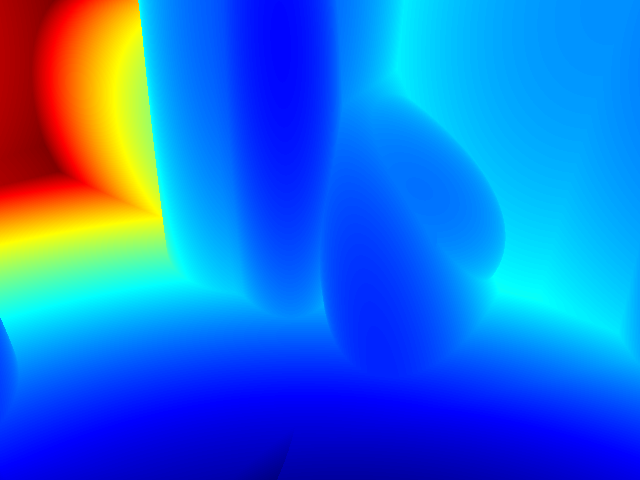}
        \caption{SDDF Prior}
    \end{subfigure}
    \caption{More qualitative results of our method. The four columns are the RGB images of the view, the ground-truth SDDF (color scale may be different due to the occasional zeros caused by mesh holes), the SDDF prediction by our method, and the SDDF prior generated by the ellipsoid-based prior network in our SDDF model.}
    \label{fig:more_sddf_qualitative_results}
\end{figure*}
\begin{figure*}
    \centering
    \begin{subfigure}[t]{0.16\linewidth}
        \centering
        \includegraphics[width=\linewidth,trim={0pt 3pt 0pt 3pt},clip]{fig/sddf_results/allensville-lidar-01-rgb.jpg}
    \end{subfigure}
    \begin{subfigure}[t]{0.16\linewidth}
        \centering
        \includegraphics[width=\linewidth,trim={0pt 3pt 0pt 3pt},clip]{fig/sddf_results/allensville-lidar-01-sddf.png}
    \end{subfigure}
    \begin{subfigure}[t]{0.16\linewidth}
        \centering
        \includegraphics[width=\linewidth,trim={0pt 3pt 0pt 3pt},clip]{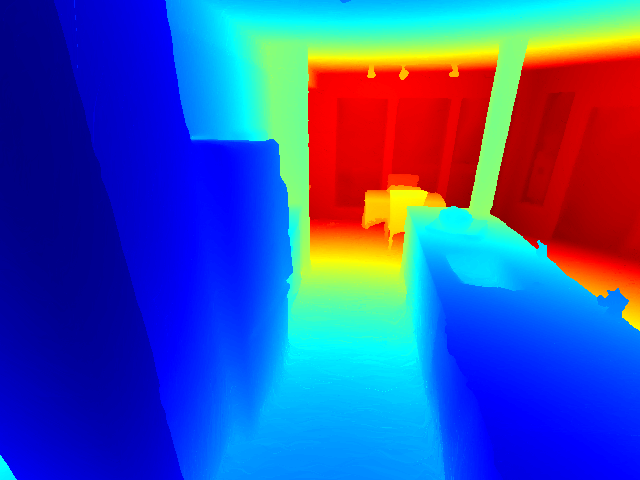}
    \end{subfigure}
    \begin{subfigure}[t]{0.16\linewidth}
        \centering
        \includegraphics[width=\linewidth,trim={0pt 3pt 0pt 3pt},clip]{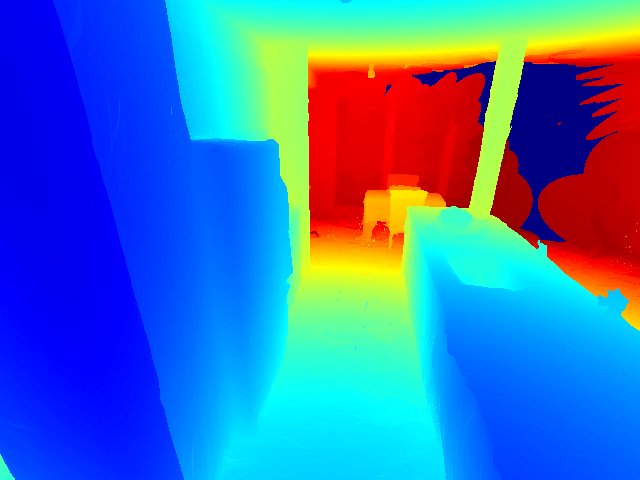}
    \end{subfigure}
    \begin{subfigure}[t]{0.16\linewidth}
        \centering
        \includegraphics[width=\linewidth,trim={0pt 3pt 0pt 3pt},clip]{fig/sddf_results/allensville-rgbd-01-nerf.png}
    \end{subfigure}
    \begin{subfigure}[t]{0.16\linewidth}
        \centering
        \includegraphics[width=\linewidth,trim={0pt 3pt 0pt 3pt},clip]{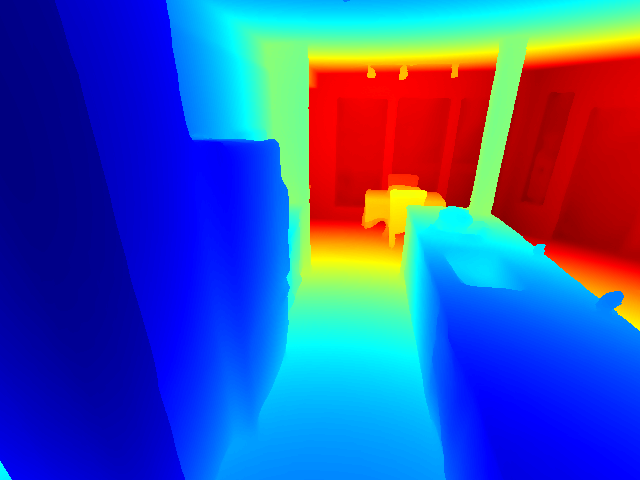}
    \end{subfigure}
    \begin{subfigure}[t]{0.16\linewidth}
        \centering
        \includegraphics[width=\linewidth,trim={0pt 3pt 0pt 3pt},clip]{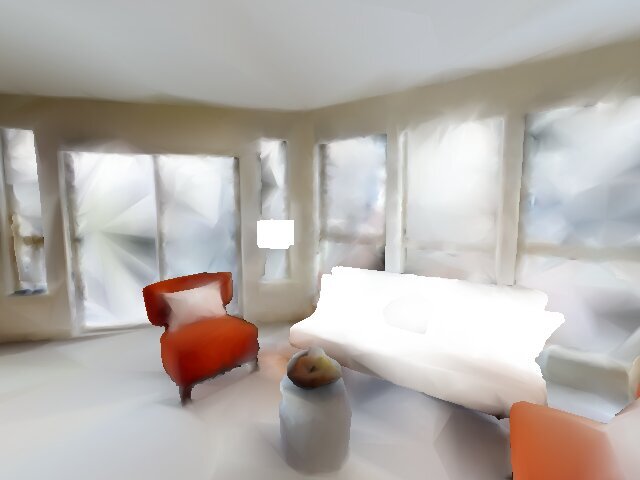}
    \end{subfigure}
    \begin{subfigure}[t]{0.16\linewidth}
        \centering
        \includegraphics[width=\linewidth,trim={0pt 3pt 0pt 3pt},clip]{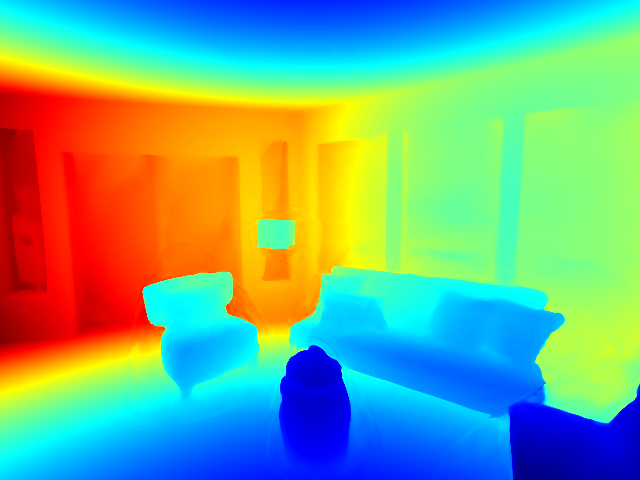}
    \end{subfigure}
    \begin{subfigure}[t]{0.16\linewidth}
        \centering
        \includegraphics[width=\linewidth,trim={0pt 3pt 0pt 3pt},clip]{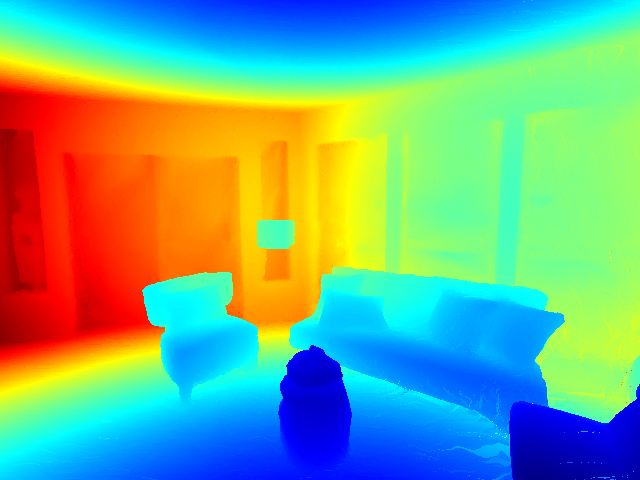}
    \end{subfigure}
    \begin{subfigure}[t]{0.16\linewidth}
        \centering
        \includegraphics[width=\linewidth,trim={0pt 3pt 0pt 3pt},clip]{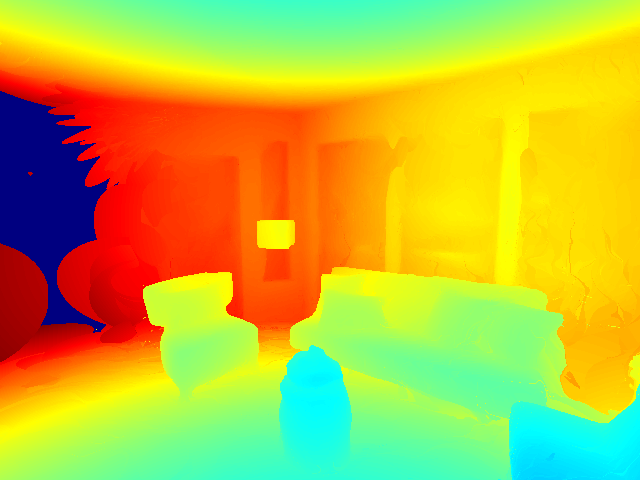}
    \end{subfigure}
    \begin{subfigure}[t]{0.16\linewidth}
        \centering
        \includegraphics[width=\linewidth,trim={0pt 3pt 0pt 3pt},clip]{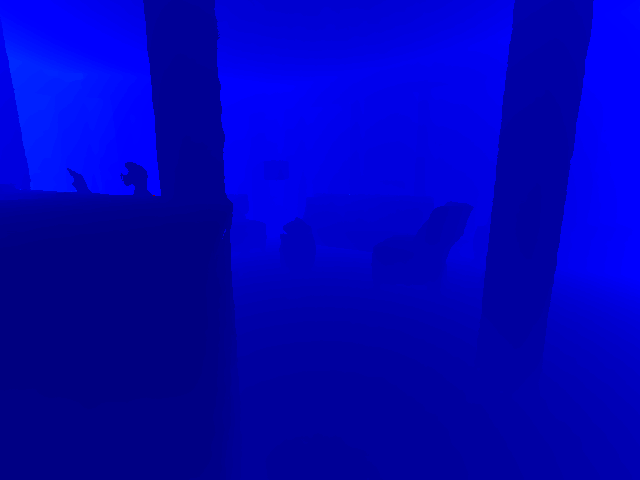}
    \end{subfigure}
    \begin{subfigure}[t]{0.16\linewidth}
        \centering
        \includegraphics[width=\linewidth,trim={0pt 3pt 0pt 3pt},clip]{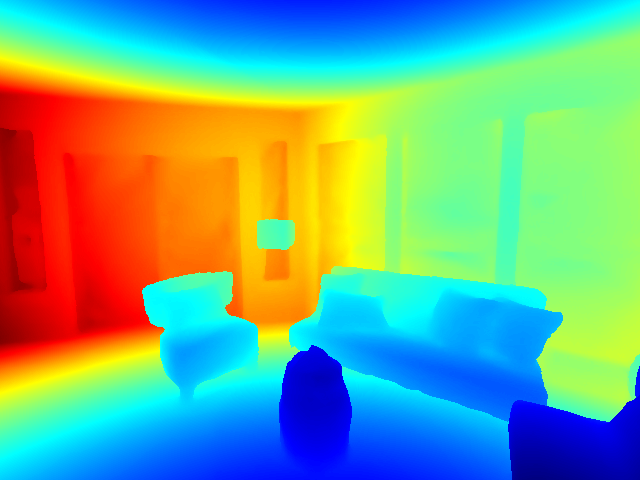}
    \end{subfigure}
    \begin{subfigure}[t]{0.16\linewidth}
        \centering
        \includegraphics[width=\linewidth,trim={0pt 3pt 0pt 3pt},clip]{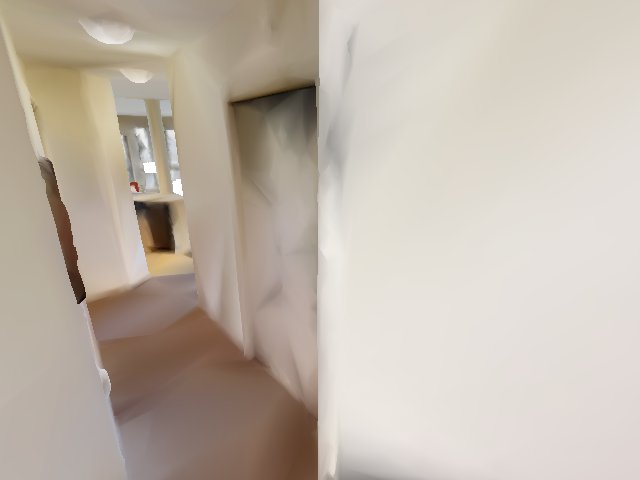}
    \end{subfigure}
    \begin{subfigure}[t]{0.16\linewidth}
        \centering
        \includegraphics[width=\linewidth,trim={0pt 3pt 0pt 3pt},clip]{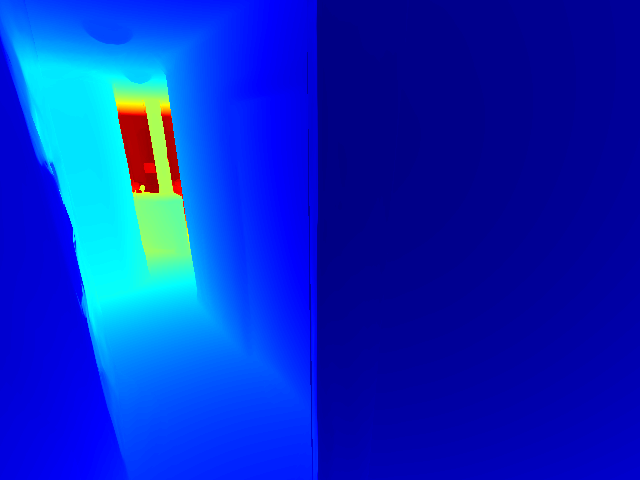}
    \end{subfigure}
    \begin{subfigure}[t]{0.16\linewidth}
        \centering
        \includegraphics[width=\linewidth,trim={0pt 3pt 0pt 3pt},clip]{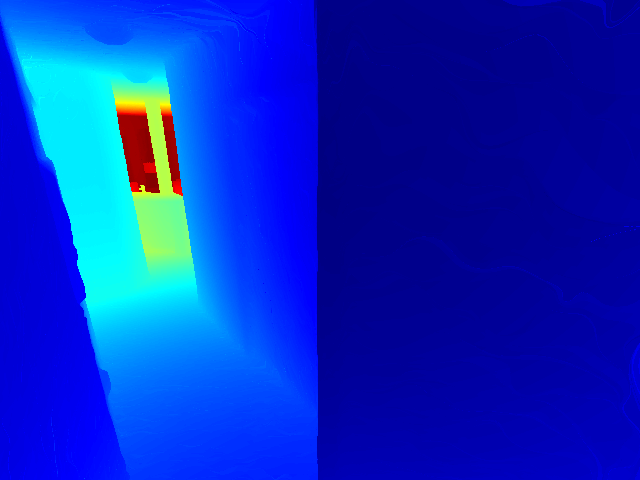}
    \end{subfigure}
    \begin{subfigure}[t]{0.16\linewidth}
        \centering
        \includegraphics[width=\linewidth,trim={0pt 3pt 0pt 3pt},clip]{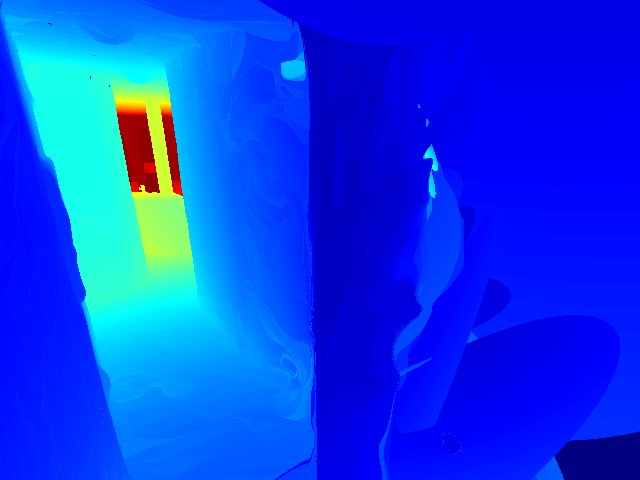}
    \end{subfigure}
    \begin{subfigure}[t]{0.16\linewidth}
        \centering
        \includegraphics[width=\linewidth,trim={0pt 3pt 0pt 3pt},clip]{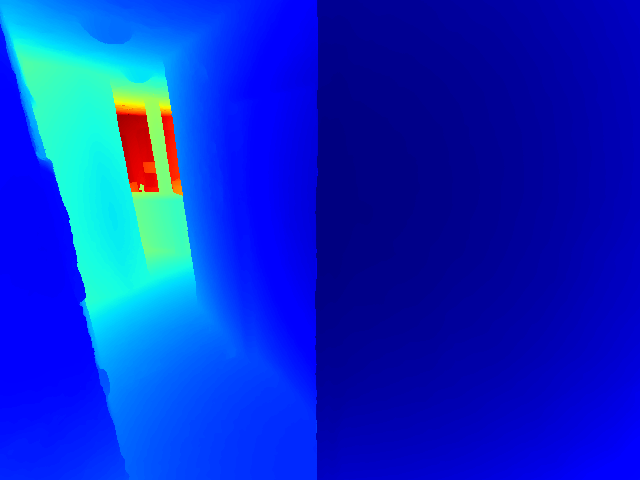}
    \end{subfigure}
    \begin{subfigure}[t]{0.16\linewidth}
        \centering
        \includegraphics[width=\linewidth,trim={0pt 3pt 0pt 3pt},clip]{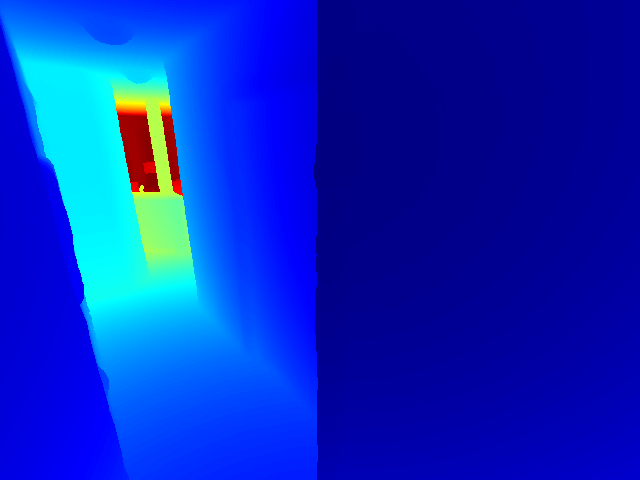}
    \end{subfigure}
    \begin{subfigure}[t]{0.16\linewidth}
        \centering
        \includegraphics[width=\linewidth,trim={0pt 3pt 0pt 3pt},clip]{fig/sddf_results/hotel-lidar-06-rgb.jpg}
    \end{subfigure}
    \begin{subfigure}[t]{0.16\linewidth}
        \centering
        \includegraphics[width=\linewidth,trim={0pt 3pt 0pt 3pt},clip]{fig/sddf_results/hotel-lidar-06-sddf.png}
    \end{subfigure}
    \begin{subfigure}[t]{0.16\linewidth}
        \centering
        \includegraphics[width=\linewidth,trim={0pt 3pt 0pt 3pt},clip]{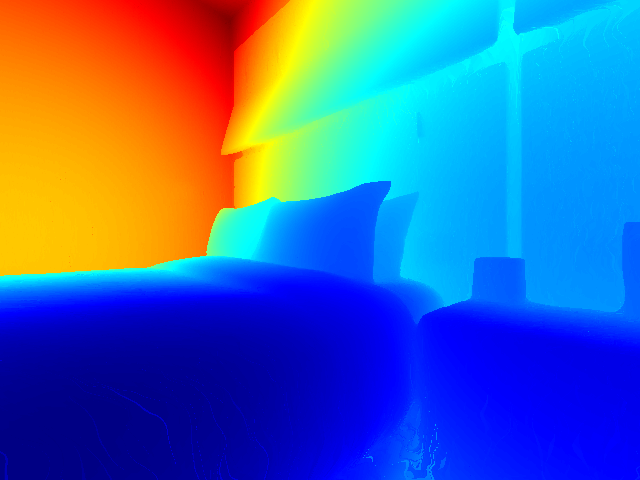}
    \end{subfigure}
    \begin{subfigure}[t]{0.16\linewidth}
        \centering
        \includegraphics[width=\linewidth,trim={0pt 3pt 0pt 3pt},clip]{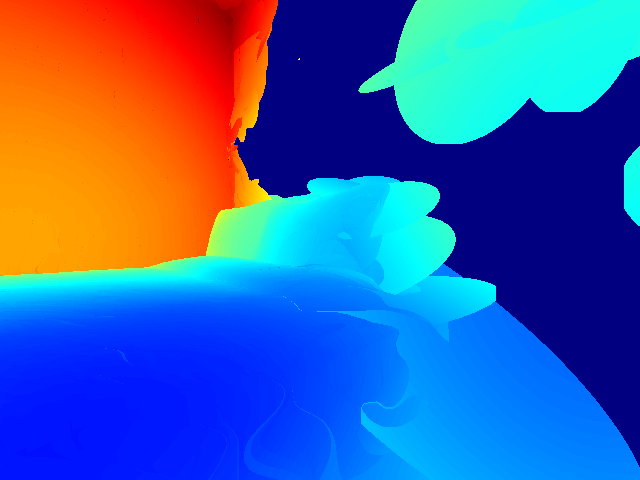}
    \end{subfigure}
    \begin{subfigure}[t]{0.16\linewidth}
        \centering
        \includegraphics[width=\linewidth,trim={0pt 3pt 0pt 3pt},clip]{fig/sddf_results/hotel-rgbd-06-nerf.png}
    \end{subfigure}
    \begin{subfigure}[t]{0.16\linewidth}
        \centering
        \includegraphics[width=\linewidth,trim={0pt 3pt 0pt 3pt},clip]{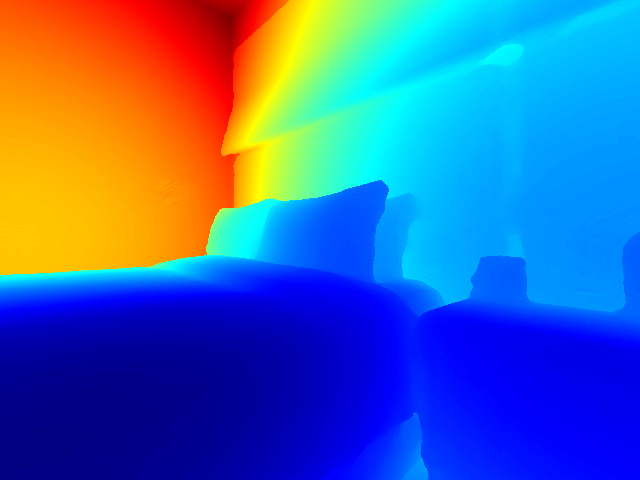}
    \end{subfigure}
    \begin{subfigure}[t]{0.16\linewidth}
        \centering
        \includegraphics[width=\linewidth,trim={0pt 3pt 0pt 3pt},clip]{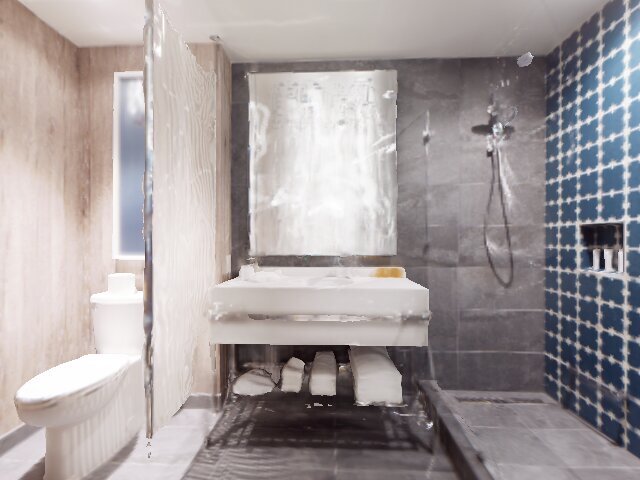}
    \end{subfigure}
    \begin{subfigure}[t]{0.16\linewidth}
        \centering
        \includegraphics[width=\linewidth,trim={0pt 3pt 0pt 3pt},clip]{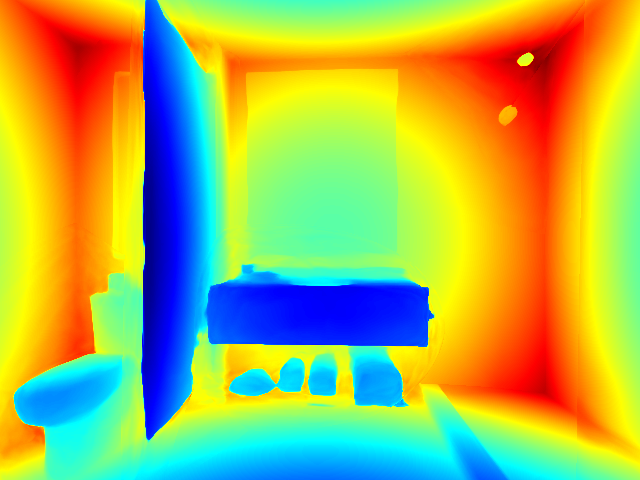}
    \end{subfigure}
    \begin{subfigure}[t]{0.16\linewidth}
        \centering
        \includegraphics[width=\linewidth,trim={0pt 3pt 0pt 3pt},clip]{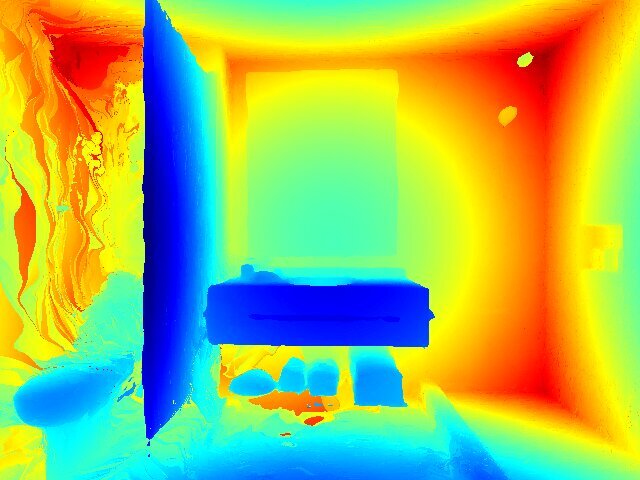}
    \end{subfigure}
    \begin{subfigure}[t]{0.16\linewidth}
        \centering
        \includegraphics[width=\linewidth,trim={0pt 3pt 0pt 3pt},clip]{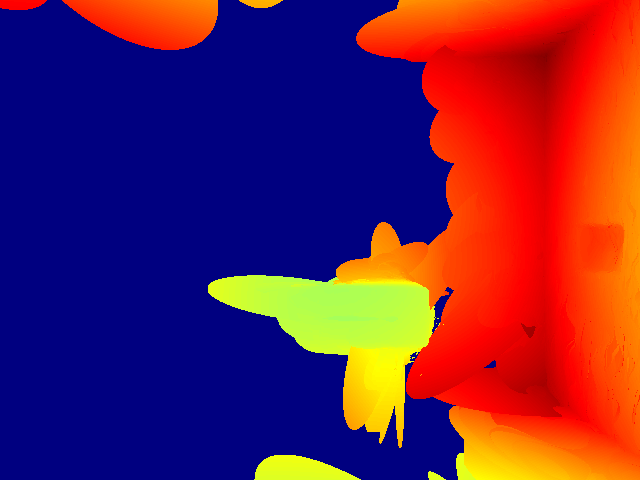}
    \end{subfigure}
    \begin{subfigure}[t]{0.16\linewidth}
        \centering
        \includegraphics[width=\linewidth,trim={0pt 3pt 0pt 3pt},clip]{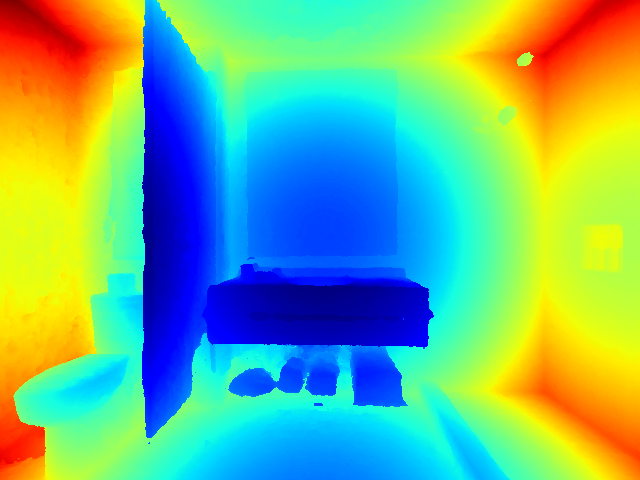}
    \end{subfigure}
    \begin{subfigure}[t]{0.16\linewidth}
        \centering
        \includegraphics[width=\linewidth,trim={0pt 3pt 0pt 3pt},clip]{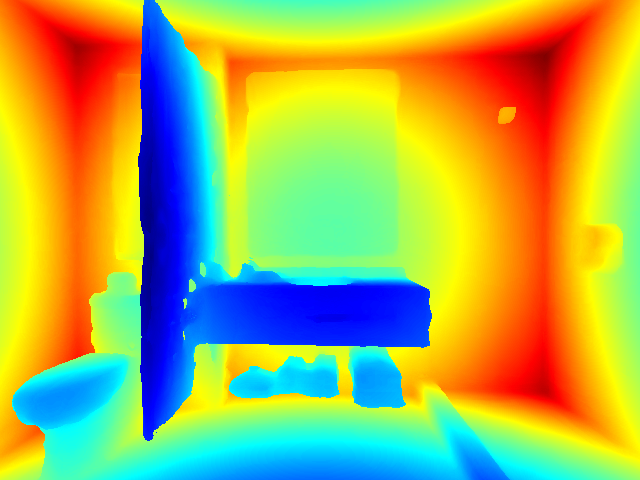}
    \end{subfigure}
    \begin{subfigure}[t]{0.16\linewidth}
        \centering
        \includegraphics[width=\linewidth,trim={0pt 3pt 0pt 3pt},clip]{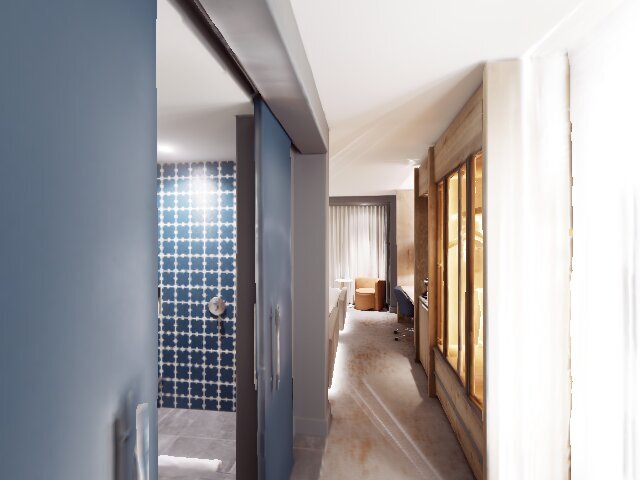}
    \end{subfigure}
    \begin{subfigure}[t]{0.16\linewidth}
        \centering
        \includegraphics[width=\linewidth,trim={0pt 3pt 0pt 3pt},clip]{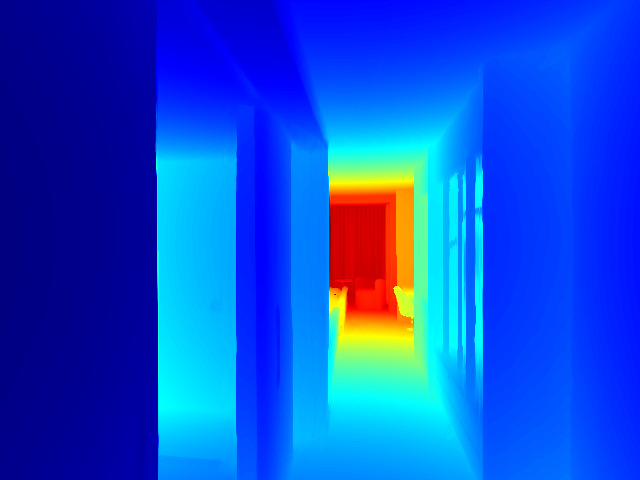}
    \end{subfigure}
    \begin{subfigure}[t]{0.16\linewidth}
        \centering
        \includegraphics[width=\linewidth,trim={0pt 3pt 0pt 3pt},clip]{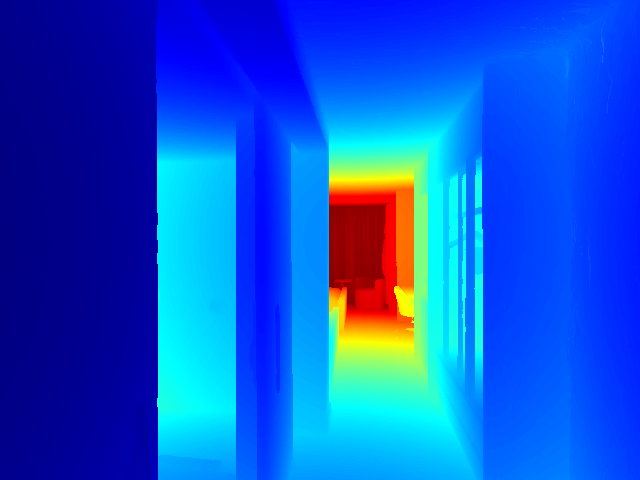}
    \end{subfigure}
    \begin{subfigure}[t]{0.16\linewidth}
        \centering
        \includegraphics[width=\linewidth,trim={0pt 3pt 0pt 3pt},clip]{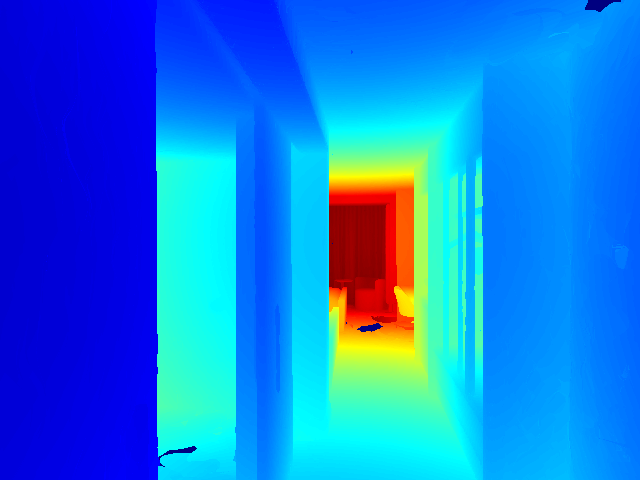}
    \end{subfigure}
    \begin{subfigure}[t]{0.16\linewidth}
        \centering
        \includegraphics[width=\linewidth,trim={0pt 3pt 0pt 3pt},clip]{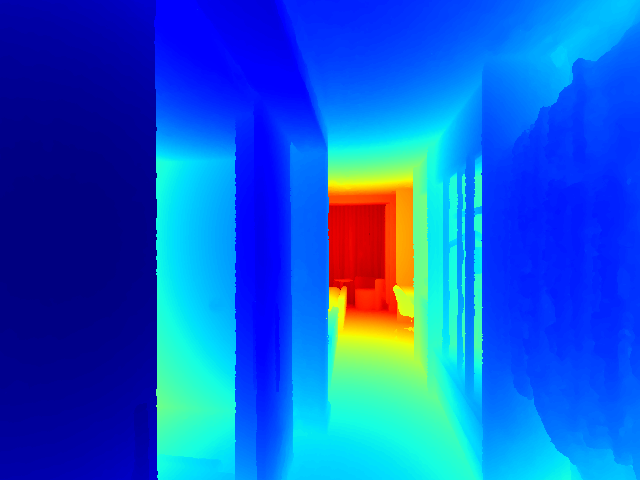}
    \end{subfigure}
    \begin{subfigure}[t]{0.16\linewidth}
        \centering
        \includegraphics[width=\linewidth,trim={0pt 3pt 0pt 3pt},clip]{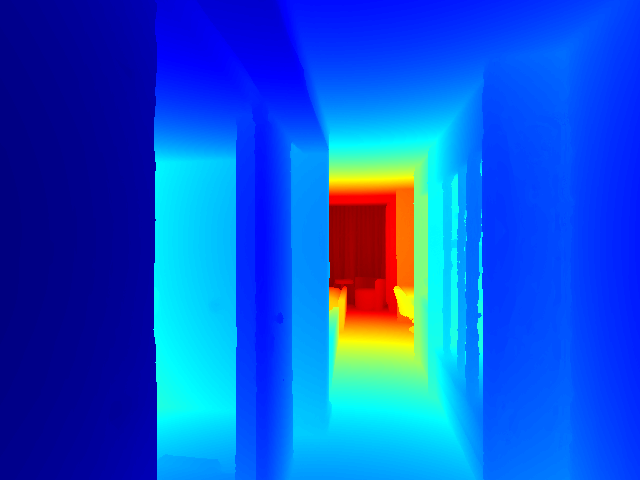}
    \end{subfigure}
    \begin{subfigure}[t]{0.16\linewidth}
        \centering
        \includegraphics[width=\linewidth,trim={0pt 3pt 0pt 3pt},clip]{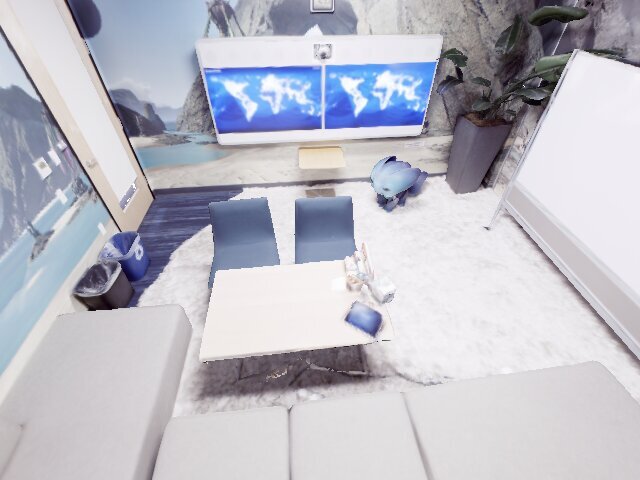}
    \end{subfigure}
    \begin{subfigure}[t]{0.16\linewidth}
        \centering
        \includegraphics[width=\linewidth,trim={0pt 3pt 0pt 3pt},clip]{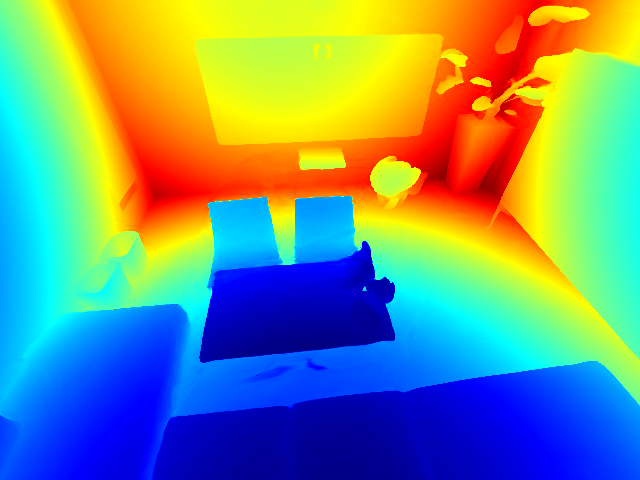}
    \end{subfigure}
    \begin{subfigure}[t]{0.16\linewidth}
        \centering
        \includegraphics[width=\linewidth,trim={0pt 3pt 0pt 3pt},clip]{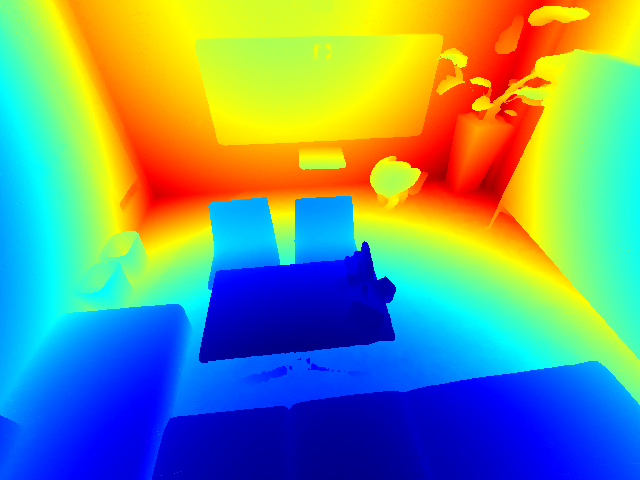}
    \end{subfigure}
    \begin{subfigure}[t]{0.16\linewidth}
        \centering
        \includegraphics[width=\linewidth,trim={0pt 3pt 0pt 3pt},clip]{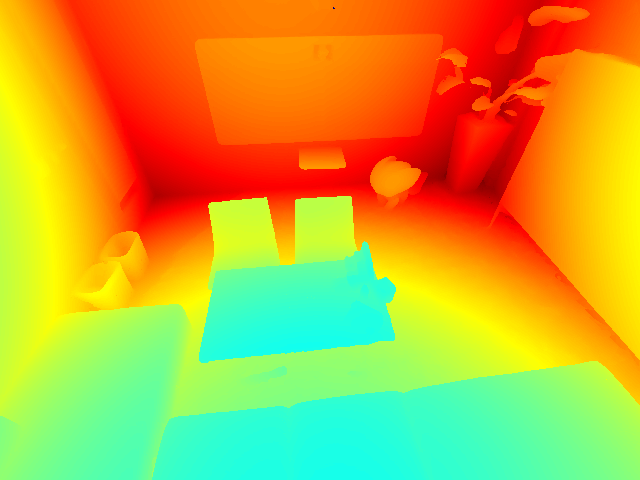}
    \end{subfigure}
    \begin{subfigure}[t]{0.16\linewidth}
        \centering
        \includegraphics[width=\linewidth,trim={0pt 3pt 0pt 3pt},clip]{fig/sddf_results/office-0-rgbd-06-nerf.png}
    \end{subfigure}
    \begin{subfigure}[t]{0.16\linewidth}
        \centering
        \includegraphics[width=\linewidth,trim={0pt 3pt 0pt 3pt},clip]{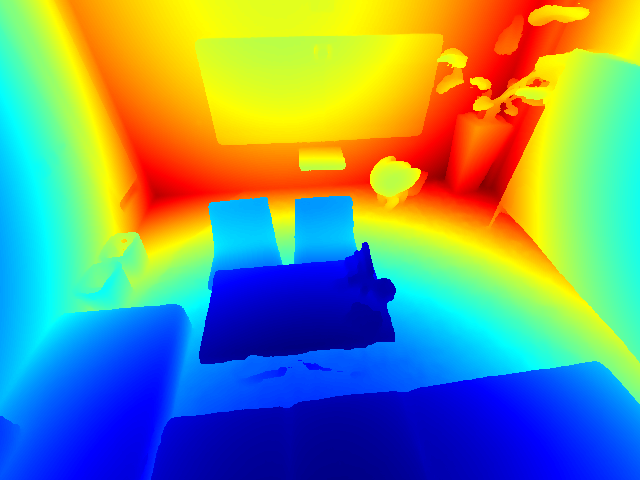}
    \end{subfigure}
    \begin{subfigure}[t]{0.16\linewidth}
        \centering
        \includegraphics[width=\linewidth,trim={0pt 3pt 0pt 3pt},clip]{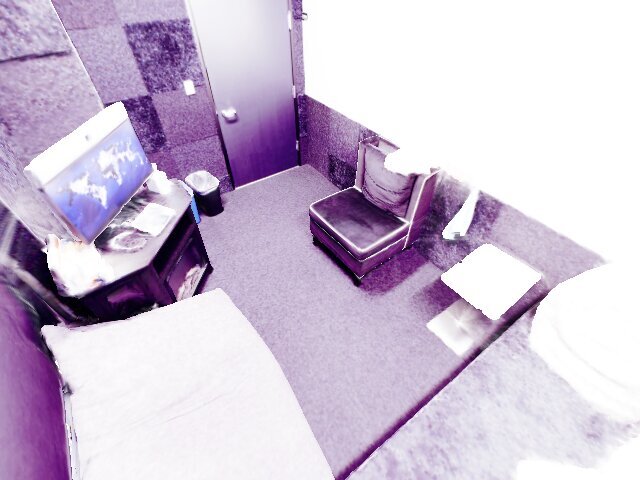}
    \end{subfigure}
    \begin{subfigure}[t]{0.16\linewidth}
        \centering
        \includegraphics[width=\linewidth,trim={0pt 3pt 0pt 3pt},clip]{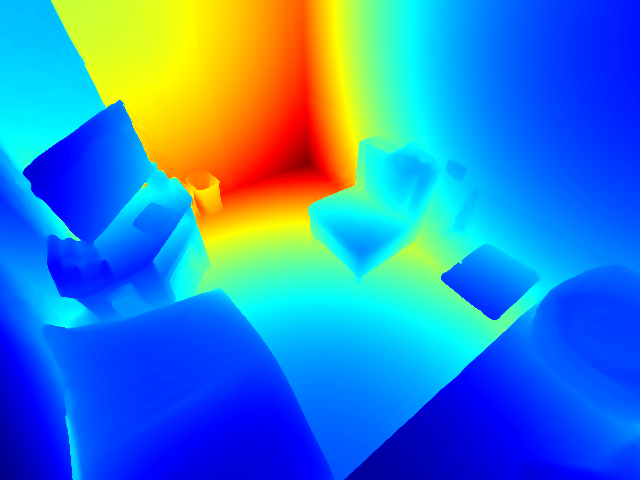}
    \end{subfigure}
    \begin{subfigure}[t]{0.16\linewidth}
        \centering
        \includegraphics[width=\linewidth,trim={0pt 3pt 0pt 3pt},clip]{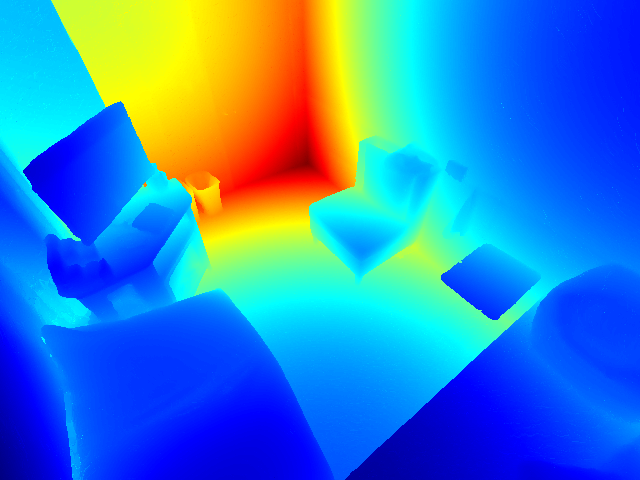}
    \end{subfigure}
    \begin{subfigure}[t]{0.16\linewidth}
        \centering
        \includegraphics[width=\linewidth,trim={0pt 3pt 0pt 3pt},clip]{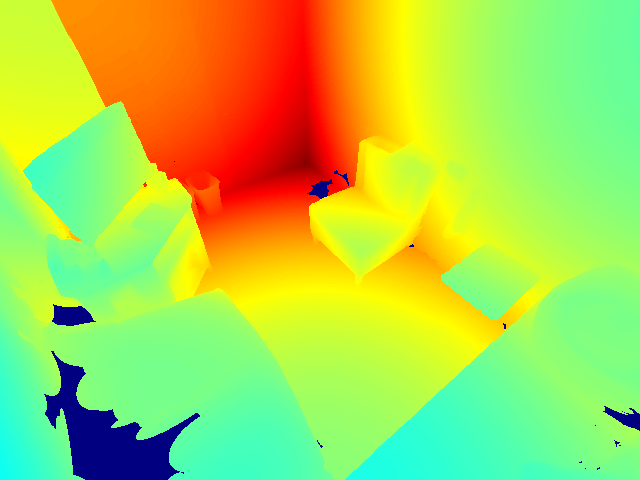}
    \end{subfigure}
    \begin{subfigure}[t]{0.16\linewidth}
        \centering
        \includegraphics[width=\linewidth,trim={0pt 3pt 0pt 3pt},clip]{fig/sddf_results/office-1-rgbd-06-nerf.png}
    \end{subfigure}
    \begin{subfigure}[t]{0.16\linewidth}
        \centering
        \includegraphics[width=\linewidth,trim={0pt 3pt 0pt 3pt},clip]{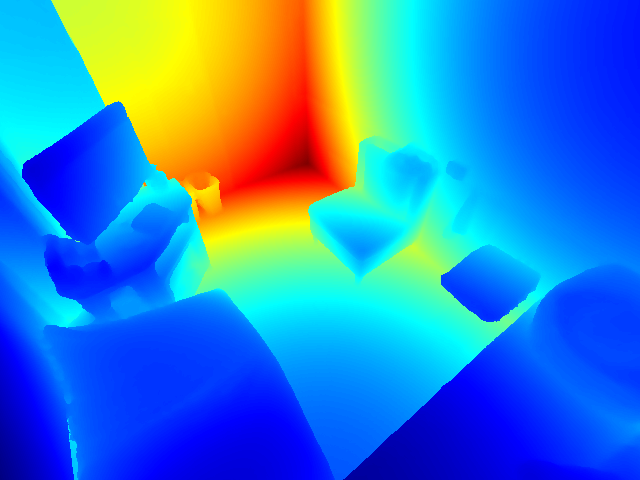}
    \end{subfigure}
    \begin{subfigure}[t]{0.16\linewidth}
        \centering
        \includegraphics[width=\linewidth,trim={0pt 3pt 0pt 3pt},clip]{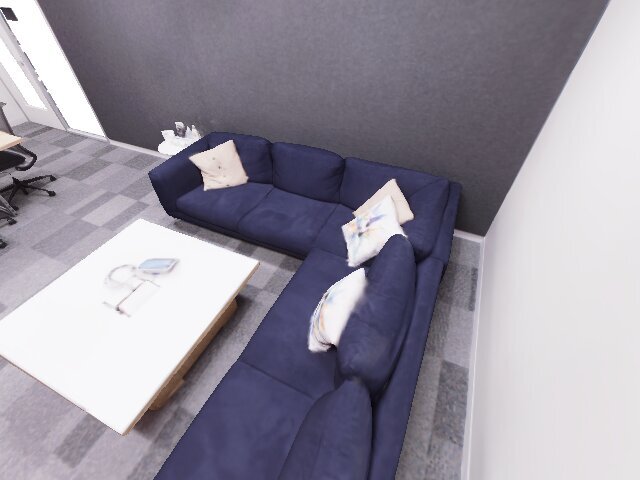}
        \caption{Scene}
    \end{subfigure}
    \begin{subfigure}[t]{0.16\linewidth}
        \centering
        \includegraphics[width=\linewidth,trim={0pt 3pt 0pt 3pt},clip]{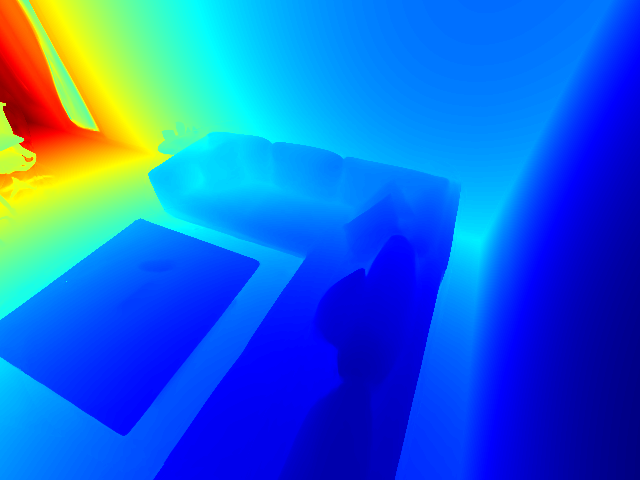}
        \caption{SDDF (ours)}
    \end{subfigure}
    \begin{subfigure}[t]{0.16\linewidth}
        \centering
        \includegraphics[width=\linewidth,trim={0pt 3pt 0pt 3pt},clip]{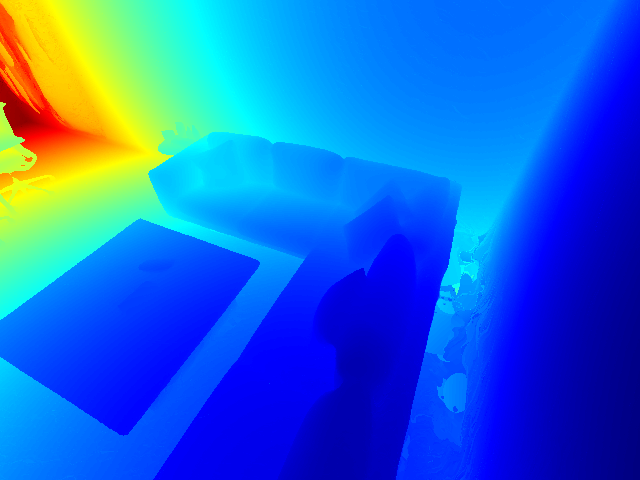}
        \caption{RaDe-GS~\cite{radegs2024}}
    \end{subfigure}
    \begin{subfigure}[t]{0.16\linewidth}
        \centering
        \includegraphics[width=\linewidth,trim={0pt 3pt 0pt 3pt},clip]{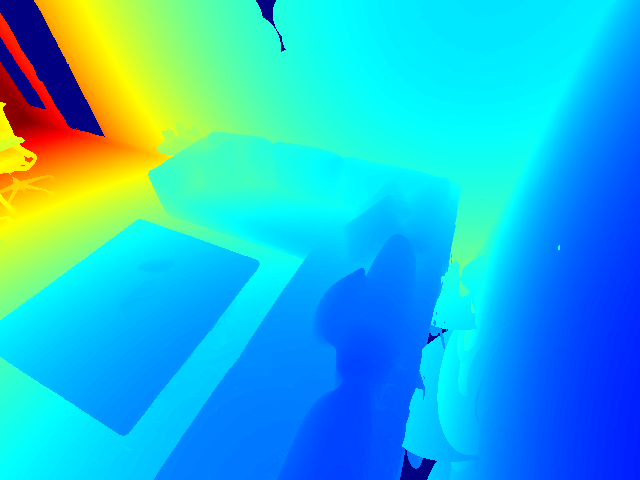}
        \caption{w/o RGB}
    \end{subfigure}
    \begin{subfigure}[t]{0.16\linewidth}
        \centering
        \includegraphics[width=\linewidth,trim={0pt 3pt 0pt 3pt},clip]{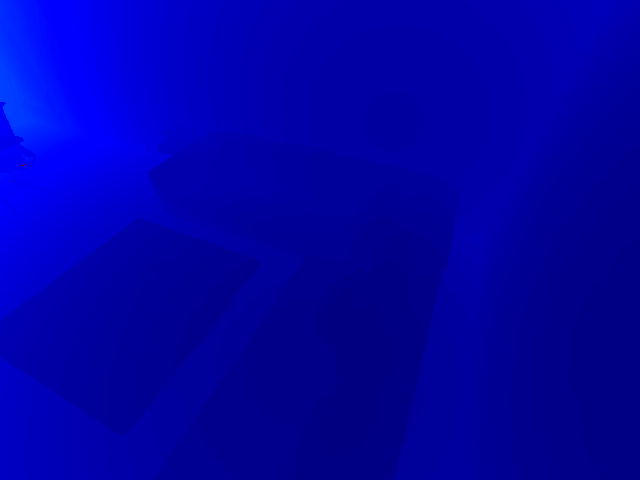}
        \caption{Nerfacto~\cite{nerfstudio2023}}
    \end{subfigure}
    \begin{subfigure}[t]{0.16\linewidth}
        \centering
        \includegraphics[width=\linewidth,trim={0pt 3pt 0pt 3pt},clip]{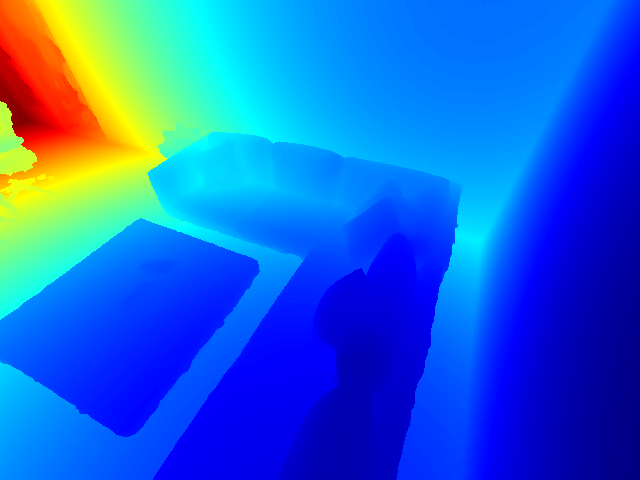}
        \caption{SDF~\cite{instantngp2022}}
    \end{subfigure}
    \caption{More qualitative SDDF prediction comparison between our method and baselines on synthesized datasets, Replica and Gibson.
    The six columns are the RGB images of the scene, SDDF prediction by our method, RaDe-GS~\cite{radegs2024}, RaDe-GS~\cite{radegs2024} trained without RGB images, Nerfacto~\cite{nerfstudio2023} and \sdfbaseline~\cite{instantngp2022} with sphere tracing~\cite{spheretracing1996} respectively.
    When only a few sensor measurements are available, RaDe-GS~\cite{radegs2024} fails to learn the geometry, giving splatting artifacts.
    When trained without RGB, RaDe-GS presents instability in certain cases shown above. The splatting artifacts are more significant. And even worse, it cannot learn the scene when available measurements become fewer.
    Nerfacto produces cloud-like artifacts when no sufficient data is provided such as the sixth row. And it has large distance errors in some cases like the second row. Some pixels have much larger distance values. Thus, the image looks blue globally.
    \sdfbaseline~\cite{instantngp2022} tends to learn smoother shapes that lack sharper details, like the fourth row. And the accumulated errors by sphere tracing may cause holes on objects or rough object boundaries, e.g., the curtain in the figure of the fifth row and the closet in the figure of the sixth row.}
    \label{fig:more_sddf_comp}
\end{figure*}
\begin{figure*}
    \centering
    \begin{subfigure}[t]{0.16\linewidth}
        \centering
        \includegraphics[width=\linewidth]{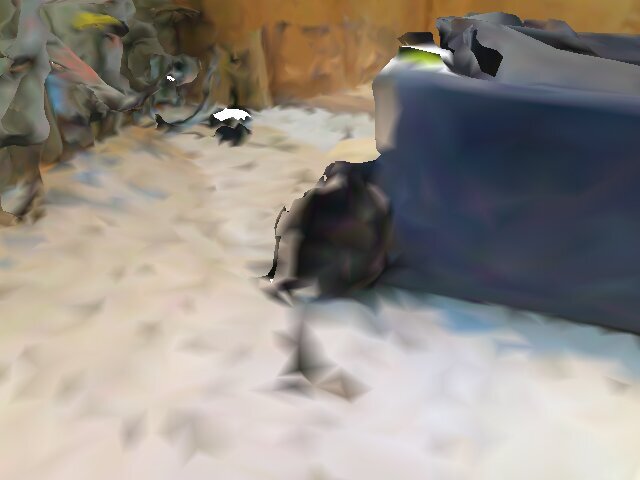}
    \end{subfigure}
    \begin{subfigure}[t]{0.16\linewidth}
        \centering
        \includegraphics[width=\linewidth]{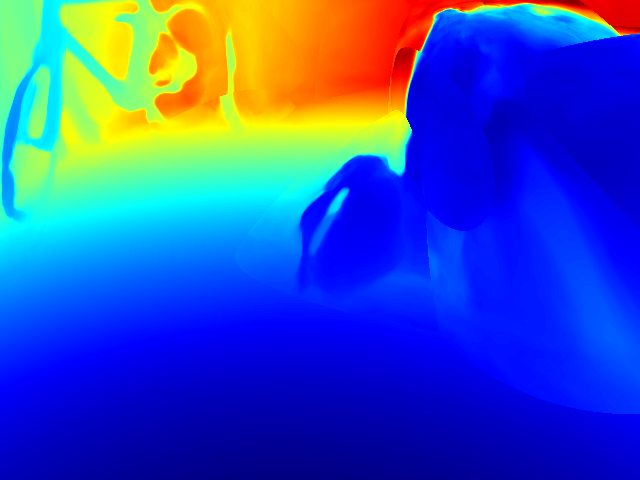}
    \end{subfigure}
    \begin{subfigure}[t]{0.16\linewidth}
        \centering
        \includegraphics[width=\linewidth]{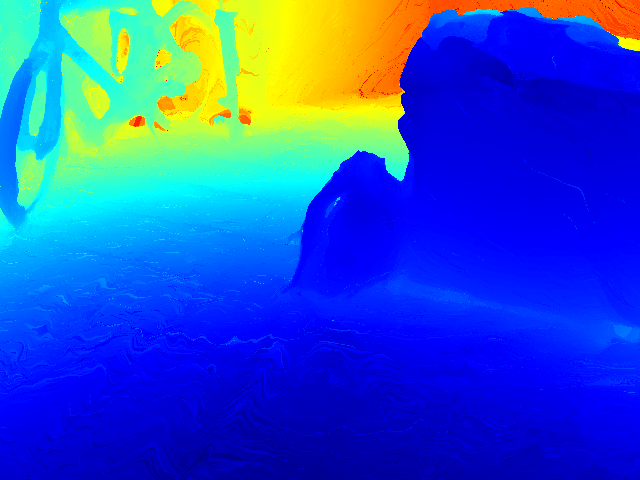}
    \end{subfigure}
    \begin{subfigure}[t]{0.16\linewidth}
        \centering
        \includegraphics[width=\linewidth]{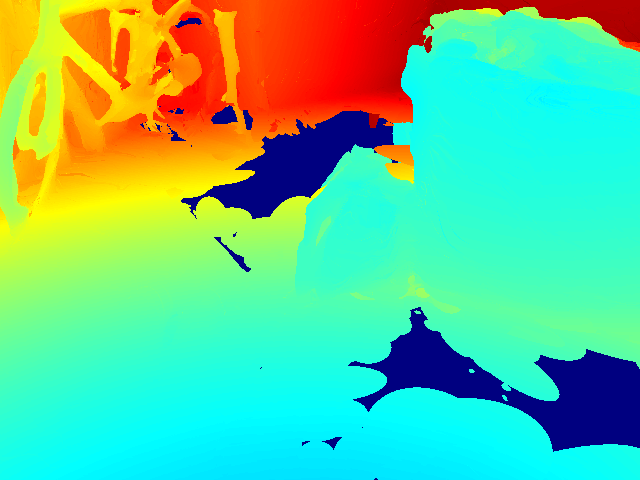}
    \end{subfigure}
    \begin{subfigure}[t]{0.16\linewidth}
        \centering
        \includegraphics[width=\linewidth]{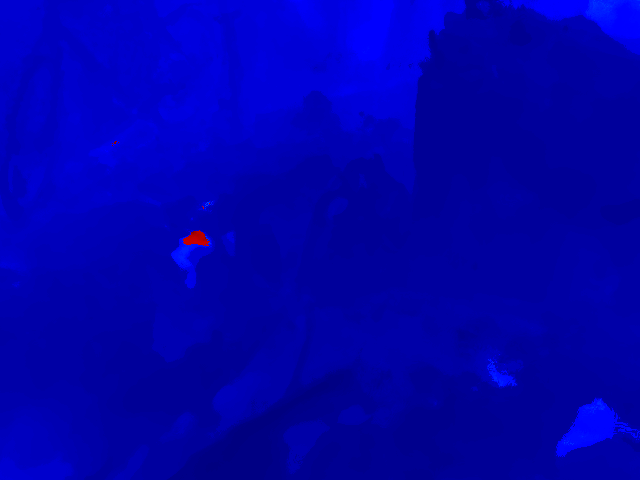}
    \end{subfigure}
    \begin{subfigure}[t]{0.16\linewidth}
        \centering
        \includegraphics[width=\linewidth]{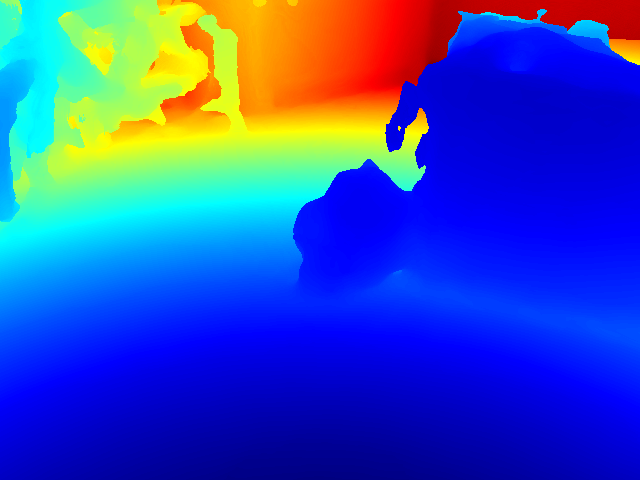}
    \end{subfigure}
    \begin{subfigure}[t]{0.16\linewidth}
        \centering
        \includegraphics[width=\linewidth]{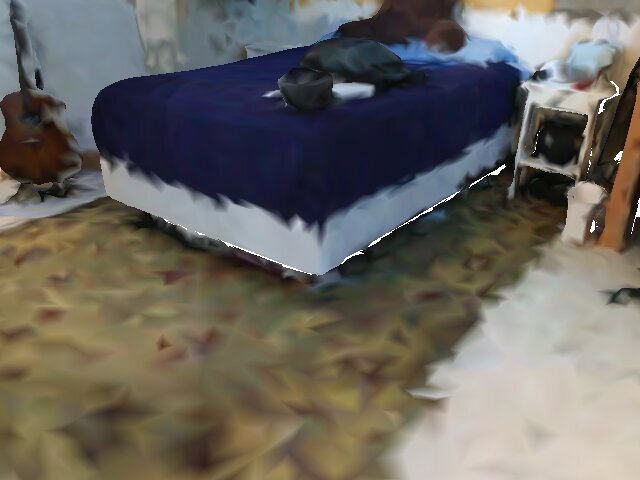}
    \end{subfigure}
    \begin{subfigure}[t]{0.16\linewidth}
        \centering
        \includegraphics[width=\linewidth]{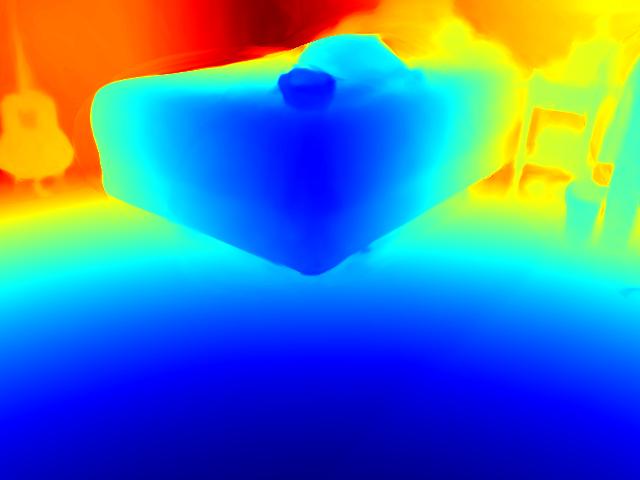}
    \end{subfigure}
    \begin{subfigure}[t]{0.16\linewidth}
        \centering
        \includegraphics[width=\linewidth]{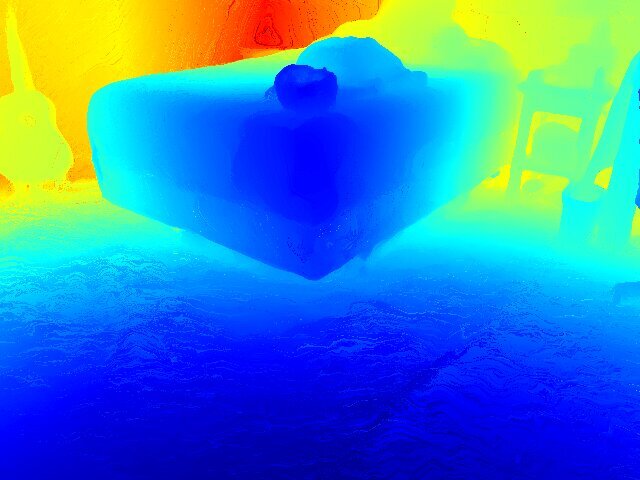}
    \end{subfigure}
    \begin{subfigure}[t]{0.16\linewidth}
        \centering
        \includegraphics[width=\linewidth]{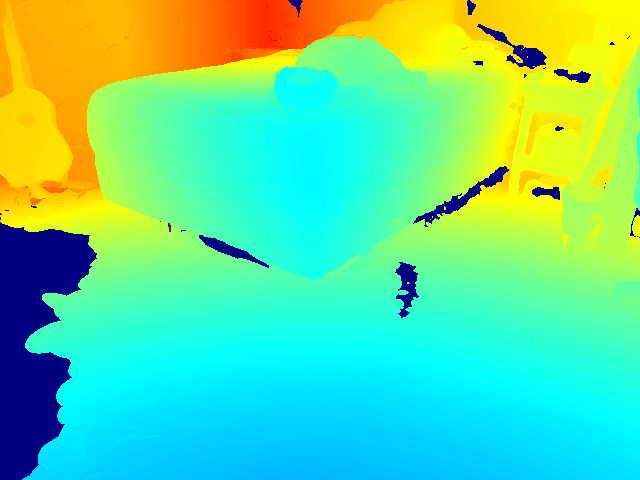}
    \end{subfigure}
    \begin{subfigure}[t]{0.16\linewidth}
        \centering
        \includegraphics[width=\linewidth]{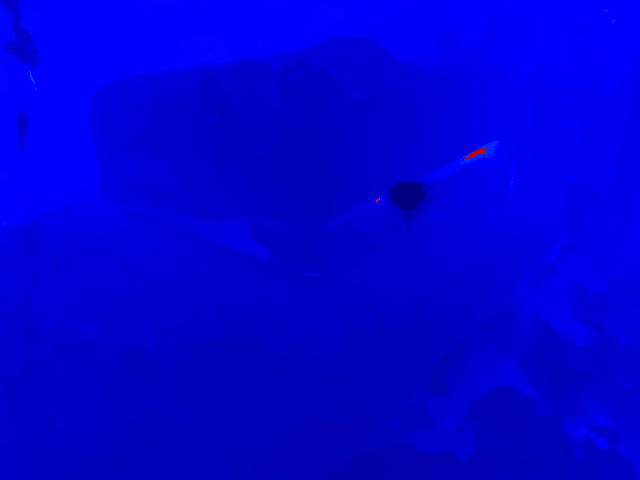}
    \end{subfigure}
    \begin{subfigure}[t]{0.16\linewidth}
        \centering
        \includegraphics[width=\linewidth]{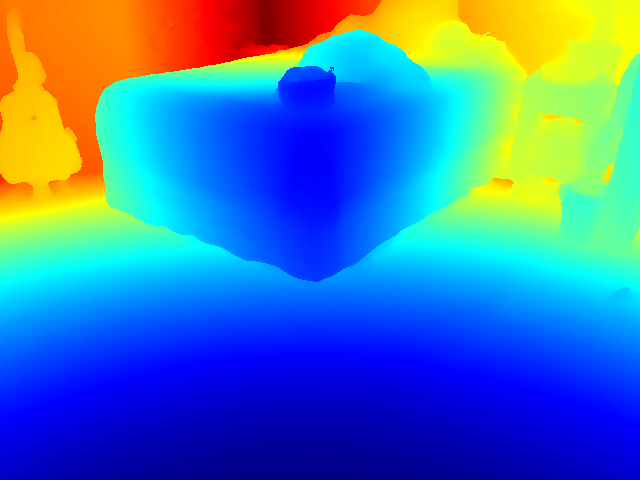}
    \end{subfigure}
    \begin{subfigure}[t]{0.16\linewidth}
        \centering
        \includegraphics[width=\linewidth]{fig/sddf_results/scannet-0000_00-rgbd-08-rgb.jpg}
    \end{subfigure}
    \begin{subfigure}[t]{0.16\linewidth}
        \centering
        \includegraphics[width=\linewidth]{fig/sddf_results/scannet-0000_00-rgbd-08-sddf.png}
    \end{subfigure}
    \begin{subfigure}[t]{0.16\linewidth}
        \centering
        \includegraphics[width=\linewidth]{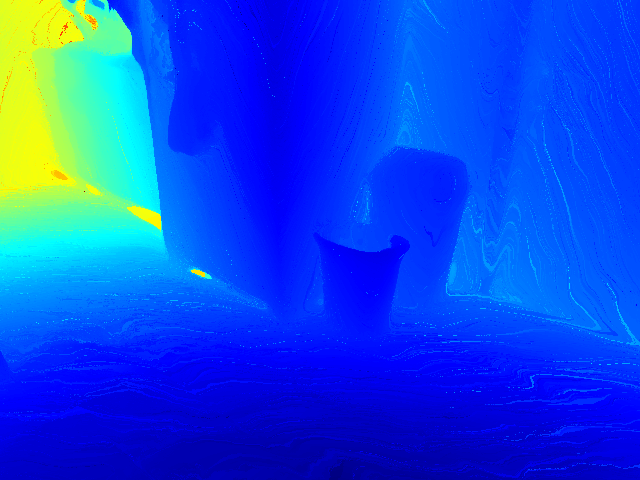}
    \end{subfigure}
    \begin{subfigure}[t]{0.16\linewidth}
        \centering
        \includegraphics[width=\linewidth]{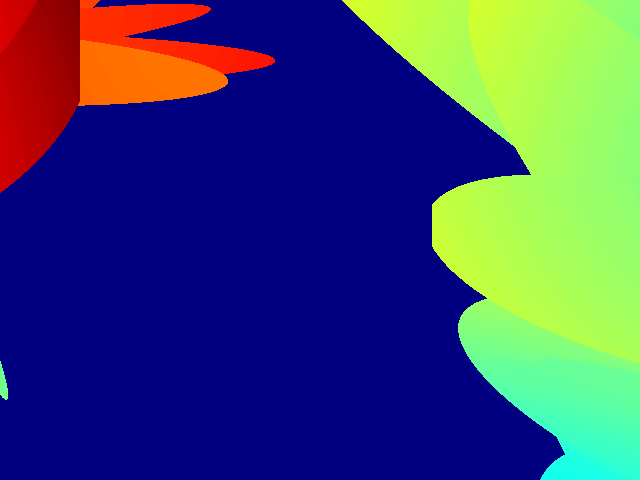}
    \end{subfigure}
    \begin{subfigure}[t]{0.16\linewidth}
        \centering
        \includegraphics[width=\linewidth]{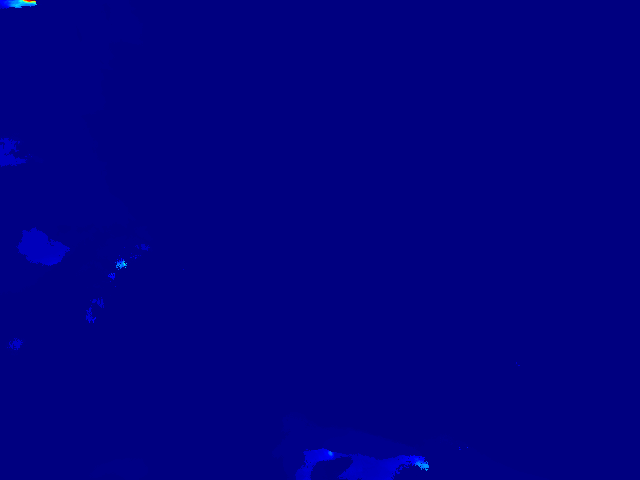}
    \end{subfigure}
    \begin{subfigure}[t]{0.16\linewidth}
        \centering
        \includegraphics[width=\linewidth]{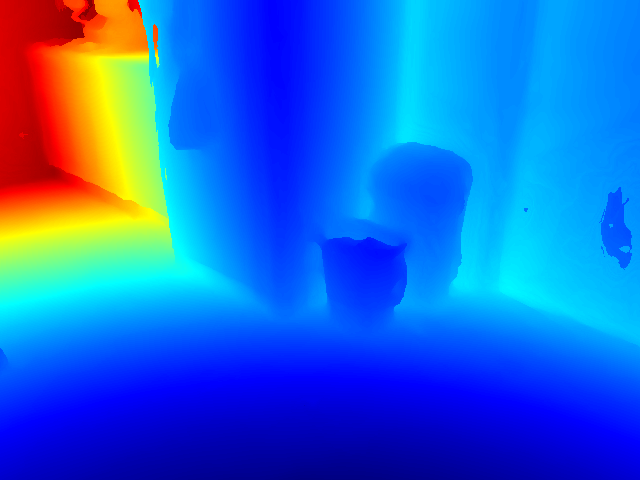}
    \end{subfigure}
    \begin{subfigure}[t]{0.16\linewidth}
        \centering
        \includegraphics[width=\linewidth]{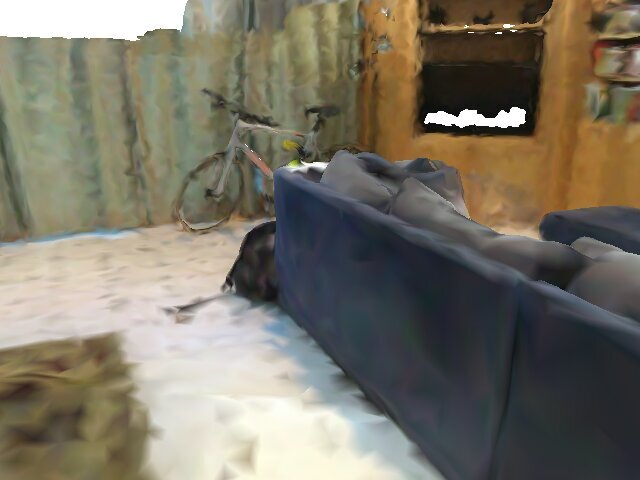}
    \end{subfigure}
    \begin{subfigure}[t]{0.16\linewidth}
        \centering
        \includegraphics[width=\linewidth]{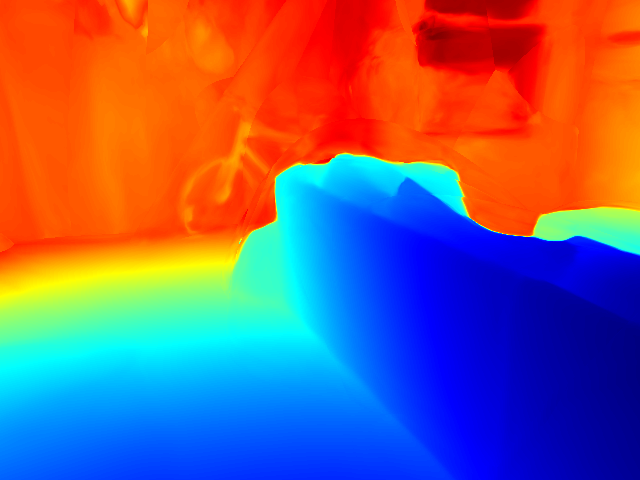}
    \end{subfigure}
    \begin{subfigure}[t]{0.16\linewidth}
        \centering
        \includegraphics[width=\linewidth]{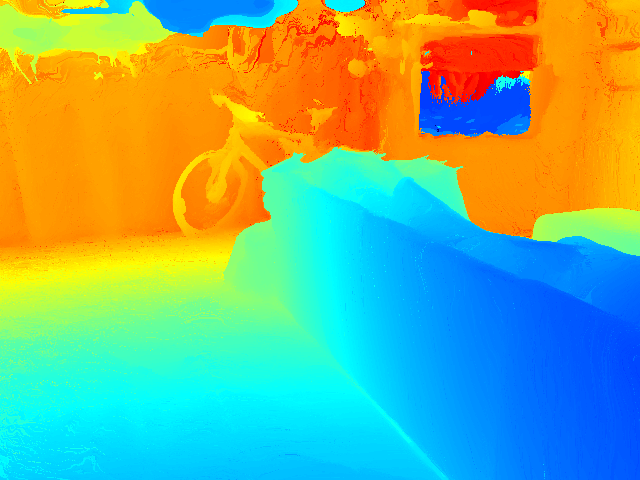}
    \end{subfigure}
    \begin{subfigure}[t]{0.16\linewidth}
        \centering
        \includegraphics[width=\linewidth]{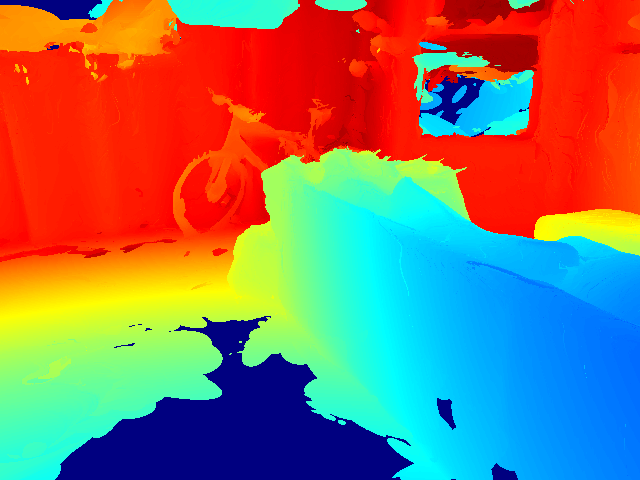}
    \end{subfigure}
    \begin{subfigure}[t]{0.16\linewidth}
        \centering
        \includegraphics[width=\linewidth]{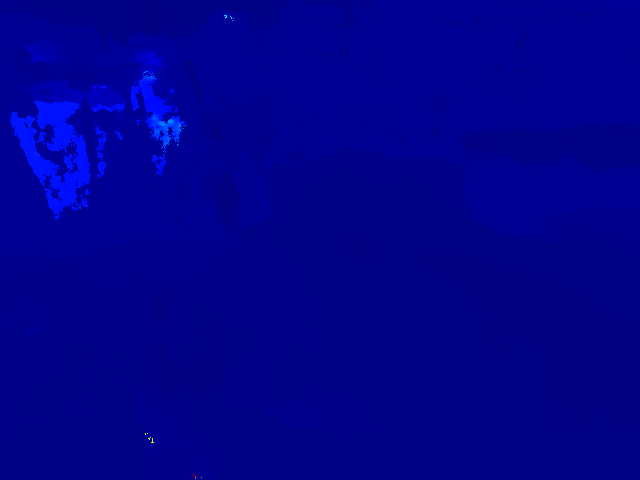}
    \end{subfigure}
    \begin{subfigure}[t]{0.16\linewidth}
        \centering
        \includegraphics[width=\linewidth]{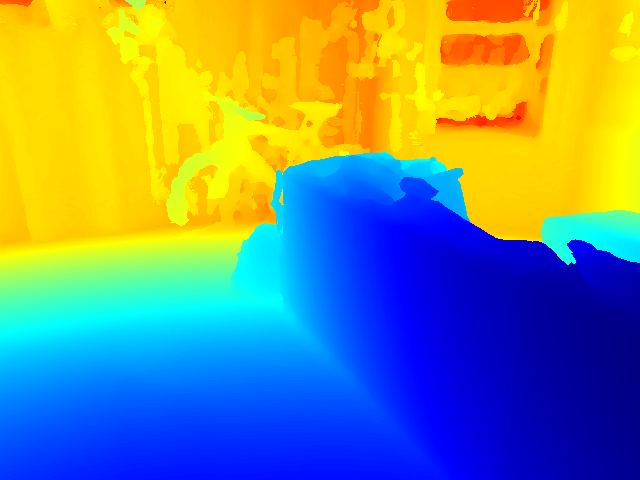}
    \end{subfigure}
    \begin{subfigure}[t]{0.16\linewidth}
        \centering
        \includegraphics[width=\linewidth]{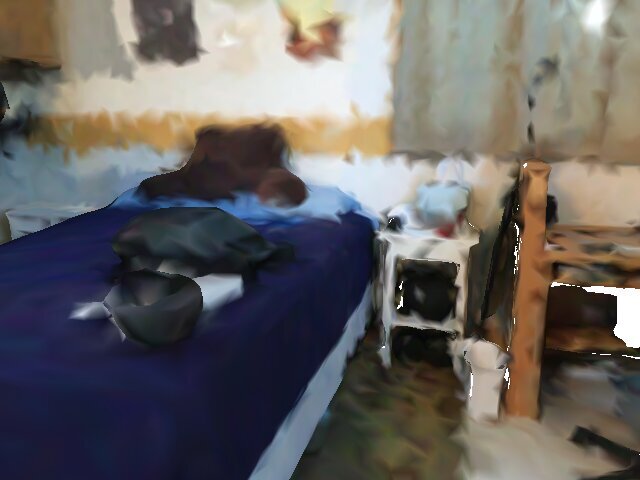}
    \end{subfigure}
    \begin{subfigure}[t]{0.16\linewidth}
        \centering
        \includegraphics[width=\linewidth]{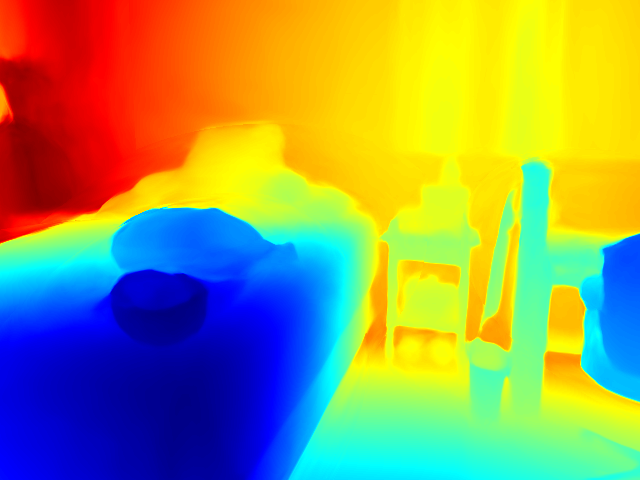}
    \end{subfigure}
    \begin{subfigure}[t]{0.16\linewidth}
        \centering
        \includegraphics[width=\linewidth]{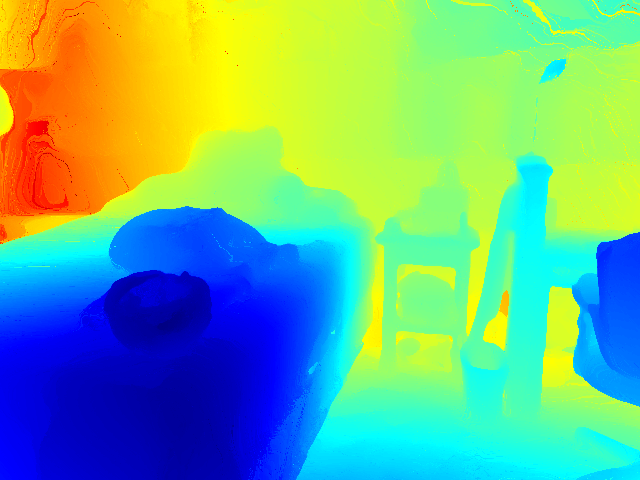}
    \end{subfigure}
    \begin{subfigure}[t]{0.16\linewidth}
        \centering
        \includegraphics[width=\linewidth]{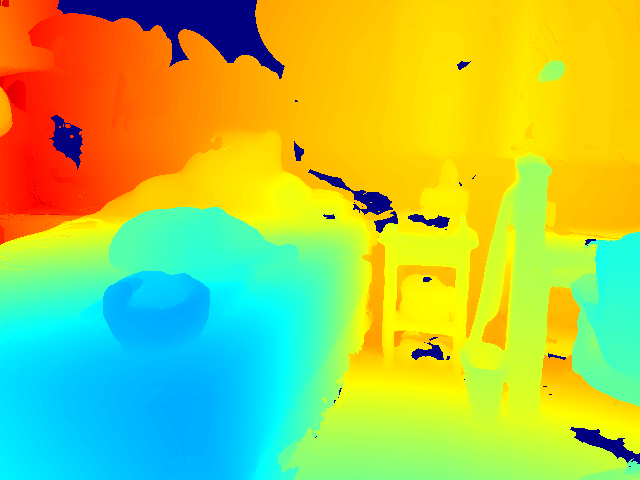}
    \end{subfigure}
    \begin{subfigure}[t]{0.16\linewidth}
        \centering
        \includegraphics[width=\linewidth]{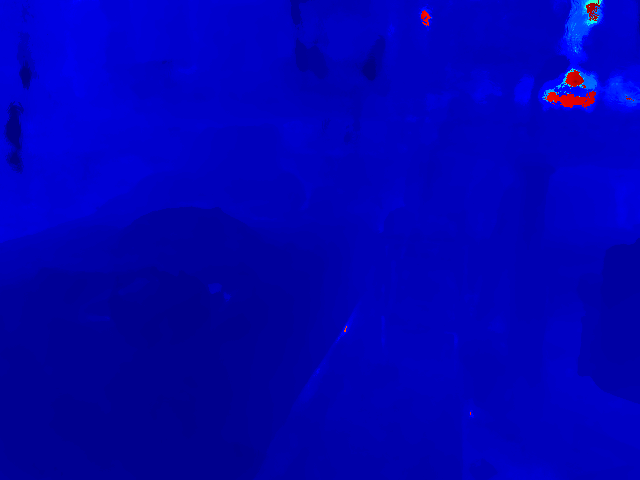}
    \end{subfigure}
    \begin{subfigure}[t]{0.16\linewidth}
        \centering
        \includegraphics[width=\linewidth]{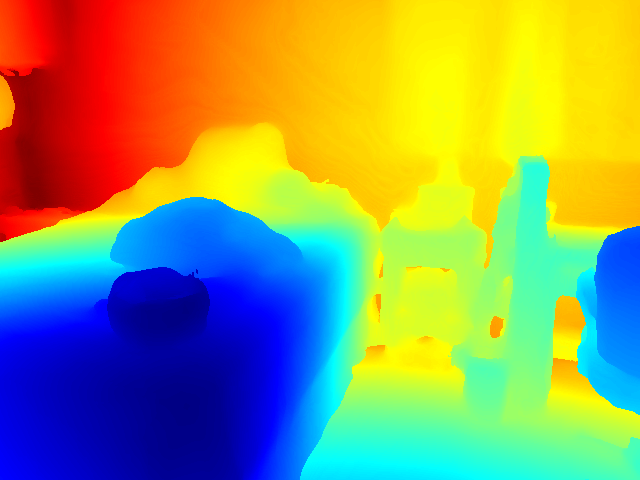}
    \end{subfigure}
    \begin{subfigure}[t]{0.16\linewidth}
        \centering
        \includegraphics[width=\linewidth]{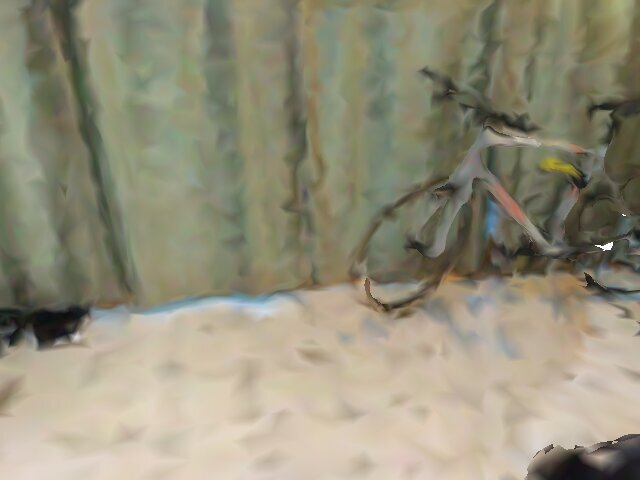}
    \end{subfigure}
    \begin{subfigure}[t]{0.16\linewidth}
        \centering
        \includegraphics[width=\linewidth]{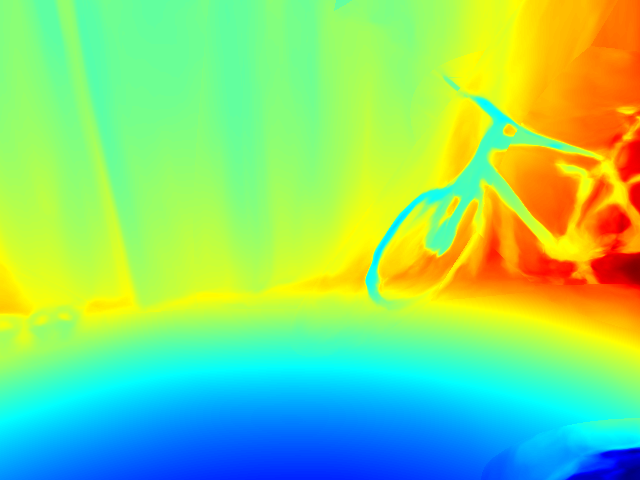}
    \end{subfigure}
    \begin{subfigure}[t]{0.16\linewidth}
        \centering
        \includegraphics[width=\linewidth]{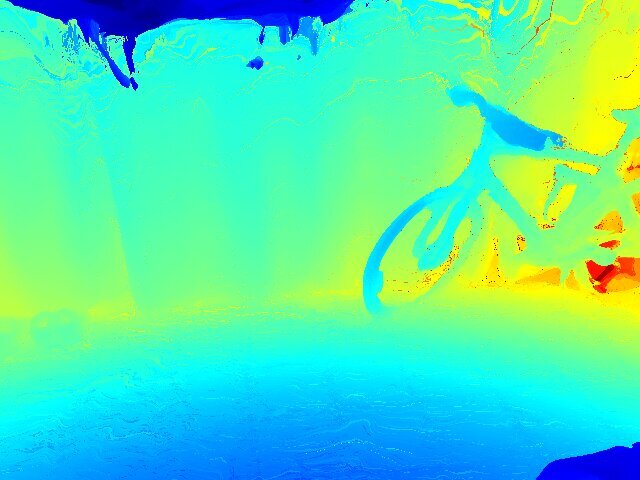}
    \end{subfigure}
    \begin{subfigure}[t]{0.16\linewidth}
        \centering
        \includegraphics[width=\linewidth]{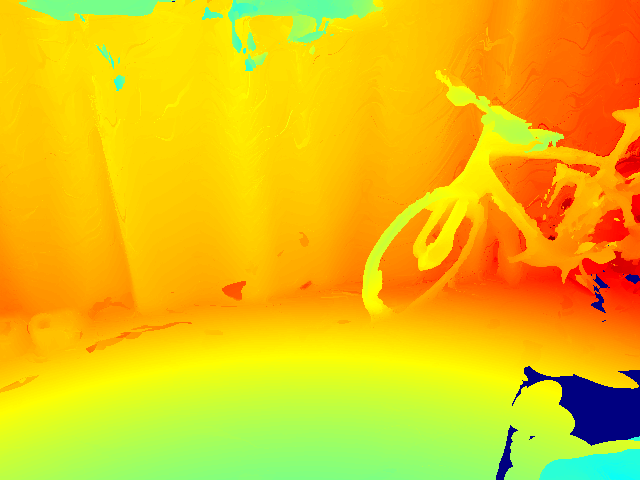}
    \end{subfigure}
    \begin{subfigure}[t]{0.16\linewidth}
        \centering
        \includegraphics[width=\linewidth]{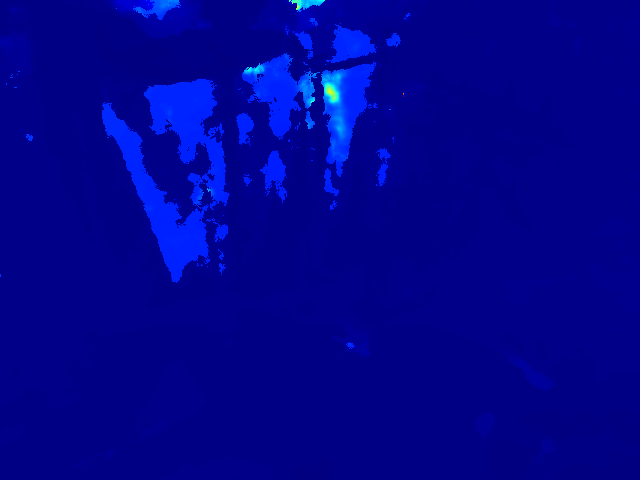}
    \end{subfigure}
    \begin{subfigure}[t]{0.16\linewidth}
        \centering
        \includegraphics[width=\linewidth]{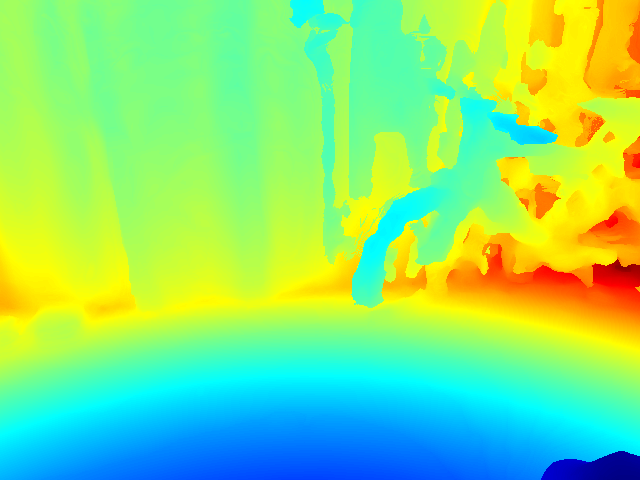}
    \end{subfigure}
    \begin{subfigure}[t]{0.16\linewidth}
        \centering
        \includegraphics[width=\linewidth]{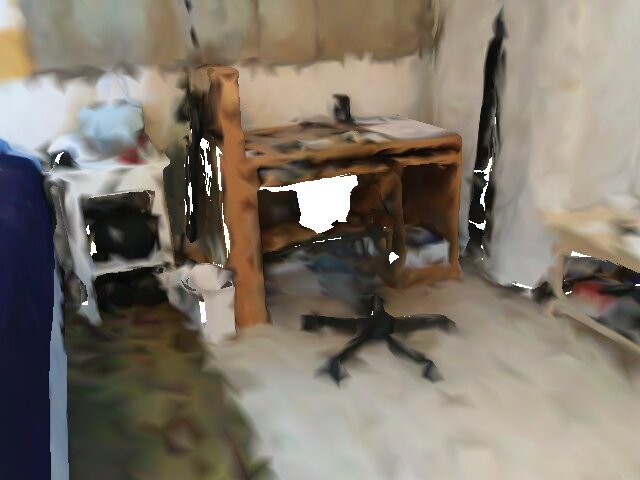}
    \end{subfigure}
    \begin{subfigure}[t]{0.16\linewidth}
        \centering
        \includegraphics[width=\linewidth]{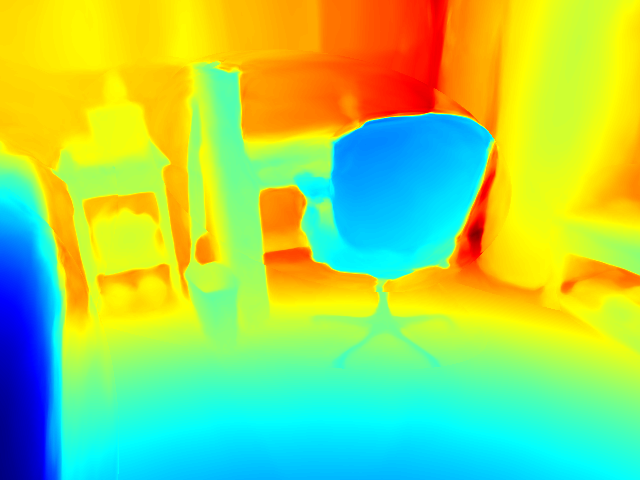}
    \end{subfigure}
    \begin{subfigure}[t]{0.16\linewidth}
        \centering
        \includegraphics[width=\linewidth]{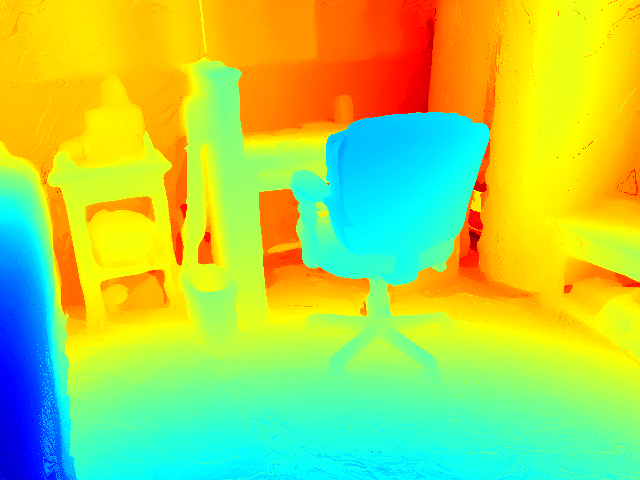}
    \end{subfigure}
    \begin{subfigure}[t]{0.16\linewidth}
        \centering
        \includegraphics[width=\linewidth]{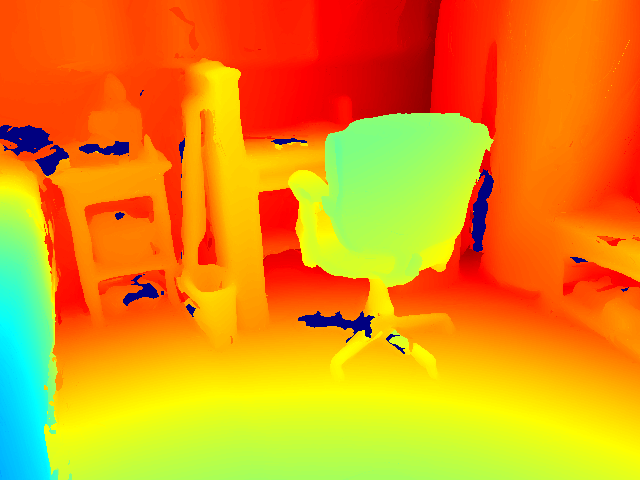}
    \end{subfigure}
    \begin{subfigure}[t]{0.16\linewidth}
        \centering
        \includegraphics[width=\linewidth]{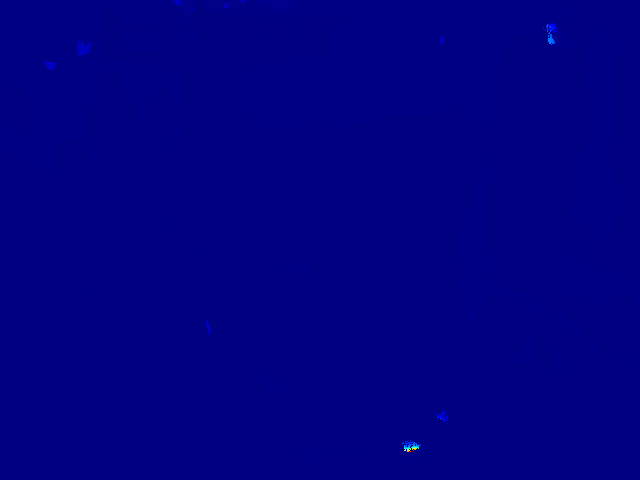}
    \end{subfigure}
    \begin{subfigure}[t]{0.16\linewidth}
        \centering
        \includegraphics[width=\linewidth]{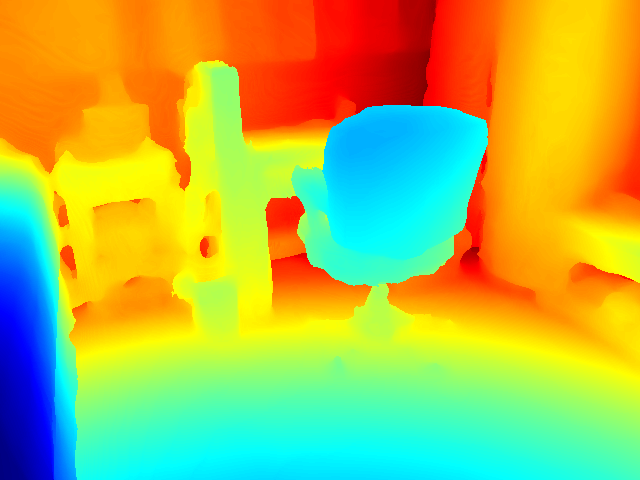}
    \end{subfigure}
    \begin{subfigure}[t]{0.16\linewidth}
        \centering
        \includegraphics[width=\linewidth]{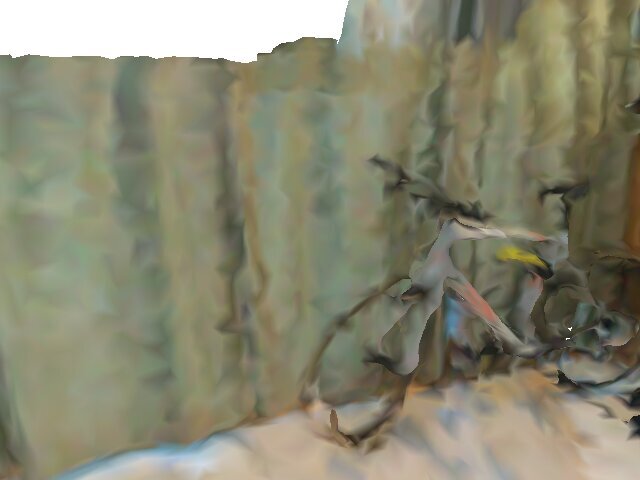}
        \caption{Scene}
    \end{subfigure}
    \begin{subfigure}[t]{0.16\linewidth}
        \centering
        \includegraphics[width=\linewidth]{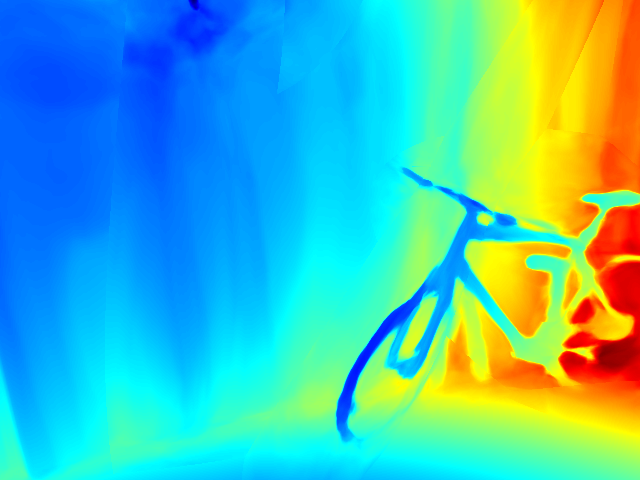}
        \caption{SDDF (ours)}
    \end{subfigure}
    \begin{subfigure}[t]{0.16\linewidth}
        \centering
        \includegraphics[width=\linewidth]{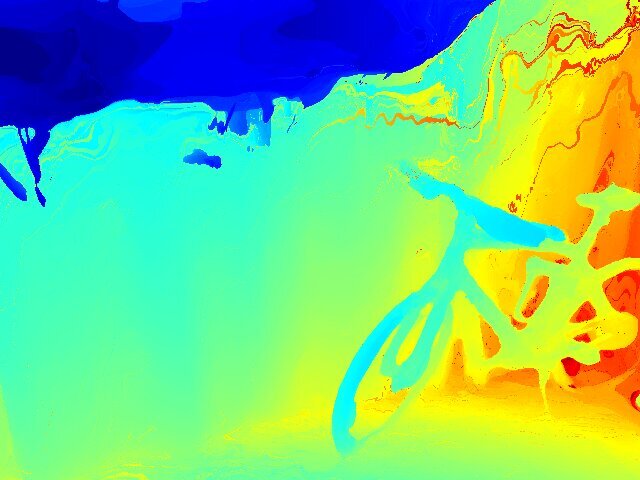}
        \caption{RaDe-GS~\cite{radegs2024}}
    \end{subfigure}
    \begin{subfigure}[t]{0.16\linewidth}
        \centering
        \includegraphics[width=\linewidth]{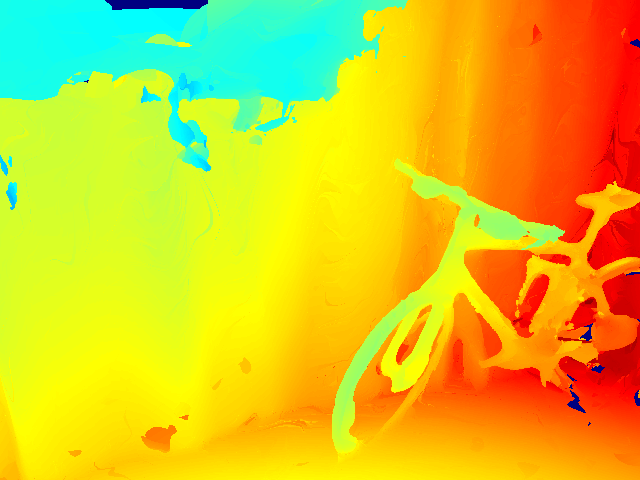}
        \caption{w/o RGB}
    \end{subfigure}
    \begin{subfigure}[t]{0.16\linewidth}
        \centering
        \includegraphics[width=\linewidth]{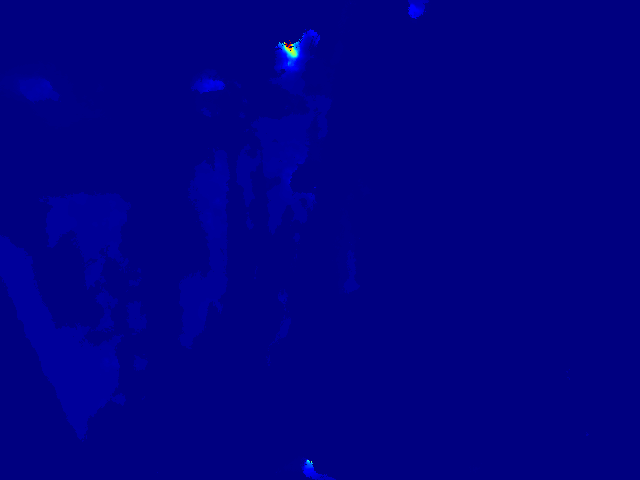}
        \caption{Nerfacto~\cite{nerfstudio2023}}
    \end{subfigure}
    \begin{subfigure}[t]{0.16\linewidth}
        \centering
        \includegraphics[width=\linewidth]{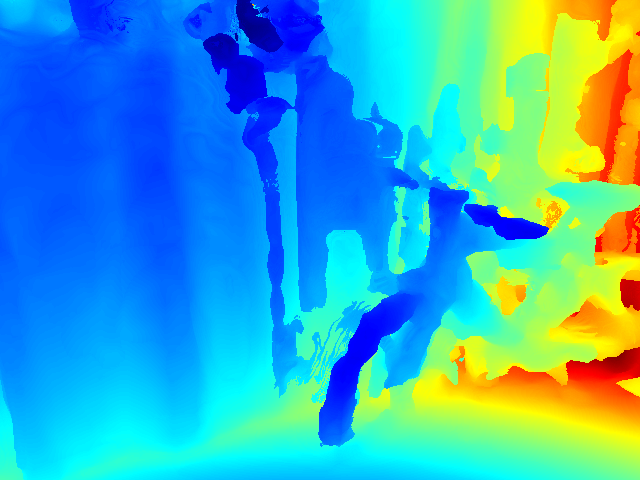}
        \caption{SDF~\cite{instantngp2022}}
    \end{subfigure}
    \caption{More qualitative SDDF prediction comparison between our method and baselines on real data from ScanNet~\cite{dai2017scannet} scene 0000-00. The six columns are the RGB images of the scene, SDDF prediction by our method, RaDe-GS~\cite{radegs2024}, RaDe-GS~\cite{radegs2024} trained without RGB images, Nerfacto~\cite{nerfstudio2023} and \sdfbaseline~\cite{instantngp2022} with sphere tracing~\cite{spheretracing1996}. The results lead to the same conclusion we draw from the synthesized datasets.}
    \label{fig:more_sddf_comp_scannet}
\end{figure*}

\begin{figure*}
\centering
\begin{tabular}{cccc}
    \includegraphics[width=0.2265\linewidth,trim={350pt 10pt 350pt 10pt},clip]{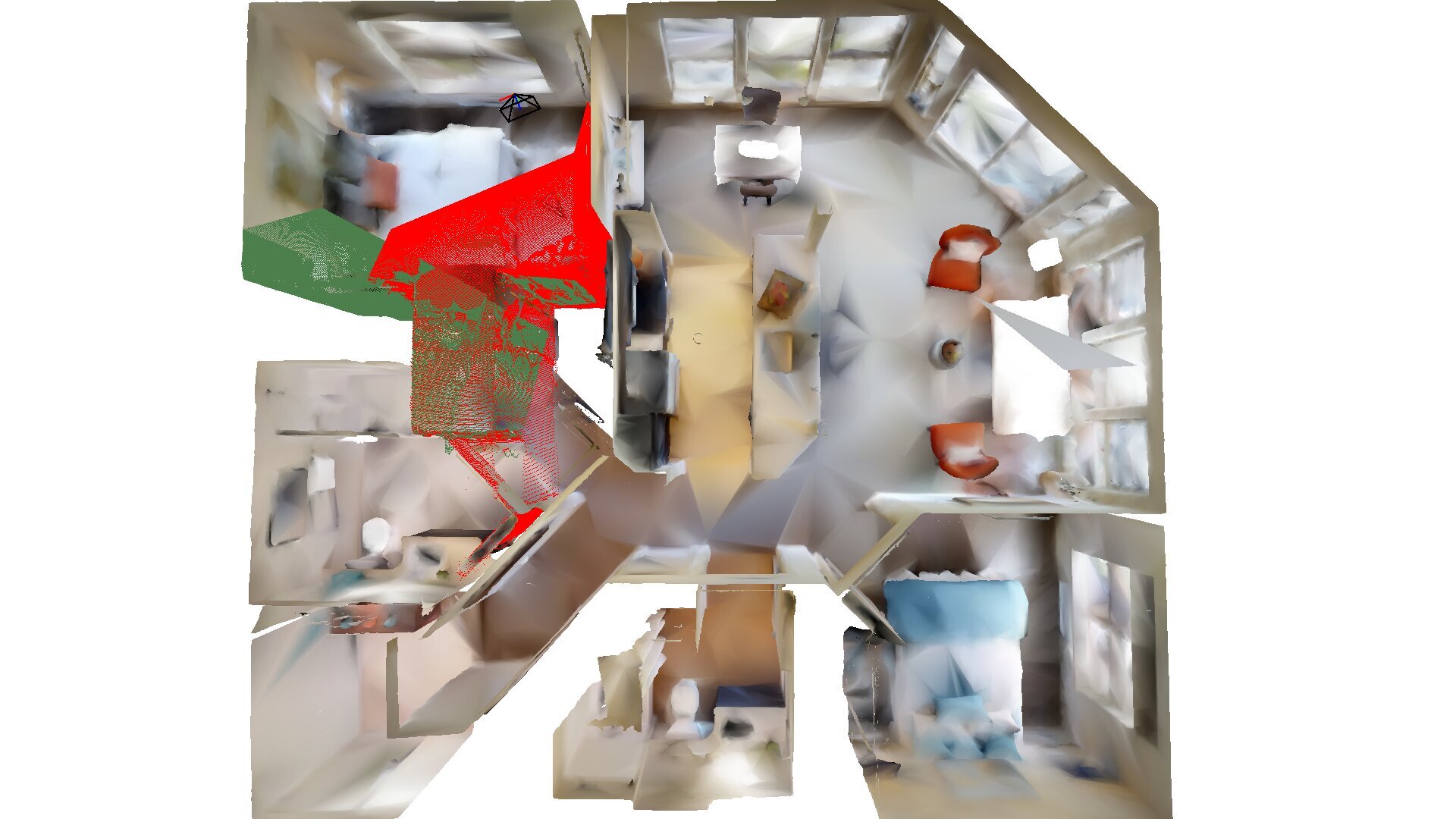} &
    \includegraphics[width=0.2265\linewidth,trim={350pt 10pt 350pt 10pt},clip]{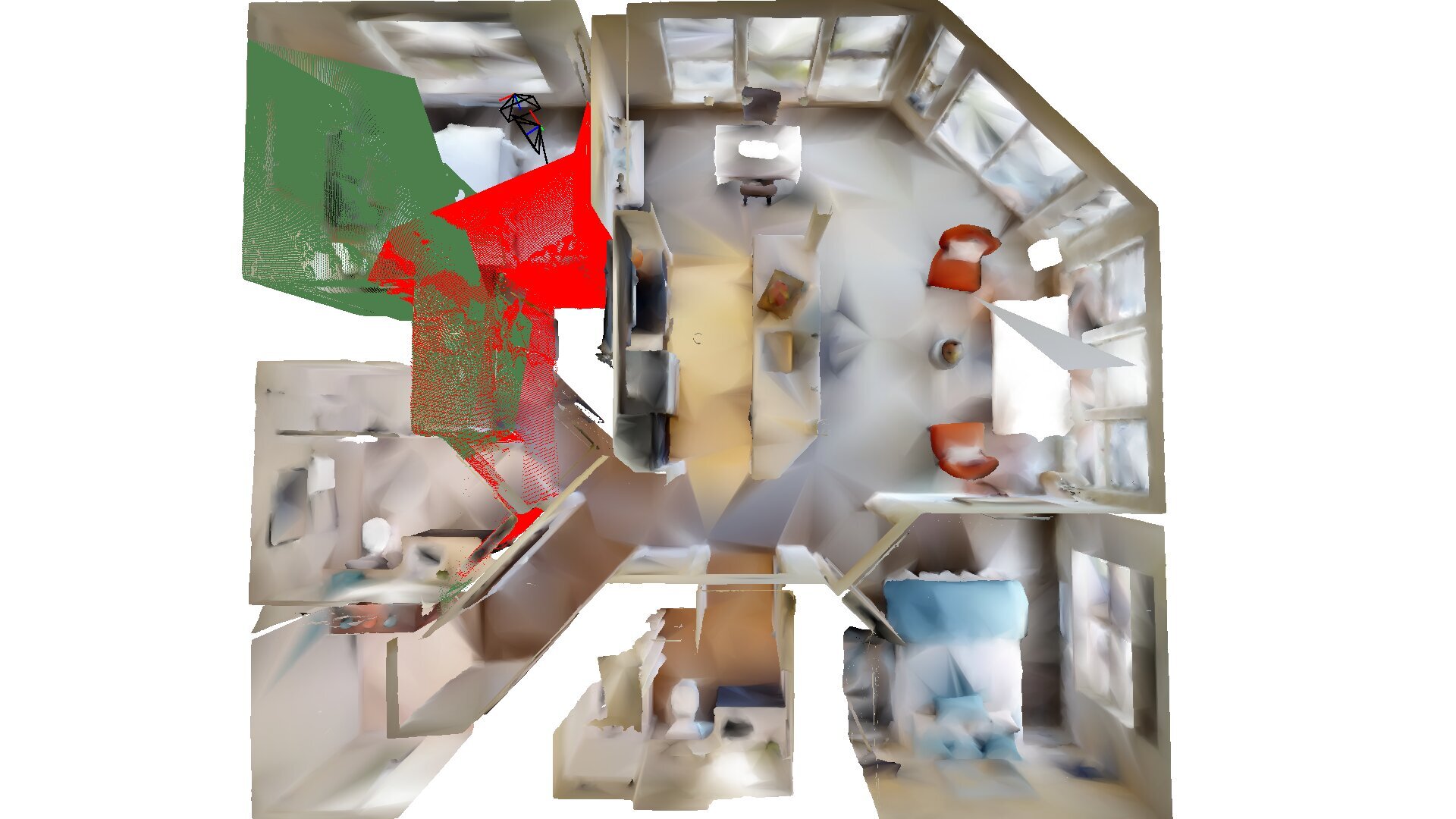} &
    \includegraphics[width=0.2265\linewidth,trim={350pt 10pt 350pt 10pt},clip]{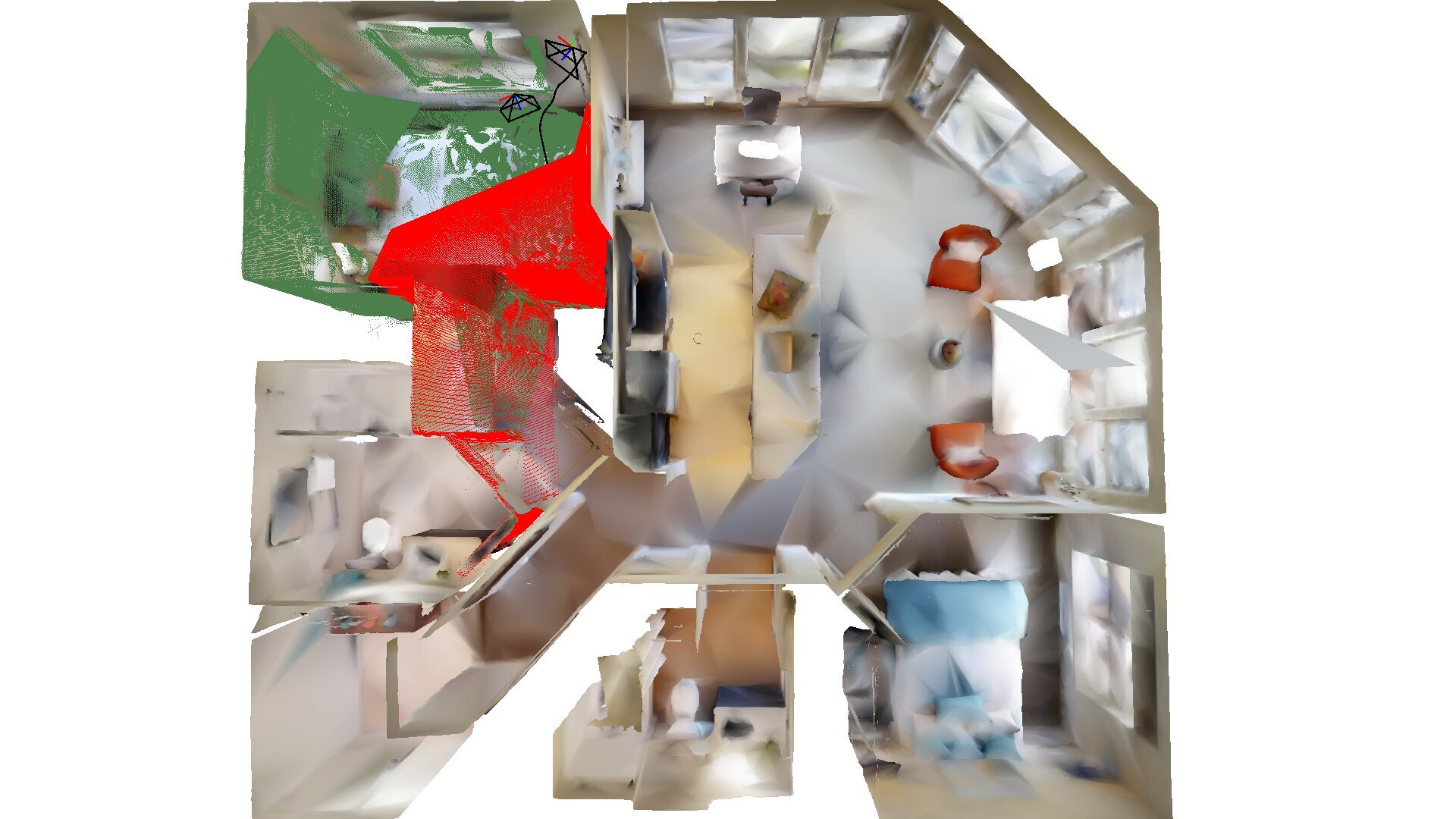} &
    \includegraphics[width=0.2265\linewidth,trim={350pt 10pt 350pt 10pt},clip]{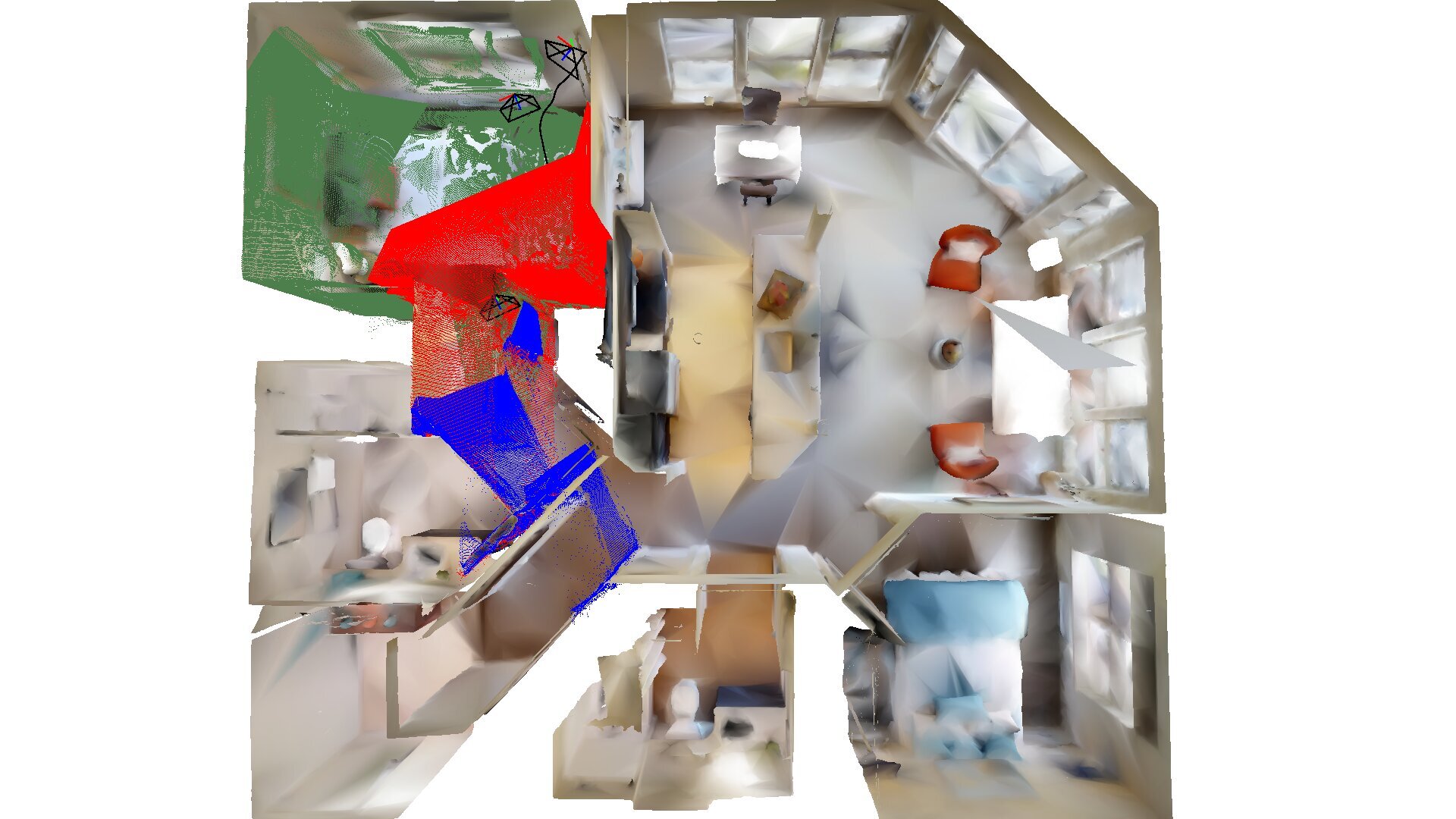} \\
    \includegraphics[width=0.2265\linewidth,trim={350pt 10pt 350pt 10pt},clip]{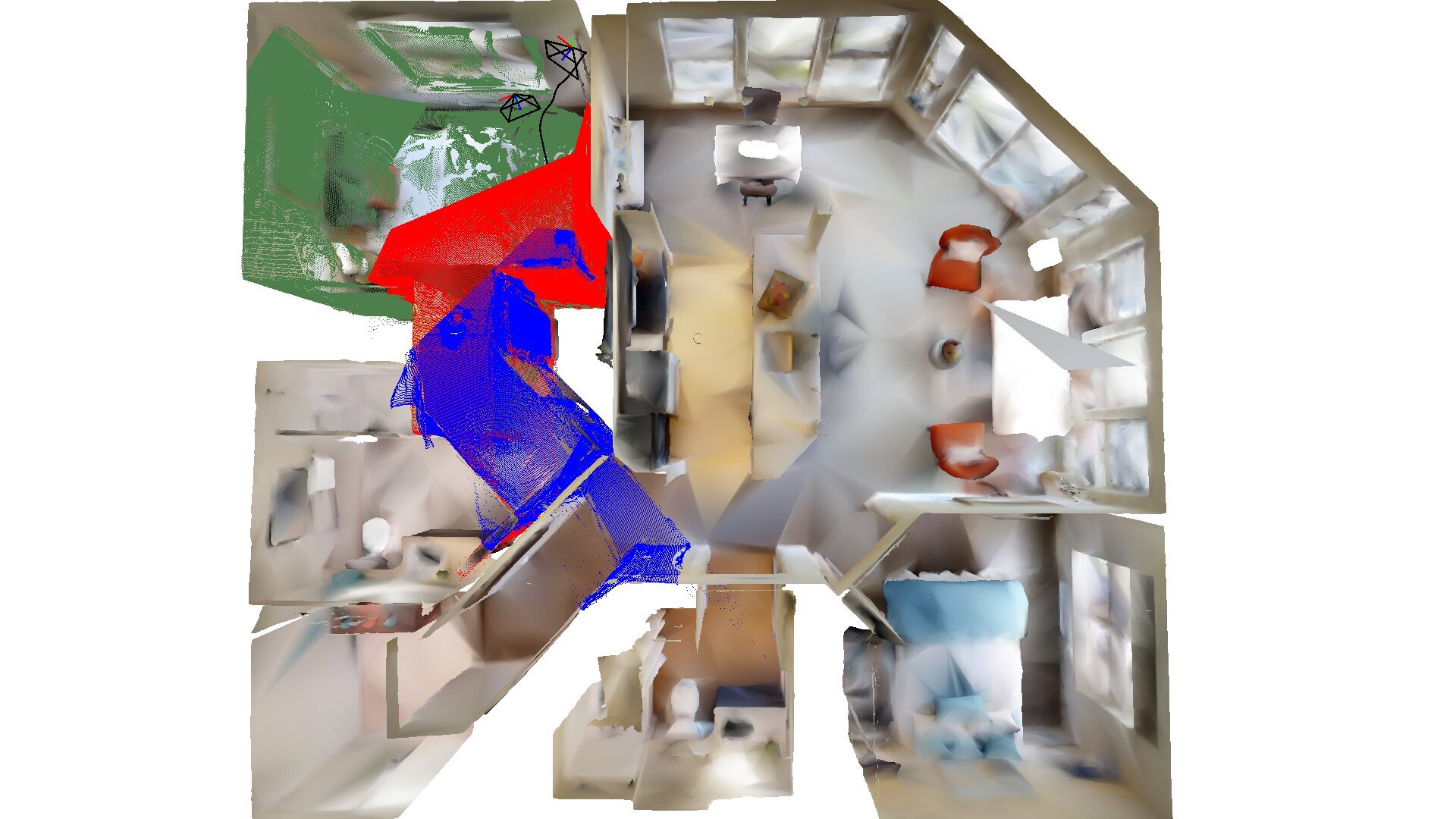} &
    \includegraphics[width=0.2265\linewidth,trim={350pt 10pt 350pt 10pt},clip]{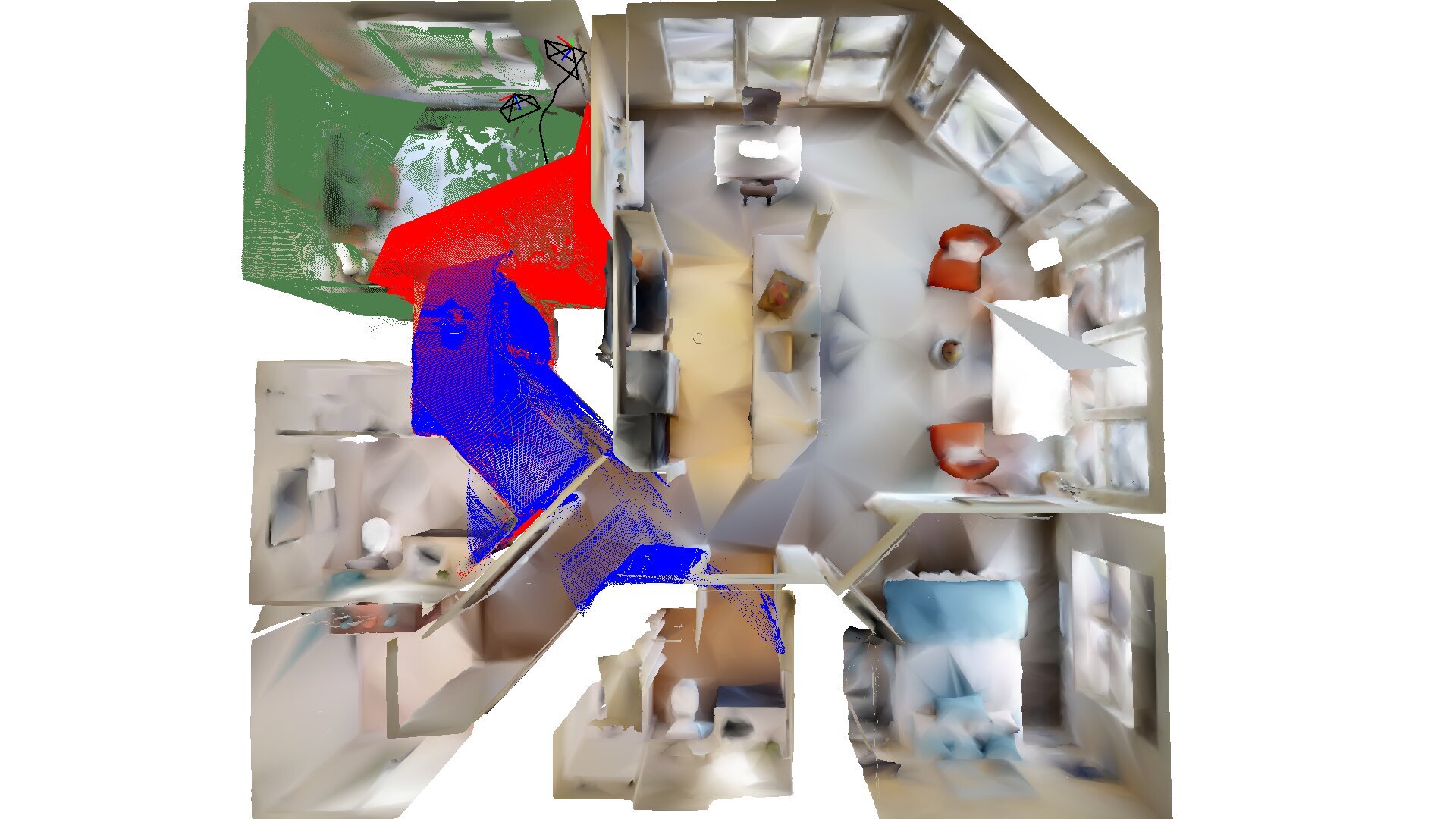} &
    \includegraphics[width=0.2265\linewidth,trim={350pt 10pt 350pt 10pt},clip]{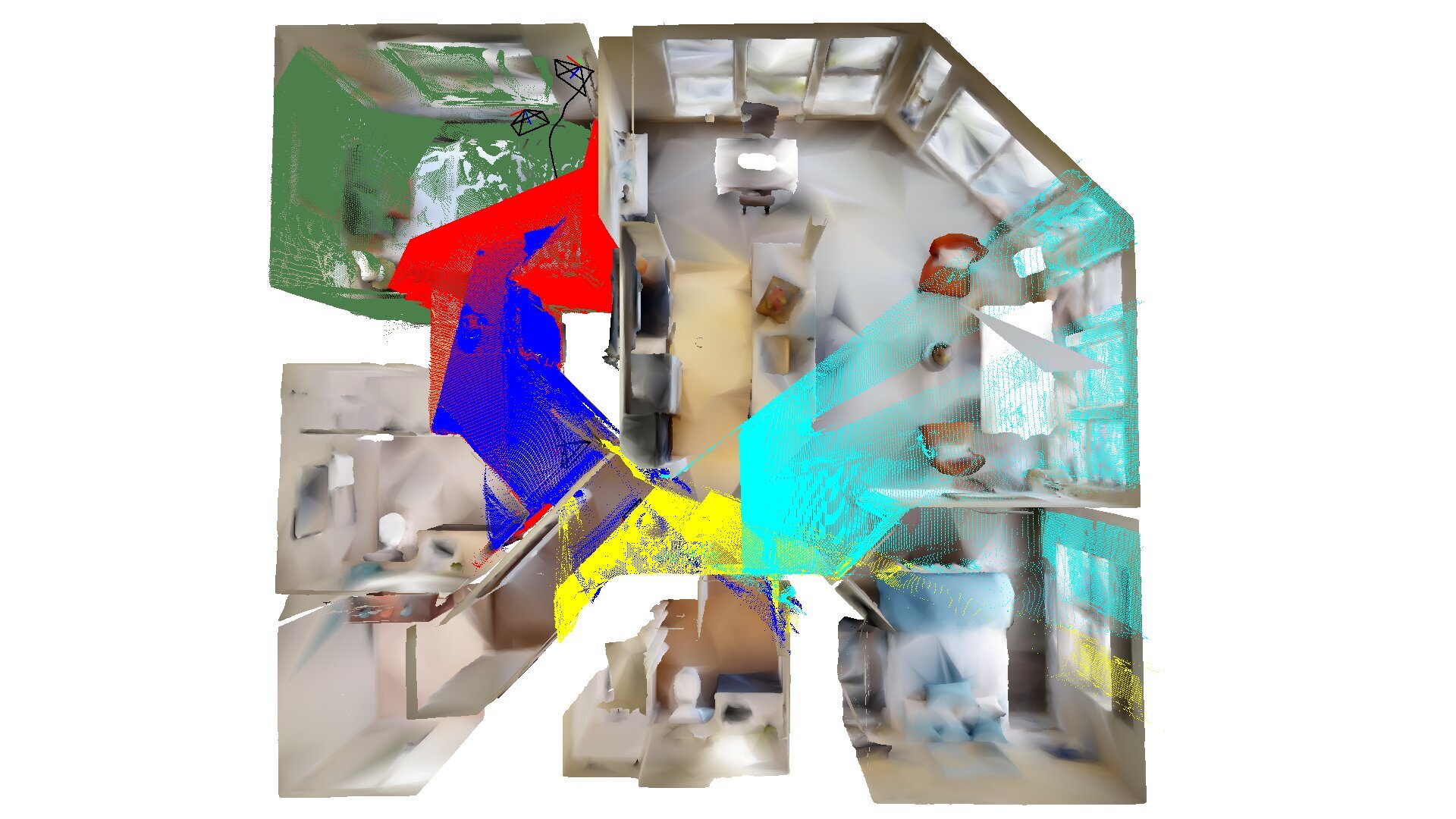} &
    \includegraphics[width=0.2265\linewidth,trim={350pt 10pt 350pt 10pt},clip]{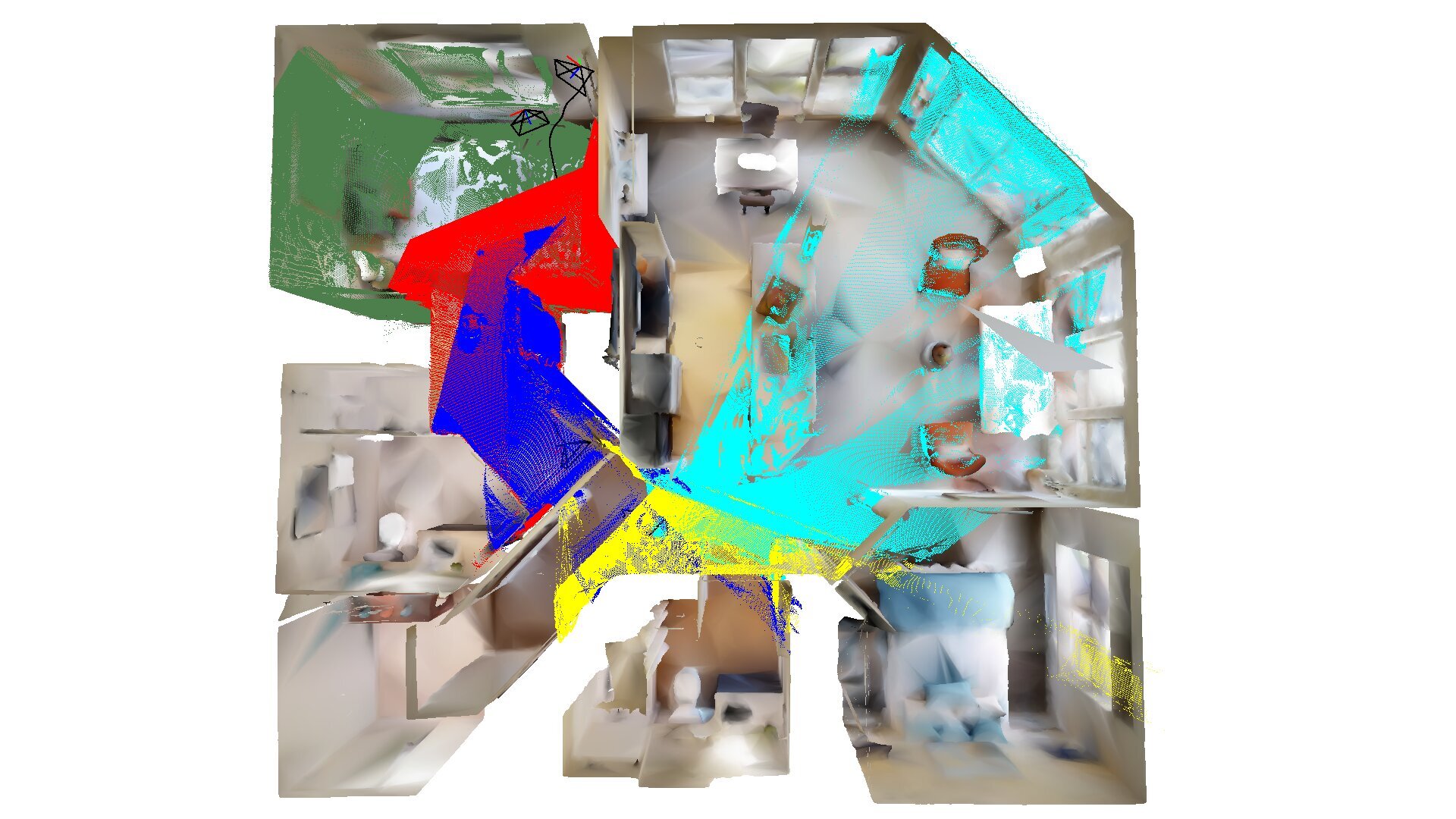} \\
    \includegraphics[width=0.2265\linewidth,trim={350pt 10pt 380pt 40pt},clip]{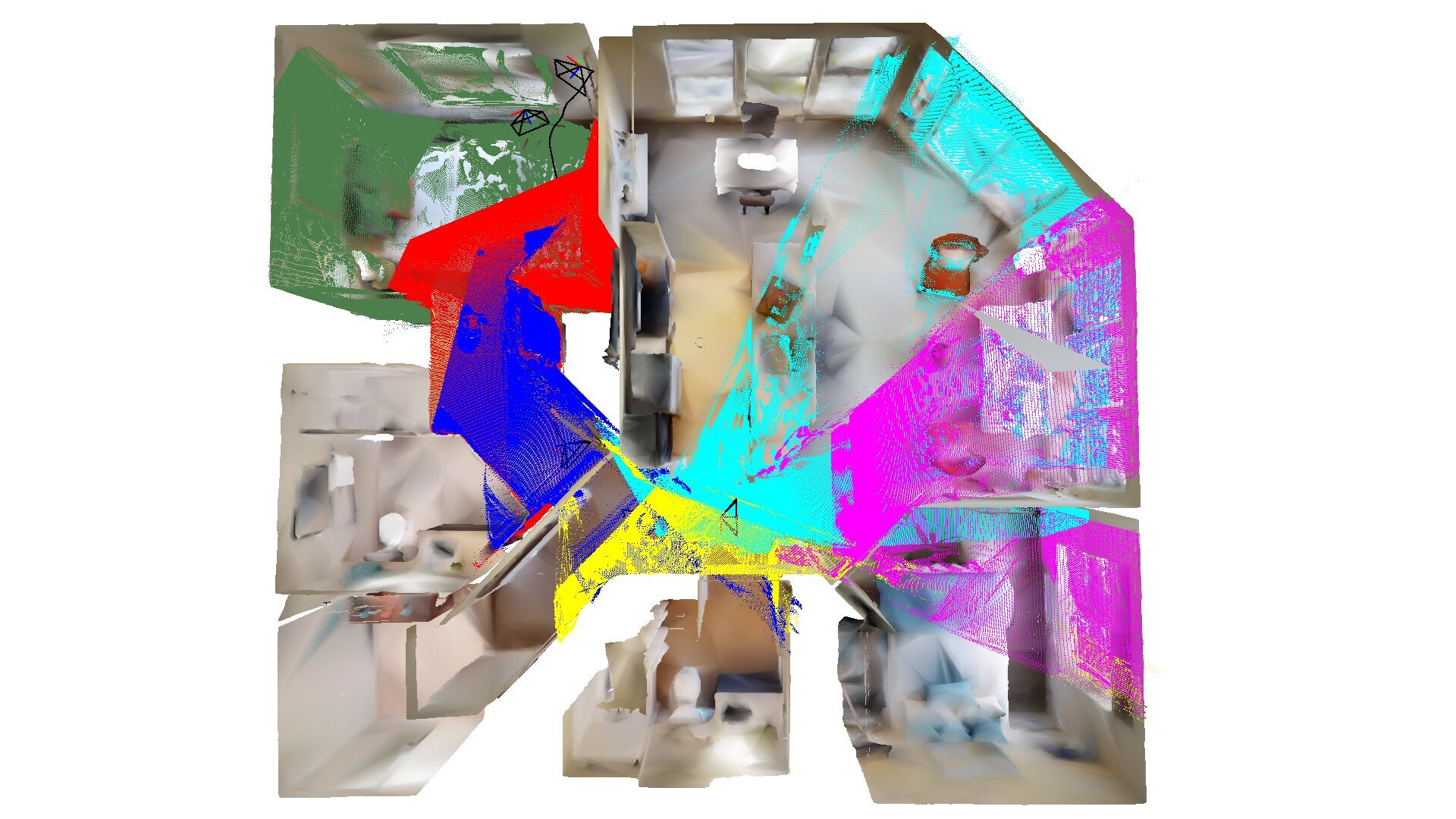} &
    \includegraphics[width=0.2265\linewidth,trim={330pt 10pt 400pt 40pt},clip]{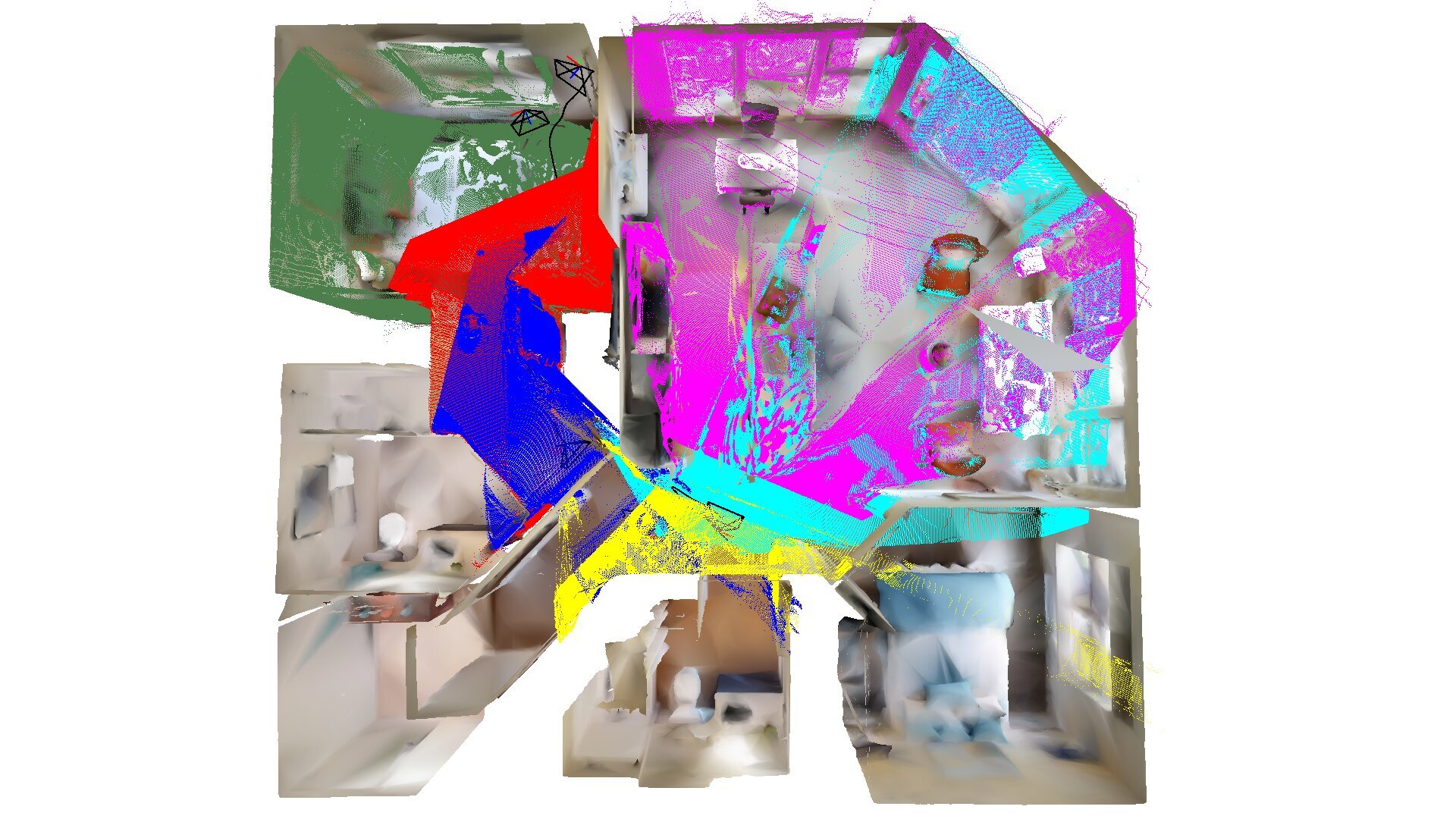} &
    \includegraphics[width=0.2265\linewidth,trim={330pt 10pt 400pt 40pt},clip]{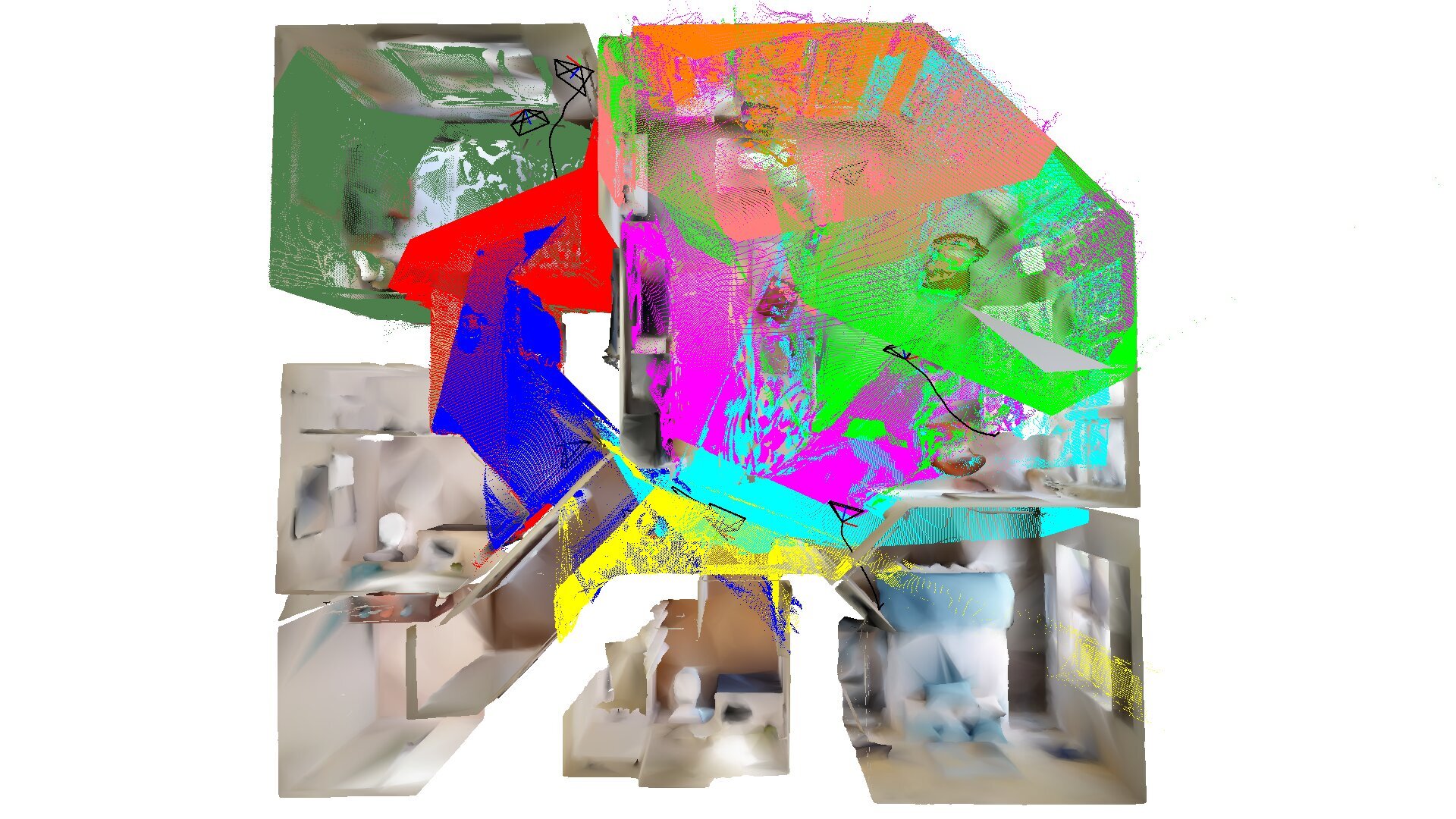} &
    \includegraphics[width=0.2265\linewidth,trim={330pt 10pt 400pt 40pt},clip]{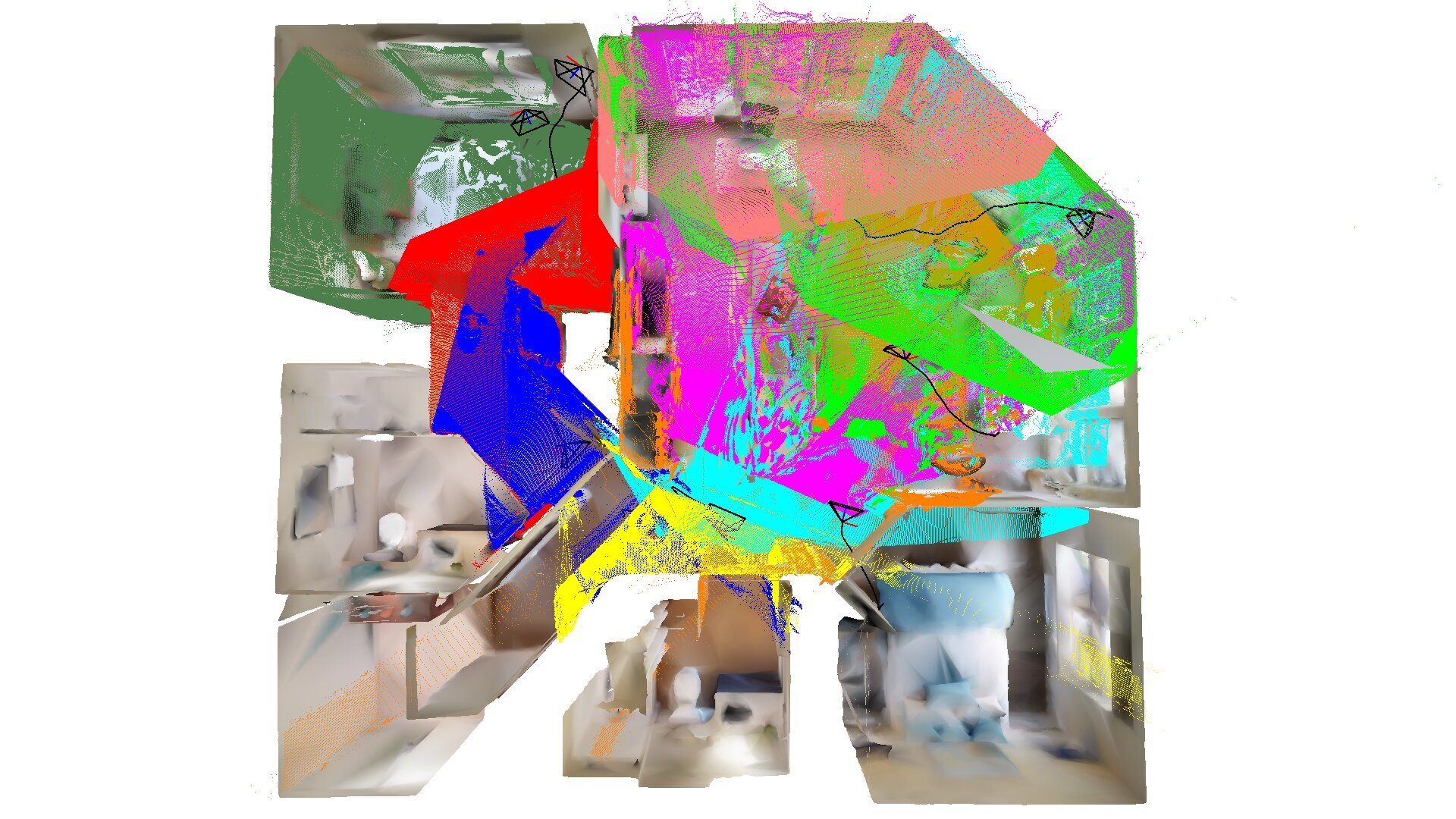}
\end{tabular}
\caption{\small Visualization of differentiable view optimization along a trajectory in the Gibson Allensville scene. The figures show selected frames at different iteration steps of the optimization process.}
\label{fig:traj_opt_gibson_allensville}
\end{figure*}
\begin{figure*}
\centering
\begin{tabular}{cccc}
    \includegraphics[width=0.2265\linewidth,trim={0pt 40pt 0pt 40pt},clip]{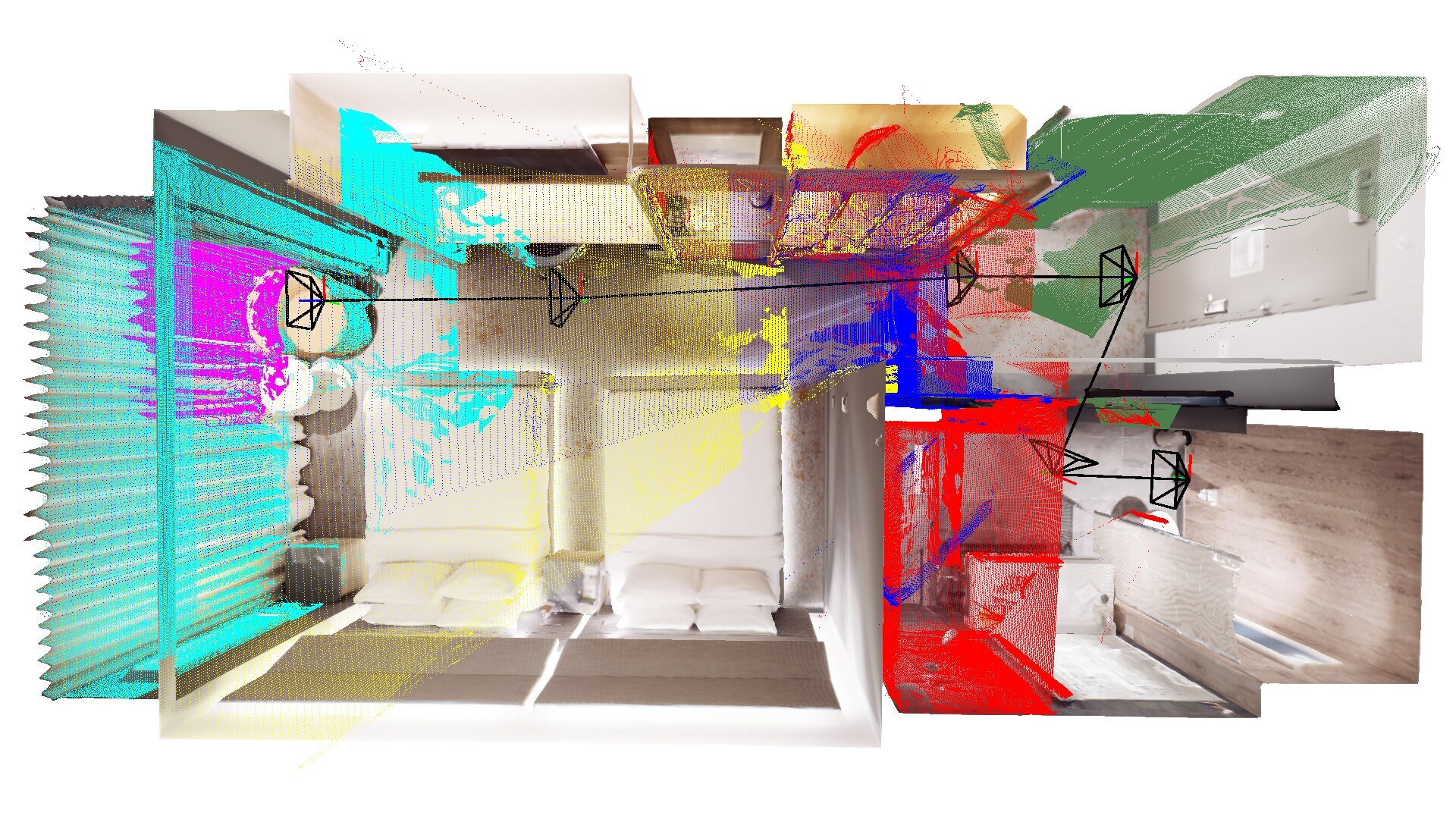} &
    \includegraphics[width=0.2265\linewidth,trim={0pt 40pt 0pt 40pt},clip]{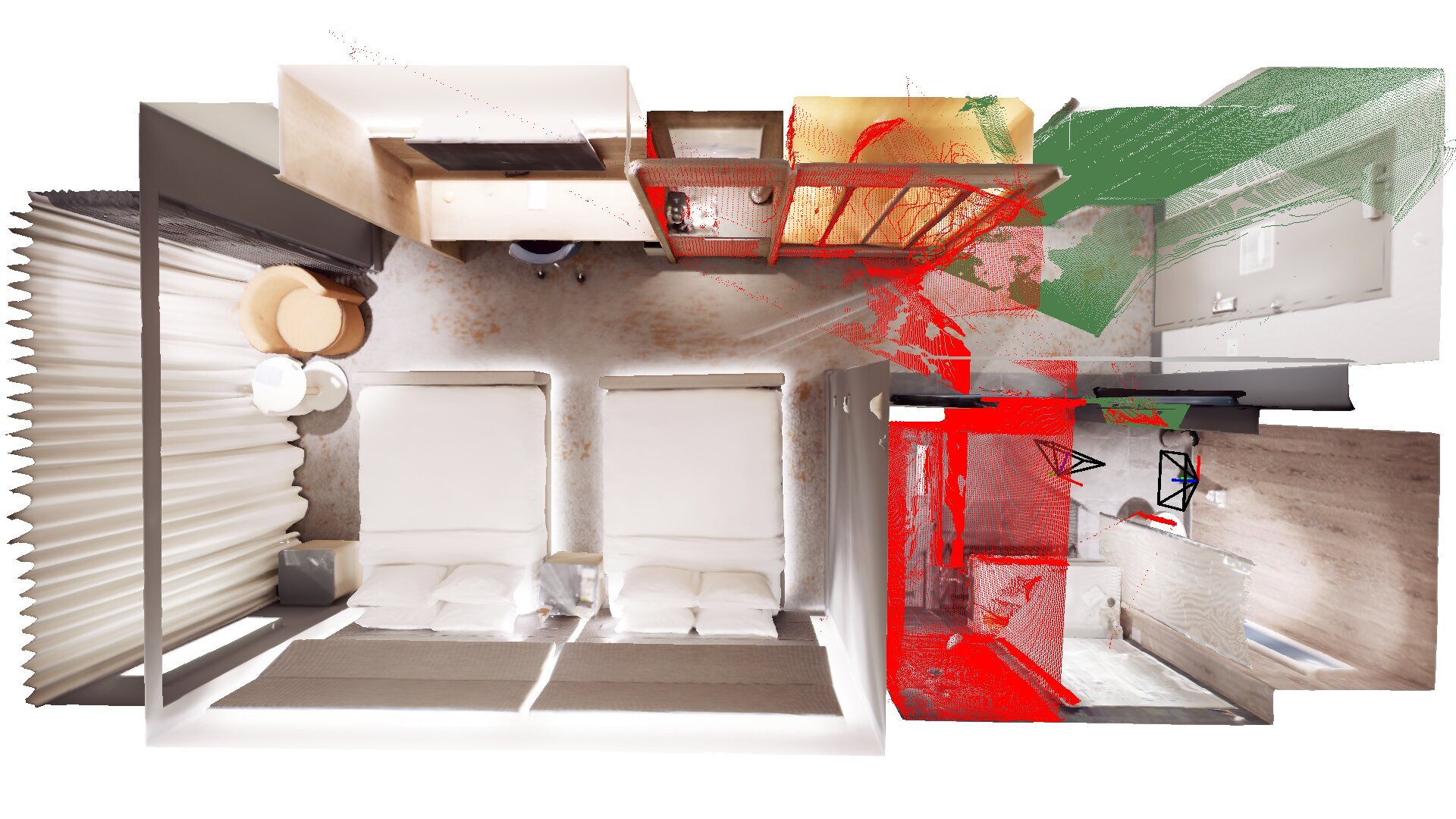} &
    \includegraphics[width=0.2265\linewidth,trim={0pt 40pt 0pt 40pt},clip]{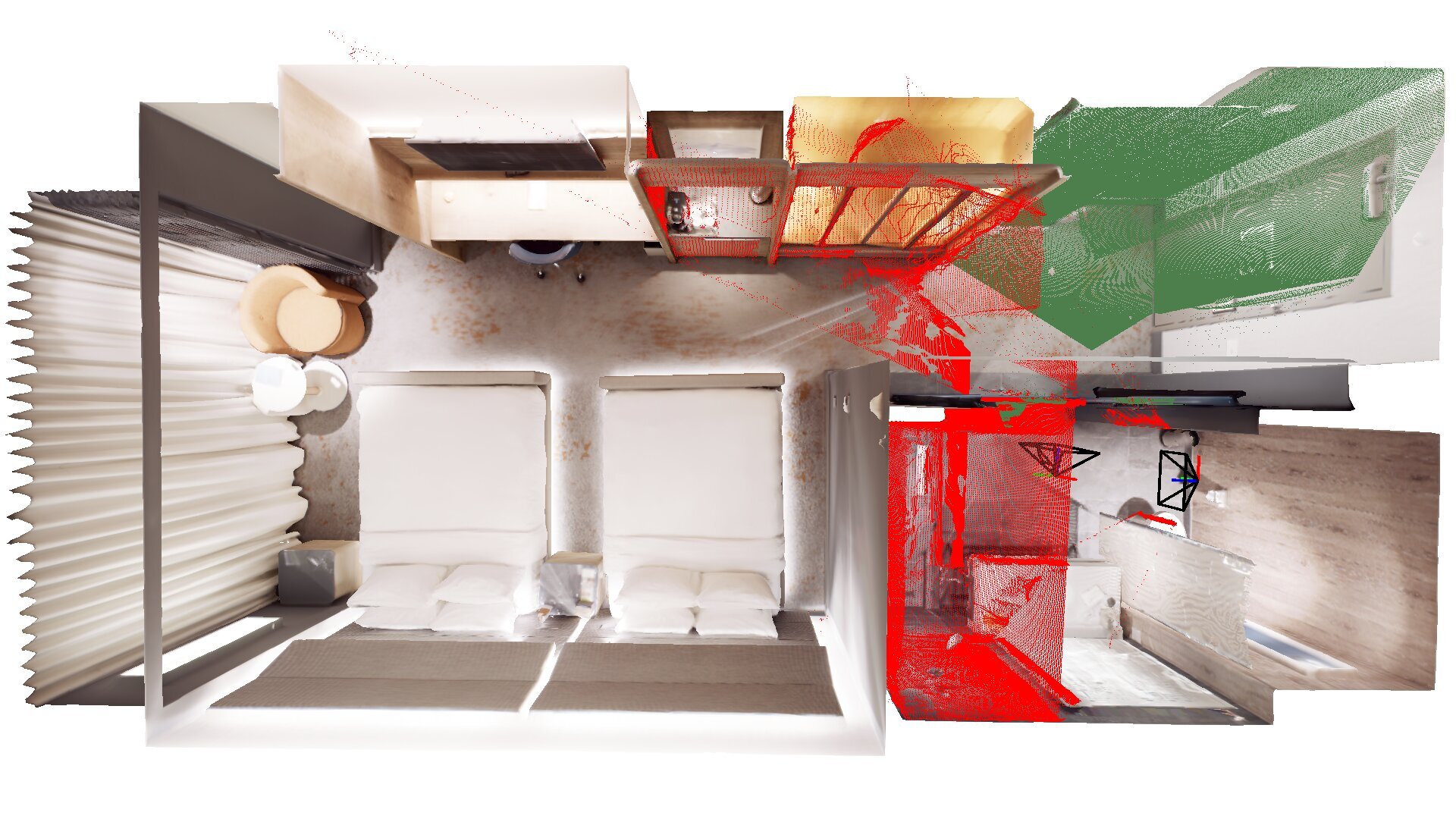} &
    \includegraphics[width=0.2265\linewidth,trim={0pt 40pt 0pt 40pt},clip]{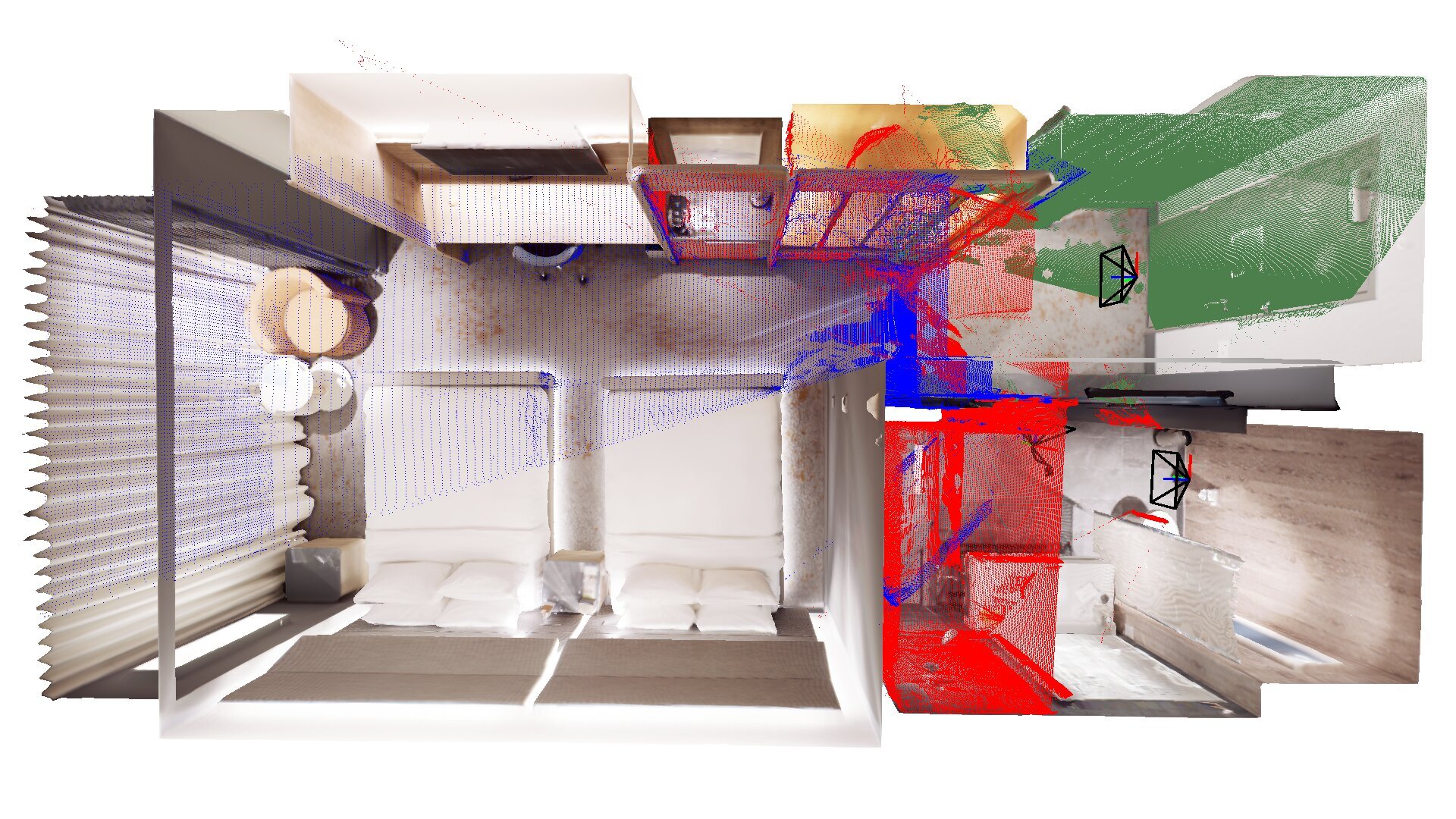} \\
    \includegraphics[width=0.2265\linewidth,trim={0pt 40pt 0pt 40pt},clip]{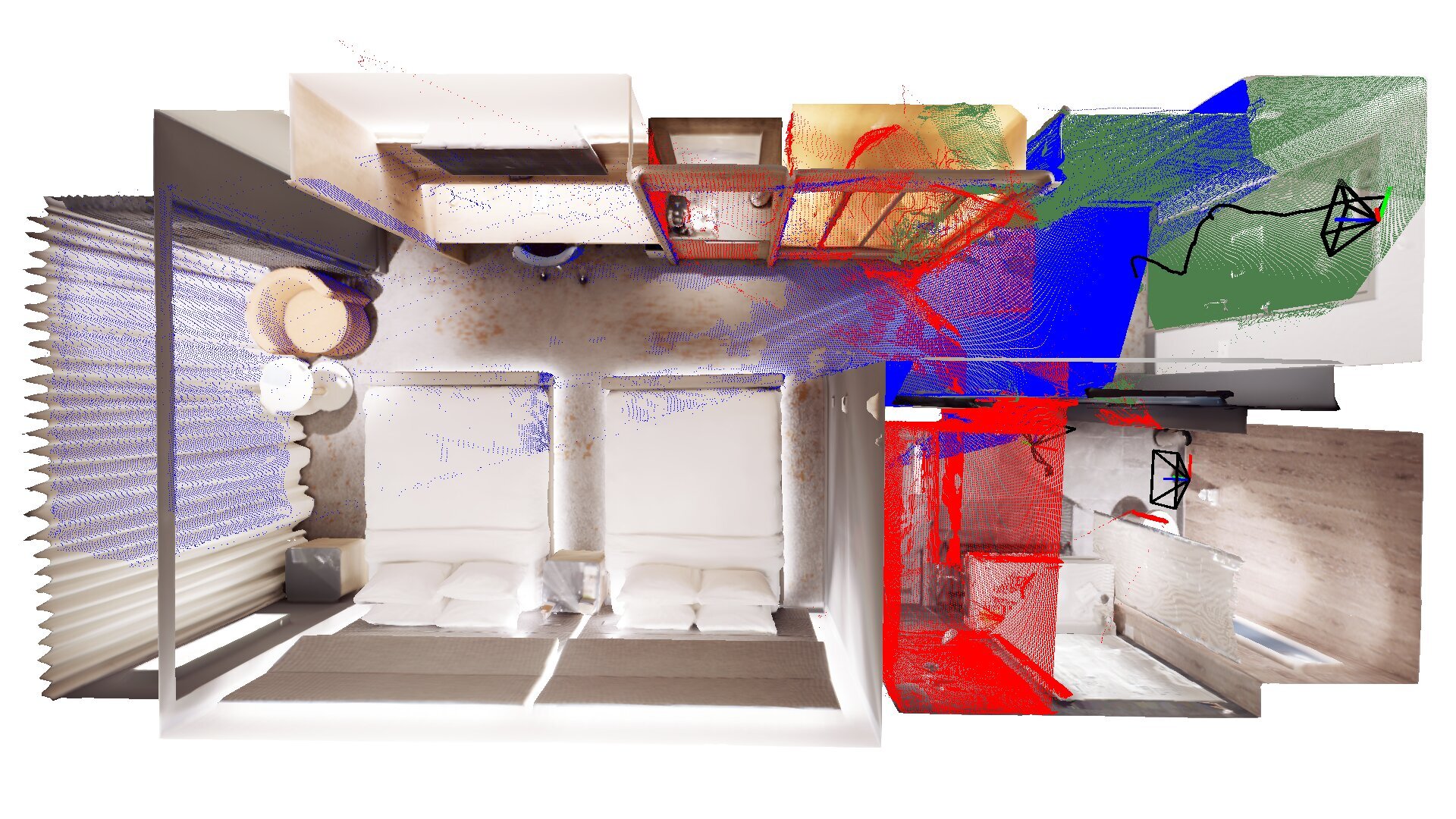} &
    \includegraphics[width=0.2265\linewidth,trim={0pt 40pt 0pt 40pt},clip]{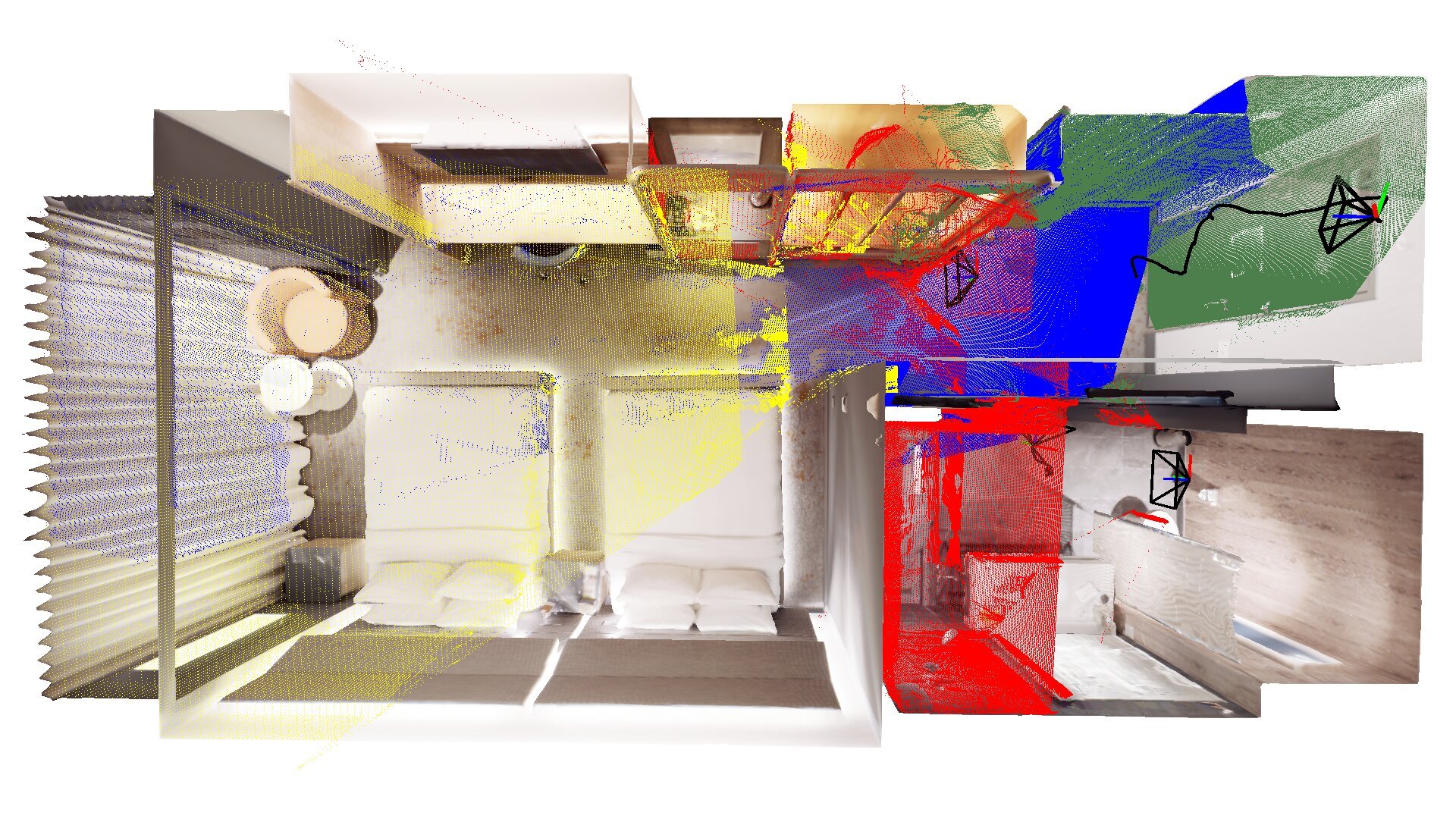} &
    \includegraphics[width=0.2265\linewidth,trim={0pt 40pt 0pt 40pt},clip]{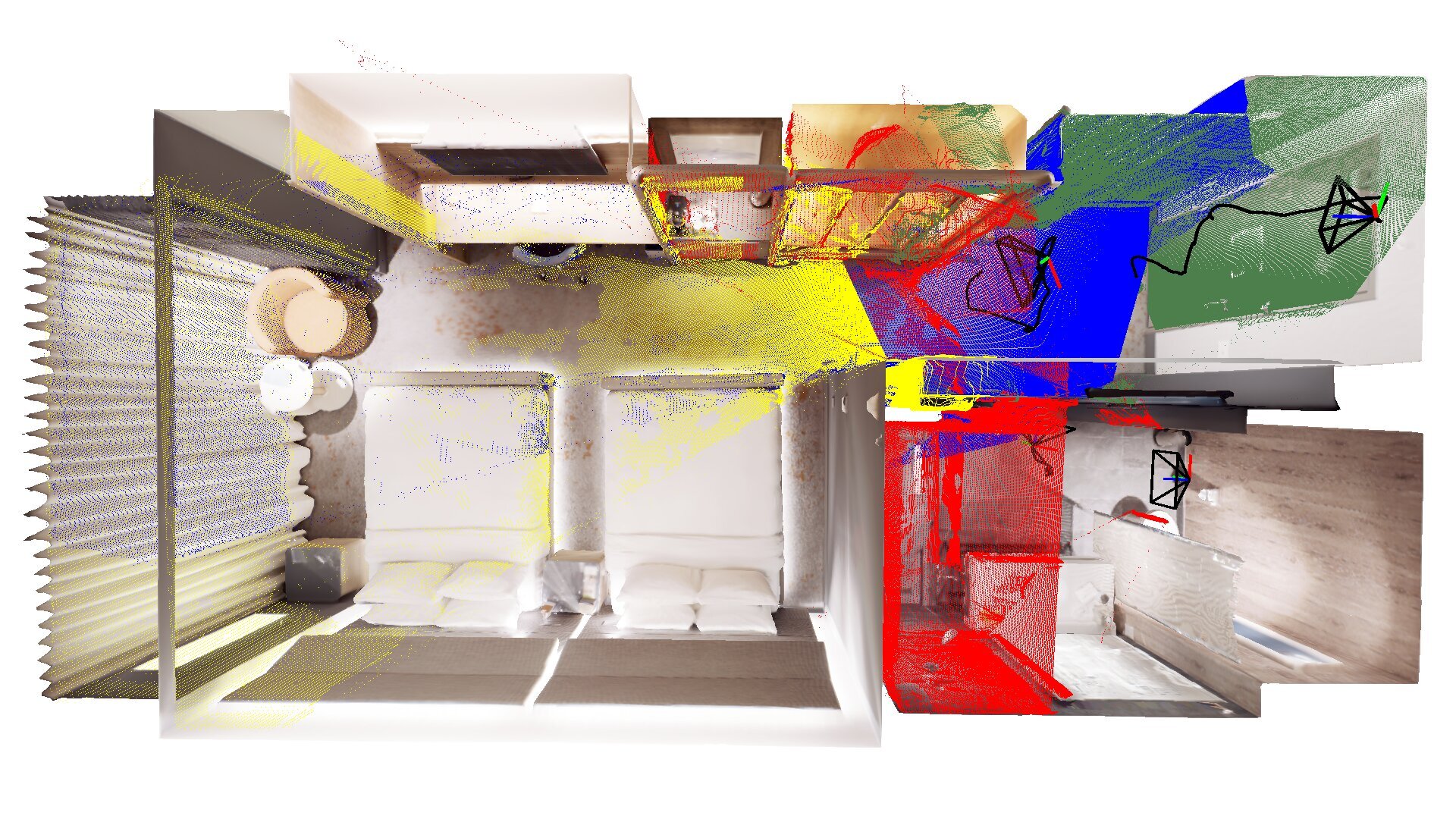} &
    \includegraphics[width=0.2265\linewidth,trim={0pt 40pt 0pt 40pt},clip]{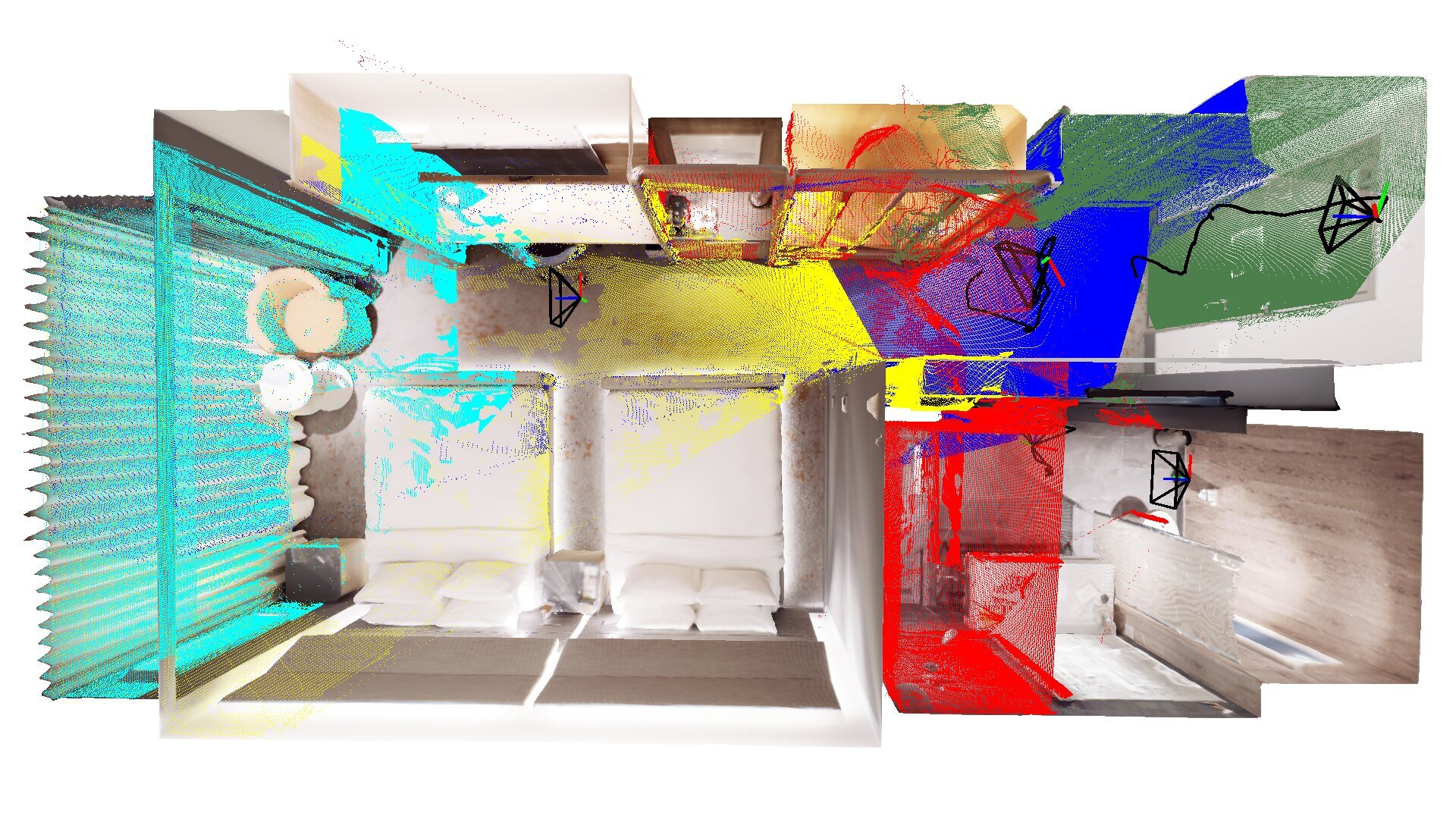} \\
    \includegraphics[width=0.2265\linewidth,trim={0pt 40pt 0pt 40pt},clip]{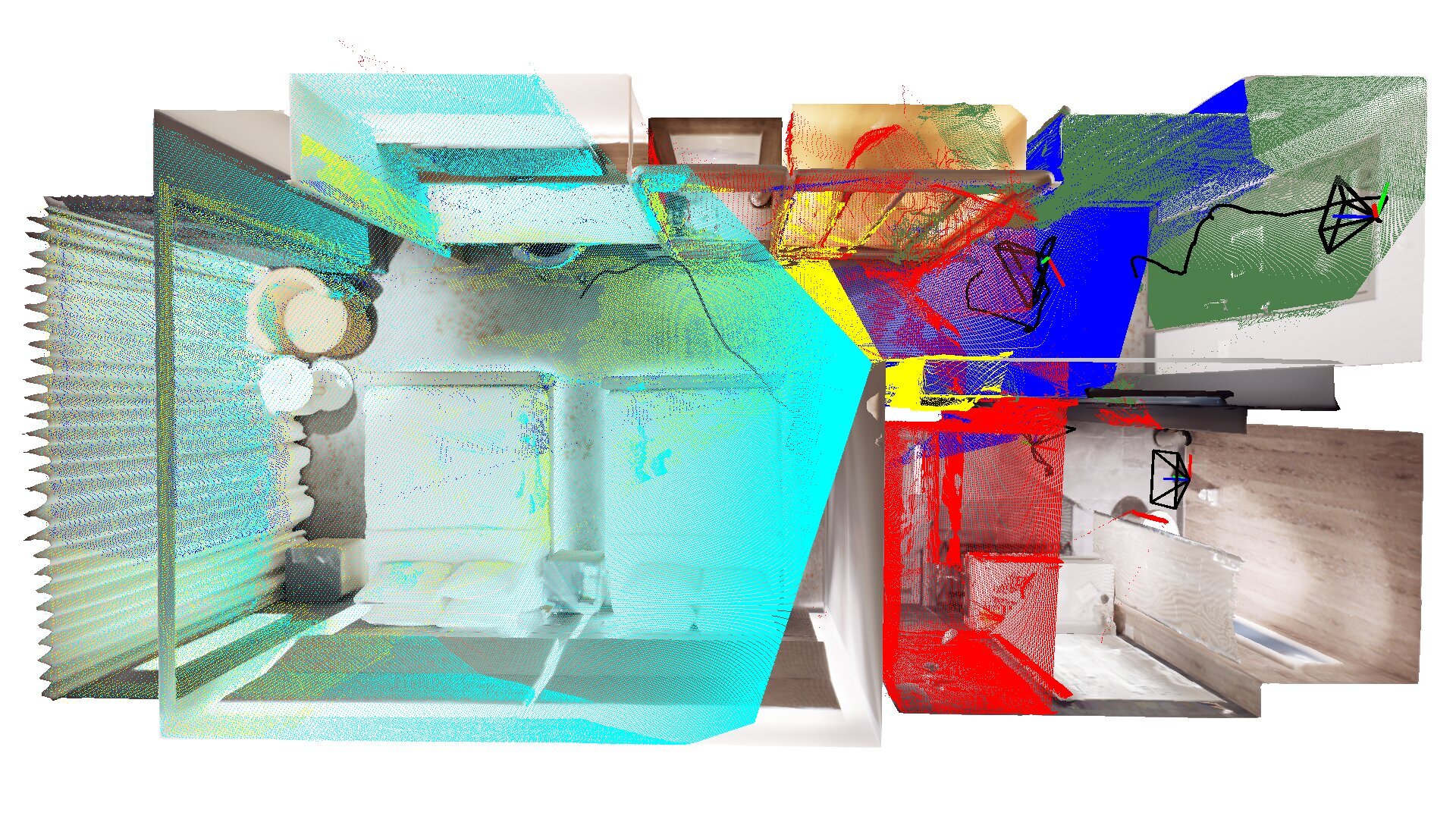} &
    \includegraphics[width=0.2265\linewidth,trim={0pt 40pt 0pt 40pt},clip]{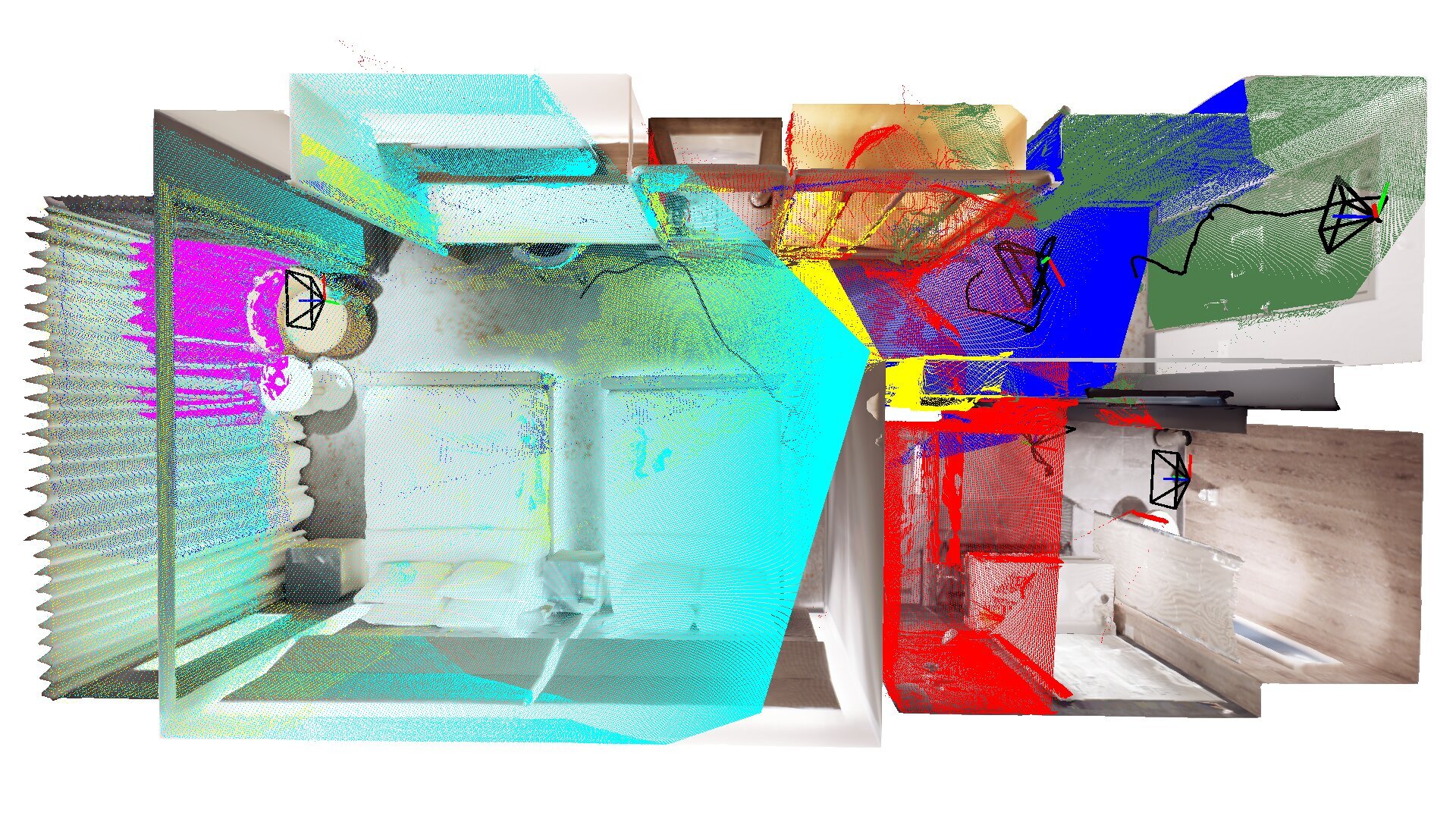} &
    \includegraphics[width=0.2265\linewidth,trim={0pt 40pt 0pt 40pt},clip]{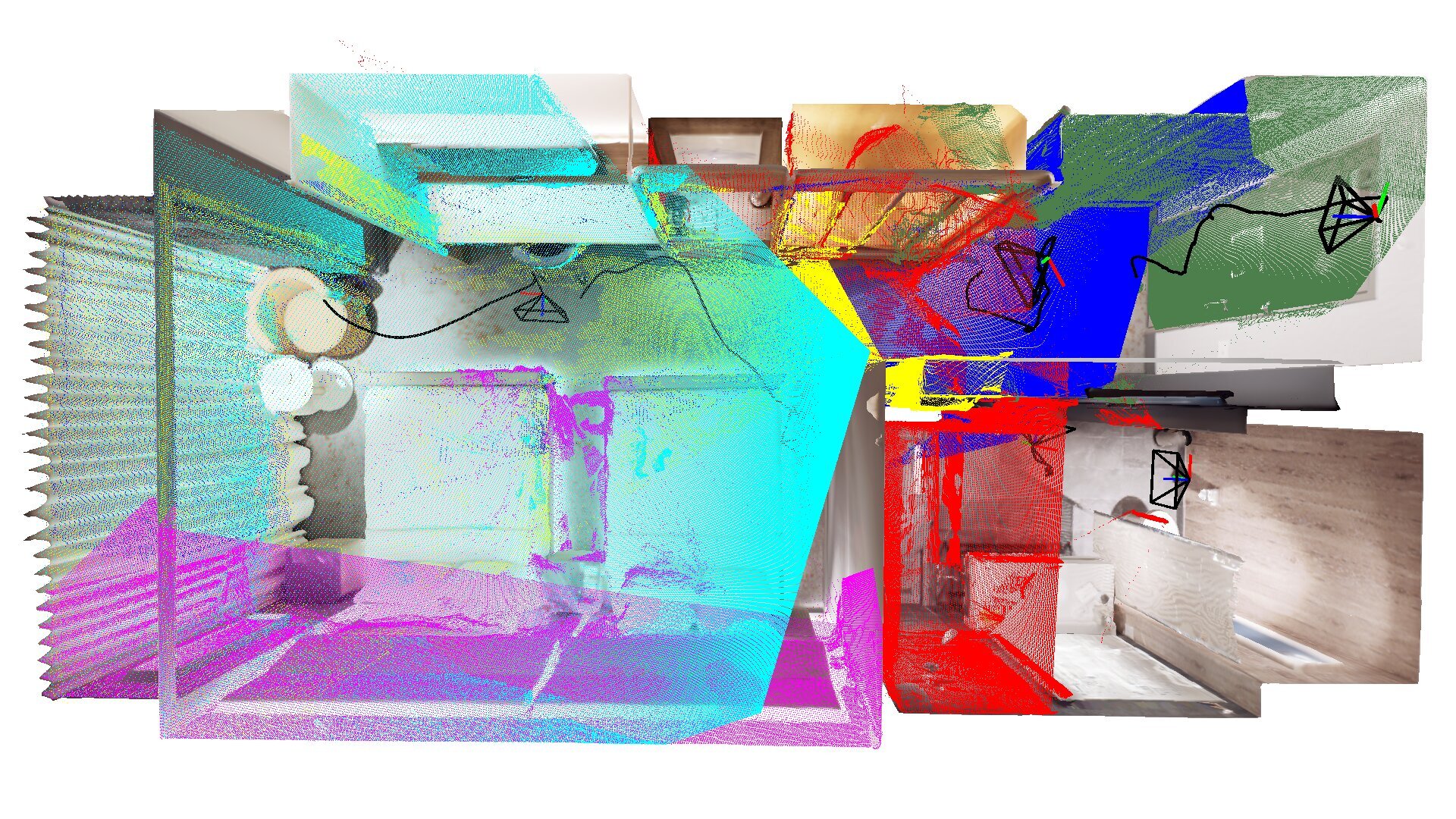} &
    \includegraphics[width=0.2265\linewidth,trim={0pt 40pt 0pt 40pt},clip]{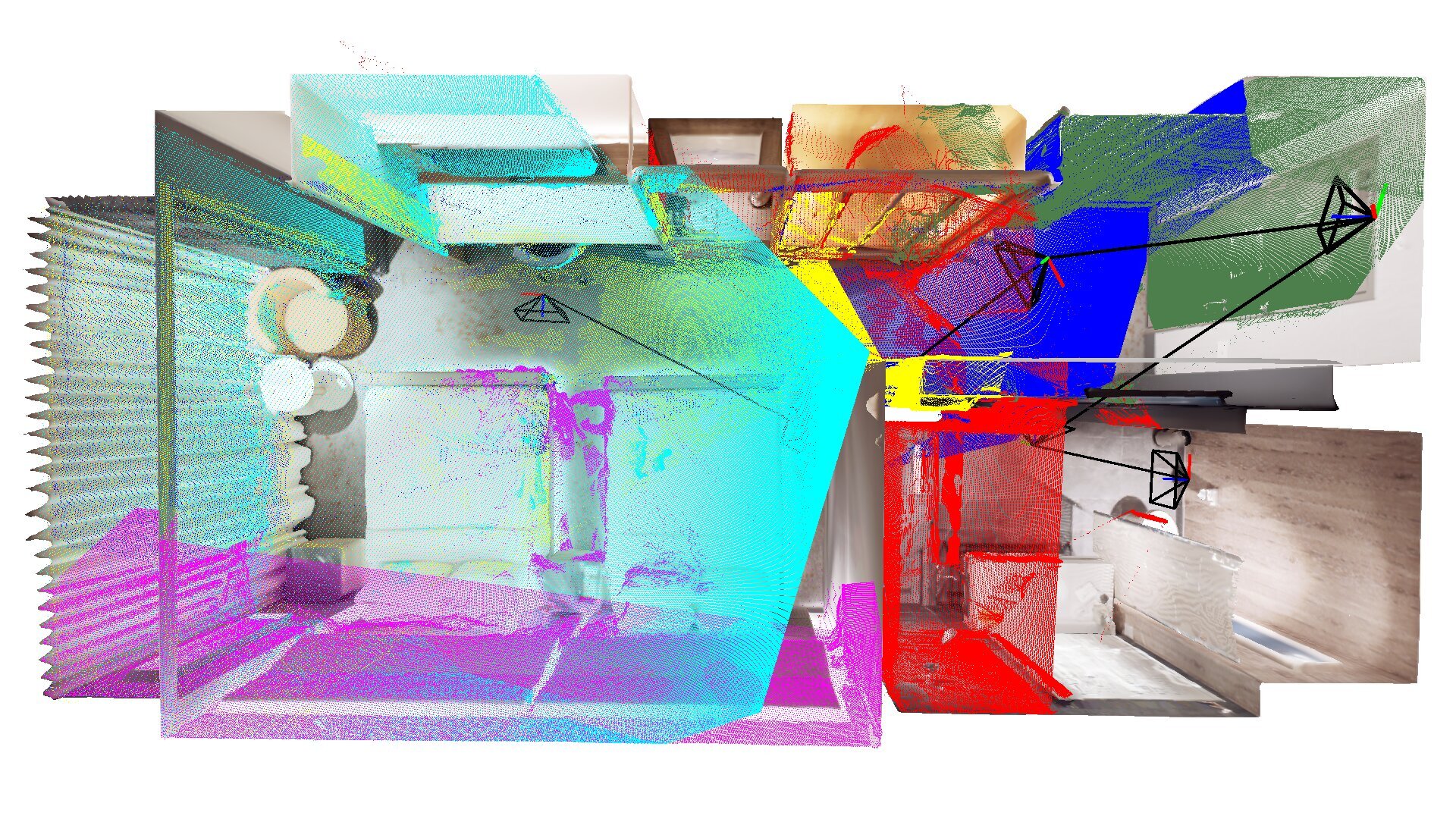}
\end{tabular}
\caption{\small Visualization of differentiable view optimization along a trajectory in the Replica Hotel 0 scene. The first and the last figures show the initial and optimized views, respectively. The other figures show selected frames at different iteration steps of the optimization process.}
\label{fig:traj_opt_replica_hotel0}
\end{figure*}
\begin{figure*}
\centering
\begin{tabular}{ccc}
    \includegraphics[width=0.31\linewidth,trim={0pt 160pt 20pt 110pt},clip]{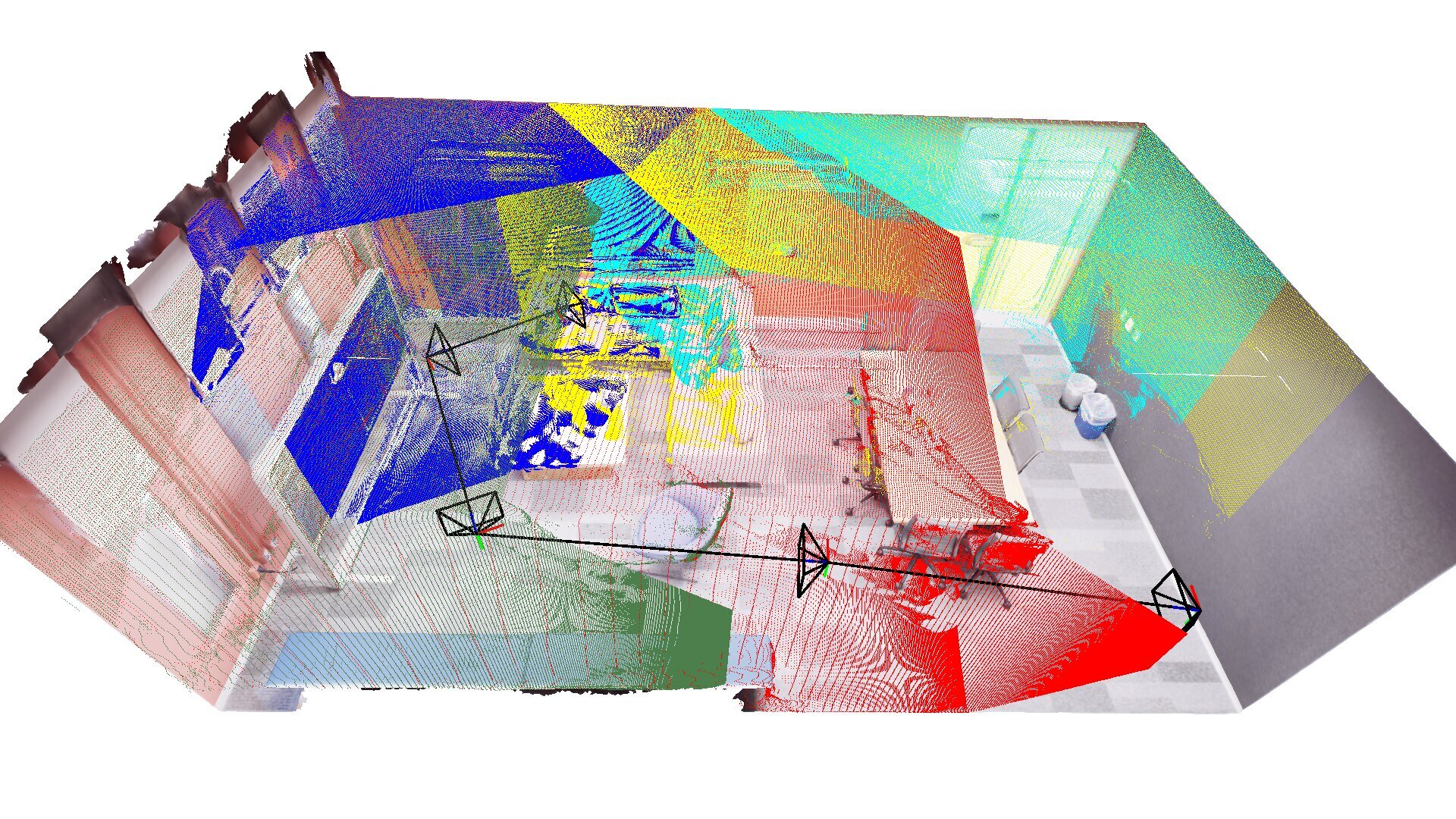} &
    \includegraphics[width=0.31\linewidth,trim={0pt 160pt 20pt 110pt},clip]{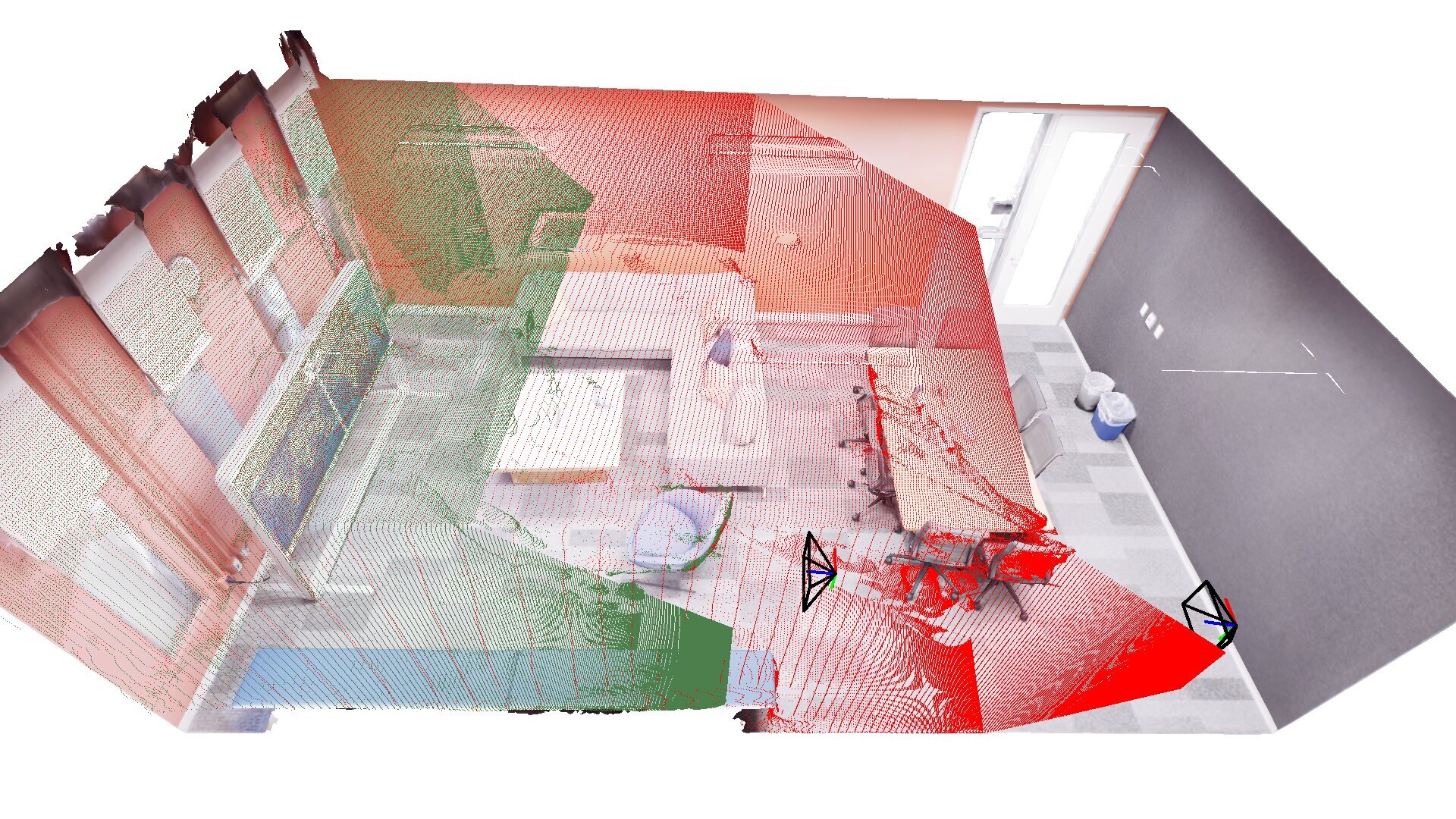} &
    \includegraphics[width=0.31\linewidth,trim={0pt 160pt 20pt 110pt},clip]{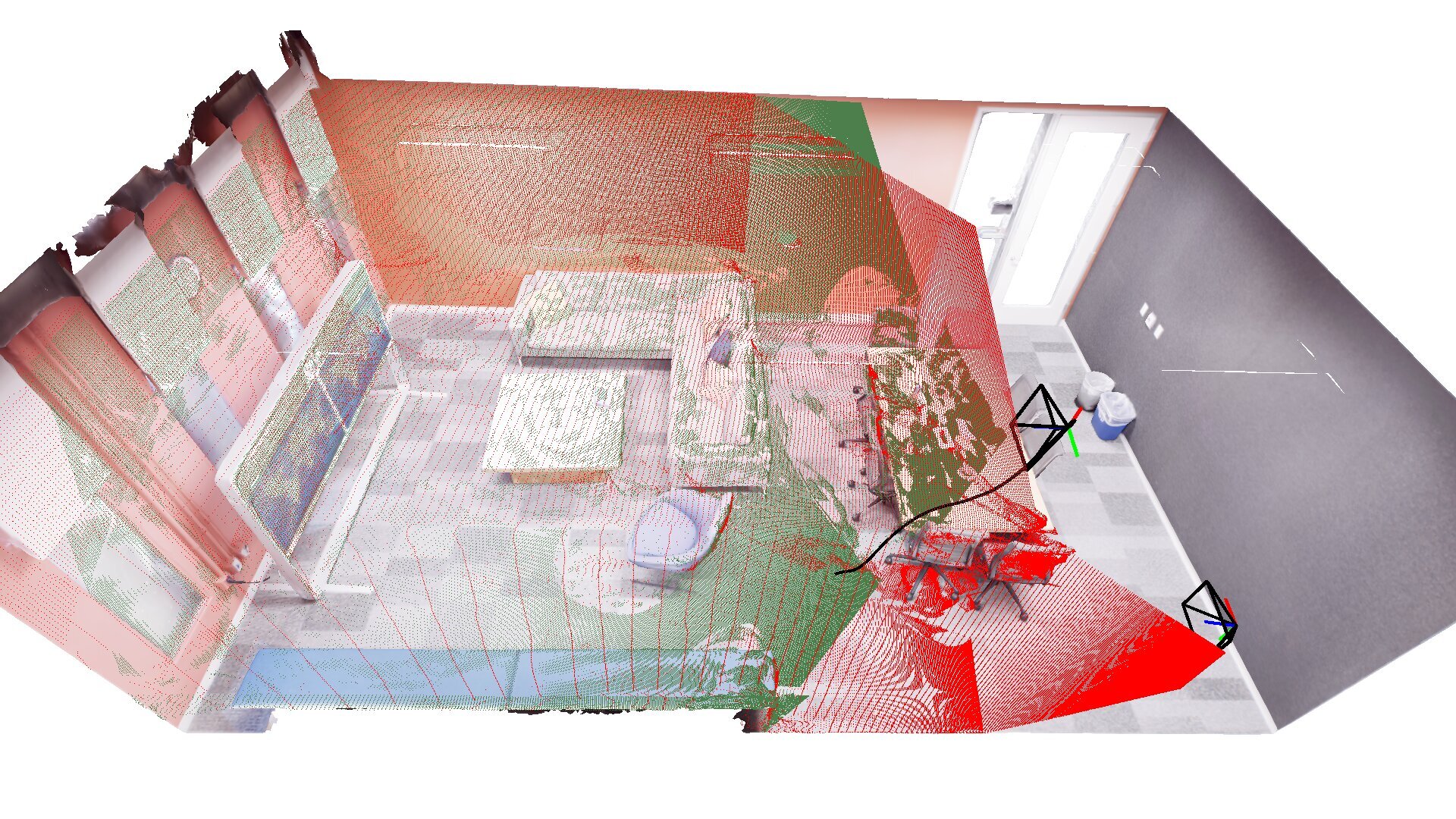} \\
    \includegraphics[width=0.31\linewidth,trim={0pt 160pt 20pt 110pt},clip]{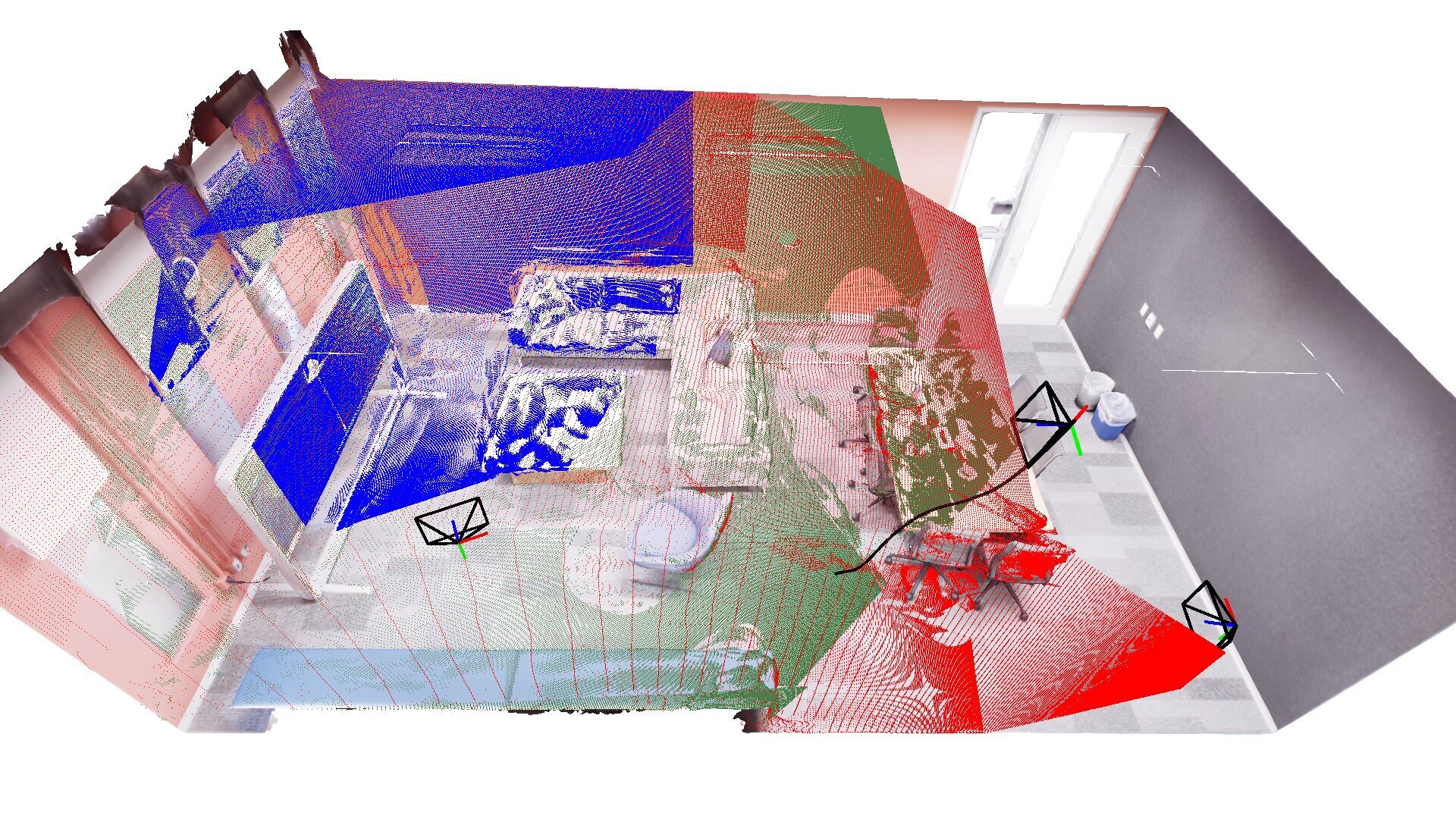} &
    \includegraphics[width=0.31\linewidth,trim={0pt 160pt 20pt 110pt},clip]{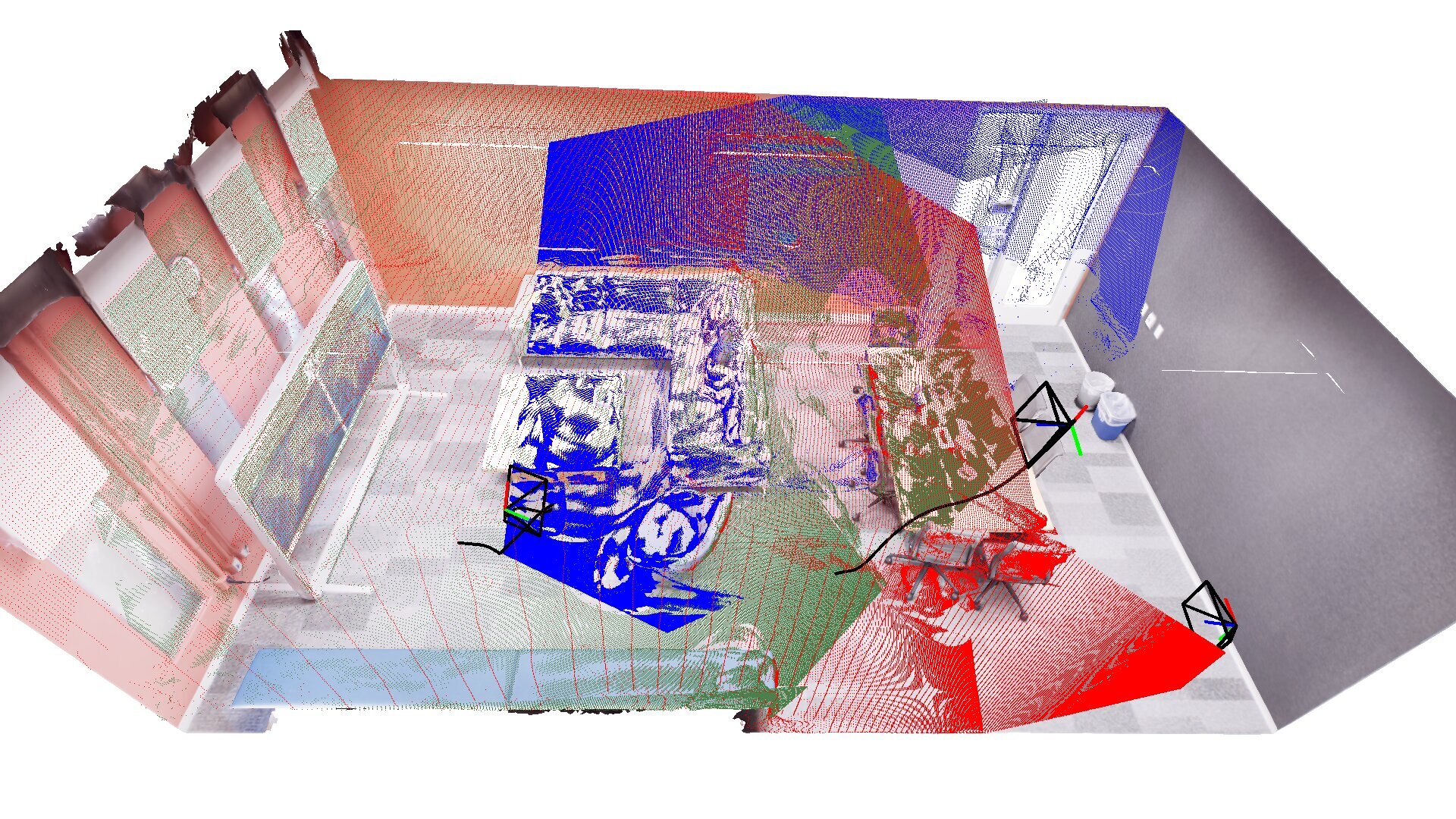} &
    \includegraphics[width=0.31\linewidth,trim={0pt 160pt 20pt 110pt},clip]{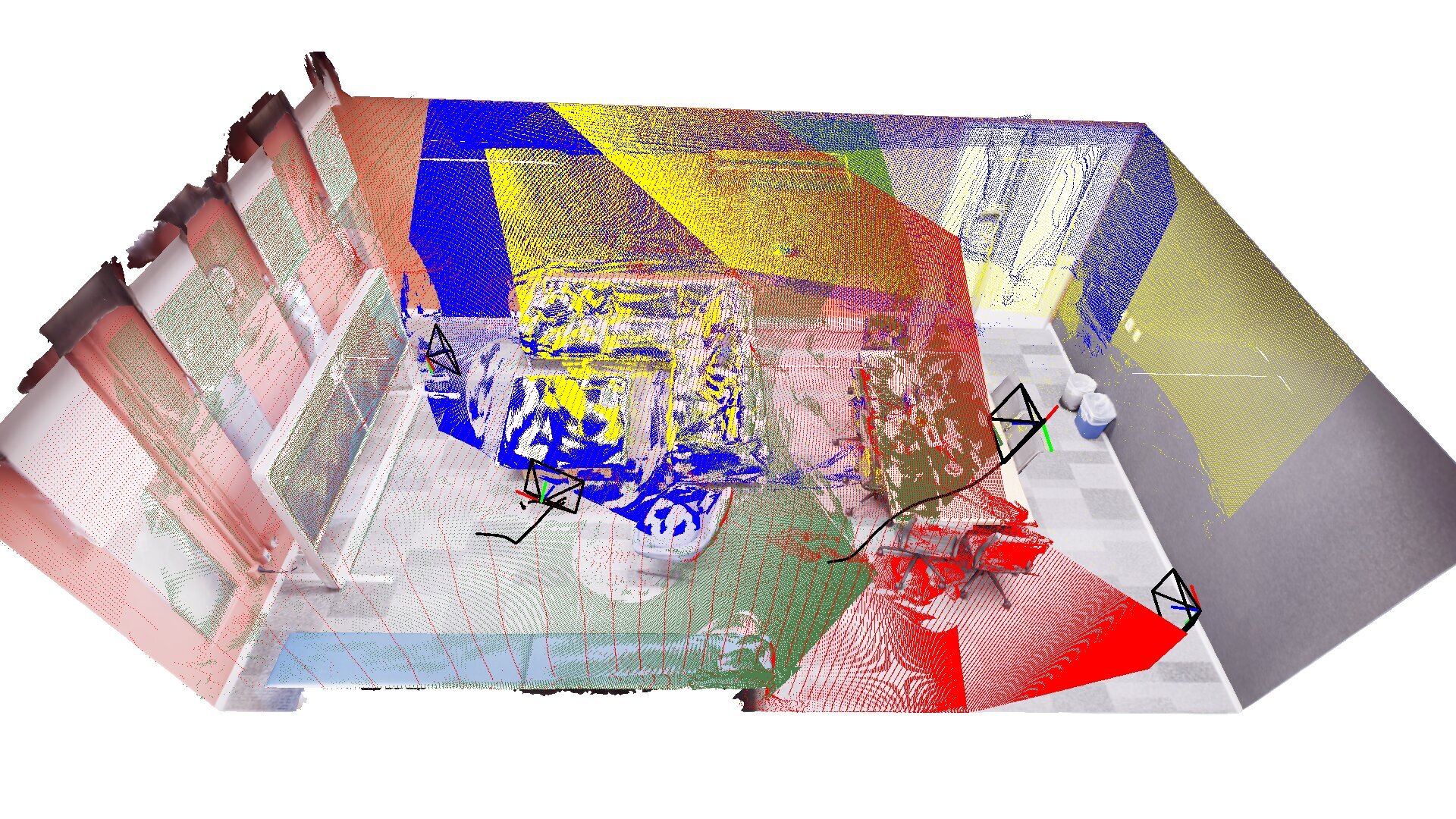} \\
    \includegraphics[width=0.31\linewidth,trim={0pt 160pt 20pt 110pt},clip]{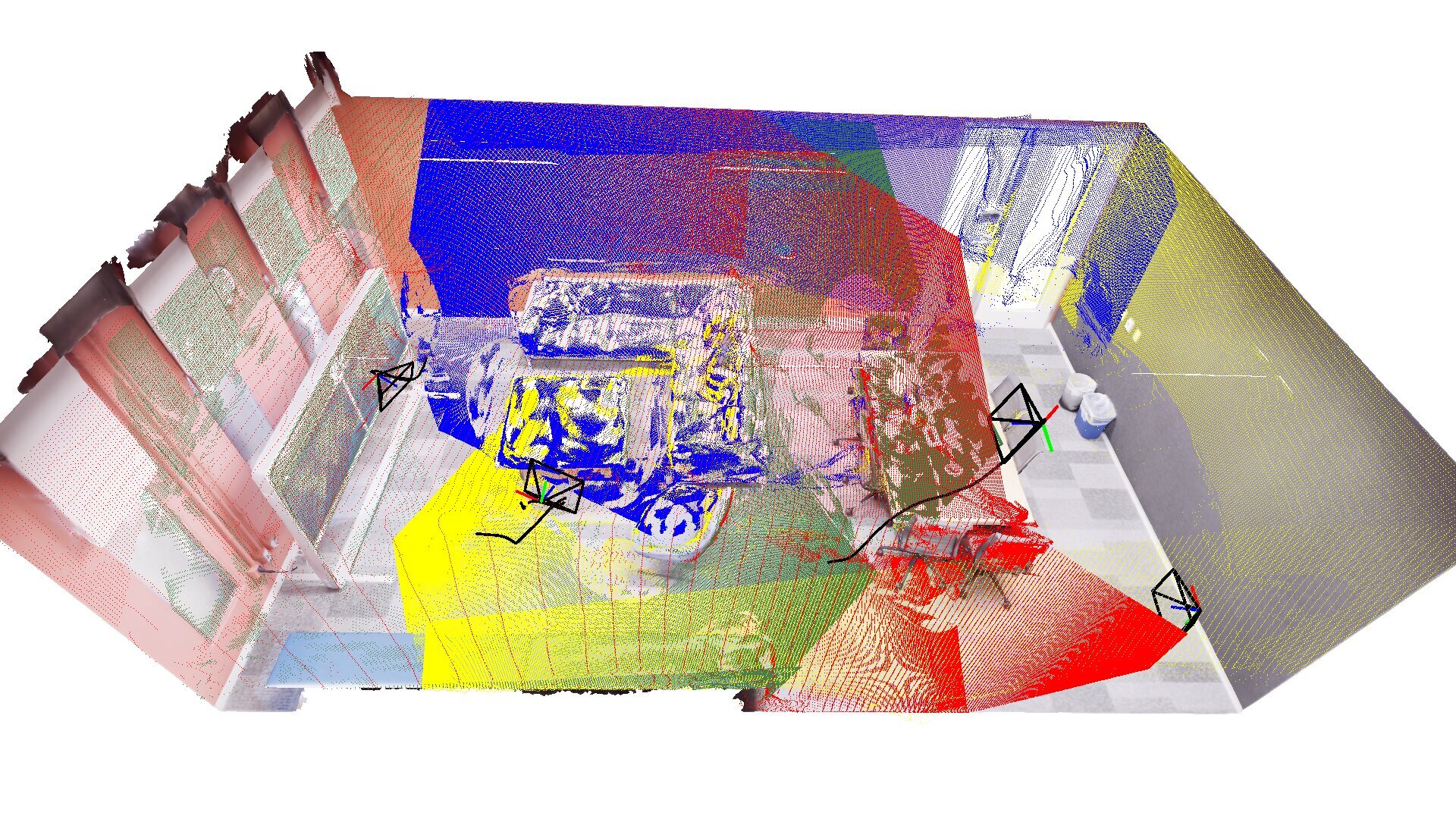} &
    \includegraphics[width=0.31\linewidth,trim={0pt 160pt 20pt 110pt},clip]{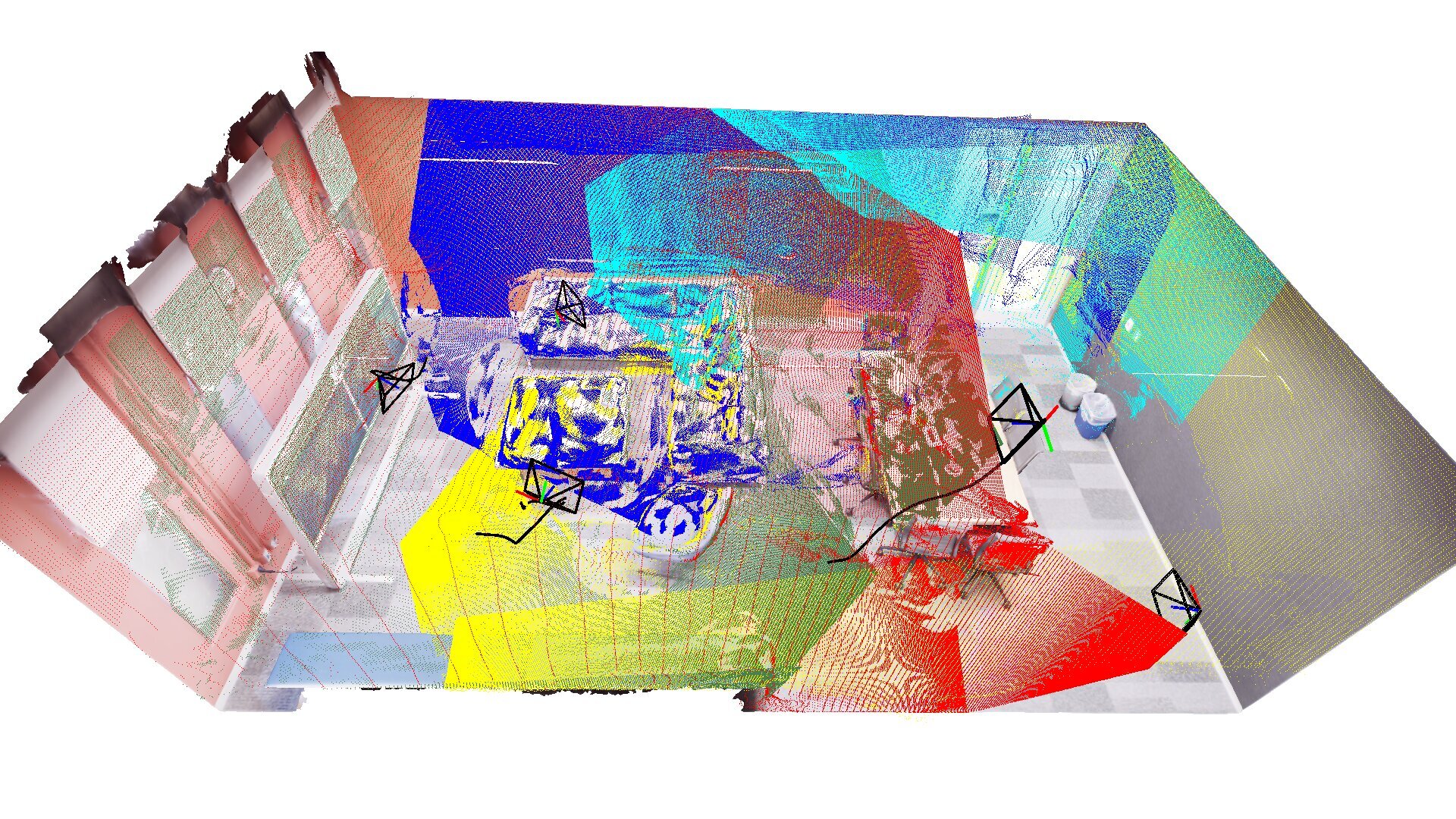} &
    \includegraphics[width=0.31\linewidth,trim={0pt 160pt 20pt 110pt},clip]{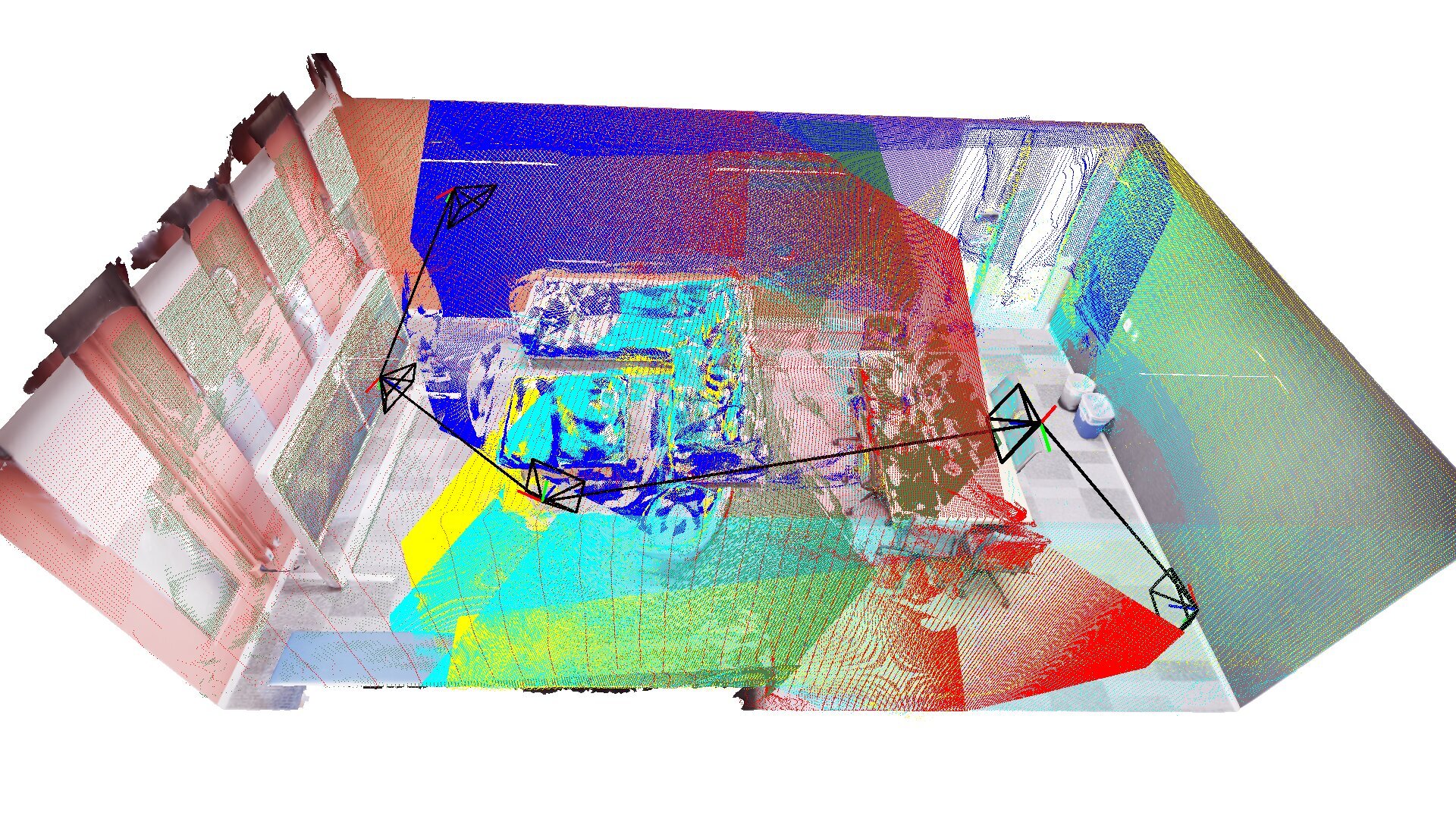}
\end{tabular}
\caption{\small Visualization of differentiable view optimization along a trajectory in the Replica Office 3 scene. The first and the last figures show the initial and optimized views, respectively. The other figures show selected frames at different iteration steps of the optimization process.}
\label{fig:traj_opt_replica_office3}
\end{figure*}

\end{document}